\DeclarePairedDelimiter\abs{\lvert}{\rvert}
\DeclarePairedDelimiter\norm{\lVert}{\rVert}
\newcommand{\xbf}{\mathbf{x}}
\newcommand{\ybf}{\mathbf{y}}
\newcommand{\rbf}{r}
\newcommand{\bbf}{\mathbf{b}}
\newcommand{\nbf}{\mathbf{n}}
\newcommand{\ubf}{\mathbf{u}}
\newcommand{\vbf}{\mathbf{v}}
\newcommand{\wbf}{\mathbf{w}}
\newcommand{\Pbf}{\mathbf{P}}
\newcommand{\gbf}{\mathbf{g}}
\newcommand{\Abf}{\mathbf{A}}
\DeclareMathOperator{\prox}{\mathrm{prox}}
\newcommand{\etol}{\epsilon_{\mathrm{tol}}}
\newcommand{\R}{\mathbb{R}}
\newcommand{\NN}{\mathbb{N}}
\newcommand{\CC}{\mathbb{C}}
\DeclareMathOperator*{\argmin}{arg\,min}
\newcommand{\F}{\mathcal{F}}
\newcommand{\D}{\mathcal{D}}
\newcommand{\xt}{\mathbf{x}_{t}}
\newcommand{\xtp}{\mathbf{x}_{t+1}}
\newcommand{\xpp}{\mathbf{x}_{p+1}}
\newcommand{\ztp}{\mathbf{z}_{t+1}}
\newcommand{\utp}{\mathbf{u}_{t+1}}
\newcommand{\vtp}{\mathbf{v}_{t+1}}
\newcommand{\Ccal}{\mathcal{C}}
\newcommand{\Xcal}{\mathcal{X}}
\newcommand{\gi}{\gbf_i}
\newcommand{\tl}{t_{l}}
\newcommand{\tlp}{t_{l+1}}
\newcommand{\epst}{\varepsilon_{t}}
\newcommand{\epsp}{\varepsilon_{p}}
\newcommand{\epstp}{\varepsilon_{t+1}}
\newcommand{\taut}{\tau_t}
\newcommand{\gj}{\gbf_j}
\newcommand{\xhat}{\hat{\mathbf{x}}}
\newtheorem{lemma}[theorem]{Lemma}
\address{%
$^{1}$ \quad Department of Mathematics, University of Florida, Gainesville, FL 32611, USA; wanyu.bian@ufl.edu, yun@ufl.edu, qingchaozhang@ufl.edu; \\
$^{2}$ \quad Department of Mathematics and Statistics, Georgia State University, Atlanta, GA 30303, USA; xye@gsu.edu}
\abstract{(1) Purpose: This work aims at developing a generalizable MRI reconstruction model in the meta-learning framework.
The standard benchmarks in meta-learning are challenged by learning on diverse task distributions. The proposed network learns the regularization function in a variational model and reconstructs MR images with various under-sampling ratios or patterns that may or may not be seen in the training data by leveraging a  heterogeneous dataset.
(2) Methods: We propose an
unrolling network induced by learnable optimization algorithms (LOA) for solving our nonconvex nonsmooth variational model for MRI reconstruction. In this model, the learnable regularization function contains a  task-invariant common feature encoder and task-specific learner represented by a shallow network. To train the network we split the training data into two parts: training and validation, and introduce a bilevel optimization algorithm. The lower-level optimization trains task-invariant parameters for the feature encoder with fixed parameters of the task-specific learner on the training dataset, and the upper-level optimizes the parameters of the task-specific learner on the validation dataset.
(3) Results: The PSNR increases 1.5 dB on average compared to the network trained through conventional supervised learning on the seen CS ratios. We test the result of quick adaption on the unseen tasks after meta-training, the average PSNR arises 1.22 dB compared to the conventional learning procedure that is directly trained on the unseen CS ratios in the meanwhile saving half of the training time. The average PSNR arises 1.87 dB for unseen sampling patterns comparing to conventional learning; 
(4) Conclusion: We proposed a meta-learning framework consisting of the base network architecture,  design of regularization, and bi-level optimization-based training. The network inherits the convergence property of the LOA and interpretation of the variational model. The generalization ability is improved by the designated regularization and bilevel optimization-based training algorithm.   
}
\begin{document}

\section{Introduction}

Deep learning models and algorithms are often built for specific tasks and the samples (usually) follow a certain distribution. Specifically, the source-domain/training samples and target-domain/testing samples are drawn from the same distribution. 
%
%
However, these data sets are often collected at different sources and exhibit substantial heterogeneity, and thus the samples may follow closely related but different distributions in real-world applications.
%
%
Therefore, robust and efficient training of deep neural networks using such data sets is theoretically important and practically relevant in deep learning research.

Meta-learning provides an unique paradigm to overcome this challenge \cite{munkhdalai2017meta, finn2017model,li2018learning, rusu2018meta,  yao2021improving, balaji2018metareg}.
Meta-learning is known as \emph{learning-to-learn} which aims to gain the capability of quickly learning unseen tasks from the experience of learning episodes that covers the distribution of relevant tasks. In a multiple-task scenario, given a family of tasks, meta-learning has been proven as a useful tool for extracting task-agnostic knowledge and improve the learning performance of new tasks from that family \cite{thrun1998learning, hospedales2021meta}.
Most of the modern approaches of meta-learning are focus on \emph{domain adaptation}, where the target domain information is available in meta-training, but this setting is still too ideal to apply in practice. Recently the study on \emph{domain generalization (DG)} has attracted much attention. DG allows the training model to learn representations from several related source domains and gain adequate generalization ability for unseen test distributions. However, DG techniques for image reconstruction are rarely explored in solving inverse problems. Leveraging large-scale heterogeneous MRI data to overcome inflated prediction performance caused by small-scale clinic datasets is of great interest for more precise, predictable, and powerful health care. For the sack of reducing scan time and improve diagnostic accuracy of MRI, the data are often acquired by using different under-sampling patterns (e.g., Cartesian mask,  Radial mask, Poisson mask), under-sampling ratios,  and different settings of the parameters resulting in different contrast (e.g., T1-weighted, T2-weighted,  proton-density (PD), and Flair). Hence the  public data are heterogeneous. This work aims at developing a generalizable MRI reconstruction method in the meta-learning framework that can leverage large-scale heterogeneous datasets to reconstruct MR images with the available small-scale dataset from specific setting of the scan.

In this work, we propose a meta-learning-based model for solving the MRI image reconstructions problem by leveraging diverse/heterogeneous dataset. 
Data acquisition in compressed sensing MRI (CS-MRI) with k-space (Fourier space) undersampling can be formulated as follows
\begin{equation}\label{MRI}
    \ybf = \Pbf \F \xbf + \nbf,
\end{equation}
where $ \ybf \in \CC^p$ is the measured k-space data with total of $p$ sampled data points, $ \xbf \in \CC^{N\times 1}$ is unknown MR image with $N$ pixels to be reconstructed, $ \F \in \CC^{N \times N}$ is the 2D discrete Fourier transform (DFT) matrix, and $ \Pbf \in \R^{p \times N}$ $(p< N)$ is the binary matrix representing the sampling mask in k-space. 

Reconstruction of $\xbf$ from (noisy) undersampled data $\ybf$ is an ill-posed problem. 
An effective strategy to elevate the ill-posedness issue is to incorporate prior information by adding a regularization term. In light of the substantial success of deep learning and the massive amount of training data available nowadays, we propose to learn the regularization term and then solve $\xbf$ from the following unconstrained optimization problem
\begin{equation}\label{csmodel}
   \bar{\xbf} = \argmin_{\xbf} \frac{1}{2} \| \Pbf \F \xbf - \ybf \|^2 + R(\xbf; \Theta),
\end{equation}
where $\|\cdot\|$ is the standard 2-norm of vectors, the first term in the objective function in \eqref{csmodel} is data fidelity term that ensures consistency between the reconstructed $ \xbf$ and the measured data $\ybf$, and the second term $ R(\xbf; \Theta)$ is the regularization term that introduces prior knowledge to the image to be reconstructed and represented by a convolutional neural network (CNN) with parameters $\Theta$ that learned from training data. 
We will discuss in details how this regularization is formed in Sections \ref{sec:Method} and \ref{sec:Implementation}.

In this paper, we first introduce a nonconvex and nonsmooth learnable optimization algorithm (LOA) with rigorous convergence guarantees. Then we construct a deep reconstruction network by following the LOA exactly. This approach is inspired by the work \cite{chen2020learnable}, but the LOA developed in this work is different from the one in \cite{chen2020learnable}. 
%
%
Furthermore, we propose a novel MRI reconstruction model \eqref{our_model} with a regularization term for multi-task adaptation, which consists of a task-invariant regularization and a task-specific hyperparameter. The former extracts common prior information of images from different tasks, which learns task-invariant parameters $\theta$. The latter exploits the proper task-specific parameters (also called meta-knowledge) $\omega_i$ for each individual task $i$.
The purpose is to increase the generalization ability of the learned regularization,
so that the trained LOA-induced deep reconstruction network can perform well on both seen and unseen tasks.
To train the network we split the training data into two parts:  training and validation, and introduce a bilevel optimization model for learning network parameters. The lower-level optimization learns the task-invariant parameters $\theta$ of the feature encoder with fixed task-specific parameters $\omega_i$ on the training dataset, and the upper-level optimizes the task-specific parameters on the validation dataset. The well-trained network of seen tasks can be applied to the unseen tasks with determined $\theta$. This adaptation only needs a few iterations to update $\omega_i$ with a small number of training samples of unseen tasks.


As demonstrated by our numerical experiments in Section \ref{experiments}, our proposed framework yields much improved image qualities using diverse data sets of various undersampling trajectories and ratios for MRI image reconstruction. 
The underpinning theory is that an effective regularization can integrate common feature and prior information from a variety of training samples from diverse data sets, but they need to be properly weighted in the reconstruction of the image under specific sample distribution (i.e., undersampling trajectory and ratios).  %
%
%
%
Our contributions can be summarized as follows:
\begin{enumerate}[leftmargin=*]
\item LOA inspired network architecture. Our network architecture exactly follows a proposed LOA of guaranteed convergence. So the network is interpretable, parameter-efficient, stable, and generates high-quality reconstructions after proper training.

\item Design of Regularization. Unlike the existing meta-learning methods, the proposed approach of network training can learn adaptive regularizer from diverse data sets.
Our adaptive regularizer consists of the task-invariant portion and the task-specific portion. The task-invariant portion aims at exploiting the common features and shared information across all involved tasks, whereas the task-specific parameters only learn the regularization weight to properly balance the data fidelity and learned regularization in individual tasks. 

\item A novel bilevel network training algorithm for improving generalizability. We utilize train data and validation data for training the designed network, which follows a bilevel optimization algorithm that is trained from a heterogeneous dataset. The lower-level determines the optimal task-invariant parameters for any fixed task-specific parameters, and the upper-level optimizes the task-specific parameters on validation data so that the task-invariant parameter can be adapted to different tasks.

\item  Better generalization and faster adaption.  On reconstructing the under-sampled MRIs with various scanning trajectories, the network after meta-training can be directly applied to reconstruct the images sampled on multiple seen trajectories and can achieve quick adaption to the new unseen trajectories sampling patterns.
%
\end{enumerate}


The remainder of the paper is organized as follows.
In Section \ref{related_work}, we introduce some related work for both optimization-based meta-learning and deep unrolled networks for MRI reconstructions. We sketch our meta-learning model and the neural network in Section \ref{sec:Method} and describe the implementation details in Section \ref{sec:Implementation}. Section \ref{experiments} introduces the reconstruction results. Section \ref{conclusion} concludes the paper.

\section{Related work}\label{related_work}

Meta-learning methods have in recent years yielded impressive results in various fields with different techniques, and plays a different roles in different communities, it has a very broad definition and perspective in the literature \cite{hospedales2021meta}. Several survey papers \cite{ hospedales2021meta, huisman2021survey} have a detailed introduction of this development. 
Three categories of meta-learning techniques are commonly grouped  \cite{yao2020automated, lee2018gradient, huisman2021survey} as metric-based methods \cite{koch2015siamese, vinyals2016matching, snell2017prototypical},
model-based methods  \cite{mishra2017simple, ravi2016optimization, qiao2018few, graves2014neural},
and optimization-based methods \cite{finn2017model, rajeswaran2019meta, li2017meta}, which are often cast as a bi-level optimization problem and exhibits relatively better generalizability on wider task distributions. We mainly focus on optimization-based meta-learning in this paper.

\subsection{ Optimization-based approaches}\label{l2l}
In the context of optimization-based meta-learning, a popular strategy is gradient descent based inner optimization \cite{finn2017model,antoniou2018train,rajeswaran2019meta,li2017meta,nichol2018first,finn2019online,grant2018recasting,finn2018probabilistic,yoon2018bayesian}. The optimization problem is often cast as a bilevel learning, where the outer level (upper/leader level) optimization is solved subject to the optimality of inner level (lower/follower level) optimization so that the optimized model can generalize well on the new data. The inner level encounters new tasks and tries to learn the associated features quickly from the training observations, the outer level accumulates task-specific meta-knowledge across previous tasks and the meta-learner provides support for the inner level so that it can quickly adapt to new tasks. e.g. the Model-Agnostic Meta-Learning (MAML) \cite{finn2017model} aims at searching for an adaptive initialization of the network parameters for new tasks with only a few steps that updates in the inner level. In recent years, a large number of followup works of MAML proposed to improve  generalization using similar strategy \cite{lee2018gradient, rusu2018meta, finn2018probabilistic, grant2018recasting, nichol2018first, vuorio2019multimodal, yao2019hierarchically,yin2020metalearning}.
Deep bilevel learning \cite{jenni2018deep} seeks to obtain better generalization that trained on one task and generalize well on another task. Their model is to optimize a regularized loss function to find network parameters from training set and identify hyperparameters so that the network performs well on validation dataset.

When the unseen tasks lie in inconsistent domains with the meta-training tasks, as revealed in \cite{chen19closerfewshot}, the generalization behavior of the meta-learner will be compromised.
This phenomenon partially arises from the meta-overfitting on the already seen meta-training tasks, which is identified as memorization
problem in \cite{yin2020metalearning}.
A meta-regularizer forked with information theory is proposed in \cite{yin2020metalearning} to handle the memorization
problem by regulating the information dependency during the task adaption. 
MetaReg \cite{balaji2018metareg} decouples the entire network into the feature network and task network, where the meta-regularization term is only applied to the task network. They first update the parameters of the task network with meta-train set to get the domain-aligned task network, then update the parameters of meta-regularization term on meta-test set to learn the cross-domain generalization. Different from MetaReg, Feature-Critic Networks \cite{li2019feature} exploits the meta-regularization term to pursue a domain-invariant feature extraction network. The meta-regularization is designed as a feature-critic network that takes the extracted feature as input. The parameters of the feature extraction network are updated by minimizing the new meta-regularized loss. The auxiliary parameters in the feature-critic network are learned by maximizing the performance gain over the non-meta case.  
To effectively evaluate the performance of the meta-learner, several new benchmarks \cite{Rebuffi17,48798,yu2020meta} were developed  under more realistic settings operate well on diverse visual domains. As mentioned in \cite{48798}, the generalization to unseen tasks within multimodal or heterogeneous datasets remains as a challenge to the existing meta-learning methods. 

The aforementioned methods pursue domain generalization for the classification networks that learned regularization function to learn cross-domain generalization. Our proposed method is aiming for solving the inverse problem and we construct an adaptive regularization that not only learns the universal parameters among tasks, but also the task-aware parameters. The designated adaptive regularizer assists the generalization ability of the deep model so that the well-trained model could be able to perform well on a  heterogeneous datasets of both seen or unseen tasks.

\subsection{MRI reconstruction models and algorithms}

MRI reconstruction is generally formulated as a variation model in the format of \eqref{csmodel},  which is a data fidelity term plus a regularization term. Traditional methods employ handcrafted regularization terms such as total variation (TV), and the solution algorithm follows a theoretical justification. However, it is hard to tune the associated parameter  (often require hundreds even thousands of iterations to converge) to capture subtle details and satisfy clinic diagnostic quality.

Deep learning based model leverages large dataset and further explore the potential improvement of reconstruction performance comparing to traditional methods and has successful applications in clinic field \cite{lundervold2019overview, liang2020deep, sandino2020compressed, mccann2017convolutional, zhou2020review, singha2021deep, chandra2021deep, ahishakiye2021survey}. 
However, training generic deep neural networks (DNNs) may prone to over-fitting when data is scarce as we mentioned in the beginning. Also, the deep network structure behaves like a black box without mathematical interpretation.
To improve the interpretability of the relation between the topology of the deep model and reconstruction results, a new emerging class of deep learning-based methods is known as \emph{learnable optimization algorithms} (LOA) attracts attention in the literature \cite{liu2020deep, liang2020deep}. LOA was proposed to map existing optimization algorithms to structured networks where each phase of the networks correspond to one iteration of an optimization algorithm or replace some ingredients such as proximal operator \cite{cheng2019model,bian2020deep}, matrix transformations \cite{yang2018admm, hammernik2018learning, zhang2018ista}, non-linear operators \cite{yang2018admm, hammernik2018learning}, and denoiser/regularizer \cite{aggarwal2018modl, schlemper2017deep} etc., by CNNs to avoid difficulty for solving non-smooth non-convex problems. 

Different from the current LOA approach for image reconstruction that simply imitates the algorithm iterations without any convergence justification or just replaces some hardly solvable components with sub-networks, our proposed LOA-induced network inherits the convergence property of the proposed Algorithm \ref{alg:lda}, where we provide convergence analysis in Appendix \ref{convergence}. The proposed network retains the interpretability of the variational model, parameter efficiency and contributes to better generalization.

\section{Proposed Method}\label{sec:Method}
\subsection{LOA-induced reconstruction network}
\label{network}
Our deep algorithmic unrolling reconstruction network represented as $F_{\Theta}$ where $\Theta$ denotes the set of all learnable parameters in the model. Motivated by the variational model, for an input partial k-space data $\ybf$ in $\D_{\tau_i}$, we desire the network output $F_{\Theta}(\ybf)$ to be an optimizer of the following minimization problem as \eqref{model}. We use "$\approx$" since in practice we only do finite-step optimization algorithm to approximate the optimizer 
\begin{equation}\label{model}
F_{\Theta}(\ybf) \approx \argmin_{\xbf} \big\{ \phi_{\Theta}(\xbf, \ybf) := f(\xbf, \ybf)   + R(\xbf; \Theta) \big\},
\end{equation}
where $f$ is the data fidelity term that usually takes the form $f(\xbf, \ybf)  = \frac{1}{2} \| \Pbf \F \xbf - \ybf \|^2$ in standard MRI setting, where $\F$ and $\Pbf$ represent the Fourier transform and the binary under-sampling mask for k-space trajectory respectively. 
However, we remark that our approach can be directly applied to a much broader class of image reconstruction problems as long as $f$ is continuously differentiable with Lipschitz continuous gradient.

In this section, we introduce a learned optimization algorithm (LOA) for solving \eqref{model}, where the network parameters $\Theta$ are learned and fixed. Since $\Theta$ are fixed in \eqref{model}, we will omit them in the derivation of the LOA below and write  $R(\xbf; \Theta)$ as $R(\xbf)$ for notation simplicity. Then we generate a multi-phase network induced by the proposed LOA, i.e. the network architecture follows the algorithm exactly such that one phase of the network is just one iteration of the LOA.

In our implementation, to incorporate sparsity along with the learned features, we parameterize the function $R (\xbf)= \kappa \cdot r(\xbf)$, where $\kappa>0$ is a weight parameter that needs be chosen properly depending on the specific task (e.g., noise level, undersampling ratio, etc.), and $r$ is a regularizer parameterized as a (composition of) neural networks and can be adapted to a broad range of imaging applications. In this work, we parameterize $r$ as the composition of the $l_{2,1}$ norm and a learnable feature extraction operator $\gbf_j(\xbf)$. 
That is, we set $r$ in \eqref{model} to be
\begin{equation}\label{eq:r}
r(\xbf) := \|\gbf_j(\xbf)\|_{2,1} = \sum_{j = 1}^{m} \|\gbf_{j}(\xbf)\|.
\end{equation}
Here $\gbf_j(\cdot)=\gbf_j(\cdot;\theta)$ is parametrized as a convolutional neural network (CNN) where $\theta$ is the learned and fixed network parameter in $r(\cdot;\theta)$ as mentioned above. We also consider $\kappa$ to be learned and fixed as $\theta$ for now, and will discuss how to learn both of them in the next subsection.
We use smooth activation function in $\gbf$ formulate in \eqref{eq:sigma} which renders $\gbf$ a smooth but nonconvex function. Due to the nonsmooth $\|\cdot\|_{2,1}$, $r$ is therefore a nonsmooth nonconvex function.

Since the minimization problem in \eqref{model} is nonconvex and nonsmooth, we need to derive an efficient LOA to solve it. This solver will be termed as $F_{\Theta}(\ybf)$.
Here we first consider smoothing the $ l_{2,1}$ norm that for any fixed $\gbf(\xbf)$:
\begin{equation}\label{eq:l21}
r_{\varepsilon} (\xbf) = \sum\nolimits^m_{j=1}  \sqrt{\| \gbf_{j} (\xbf) \|^2 + \varepsilon^2} -\varepsilon.
\end{equation}
We denote $R_{\varepsilon} = \kappa \cdot r_{\varepsilon}$.
%
%
The LOA derived here is inspired by the proximal gradient descent algorithm and iterates the following steps to solve the smoothed problem:
\begin{subequations}\label{prox}
    \begin{align}
        \ztp & = \xt - \alpha_t \nabla f(\xt) \label{prox_u}\\
    \xtp & = \prox_{\alpha_t R_{\epst} } (\ztp ), \label{prox_sub}
    \end{align}
\end{subequations}
where $\epst$ denotes the smoothing parameter $\varepsilon$ at the specific iteration $t$, and proximal operator is defined as $ \prox_{\alpha g}(\bbf) := \argmin_{\xbf} \left\| \xbf-\bbf \right\| + \alpha g(\xbf)$ in \eqref{prox_sub}. A quick observation from \eqref{eq:l21} is that $R_{\varepsilon} \rightarrow R$ as $\varepsilon$ diminishes, so later we will intentionally push $\epst \rightarrow 0$ at Line 16 in Algorithm \ref{alg:lda}. Then one can readily show that $ R_{\varepsilon}(x) \le R(x) \le R_{\varepsilon}(x) + \varepsilon$ for all $x$ and $\varepsilon > 0$. From Algorithm \ref{alg:lda}, line 16  will automatically reduce $\varepsilon$ and the iterates will converge to the solution of the original nonsmooth nonconvex problem \eqref{model}, a rigorous sense will be made precisely in the convergence analysis in Appendix \ref{convergence}.

Since $R_{\epst}$ is a complex function involving a deep neural network, its proximal operator does not have close-form and cannot be computed easily in the subproblem in \eqref{prox_sub}.
To overcome this difficulty, we consider to approximate $R_{\epst}$ by
\begin{subequations}
\begin{align}
\hat{R}_{\epst} (\ztp) & = R_{\epst}(\ztp) + \nonumber \langle  \nabla R_{\epst}(\ztp), \xbf-\ztp \rangle + \frac{1}{\beta_t} \norm{\xbf-\ztp}^2. \label{eq:u} 
\end{align}
\end{subequations}
Then we update $ \utp  = \prox_{\alpha_t \hat{R}_{\epst}  } (\ztp )$ to replace \eqref{prox_sub}, therefore we obtain
\begin{equation}\label{ut+1}
    \utp = \ztp -  \taut \nabla R_{\epst}(\ztp), \text{ where } \taut = \frac{\alpha_t \beta_t}{\alpha_t + \beta_t}.
\end{equation}
In order to guarantee the convergence of the algorithm, we introduce the standard gradient descent of $\phi_{\epst} $ (where $\phi_{\epst}  := f + R_{\epst}$) at $ \xbf$:
\begin{equation}\label{grad_dst}
    \vtp = \argmin_{\xbf} \langle \nabla f(\xt), \xbf - \xt \rangle + \langle \nabla R_{\varepsilon}(\xt) , \xbf - \xt \rangle + \frac{1}{2 \alpha_t} \| \xbf - \xt \|^2,
\end{equation}
which yields 
\begin{equation}\label{vt+1}
    \vtp =\xt - \alpha_t  \nabla \phi_{\epst}(\xt) ,
\end{equation}
to serve as a safeguard for the convergence.
Specifically, we set $\xbf_{t+1} = \ubf_{t+1}$ if $\phi_{\epst}(\ubf_{t+1}) \le \phi_{\epst}(\vbf_{t+1})$; otherwise we set $\xbf_{t+1} = \vbf_{t+1}$. Then we repeat this process.

Our algorithm is summarized in Algorithm \ref{alg:lda}. 
%
%
The prior term with unknown parameters has the exact residual update itself which makes the training process more fluent \cite{he2016deep}. The condition checking in Line 5 is introduced to make sure that it is in the energy descending direction. Once the condition in Line 5 fails, it comes to $\vtp$ and the line search in Line 12  guarantees the appropriate step size can be achieved within finite steps to make the function value decrease. From Line 3 to Line 14, we can regard that it solves a problem of minimizing $\phi_{\varepsilon_t}$ with $\epst$ fixed. Line 15 is to update the value of $\epst$ depending on a reduction criterion. The detailed analysis of this mechanism and in-depth convergence justification is shown in Appendix \ref{convergence}. The corresponding unrolling network exactly follows the Algorithm \ref{alg:lda}, thus shares the same convergence property. Compared to LDA \cite{chen2020learnable} computes both candidates $\utp$, $\vtp$ at every iteration and then chooses the one that achieves a smaller function value, we propose the criteria above in Line 5 for updating $\xtp$, 
which potentially
saves extra computational time for calculating the candidate $\vtp$ and potentially mitigates the frequent alternations between the two candidates. Besides, the smoothing method proposed in this work is more straightforward than smoothing in dual space \cite{chen2020learnable} whereas still keeping provable convergence as shown in Theorem \ref{theorem a6}.

The proposed LOA-induced network is a multi-phase network whose architecture exactly follows the proposed LOA \ref{alg:lda} in the way that each phase corresponding to one iteration of the algorithm with shared parameters. These unknown parameters are trained by the method in the next section.

\begin{algorithm}[t]
\caption{Algorithmic Unrolling Method with Provable Convergence}
\label{alg:lda}
\begin{algorithmic}[1]
\STATE \textbf{Input:} Initial $\xbf_0$, $0<\rho, \gamma<1$, and $\varepsilon_0$, $a, \sigma >0$. Max total phases $T$ or tolerance $\etol>0$.
\FOR{$t=0,1,2,\dots,T-1$}
\STATE $\ztp =  \xt - \alpha_{t} \nabla f(\xt)$
\STATE $\utp = \ztp - \taut \nabla R_{\epst} (\ztp)$, 
\IF{ $\| \nabla \phi_{\epst} (\xt) \| \leq a \| \utp - \xt \| \  \mbox{and}   \  \phi_{\epst}(\utp) - \phi_{\epst}(\xt) \leq - \frac{1}{a}\| \utp - \xt \|^2 $ holds} 
\STATE set $\xtp = \utp$,
\ELSE
\STATE $\vtp = \xt - \alpha_{t}  \nabla \phi_{\epst}(\xt)$, \label{marker}
\IF{ $ \phi_{\epst}(\vtp) - \phi_{\epst}(\xt) \le - \frac{1}{a} \| \vtp - \xt\|^2$ holds}
\STATE set $\xtp = \vtp$,
\ELSE
\STATE update $\alpha_{t} \leftarrow \rho \alpha_{t}$,
then \textbf{go to}~\ref{marker},
\ENDIF
\ENDIF
\STATE \textbf{if} $\|\nabla \phi_{\epst}(\xtp)\| < \sigma \gamma {\epst}$,  set $\epstp= \gamma {\epst}$;  \textbf{otherwise}, set $\epstp={\epst}$.
\STATE \textbf{if} $\sigma {\epst} < \etol$, terminate.
\ENDFOR
\STATE \textbf{Output:} $\xbf_T$.
\end{algorithmic}
\end{algorithm}
\subsection{Bilevel optimization algorithm for network training}
In this section, we consider the parameter training problem of the LOA-induced network. 
Specifically, we develop a bilevel optimization algorithm for training our network parameters $ \Theta$ from diverse data sets. 
%

As shown in Section \ref{network}, we design $R (\xbf;\Theta)= \kappa \cdot r(\xbf;\Theta)$, where $r$ is going to be learned to capture the intrinsic property of the underlying common features across all different tasks. 
To account for the large variations in the diverse training/validation data sets, we introduce a task-specific parameter $\omega_i$ to approximate the proper $\kappa$ for the $i$th task. Specifically, for the $i$th task, the weight $\kappa$ is set to $\sigma(\omega_i) \in (0, 1)$, where $\sigma(\cdot)$ is the sigmoid function. Therefore, $\kappa = \sigma(\omega_i)$ finds the proper weight of $r$ for the $i$-th task according to its specific sampling ratio or pattern.
The parameters $\omega_i$ are to be optimized in conjunction with $\Theta$ through the hyperparameter tuning process below.

Suppose that we are given $\mathcal{M}$ data pairs $\{(\ybf_m, \hat{\xbf}_m) \}_{m = 1} ^{\mathcal{M}}$ for the use of training and validation where $\ybf_m$ is the observation, which is partial k-space data in our setting, and $\hat{\xbf}_m$ is the corresponding ground truth image. The data pairs are then sampled into $\mathcal{N}$ tasks $\{ \D_{\tau_i} \}_{i = 1} ^ {\mathcal{N}}$, where each $\D_{\tau_i}$ represents the collection of data pairs in the specific task $\tau_i$.  In each task $\tau_i$, we further divide the data into the task-specific training set $\D^{tr}_{\tau_i}$ and validation set $\D^{val}_{\tau_i}$. Architecture of our base network
exactly follows the LOA \ref{alg:lda} developed in previous section with learnable parameters $\theta$ and a task-specific parameter $\omega_i$ for the $i$th task.
More precisely, for one data sample denoted by $(\ybf^{(i)}_{j},\hat{\xbf}^{(i)}_{j})$ in task $\tau_i$ with index $j$, we propose the algorithmic unrolling network for task $\tau_i$ as
\begin{equation}\label{our_model}
F_{\theta,\omega_i}(\ybf^{(i)}_{j}) \approx \argmin_{\xbf} f(\xbf, \ybf^{(i)}_{j}) + \sigma(\omega_i) r(\xbf;\theta),    
\end{equation}
$\theta$ denotes the learnable common parameters across different tasks with task-invariant representation whereas $\omega_i$ is a task-specific parameter for task $\tau_i$. Here $\omega_i \in \mathbb{R}$ and $\sigma$ is the sigmoid function such that $\sigma(\omega_i) \in (0,1)$. 
The weight $\sigma(\omega_i)$ represents the weight of $r$ associated with the specific task $\tau_i$.
We denote $\omega$ to be the set $\{\omega_i\} _ {i = 1} ^ {\mathcal{N}}$. The detailed architecture of this network is illustrated in Section \ref{network}. We define the task-specific loss
\begin{equation}\label{loss_sum}
\ell_{\tau_i}(\theta, \omega_i  ; \D_{\tau_i}) = \sum _ {j=1}^{ |\D_{\tau_i}|} \ell \big( F_{\theta,\omega_i}(\ybf^{(i)}_{j}), \hat{\xbf}^{(i)}_{j} \big),    
\end{equation}
where $|\D_{\tau_i}|$ represents the cardinality of $\D_{\tau_i}$ and 
\begin{equation}\label{loss}
  \ell \big( F_{\theta,\omega_i}(\ybf^{(i)}_{j}), \hat{\xbf}^{(i)}_{j} \big) := \frac{1}{2} \|F_{\theta,\omega_i}(\ybf^{(i)}_{j}) - \hat{\xbf}^{(i)}_{j}\|^2.
\end{equation}

For the sake of preventing the proposed model from overfitting the training data, we introduce a novel learning framework by formulating the network training as a bilevel optimization problem to learn $\omega$ and $\theta $ in \eqref{our_model} as
\begin{subequations}
\label{eq:bi-level}
\begin{align}
  \min_{ \theta, \ \omega = \omega_{i \in 1:\mathcal{N}}}  \quad & \sum^{\mathcal{N}}_{i=1} \ell _{\tau_i}( \theta (\omega) , \omega_i  ; \D^{val}_{\tau_i}) \\
 \mbox{s.t.}\quad \quad  & \theta(\omega)  = \argmin_{\theta} \sum^{\mathcal{N}}_{i=1} \ell_{\tau_i} ( \theta , \omega_i ; \D^{tr}_{\tau_i}).
\end{align}
\end{subequations}
In \eqref{eq:bi-level}, the lower-level optimization learns the task-invariant parameters $\theta$ of the feature encoder with fixed task-specific parameter $\omega_i$ on the training dataset, and the upper-level adjusts the task-specific parameters $\{\omega_i\}$ so that the task-invariant parameters $\theta$ can perform robustly on validation dataset as well. 
For simplicity, we omit the summation and redefine $\mathcal{L}(\theta, \omega ; \D) := \sum^{\mathcal{N}}_{i=1}\ell _{\tau_i}(\theta, \omega ; \D)$, then briefly rewrite \eqref{eq:bi-level} as 
\begin{equation}
  \min_{ \theta, \omega} \mathcal{L}( \theta(\omega), \omega ; \D^{val}) \ \ \ \ \ \mbox{s.t.} \ \ \theta(\omega) =   \argmin_{\theta}\mathcal{L}( \theta, \omega ; \D^{tr}).
  \label{simplified bi-level}
\end{equation}
Then we relax \eqref{simplified bi-level} into a single-level constrained optimization where the lower-level problem is replaced with its first-order necessary condition following \cite{mehra2019penalty}:
\begin{equation}
  \min_{ \theta, \omega} \mathcal{L}( \theta, \omega ; \D^{val}) \ \ \ \ \ \mbox{s.t.} \ \ \nabla_{\theta} \mathcal{L}( \theta, \omega ; \D^{tr}) = 0.
  \label{simplified bi-level-1}
\end{equation}
which can be further approximated by an unconstrained problem by a penalty term as
\begin{equation}
  \min_{ \theta, \omega} \big\{ \widetilde{\mathcal{L}}( \theta, \omega ; \D^{tr}, \D^{val}) := \mathcal{L}( \theta, \omega ; \D^{val}) + \frac{\lambda}{2} \| \nabla_{\theta} \mathcal{L}( \theta, \omega ; \D^{tr}) \|^2 \big\}.
  \label{simplified bi-level-2}
\end{equation}

%
We adopt the stochastic gradients of the loss functions on mini-batch data sets in each iteration.
In our model, we need to include the data pairs of multiple tasks in one batch, therefore we propose the cross-task mini-batches when training. At each training iteration, we randomly sample the training data pairs $\mathcal{B}^{tr}_{\tau_i} = \{(\ybf^{(i)}_{j}, \hat{\xbf}^{(i)}_{j}) \in \D^{tr}_{\tau_i} \}_{j = 1}^{\mathcal{J}^{tr}}$ and the validation pairs $\mathcal{B}^{val}_{\tau_i} = \{(\ybf^{(i)}_{j}, \hat{\xbf}^{(i)}_{j}) \in \D^{val}_{\tau_i} \}_{j = 1}^{\mathcal{J}^{val}}$ on each task $\tau_i$. Then the overall training and validation mini-batchs $\mathcal{B}^{tr}$ and $\mathcal{B}^{val}$ used in every training iteration is composed of the sampled data pairs from the entire set of tasks, i.e. $\mathcal{B}^{tr} = \bigcup_{i = 1}^{\mathcal{N}} \{\mathcal{B}^{tr}_{\tau_i}\}$ and $\mathcal{B}^{val} = \bigcup_{i = 1}^{\mathcal{N}} \{\mathcal{B}^{val}_{\tau_i}\}$. Thus in each iteration, we have $\mathcal{N} \cdot \mathcal{J}^{tr}$ and $\mathcal{N}  \cdot \mathcal{J}^{val}$ data pairs used for training and validation respectively. To solve the minimization problem \eqref{simplified bi-level-2}, we utilize the stochastic mini-batch alternating direction method summarized in Algorithm \ref{penelty_method} which is modified from \cite{mehra2019penalty}.
In Algorithm \ref{penelty_method}, the parameter $\delta_{tol}$ is to control the accuracy, the algorithm will terminate when $\delta \le \delta_{tol}$. In addition, $\lambda$ is the weight for the second constraint term of \eqref{simplified bi-level-2}, in the beginning, we set $\lambda$ to be small so as to achieve a quick starting convergence then gradually increase its value to emphasize the constraint. 

\begin{algorithm}
\caption{Stochastic mini-batch alternating direction penalty method}\label{alg:model}
\begin{algorithmic}[1]
\STATE \textbf{Initialize}  $ \theta$, $ {\omega}$, $\delta$, $\lambda$ and $\nu_\delta \in(0, 1)$, \ $\nu_\lambda > 1$.
\WHILE{$\delta > \delta_{tol}$}
\STATE Sample cross-task training batch $\mathcal{B}^{tr} = \bigcup_{i = 1}^{\mathcal{N}} \{(\ybf^{(i)}_{j}, \hat{\xbf}^{(i)}_{j}) \in \D^{tr}_{\tau_i} \}_{j = 1 : \mathcal{J}^{tr}}$
\STATE Sample cross-task validation batch $\mathcal{B}^{val} = \bigcup_{i = 1}^{\mathcal{N}} \{(\ybf^{(i)}_{j}, \hat{\xbf}^{(i)}_{j}) \in \D^{val}_{\tau_i} \}_{j = 1 : \mathcal{J}^{val}}$
\WHILE{$\|\nabla_{\theta}\widetilde{\mathcal{L}}( \theta, \omega ; \mathcal{B}^{tr}, \mathcal{B}^{val})\|^2 + \| \nabla_{\omega}\widetilde{\mathcal{L}}( \theta, \omega ; \mathcal{B}^{tr}, \mathcal{B}^{val})\| ^2 > \delta$}
\FOR{$k=1,2,\dots,K$ (inner loop)}
\STATE $ \theta \leftarrow \theta - \rho_{\theta}^k \nabla_{\theta}\widetilde{\mathcal{L}}( \theta, \omega ; \mathcal{B}^{tr}, \mathcal{B}^{val})$
\ENDFOR
\STATE $ \omega \leftarrow \omega - \rho_{\omega} \nabla_{\omega}\widetilde{\mathcal{L}}( \theta, \omega ; \mathcal{B}^{tr}, \mathcal{B}^{val})$
\ENDWHILE
\STATE \textbf{update} $\delta \leftarrow \nu_\delta \delta$, $\ \lambda \leftarrow \nu_\lambda \lambda$
\ENDWHILE
\STATE \textbf{output:} $\theta, {\omega}$.
\end{algorithmic}
\label{penelty_method}
\end{algorithm}

\section{Implementation}
\label{sec:Implementation}
\subsection{Feature extraction operator}
We set the feature extraction operator $\gbf$ to a vanilla $l$-layer CNN with the componentwise nonlinear activation function $\varphi$ and no bias, as follows: 
\begin{equation}\label{eq:g}
  \gbf(x) = \wbf_l * \varphi \cdots \ \varphi ( \wbf_3 * \varphi ( \wbf _2 * \varphi ( \wbf _1 * x ))),
\end{equation}
where $\{\wbf _q \}_{q = 1}^{l}$ denote the convolution weights consisting of $d$ kernels with identical spatial kernel size, and $*$ denotes the convolution operation. 
Here $\varphi$ is constructed to be the smoothed rectified linear unit as defined below
\begin{equation}\label{eq:sigma}
\varphi (x) = 
\begin{cases}
0, & \mbox{if} \ x \leq -\delta, \\
\frac{1}{4\delta} x^2 + \frac{1}{2} x + \frac{\delta}{4}, & \mbox{if} \ -\delta < x < \delta, \\
x, & \mbox{if} \ x \geq \delta,
\end{cases}
\end{equation}
where the prefixed parameter $\delta$ is set to be $0.001$ in our experiment.
The default configuration of the feature extraction operator is set as follows: the feature extraction operator $\gbf$ consists of $l=3$ convolution layers and all convolutions are with $4$ kernels of spatial size $3 \times 3$.

\subsection{Setups}
\label{Task-specific network pre-training}
As our method introduces an algorithmic unrolling network, there exists a one-to-one correspondence between the algorithm iterations and the neural network phases (or blocks). Each phase of the forward propagation can be viewed as one algorithm iteration, which motivates us to imitate the iterating of the optimization algorithm and use a stair training strategy \cite{chen2020learnable}. At the first stage, we start training the network parameters using 1 phase, then after the the loss converges, we add more phases (1 phase each time) then continue the training process. We repeat this procedure and stop it until the loss does not decrease any more when we add more blocks. We minimize the loss for $100$ epochs/iterations each time using the SGD-based optimizer Adam \cite{kingma2014adam} with $\beta_1 = 0.9$, $\beta_2 = 0.999$ and initial learning rate set to $10^{-3}$ as well as the mini-batch size $8$. The Xavier Initializer \cite{Glorot10understandingthe} is used to initialize the weights of all convolutions. The initial smoothing parameter $\varepsilon_0$ is set to be $0.001$ then learned together with other network parameters. The input $\xbf_0$ of the unrolling network is obtained by Zero-filling strategy\cite{bernstein2001effect}.  The deep unrolling network was implemented using the Tensorflow toolbox \cite{tensorflow2015-whitepaper} in Python programming language. Our code will be publicly shared depending on acceptance.

\section{Numerical Experiments}\label{experiments}
\subsection{Dataset}
To validate the performance of the proposed method, the data we used are from Multimodal Brain Tumor Segmentation Challenge 2018 \cite{menze2014multimodal}, in which the training dataset contains four modalities ($T1, T1_{c}, T2$ and $FLAIR$) scanned from 285 patients, and the validation dataset contains images from 66 patients, each with volume size $ 240 \times 240 \times 155$.  Each modality consists of two types of gliomas: 75 volumes of low-grade gliomas (LGG) and 210 volumes of high-grade gliomas (HGG). Our implementation interested in HGG MRI in two modalities: T1 and T2 images, and we choose 30 patients from each modality
in the training dataset for training our network. In the validation dataset, we randomly picked 15 patients as our validation data, and 6 patients in the training dataset as testing data which are are distinct from our training set and validation set.  We cropped the 2D image size to be $ 160 \times 180$ in the center region and picked adjacent $10$ slices in the center of each volume, which results in a total of $300$ images as our training data, $150$ images in our validation data, and a total of $60$ images as testing data. The number of data mentioned here is the amount of a single task, but since we employ multi-tasks training, the total number of images in each dataset should multiply the number of tasks. For each 2D slice, we normalize the spatial intensity by dividing the maximum pixel value.

\subsection{Experiment settings and Comparison results}

All the experiments are implemented on a Windows workstation with Intel Core i9 CPU at 3.3GHz and an Nvidia GTX-1080Ti GPU with 11GB of graphics card memory via TensorFlow \cite{abadi2016tensorflow}. The parameters in the proposed network are initialized by using Xavier initialization \cite{glorot2010understanding}.
We trained the meta-learning network with four tasks synergistically associated with four different CS ratios: 10\%, 20\%, 30\%, and 40\%, and test the well-trained model on the testing dataset with the same masks of these four ratios. We have 300 training data for each CS ratio, which amount to total of 1200 images in the training dataset. The results for $T1$ and $T2$ MR reconstructions are shown in Tables \ref{results_same_ratio_t1} and \ref{results_same_ratio_t2} respectively. The associated reconstructed images are displayed in Figures \ref{figure_same_ratio_t1} and \ref{figure_same_ratio_t2}.  We also test the well-trained meta-learning model on unseen tasks with radio masks for skewed ratios: 15\%, 25\%, 35\%, and random Cartesian masks with ratios 10\%, 20\%, 30\% and 40\%. The task-specific parameter for the unseen tasks are retrained for different masks with different sampling ratios individually with fixed task-invariant parameters $\theta$. In this experiments, we only need to learn $ \omega_i$ for three skewed CS ratios with radio mask and four regular CS ratios with Cartesian masks. The experimental training proceed on less data and iterations, where we performed on 100 MR images with 50 epochs. For example, for reconstructing MR images with CS ratio 15\% radio mask, we fix the parameter $\theta$ and retrain the task-specific parameter $\omega$ on 100 raw data with 50 epochs, then test with renewed $\omega$ on our testing data set with raw measurement that sampled from radio mask with CS ratio 15\%. The results associated with  radio masks are shown in Table \ref{results_dif_ratio_t1} and \ref{results_dif_ratio_t2}, Figure \ref{figure_dif_ratio_t1} and \ref{figure_dif_ratio_t2}  for $T1$ and $T2$ images respectively. The results associated with Cartesian masks are list in 
Table \ref{results_same_ratio_t2_cts} and reconstructed images are displayed in Figure  \ref{figure_same_ratio_t2_cts}.

We compared proposed meta-learning method with conventional supervised learning, which was trained with one task at each time, and only learns task-invariant parameter $\theta$ without task-specific parameter $ \omega_i$. The forward network of conventional learning  unrolls Algorithm \ref{alg:lda} with 11 phases which is the same as meta-learning. We merged training set and validation set which result in $450$ images for training the conventional supervised learning. The training batch size was set as 25 and applied total of 2000 epochs, while in meta-learning we applied 100 epochs with batch size 8.  Same testing set was used in both meta-learning and conventional learning to evaluate the performance these two methods.

We made comparisons between meta-learning and the conventional network on these seven different CS ratios (10\%, 20\%, 30\%, 40\%, 15\%, 25\%, and 35\%) in terms of two types of random under-sampling patterns: radio mask and Cartesian mask. The parameters for both meta-learning and conventional learning networks are trained via Adam optimizer \cite{kingma2014adam}, and they both learn the forward unrolled task-invariant parameter $\theta$.  Network training of conventional method uses the same network configuration as meta-learning network in terms of the number of convolutions, depth and size of CNN kernels, phase numbers and parameter initializer, etc.
The major difference in the training process between these two methods is that meta-learning is proceed on multi-tasks by leveraging task-specific parameter $\omega_i$ that learned from Algorithm \ref{alg:model} and the common feature among tasks are learned from the feed forward network that unrolls Algorithm \ref{alg:lda}, while conventional learning solve for task specific problem by simply unrolls forward network via Algorithm \ref{alg:lda} where both training and testing are implemented on the same task. To investigate the generalizability of meta-learning, we test the well-trained meta-learning model on MR images in different distributions in terms of two types of sampling masks with various trajectories. Training and testing of conventional learning are applied with the same CS ratios, that is, if the conventional method trained on CS ratio 10\% then will be tested on a dataset with CS ratio 10\%, etc. 

Because of the nature of MR images are represented as complex-value, we applied complex convolutions \cite{WANG2020136} for each CNN, that is, every kernel consists real part and imaginary part. Total of 11 phases are achieved if we set the termination condition $\etol = 1\times 10^{-5}$ and the parameters of each phase are shared except for the step sizes. Three convolutions are used in $\gbf$, each convolution contains 4 filters with spatial kernel size $ 3\times3$. We set $\nu_\delta =0.95 $ and the parameter $\delta$ in Algorithm \ref{alg:model} was initialized as $\delta_0 =1 \times 10^{-3}$ stopped at value $\delta_{tol} = 4.35 \times 10 ^ {-6}$, and we set total of 100 epochs. For training conventional method we set 2000 epochs with same number of phase, convolutions and kernel sizes as meta-learning. Initial $\lambda$ was set as $1 \times 10^{-5} $ and $ \nu_\lambda = 1.001$. 

We evaluate our reconstruction results on testing data sets using three metrics: PSNR, structural similarity (SSIM) \cite{wang2004image} and normalized mean squared error NMSE.
The following formulations compute the PSNR, SSIM and NMSE between reconstructed image $ \xbf$ and ground truth $\hat{\xbf}$. 
\begin{equation}
    PSNR = 20\log_{10} \big(  \max(\abs{\hat{\xbf}})  \big/ \frac{1}{N}\| \hat{\xbf} - \xbf \|^2 \big),
\end{equation}
where $N$ is the total number of pixels of ground truth.
\begin{equation}
    SSIM = \frac{(2\mu_{\xbf} \mu_{\hat{\xbf}} + C_1)(2\sigma_{\xbf \hat{\xbf}} + C_2)}{(\mu_{\xbf}^2 + \mu_{\hat{\xbf}}^2+C_1)( \sigma_{\xbf}^2 + \sigma_{\hat{\xbf}}^2 + C_2)},
\end{equation}
 where $\mu_{\xbf}, \mu_{\hat{\xbf}}$ represent local means, $\sigma_{\xbf}, \sigma_{\hat{\xbf}}$ denote standard deviations and $\sigma_{\xbf \hat{\xbf}} $ covariance between $\xbf$ and $\hat{\xbf} $, $ C_1 = (k_1 L)^2, C_2 = (k_2 L)^2$ are two constants which avoid zero denominator, and $ k_1 =0.01, k_2 =0.03$. $L$ is the largest pixel value of MR image. 
\begin{equation}
   NMSE(\xbf,\hat{\xbf})=  \frac{\|  \xbf-\hat{\xbf}\|_2^2}{\|  \xbf\|_2^2} .
\end{equation}

\subsection{Quantitative and Qualitative Comparisons at different trajectories in radio mask}

We evaluate the performance of well-trained Meta-learning and conventional learning. Table \ref{results_same_ratio_t1}, \ref{results_same_ratio_t2} and \ref{results_same_ratio_t2_cts} report the quantitative results of averaged numerical performance with standard deviations and associated descaled task specific meta-knowledge $ \sigma(\omega_i)$. From the experiments that implemented with radio masks, we observe that the average PSNR value of Meta-learning has improved 1.54 dB in T1 brain image among all the four CS ratios comparing to conventional method, and for T2 brain image, Meta-learning improved 1.46 dB in PSNR.  Since the general setting of Meta-learning aims to take advantage of the information provided from each individual task, each task associate with an individual sampling mask that may have complemented sampled points, the performance of the reconstruction from each task benefits from other tasks. Smaller CS ratios will inhibit the reconstruction accuracy, due to the sparse undersampled trajectory in raw measurement, while Meta-learning exhibits the favorable potential ability to solve this issue even in the situation of insufficient amount of training data. 

In general supervised learning,  training data need to be in the same or similar distribution, heterogeneous data exhibits different structure variations of features which hinders CNNs to extract features efficiently. In our experiments, raw measurements sampled from different ratios of compressed sensing display different levels of incompleteness, these undersampled measurements do not fall in the same distribution but they are related. Different sampling masks are shown at the bottom of Figure \ref{figure_same_ratio_t1} and \ref{figure_dif_ratio_t1} may have complemented sampled points, in the sense that some of the points which  $40\%$ sampling ratio mask does not sample have been captured by other masks. In our experiment, different sampling masks provide their own information from their sampled points so that four reconstruction tasks help each other to achieve an efficient performance. Therefore, it explains the reason that Meta-learning is still superior to conventional learning when the sampling ratio is large.

Meta-learning expands a new paradigm for supervised learning, the purpose is to quickly learn multiple tasks. Meta-learning only needs to learn task-invariant parameters one time for the common feature that can be shared with four different tasks, and each $ \sigma(\omega_i)$ provides the task-specific weighting parameters which give the guidance as "learning to learn". Comparing to conventional learning, which needs to be trained four times with four different masks since the task-invariant parameter cannot be generalized to other tasks, which is time-intensive. From Table \ref{results_same_ratio_t1} and \ref{results_same_ratio_t2}, we observe that small CS ratio needs higher value of $\sigma(\omega_i) $. In fact, in our model \eqref{model} the task-specific parameters behave as weighted constraints for task-specific regularizers, and the tables indicate that lower CS ratios require larger weights to apply on the regularization. 

Qualitative comparison between conventional and Meta-learning methods are shown in Figure \ref{figure_same_ratio_t1} and \ref{figure_same_ratio_t2}, which display the reconstructed MR images of the same slice for T1 and T2 respectively, we label the zoomed-in details of HGG in the red boxes. We observe the evidence that conventional learning is more blurry and lost sharp edges, especially in lower CS ratios. From the point-wise error map, we find meta-learning has the ability to reduce noises especially in some detailed and complicated regions comparing to conventional learning.

\subsection{Quantitative and Qualitative Comparisons at skewed trajectories in different sampling patterns}

In this section, we test the generalizability of the proposed model that tests on unseen tasks. We fix the well-trained task-invariant parameter $\theta$  and only train $\omega_i$ for sampling ratios 15\%, 25\% and 35\% with radio masks and sampling ratios 10\%, 20\%, 30\% and 40\% with Cartesian masks. In this experiment, we only used 100 training data for each CS ratio and apply a total of 50 epochs. The averaged evaluation values and standard deviations are listed in Table \ref{results_dif_ratio_t1} and \ref{results_dif_ratio_t2} for reconstructed T1 and T2 brain images respectively that proceed with radio masks, and Table \ref{results_same_ratio_t2_cts} shows the qualitative performance for reconstructed T2 brain image that applied random Cartesian sampling masks.  In T1 image reconstruction results, meta-learning improved 1.6921 dB in PSNR for 15\% CS ratio, 1.6608 dB for 25\% CS ratio, and 0.5764 dB for 35\% comparing to the conventional method,  which in the tendency that the level of reconstruction quality for lower CS ratios improved more than higher CS ratios. A similar trend happens in T2 reconstruction results with different sampling masks. The qualitative comparisons are illustrated in Figure \ref{figure_dif_ratio_t1}, \ref{figure_dif_ratio_t2} and \ref{figure_same_ratio_t2_cts} for T1 and T2 images tested in skewed CS ratios in radio masks, and T2 images tested in Cartesian masks with regular CS ratios respectively. 
In the experiments that conducted with radio masks,
meta-learning is superior to conventional learning especially at CS ratio 15\%, one can observe that the detailed region in red boxes keeps edges and is more close to the true image, while conventional method reconstructions are hazier and lost details in some complicated tissue. The point-wise error map also indicates that Meta-learning has the ability to suppress noises.

Training with Cartesian masks is more difficult than radio masks, especially for conventional learning where the network is not very deep since the network only applied three convolutions each with four kernels. Table \ref{results_same_ratio_t2_cts} indicates that the average performance of meta-learning improved about 1.87 dB comparing to conventional methods with T2 brain images. These results further demonstrate  that meta-learning has the benefit of parameter efficiency, the performance is much better than conventional learning even if we apply a shallow network with small size of training data.

The numeric experimental results discussed above intimates that Meta-learning is capable of fast adaption to new tasks and has more robust generalizability for a broad range of tasks with heterogeneous diverse data. Meta-learning can be considered as an efficient technique for solving difficult tasks by leveraging the features extracted from easier tasks.

\begin{specialtable}[htb]
\caption{ Quantitative evaluations of the reconstructions on T1 brain image associated with various sampling ratios of \textbf{radial} masks.  \label{results_same_ratio_t1}}
\addtolength{\tabcolsep}{-2pt}
\begin{tabular}{cccccc}
\toprule
\textbf{CS Ratio} & \textbf{Methods} & \textbf{PSNR}	& \textbf{SSIM}	& \textbf{NMSE} & $ \sigma(\omega_i)$\\
\midrule
10\% & Conventional		& 21.7570 $\pm$ 1.0677  & 0.5550 $\pm$ 0.0412 &  0.0259 $\pm$ 0.0082  &  \\
   & Meta-learning		& 23.2672 $\pm$ 1.1229  & 0.6101 $\pm$ 0.0436 & 0.0184 $\pm$ 0.0067 & 0.9218\\
   \midrule
20\% & Conventional		& 26.6202 $\pm$ 1.1662  & 0.6821 $\pm$ 0.0397 &  0.0910 $\pm$ 0.0169 & \\ 
   & Meta-learning		&  28.2944 $\pm$ 1.2119  & 0.7640 $\pm$ 0.0377  &  0.0058 $\pm$ 0.0022 & 0.7756\\
   \midrule
30\% & Conventional		& 29.5034 $\pm$ 1.4446  & 0.7557 $\pm$ 0.0408 &  0.0657 $\pm$ 0.0143 & \\
   & Meta-learning		&  31.1417 $\pm$ 1.5866  & 0.8363 $\pm$ 0.0385  & 0.0031 $\pm$ 0.0014 & 0.6501\\
   \midrule
40\% & Conventional		& 31.4672 $\pm$ 1.6390  & 0.8111 $\pm$ 0.0422 &  0.0029 $\pm$ 0.0014 & \\
   & Meta-learning	&  32.8238 $\pm$ 1.7039  & 0.8659 $\pm$ 0.0370  & 0.0022 $\pm$ 0.0010 & 0.6447\\
\bottomrule
\end{tabular}
\end{specialtable}

\begin{specialtable}[htb]
\caption{Quantitative evaluations of the reconstructions on T1 brain image associated with various sampling ratios of \textbf{radial} masks. Meta-learning was trained with CS ratio 10\%, 20\%, 30\% and 40\% and test with skewed ratios 15\%, 25\% and 35\%. Conventional method performed regular training and testing on same CS ratios 15\%, 25\% and 35\%. \label{results_dif_ratio_t1}}
\addtolength{\tabcolsep}{-2pt}
\begin{tabular}{cccccc}
\toprule
\textbf{CS Ratio} & \textbf{Metods} & \textbf{PSNR}	& \textbf{SSIM}	& \textbf{NMSE} & $ \sigma(\omega_i)$ \\
\midrule
15\% & Conventional		& 24.6573 $\pm$ 1.0244  & 0.6339 $\pm$ 0.0382 &  0.1136 $\pm$ 0.0186  &\\
   & Meta-learning		& 26.3494 $\pm$ 1.0102  & 0.7088 $\pm$ 0.0352 & 0.0090 $\pm$ 0.0030 & 0.9429\\
   \midrule
25\% & Conventional		& 28.4156 $\pm$ 1.2361  & 0.7533 $\pm$ 0.0368 &  0.0741 $\pm$ 0.0141 & \\ 
   & Meta-learning		& 30.0764 $\pm$ 1.4645  & 0.8135 $\pm$ 0.0380 & 0.0040 $\pm$ 0.0017 & 0.8482\\
   \midrule
35\% & Conventional		& 31.5320 $\pm$ 1.5242  & 0.7923 $\pm$ 0.0420 &  0.0521 $\pm$ 0.0119  &\\
   & Meta-learning		& 32.1084 $\pm$ 1.6481  & 0.8553 $\pm$ 0.0379 &  0.0025 $\pm$ 0.0011 & 0.6552\\
\bottomrule
\end{tabular}
\end{specialtable}

\begin{specialtable}[H]
\caption{Quantitative evaluations of the reconstructions on T2 brain image associated with various sampling ratios of \textbf{radial} masks.  \label{results_same_ratio_t2}}
\addtolength{\tabcolsep}{-2pt}
\begin{tabular}{cccccc}
\toprule
\textbf{CS Ratio} & \textbf{Metods} & \textbf{PSNR}	& \textbf{SSIM}	& \textbf{NMSE} & $ \sigma(\omega_i)$\\
\midrule
10\% & Conventional	& 23.0706 $\pm$ 1.2469  & 0.5963 $\pm$ 0.0349 &  0.2158  $\pm$  0.0347  &\\
   & Meta-learning	& 24.0842 $\pm$ 1.3863 & 0.6187 $\pm$ 0.0380 & 0.0112 $\pm$ 0.0117  & 0.9013	\\
   \midrule
20\% & Conventional & 27.0437  $\pm$ 1.0613  & 0.6867 $\pm$ 0.0261 &  0.1364 $\pm$ 0.0213 & \\ 
   & Meta-learning	& 28.9118 $\pm$ 1.0717 & 0.7843 $\pm$ 0.0240	& 0.0122 $\pm$ 0.0030 & 0.8742\\
   \midrule
30\% & Conventional	 & 29.5533 $\pm$ 1.0927 & 0.7565 $\pm$ 0.0265 & 0.1023 $\pm$ 0.0166 & \\
   & Meta-learning	& 31.4096 $\pm$ 0.9814 & 0.8488 $\pm$ 0.0217 & 0.0069 $\pm$ 0.0019 & 0.8029\\
   \midrule
40\% & Conventional	& 32.0153 $\pm$ 0.9402 & 0.8139 $\pm$ 0.0238 & 0.0770 $\pm$ 0.0128 & \\
   & Meta-learning	& 33.1114 $\pm$ 1.0189 & 0.8802 $\pm$ 0.0210 & 0.0047 $\pm$ 0.0015 & 0.7151\\
\bottomrule
\end{tabular}
\end{specialtable}

\begin{specialtable}[H]
\caption{Quantitative evaluations of the reconstructions on T2 brain image associated with various sampling ratios of \textbf{radial} masks. Meta-learning was trained with CS ratio 10\%, 20\%, 30\% and 40\% and test with 15\%, 25\% and 35\%. Conventional method performed regular training and testing on same CS ratios 15\%, 25\% and 35\%. \label{results_dif_ratio_t2}}
\addtolength{\tabcolsep}{-2pt}
\begin{tabular}{cccccc}
\toprule
\textbf{CS Ratio} & \textbf{Metods} & \textbf{PSNR}	& \textbf{SSIM}	& \textbf{NMSE} & $ \sigma(\omega_i)$ \\
\midrule
15\% & Conventional		& 24.8921 $\pm$ 1.2356 & 0.6259 $\pm$ 0.0285 & 0.1749 $\pm$ 0.0280  & \\
   & Meta-learning		& 26.7031 $\pm$ 1.2553 & 0.7104 $\pm$ 0.0318 & 0.0205 $\pm$ 0.0052  & 0.9532\\
   \midrule
25\% & Conventional		& 29.0545 $\pm$ 1.1980 & 0.7945 $\pm$ 0.0292 & 0.1083 $\pm$ 0.0173  & \\ 
   & Meta-learning		& 30.0698 $\pm$ 0.9969 & 0.8164 $\pm$ 0.0235 &  0.0093 $\pm$ 0.0022 & 0.8595\\
   \midrule
35\% & Conventional		& 31.5201 $\pm$ 1.0021 & 0.7978 $\pm$ 0.0236 & 0.0815 $\pm$ 0.0129  & \\
   & Meta-learning		& 32.0683 $\pm$ 0.9204 & 0.8615 $\pm$ 0.0209 & 0.0059 $\pm$ 0.0014  & 0.7388\\
\bottomrule
\end{tabular}
\end{specialtable}

\begin{specialtable}[H]
\caption{Quantitative evaluations of the reconstructions on T2 brain image associated with various sampling ratios of random \textbf{Cartesian} masks.  \label{results_same_ratio_t2_cts}}
\addtolength{\tabcolsep}{-2pt}
\begin{tabular}{cccccc}
\toprule
\textbf{CS Ratio} & \textbf{Metods} & \textbf{PSNR}	& \textbf{SSIM}	& \textbf{NMSE} & $ \sigma(\omega_i)$\\
\midrule
10\% & Conventional	& 20.8867 $\pm$ 1.2999 & 0.5082 $\pm$ 0.0475 & 0.0796 $\pm$ 0.0242  &\\
   & Meta-learning	& 22.0434 $\pm$ 1.3555 & 0.6279 $\pm$ 0.0444 & 0.0611 $\pm$ 0.0188  & 0.9361	\\
   \midrule
20\% & Conventional & 22.7954 $\pm$ 1.2819 & 0.6057 $\pm$ 0.0412 & 0.0513 $\pm$ 0.0157 & \\ 
   & Meta-learning	& 24.7162 $\pm$ 1.3919 & 0.6971 $\pm$ 0.0380 & 0.0329 $\pm$ 0.0101 & 0.8320\\
   \midrule
30\% & Conventional	 & 24.2170 $\pm$ 1.2396 & 0.6537 $\pm$ 0.0360 & 0.0371 $\pm$ 0.0117 & \\
   & Meta-learning	&  26.4537 $\pm$ 1.3471 & 0.7353 $\pm$ 0.0340 & 0.0221 $\pm$ 0.0068 & 0.6771\\
   \midrule
40\% & Conventional	& 25.3668 $\pm$ 1.3279 & 0.6991 $\pm$ 0.0288 & 0.1657 $\pm$ 0.0265 & \\
   & Meta-learning	& 27.5367 $\pm$ 1.4107 & 0.7726 $\pm$ 0.0297 & 0.0171 $\pm$ 0.0050 & 0.6498\\
\bottomrule
\end{tabular}
\end{specialtable}

\begin{figure}[H]
\centering
\includegraphics[width=0.2\linewidth, angle=90]{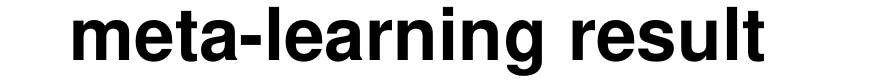}
\includegraphics[width=0.18\linewidth]{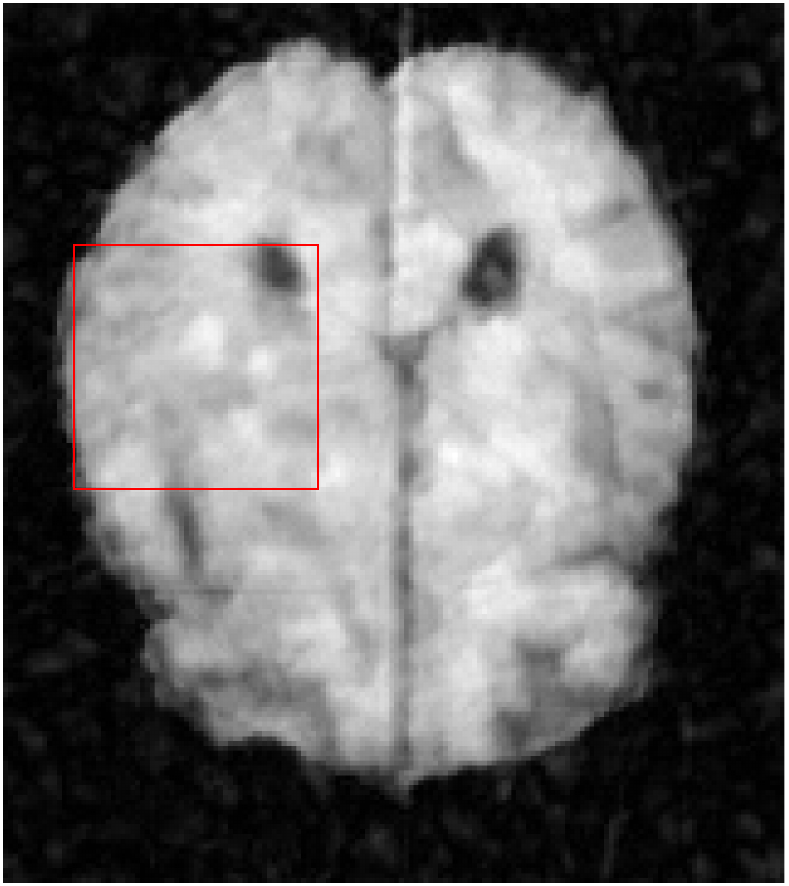}
\includegraphics[width=0.18\linewidth]{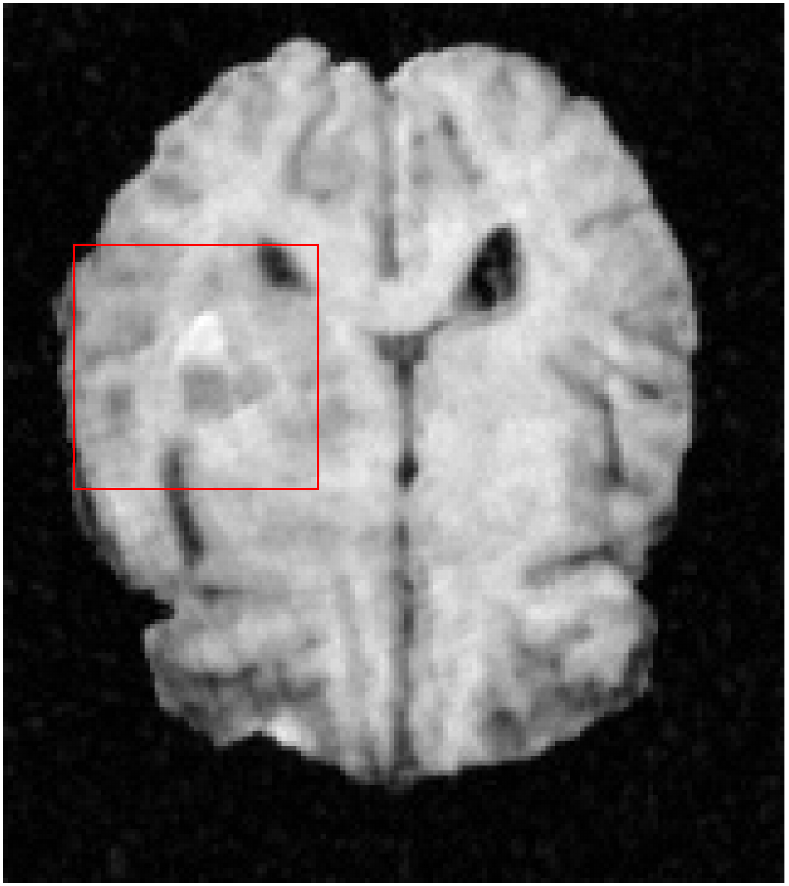}
\includegraphics[width=0.18\linewidth]{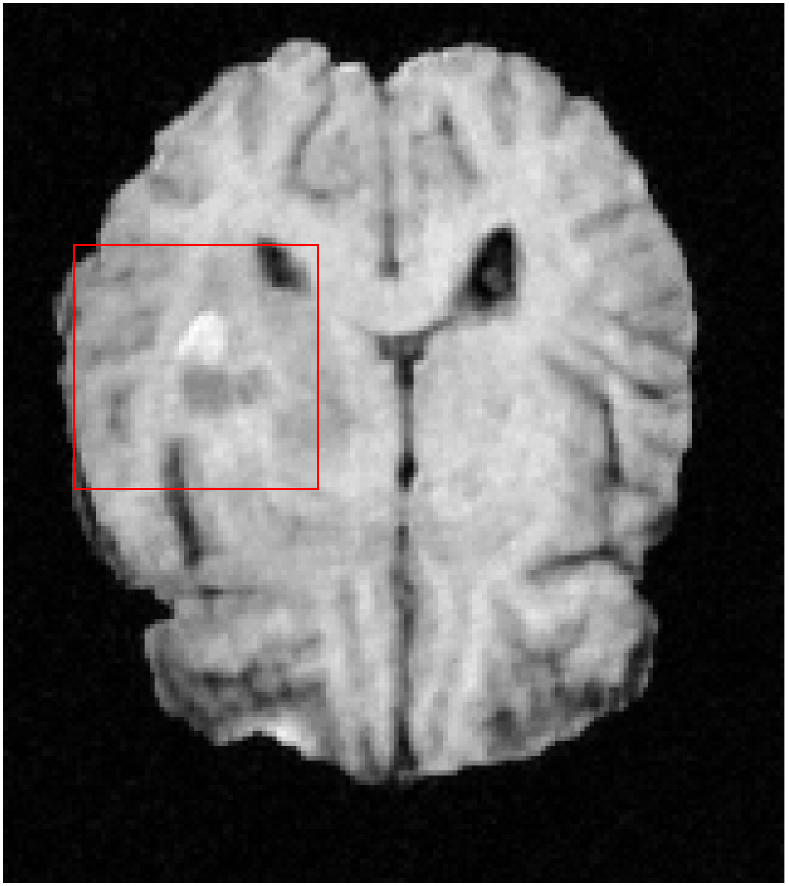}
\includegraphics[width=0.18\linewidth]{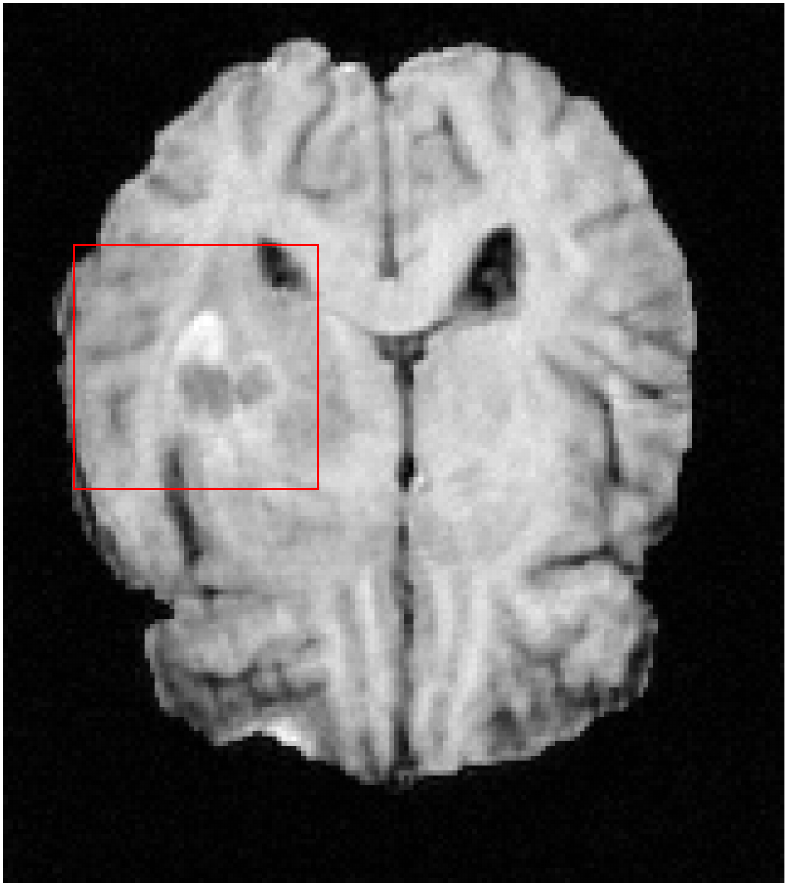}
\includegraphics[width=0.18\linewidth]{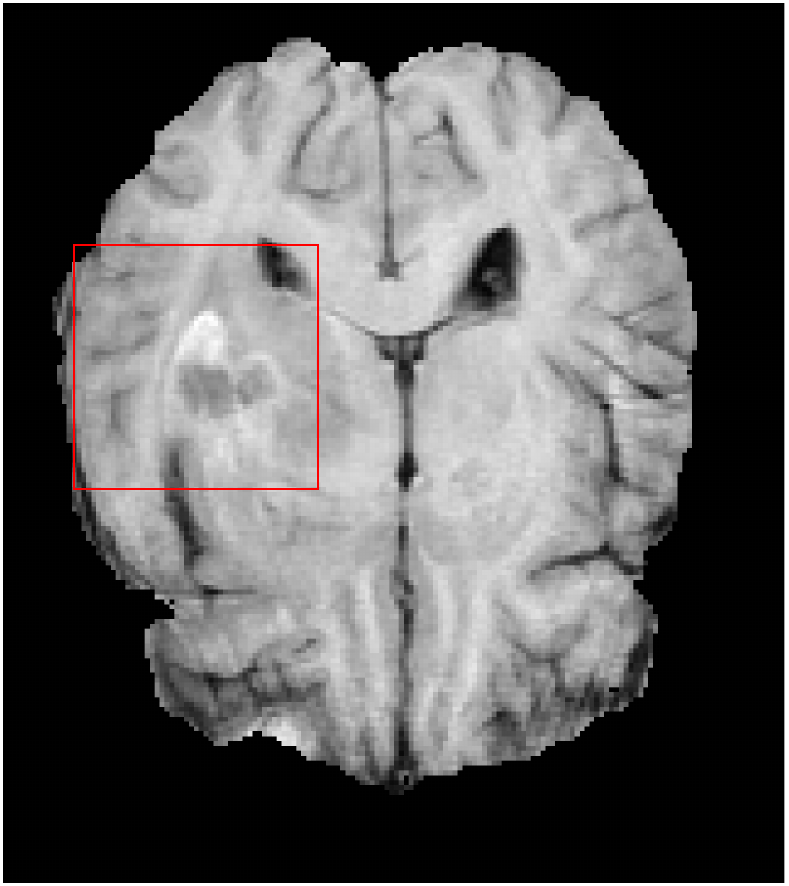}\\
\includegraphics[width=0.2\linewidth, angle=90]{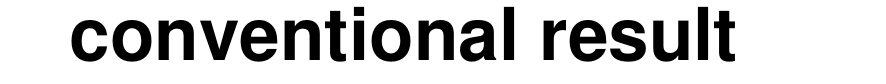}
\includegraphics[width=0.18\linewidth]{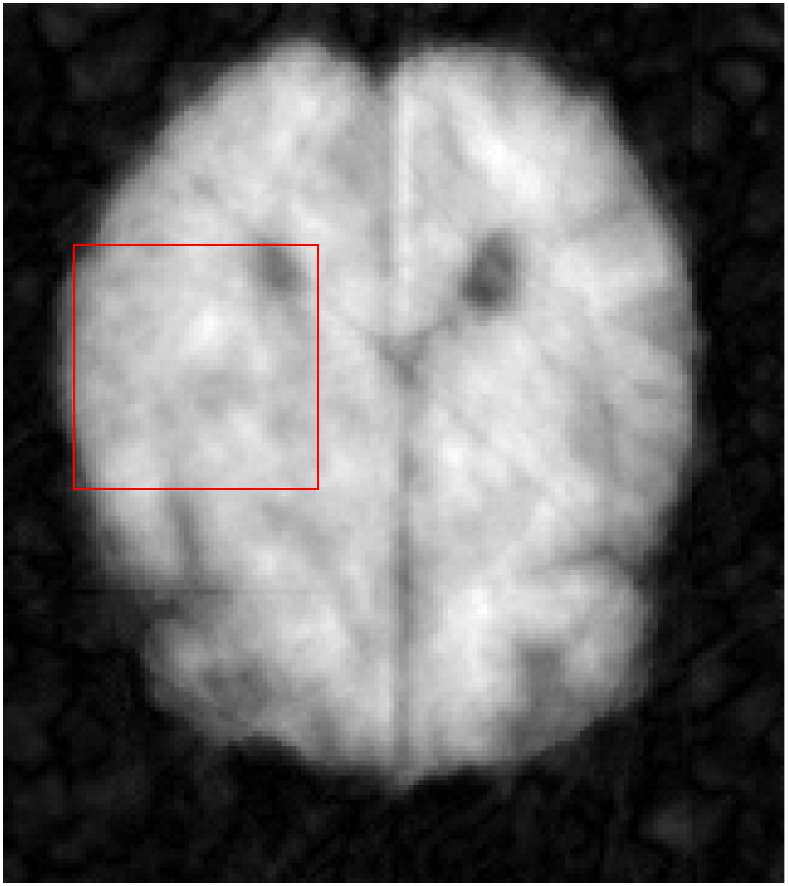}
\includegraphics[width=0.18\linewidth]{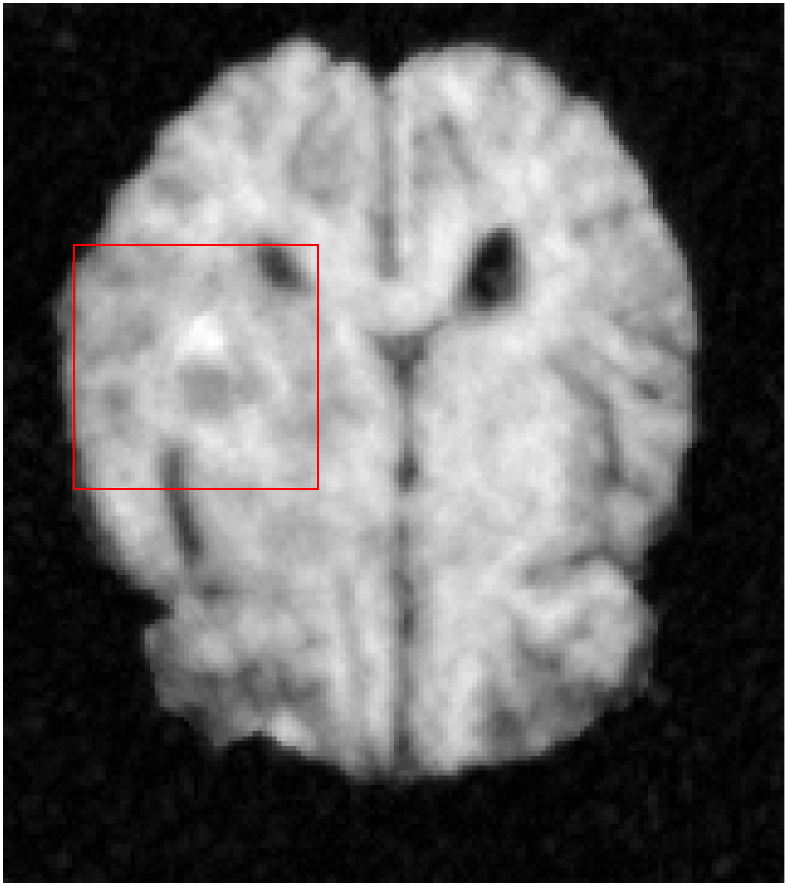}
\includegraphics[width=0.18\linewidth]{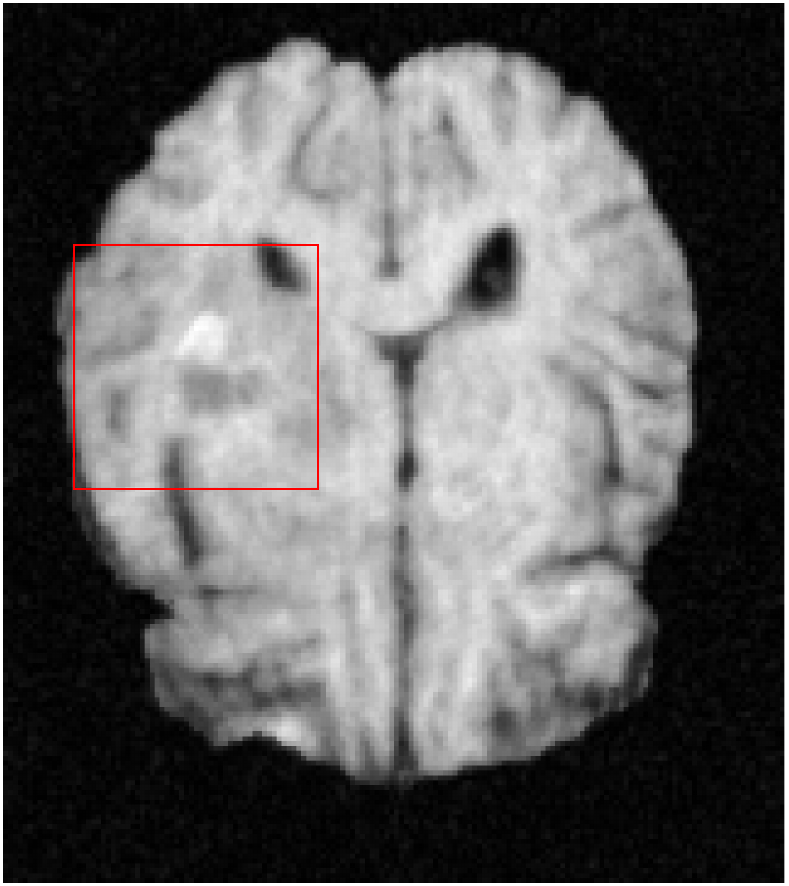}
\includegraphics[width=0.18\linewidth]{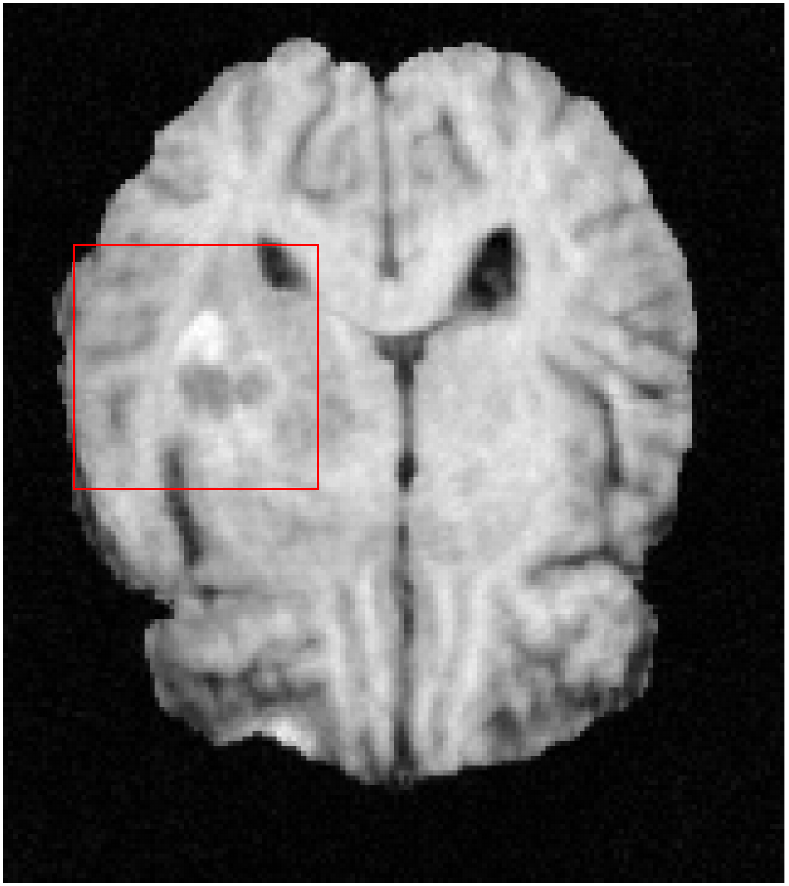}
\includegraphics[width=0.18\linewidth]{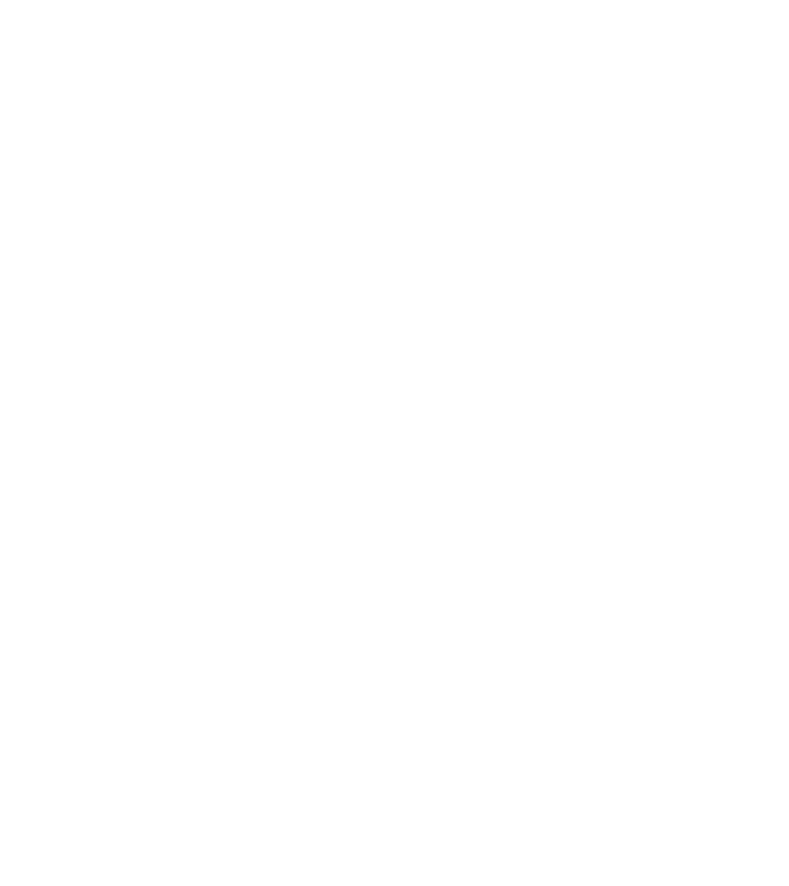}\\
\includegraphics[width=0.2\linewidth, angle=90]{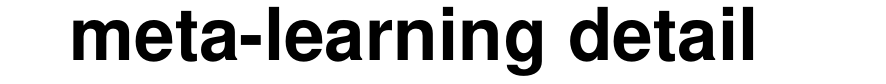}
\includegraphics[width=0.18\linewidth]{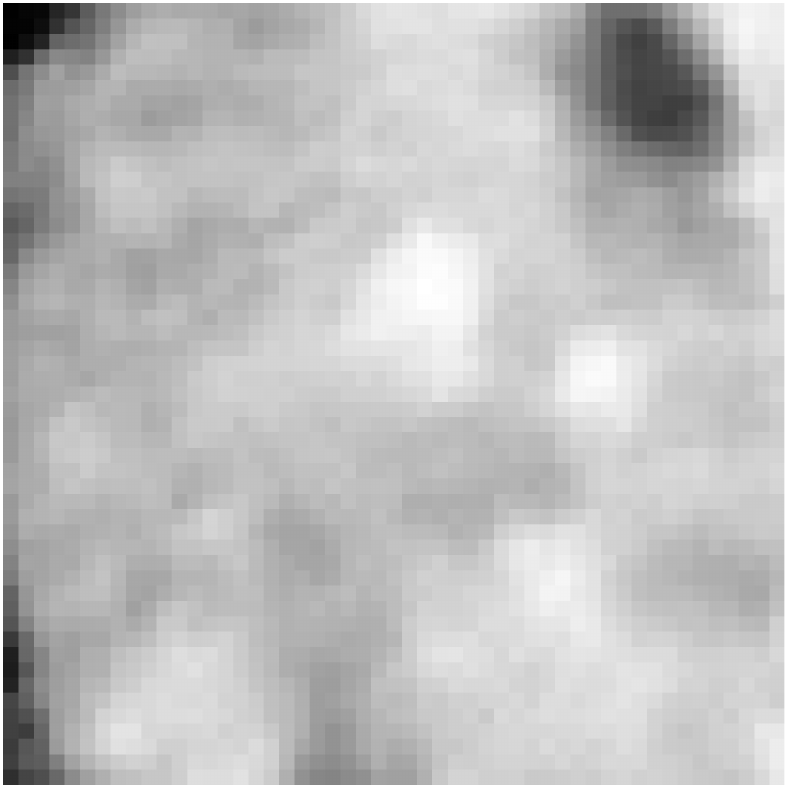}
\includegraphics[width=0.18\linewidth]{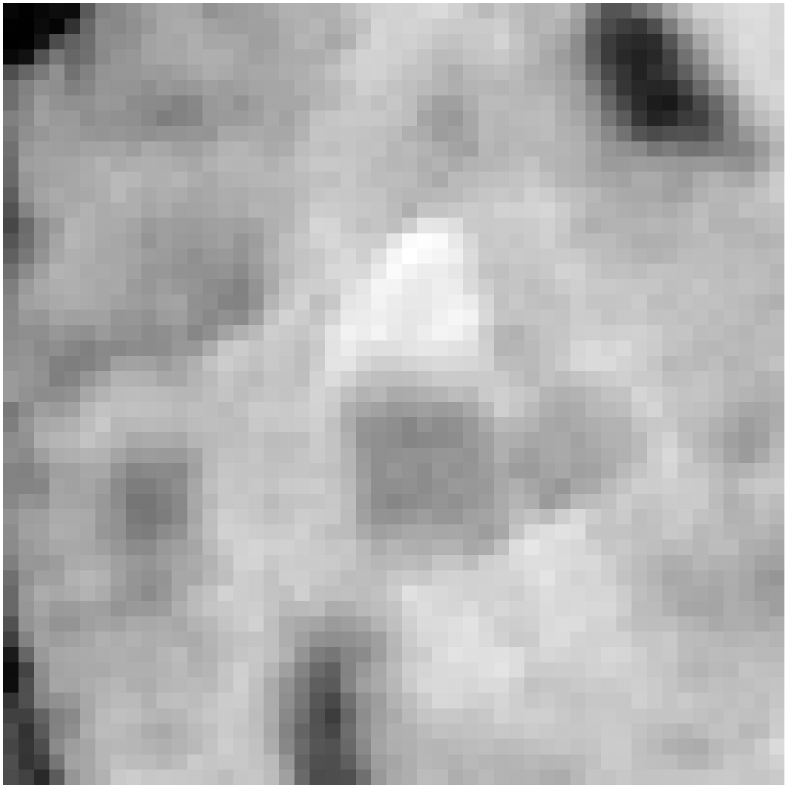}
\includegraphics[width=0.18\linewidth]{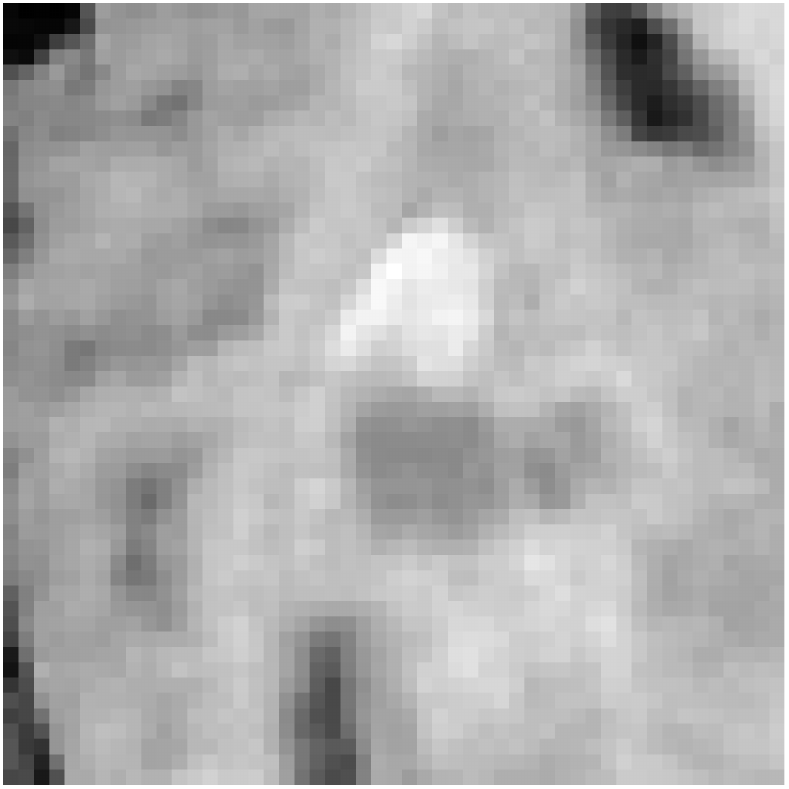}
\includegraphics[width=0.18\linewidth]{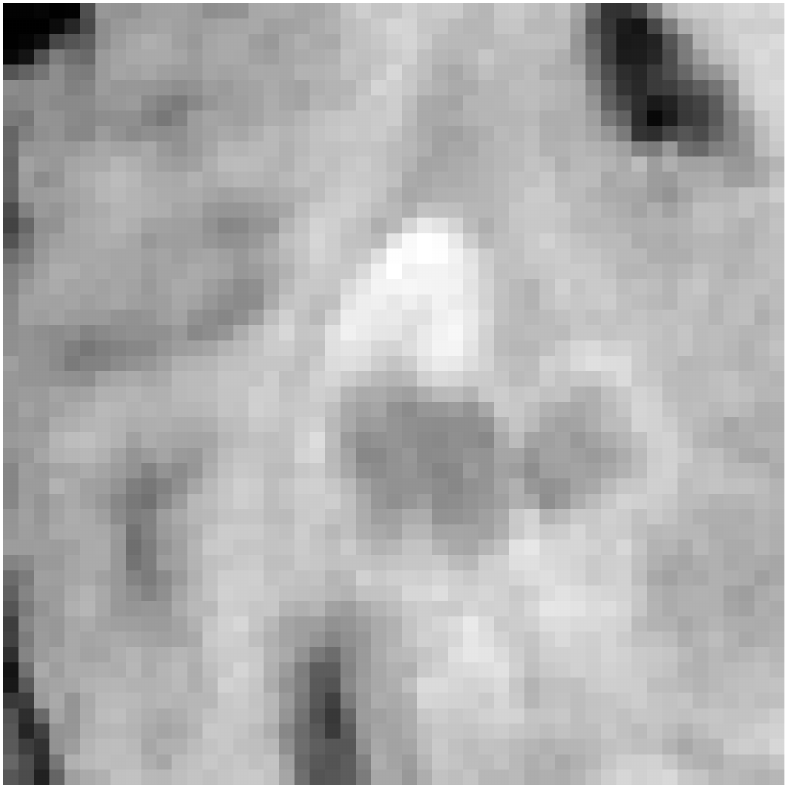}
\includegraphics[width=0.18\linewidth]{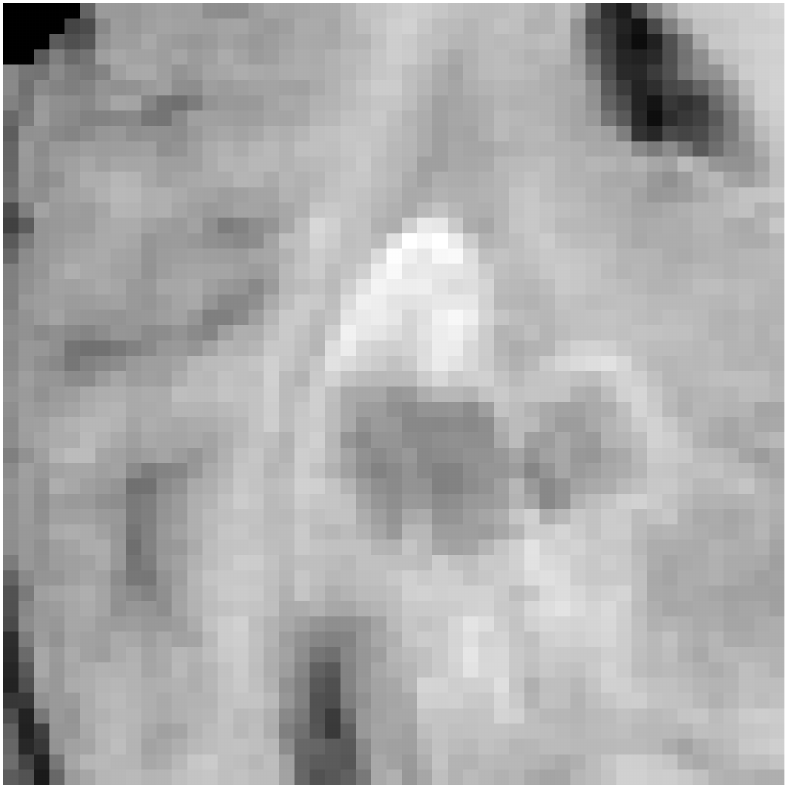}\\
\includegraphics[width=0.2\linewidth, angle=90]{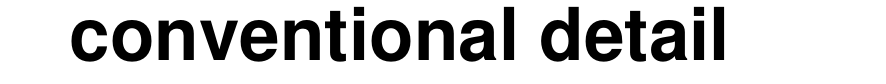}
\includegraphics[width=0.18\linewidth]{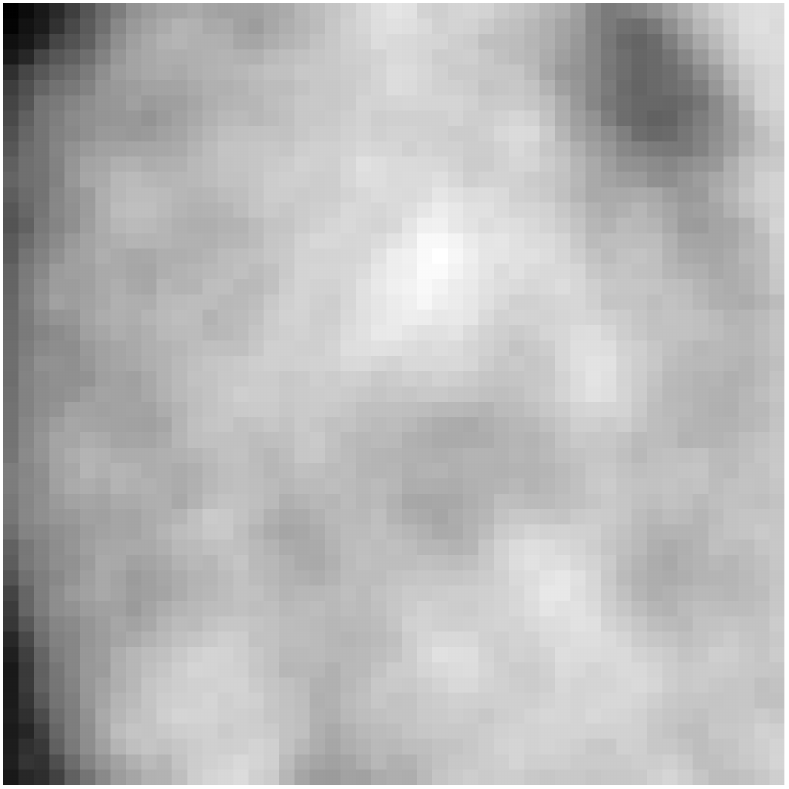}
\includegraphics[width=0.18\linewidth]{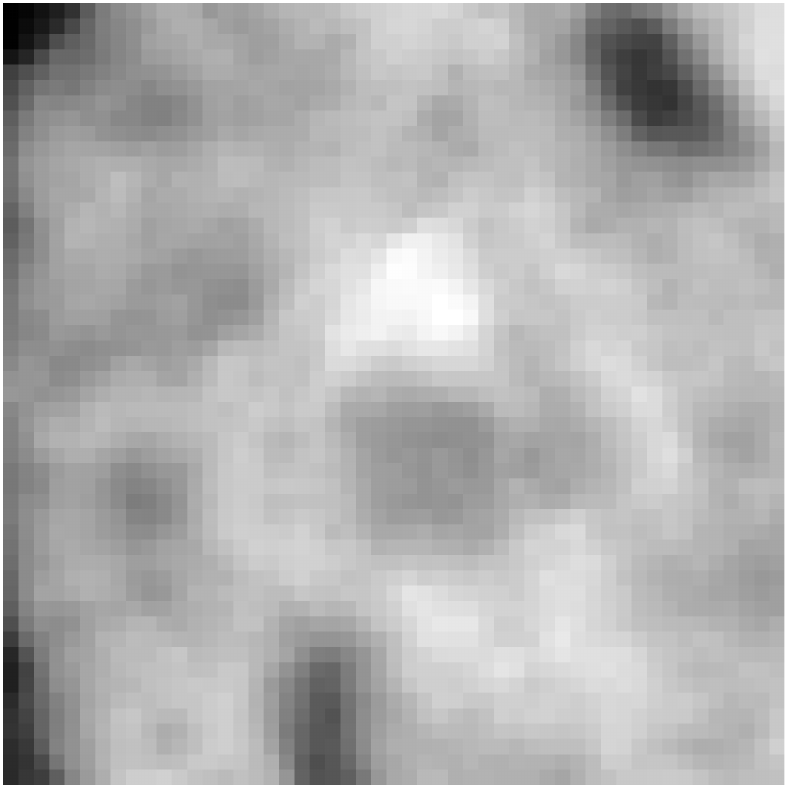}
\includegraphics[width=0.18\linewidth]{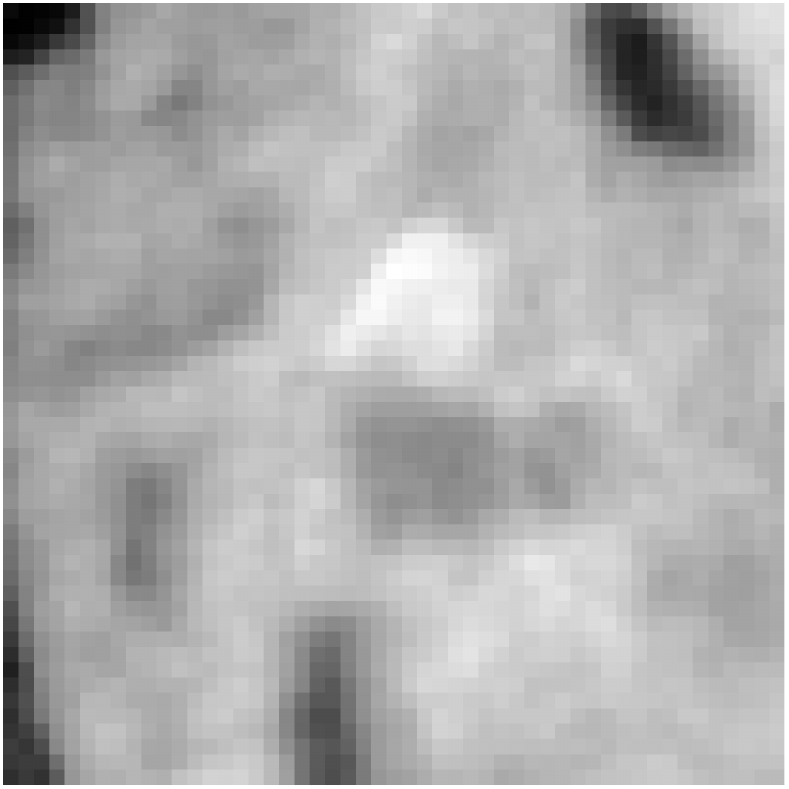}
\includegraphics[width=0.18\linewidth]{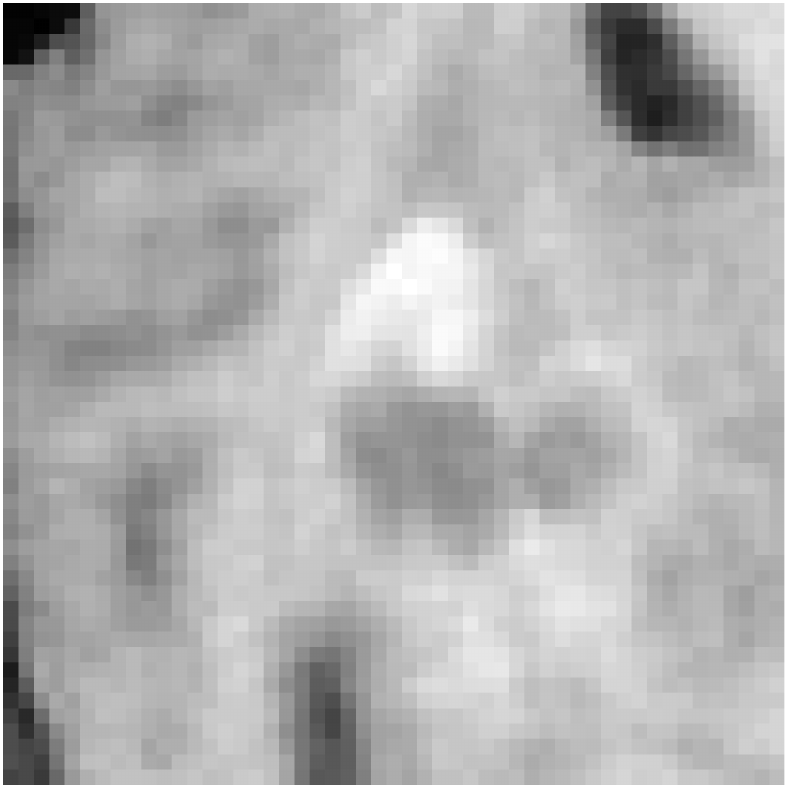}
\includegraphics[width=0.18\linewidth]{fig/white.pdf}\\
\includegraphics[width=0.2\linewidth, angle=90]{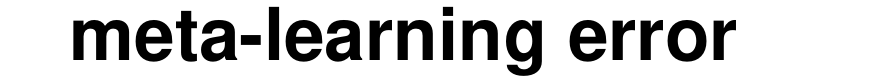}
\includegraphics[width=0.18\linewidth]{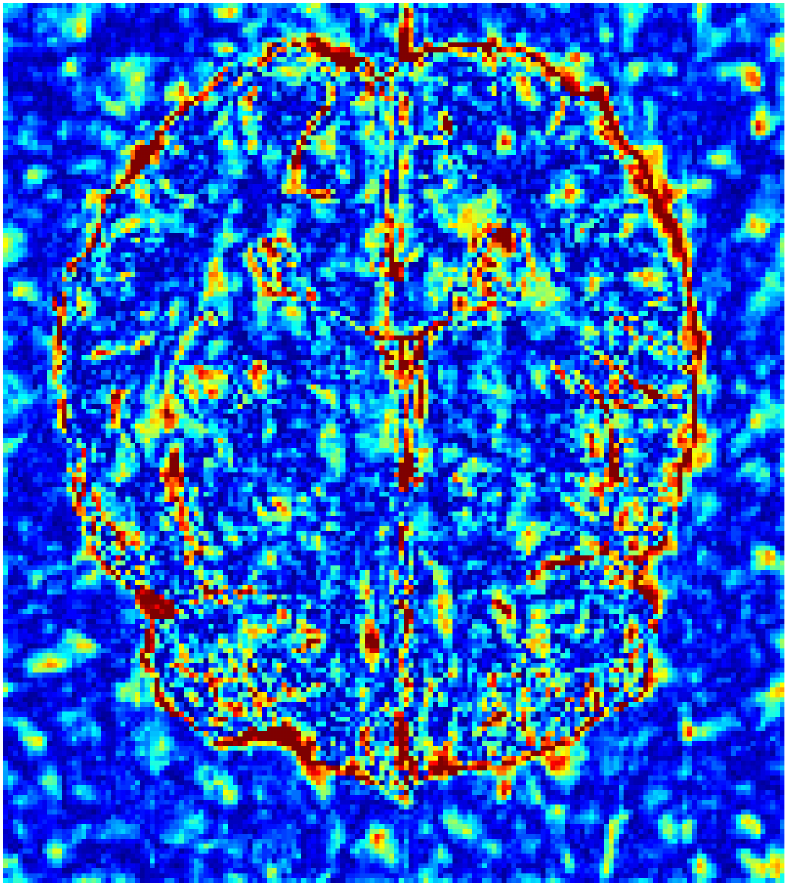}
\includegraphics[width=0.18\linewidth]{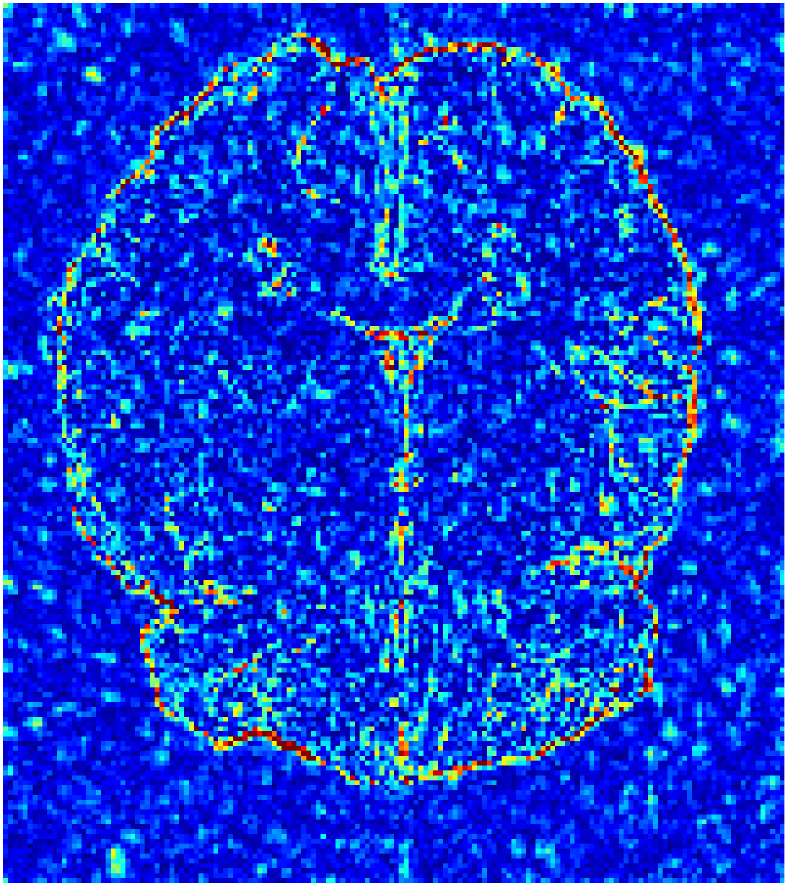}
\includegraphics[width=0.18\linewidth]{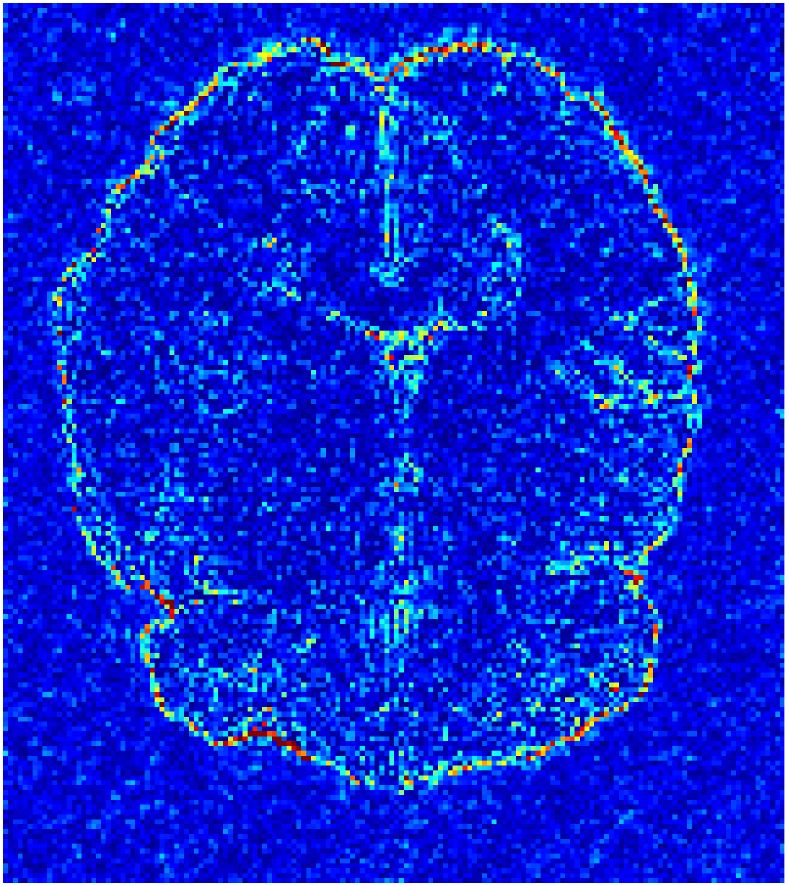}
\includegraphics[width=0.18\linewidth]{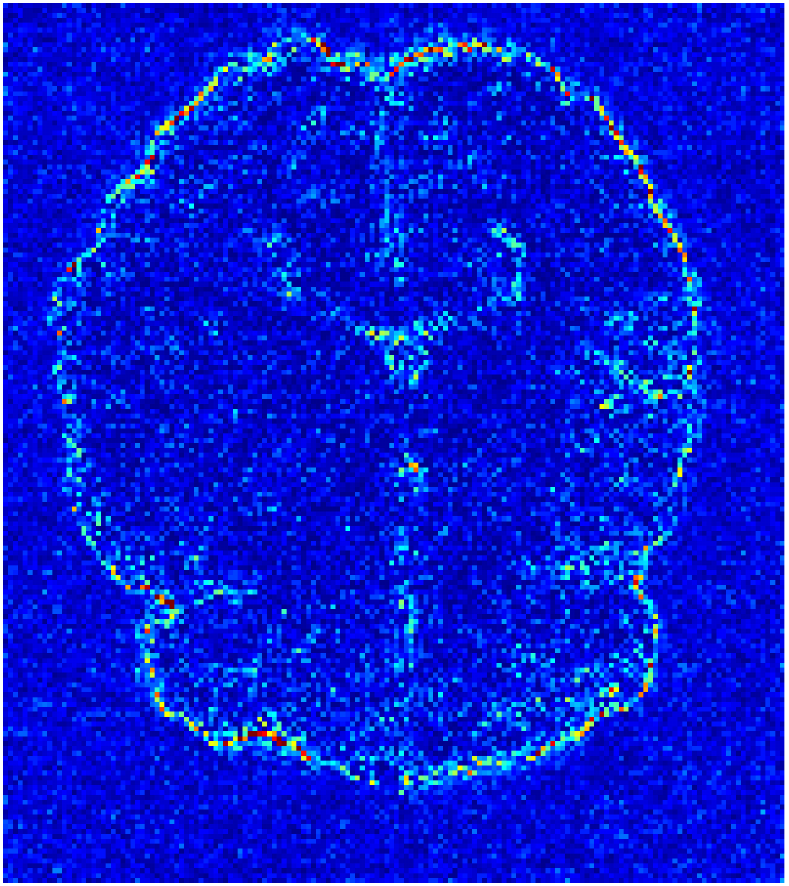}
\includegraphics[width=0.18\linewidth]{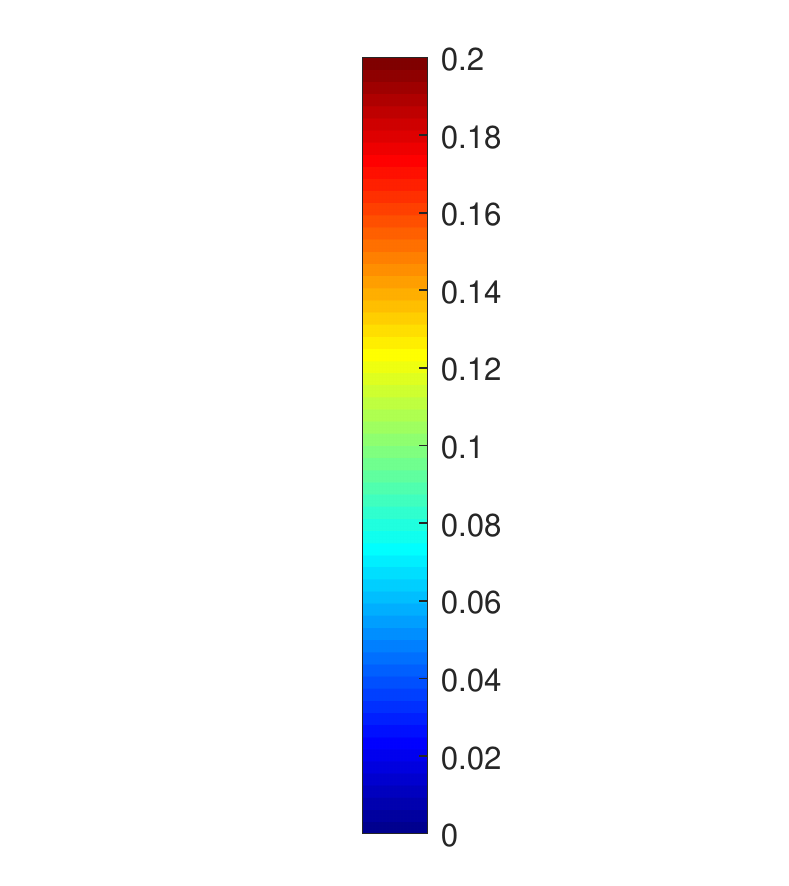}\\
\includegraphics[width=0.2\linewidth, angle=90]{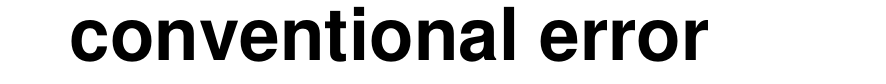}
\includegraphics[width=0.18\linewidth]{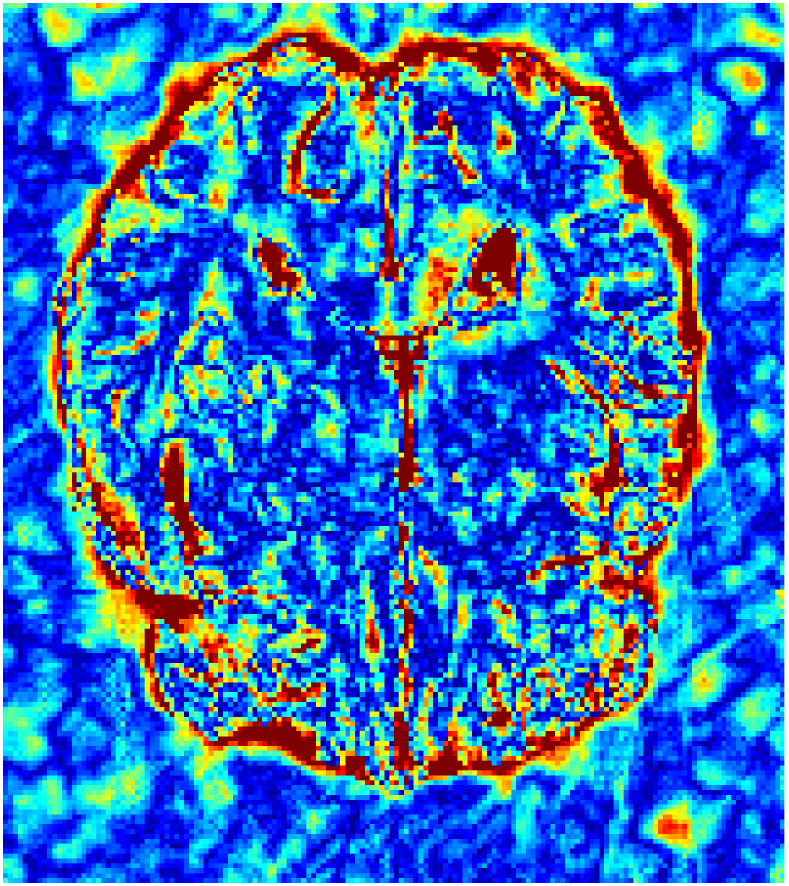}
\includegraphics[width=0.18\linewidth]{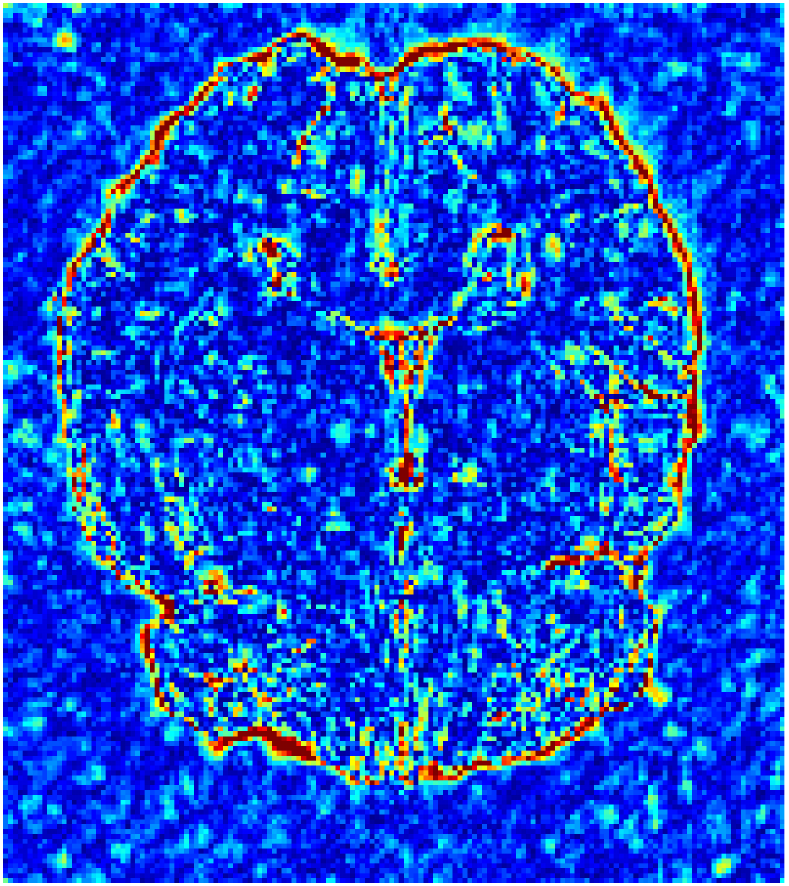}
\includegraphics[width=0.18\linewidth]{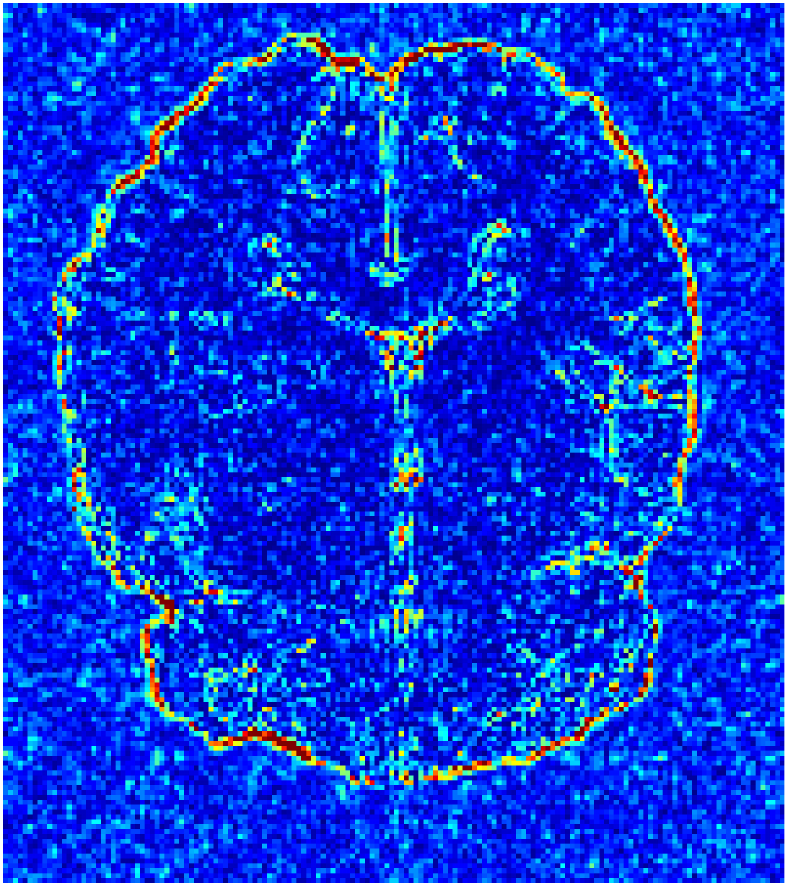}
\includegraphics[width=0.18\linewidth]{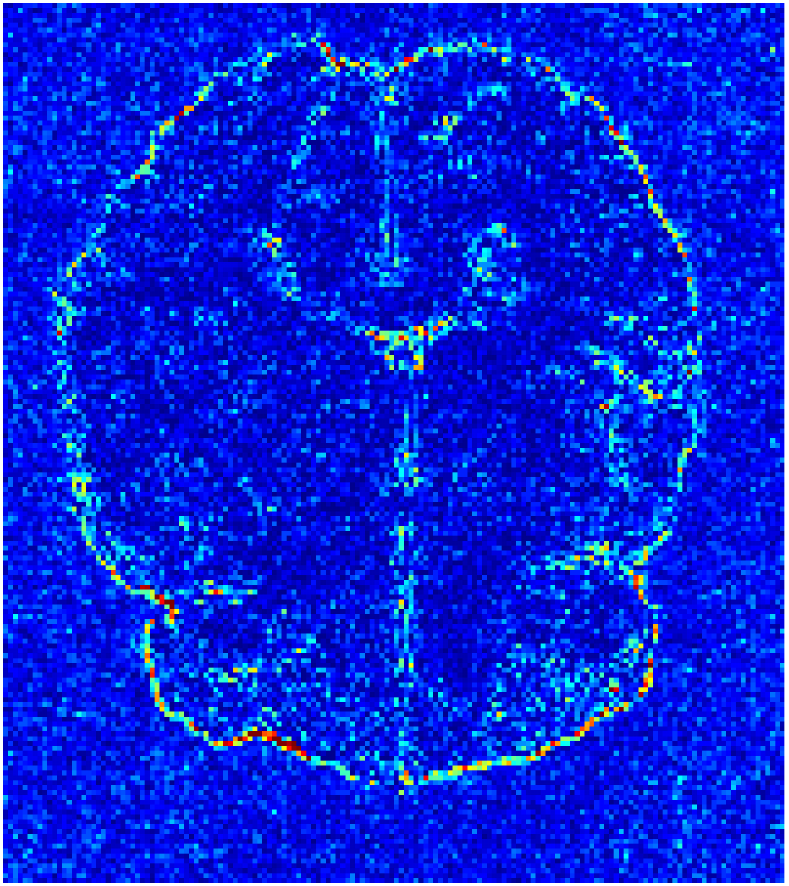}
\includegraphics[width=0.18\linewidth]{fig/white.pdf}\\
\includegraphics[width=0.2\linewidth, angle=90]{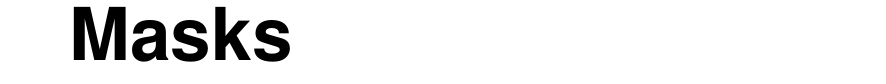}
\includegraphics[width=0.18\linewidth]{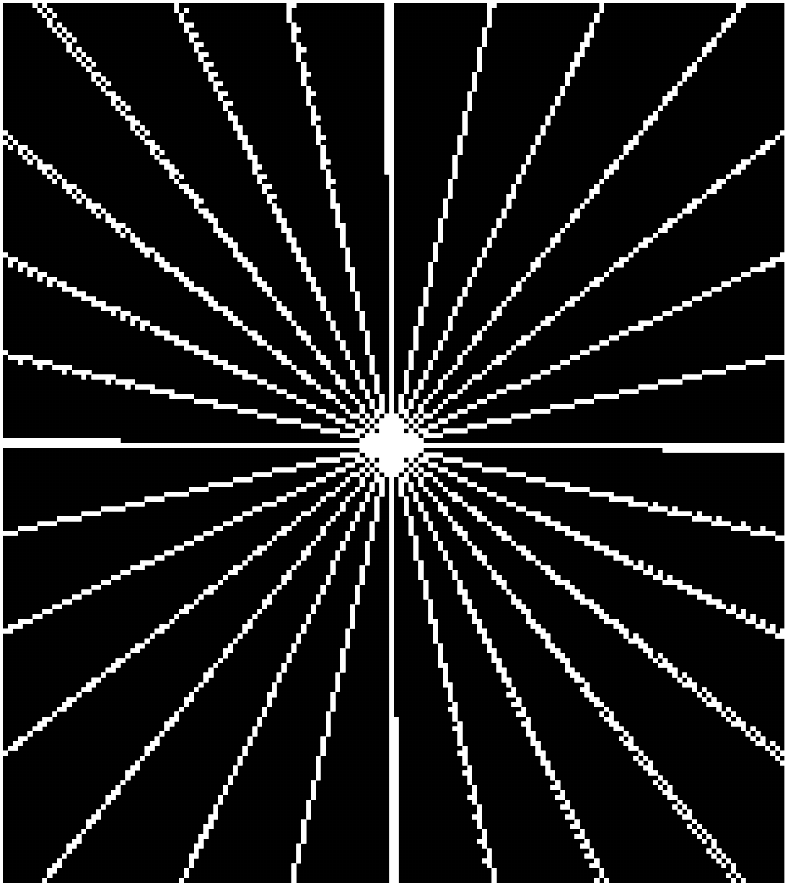}
\includegraphics[width=0.18\linewidth]{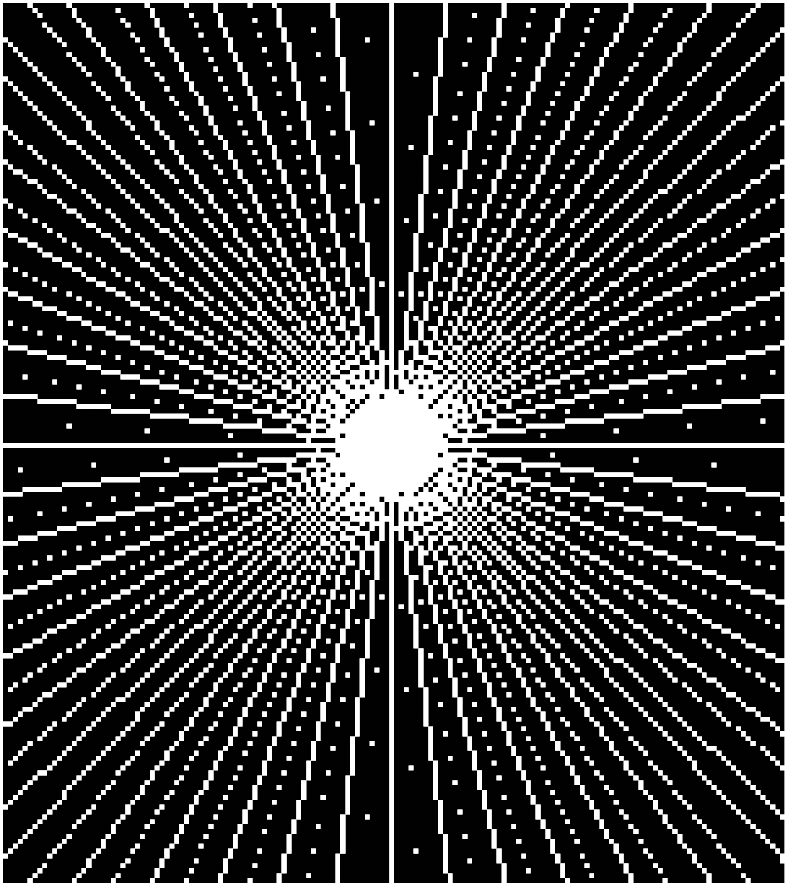}
\includegraphics[width=0.18\linewidth]{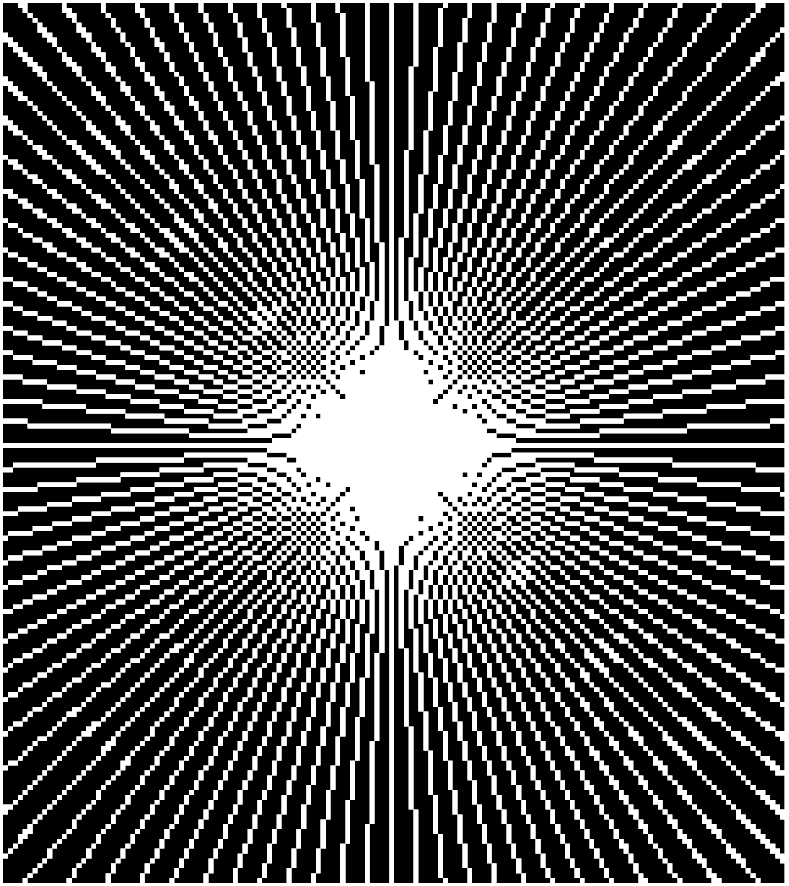}
\includegraphics[width=0.18\linewidth]{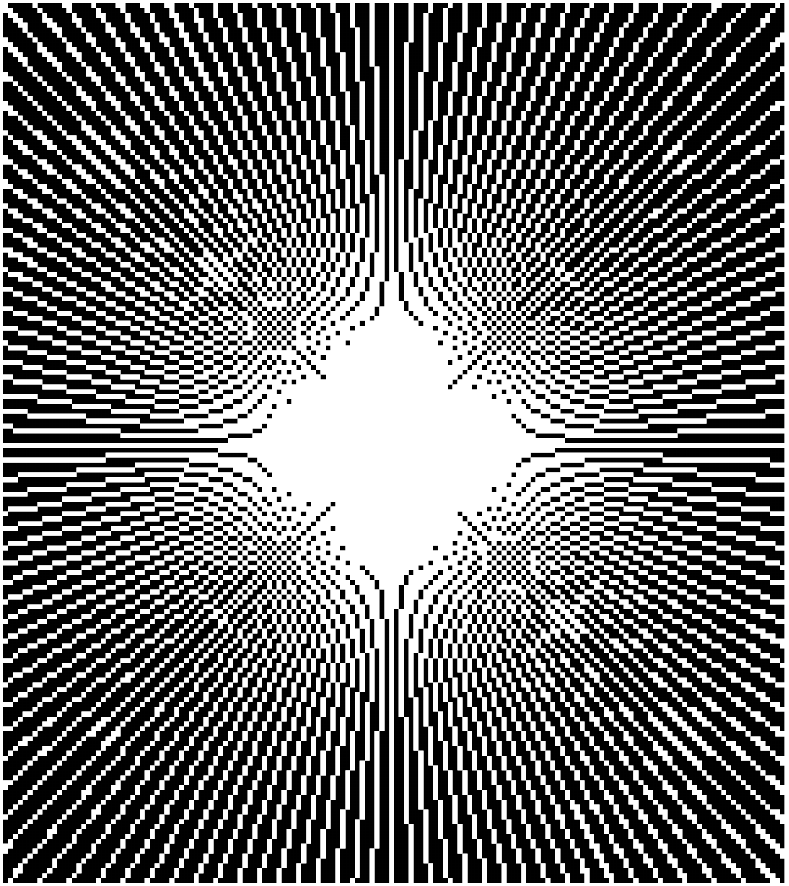}
\includegraphics[width=0.18\linewidth]{fig/white.pdf}
\caption{The pictures (from top to bottom) display the reconstruction results, zoomed in details, pointwise errors with colorbar and associated \textbf{radio} masks for meta-learning and conventional learning with four different CS ratios 10\%, 20\%, 30\%, 40\%（from left to right). The most top right one is ground truth fully-sampled image. }
\label{figure_same_ratio_t1}
\end{figure}

\begin{figure}[H]
\centering
\includegraphics[width=0.2\linewidth, angle=90]{fig/meta_result.pdf}
\includegraphics[width=0.18\linewidth]{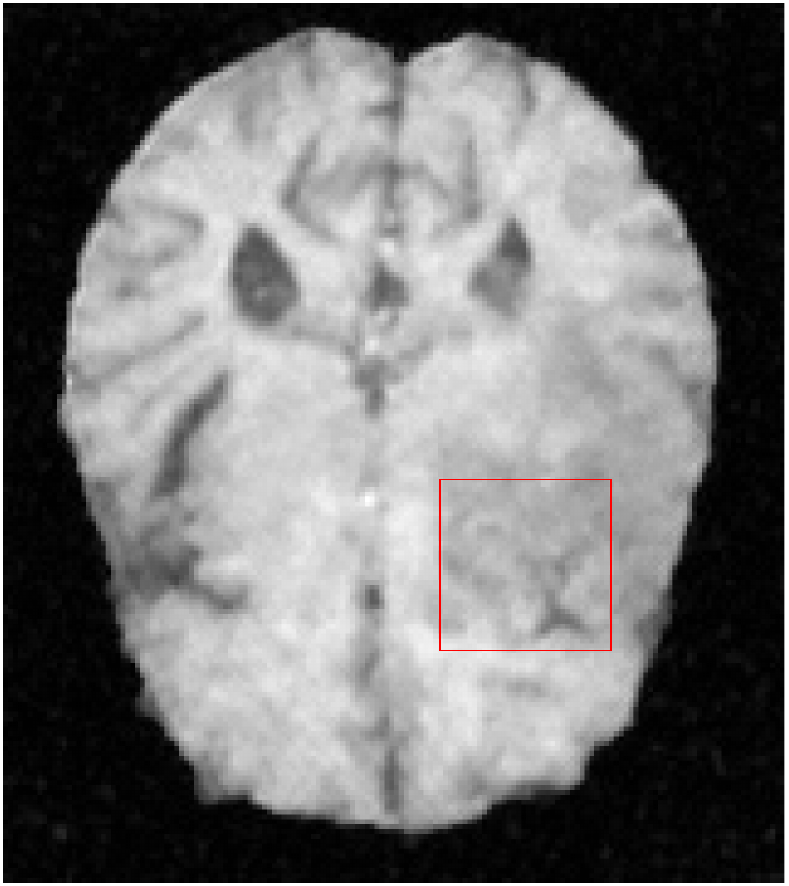}
\includegraphics[width=0.18\linewidth]{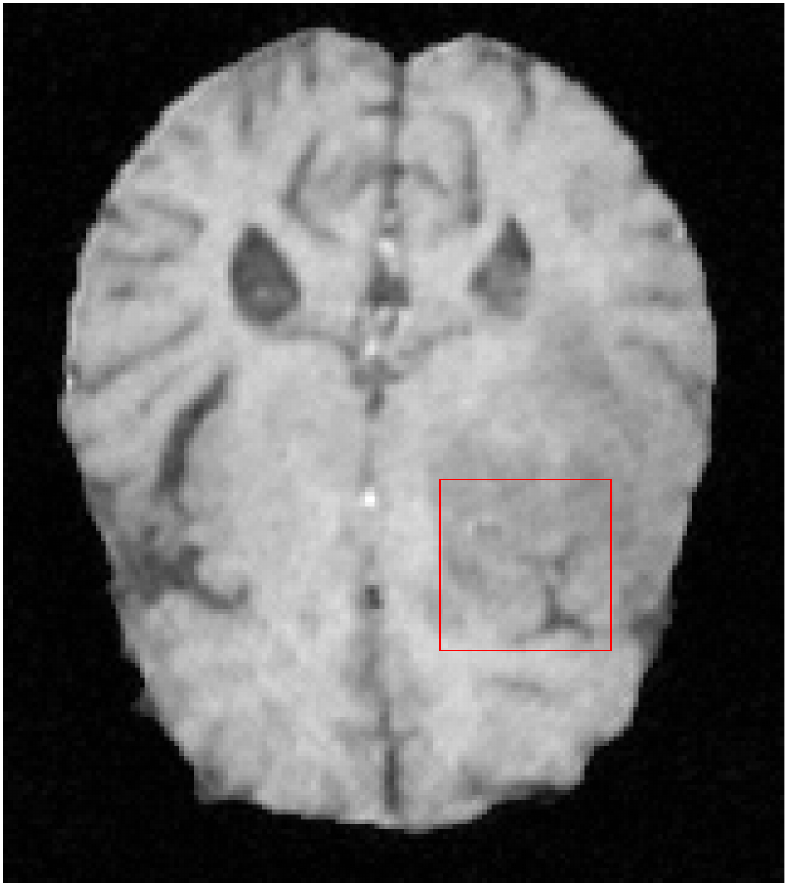}
\includegraphics[width=0.18\linewidth]{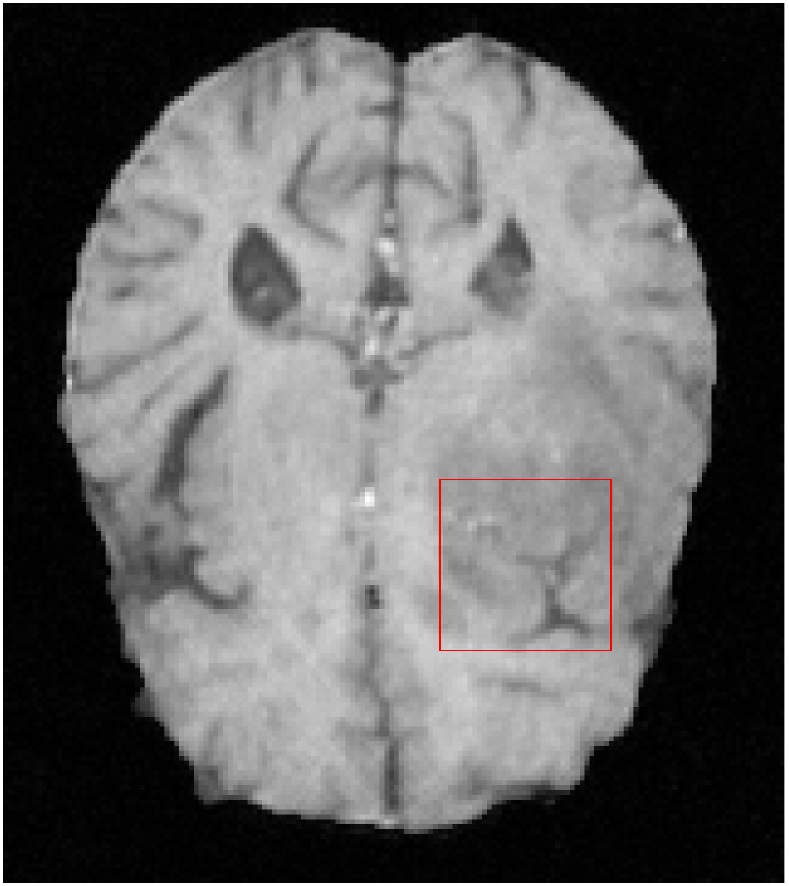}
\includegraphics[width=0.18\linewidth]{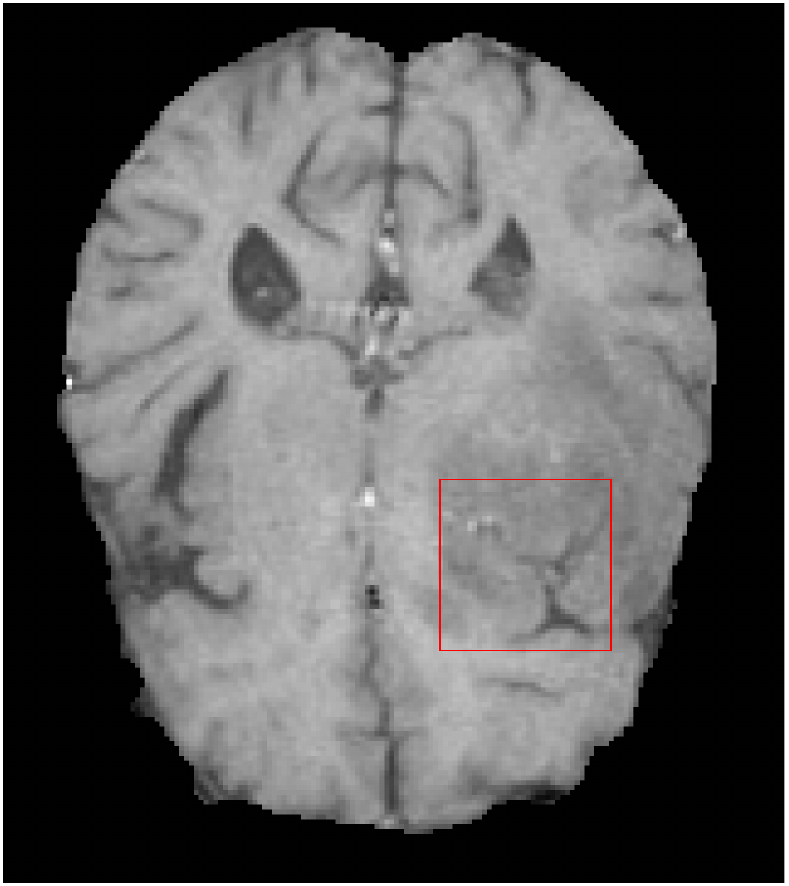}\\
\includegraphics[width=0.2\linewidth, angle=90]{fig/conventional_result.pdf}
\includegraphics[width=0.18\linewidth]{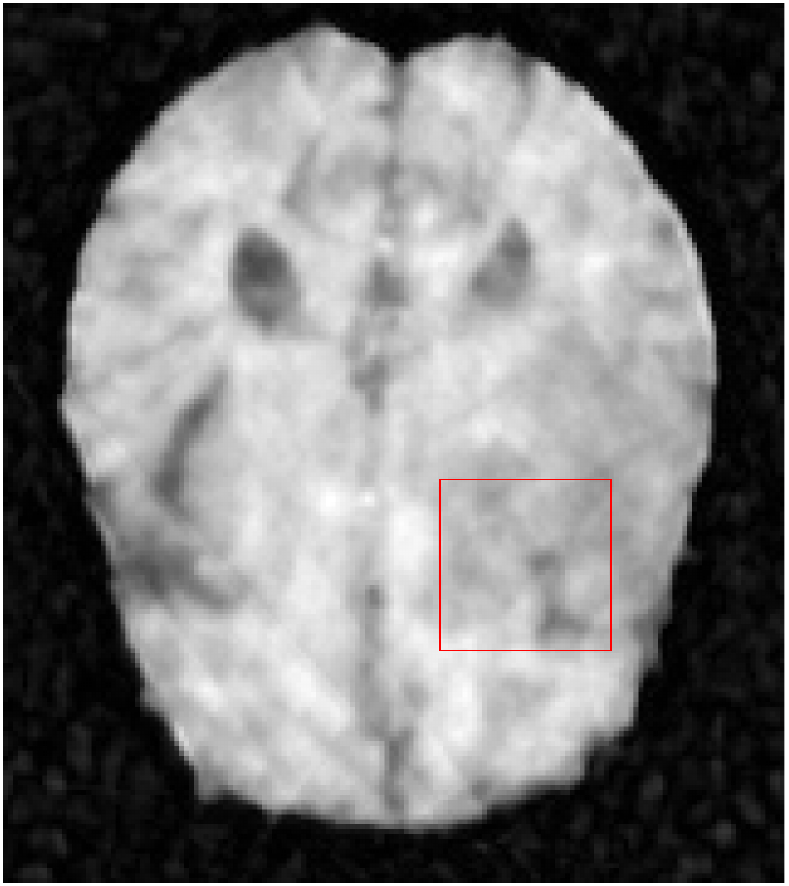}
\includegraphics[width=0.18\linewidth]{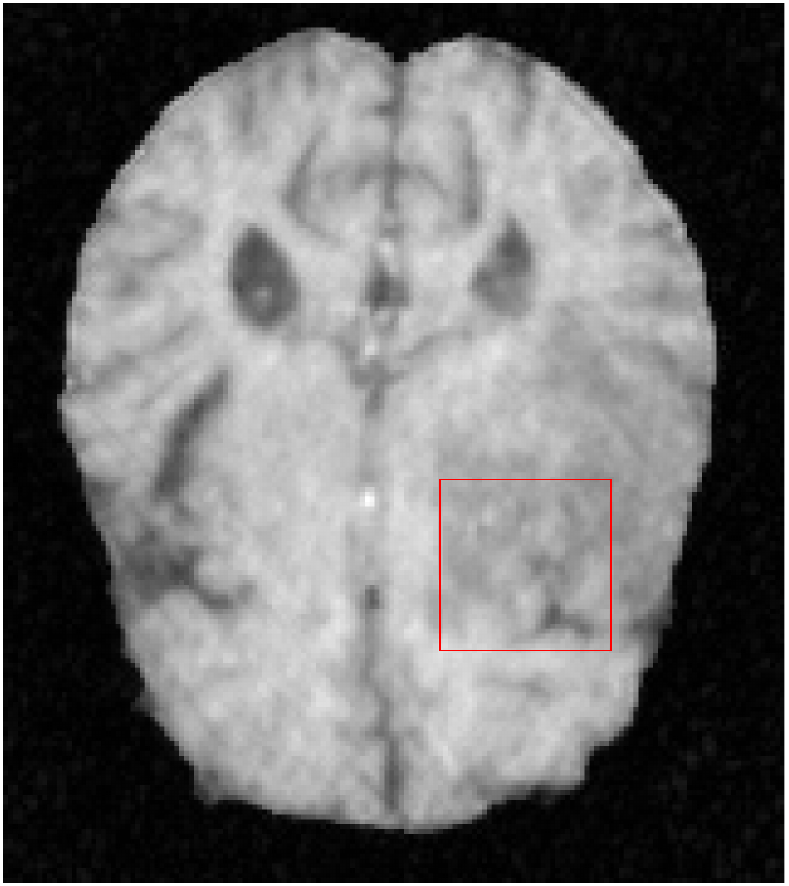}
\includegraphics[width=0.18\linewidth]{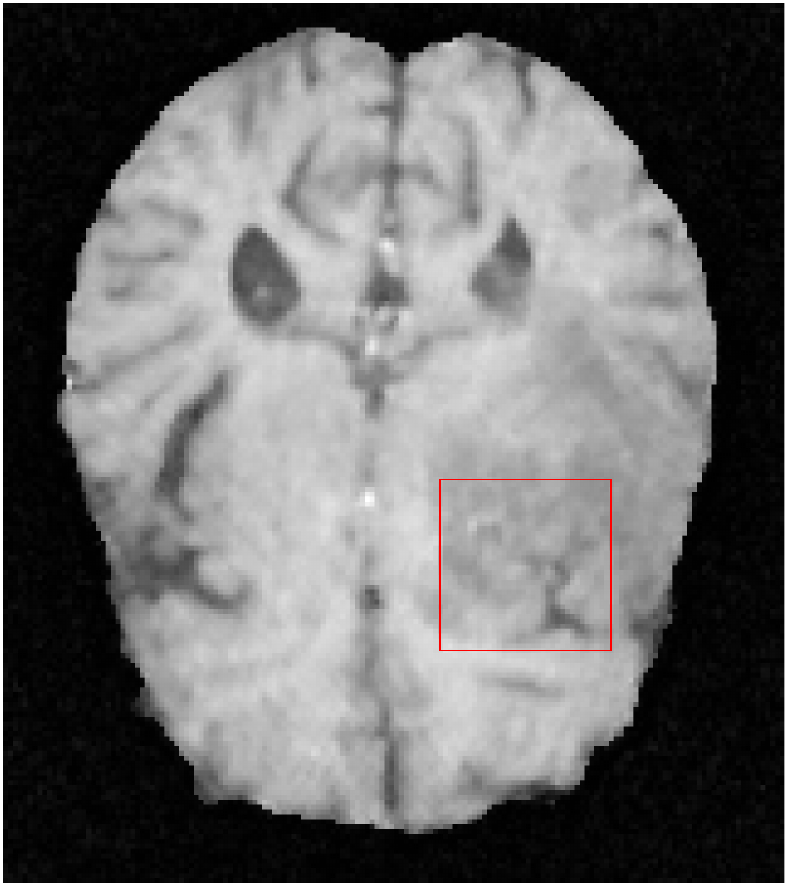}
\includegraphics[width=0.18\linewidth]{fig/white.pdf}\\
\includegraphics[width=0.2\linewidth, angle=90]{fig/meta_detail.pdf}
\includegraphics[width=0.18\linewidth]{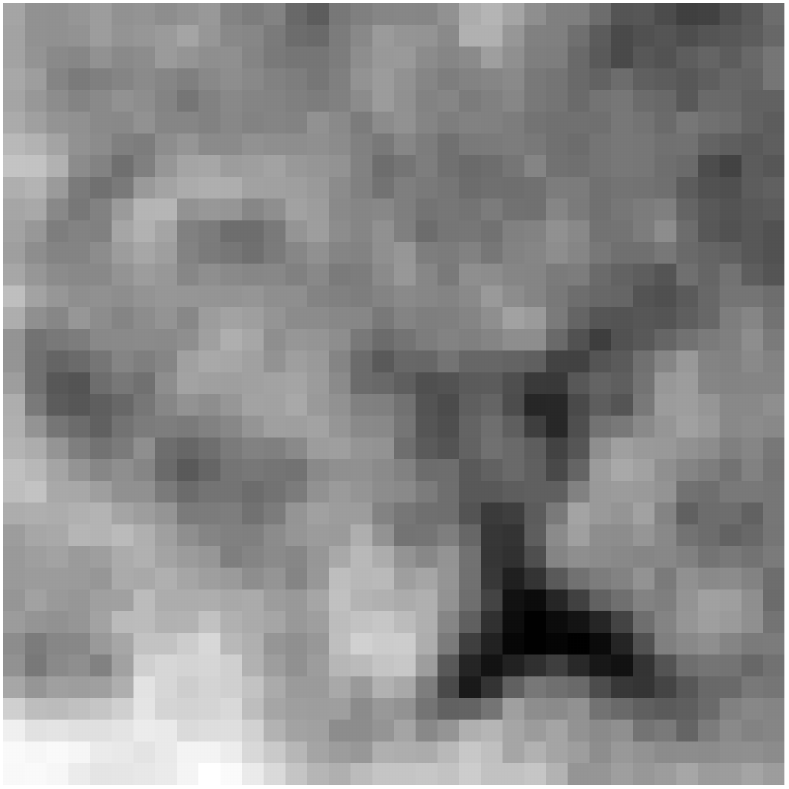}
\includegraphics[width=0.18\linewidth]{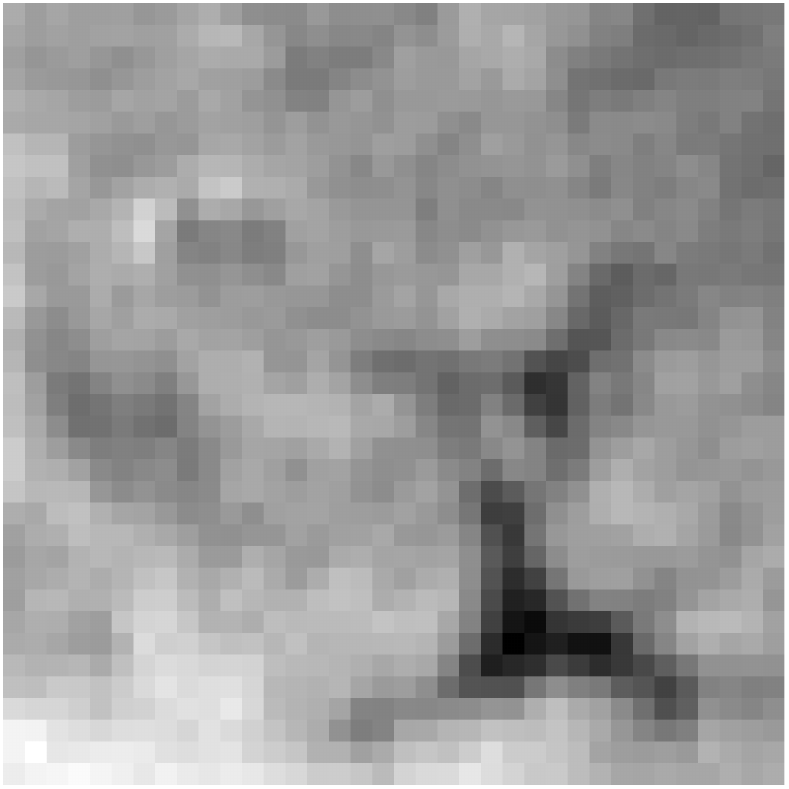}
\includegraphics[width=0.18\linewidth]{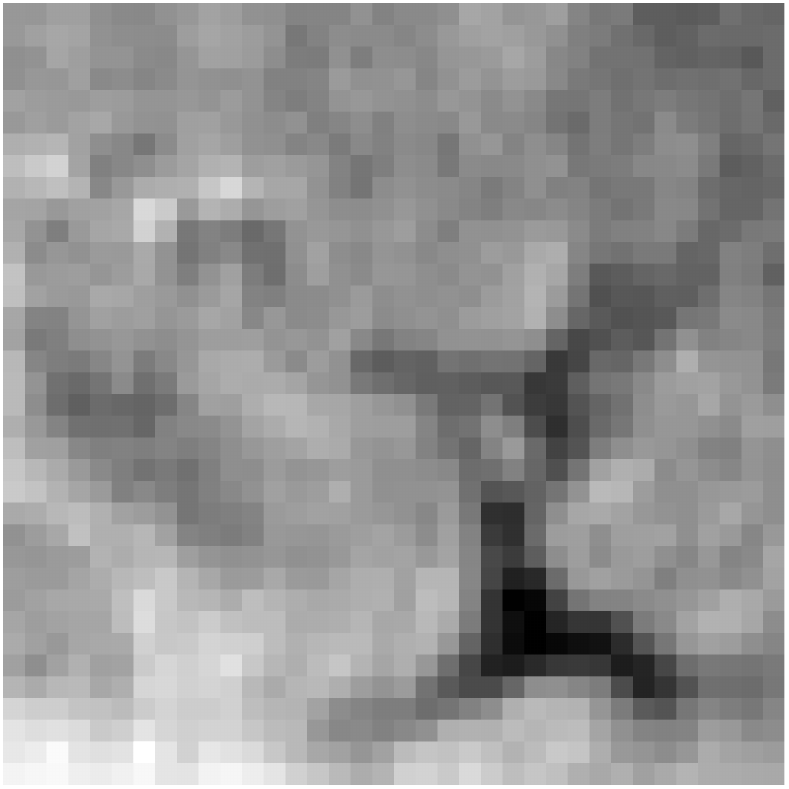}
\includegraphics[width=0.18\linewidth]{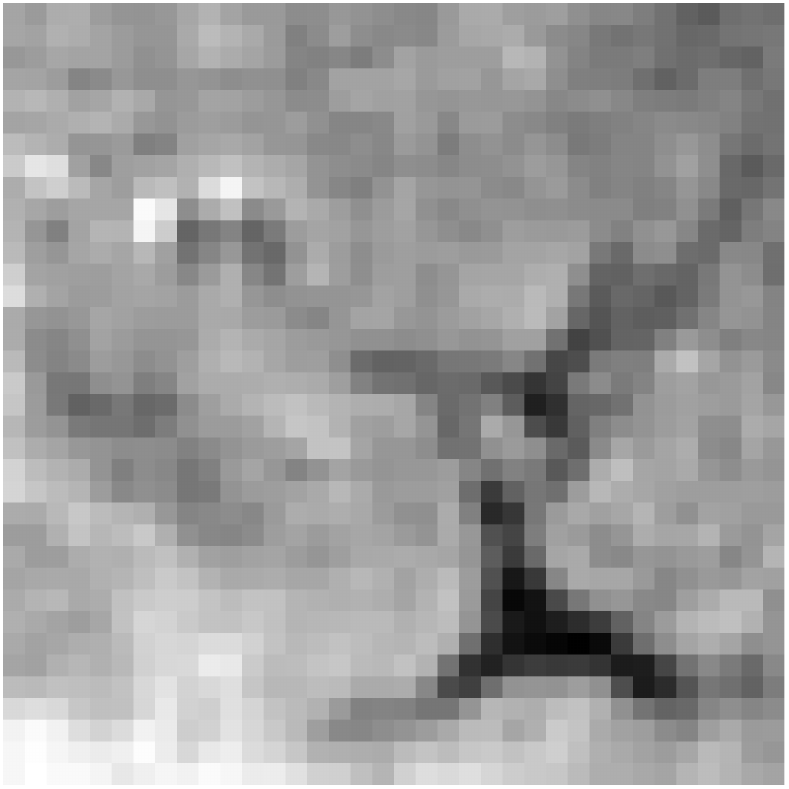}\\
\includegraphics[width=0.2\linewidth, angle=90]{fig/conventional_detail.pdf}
\includegraphics[width=0.18\linewidth]{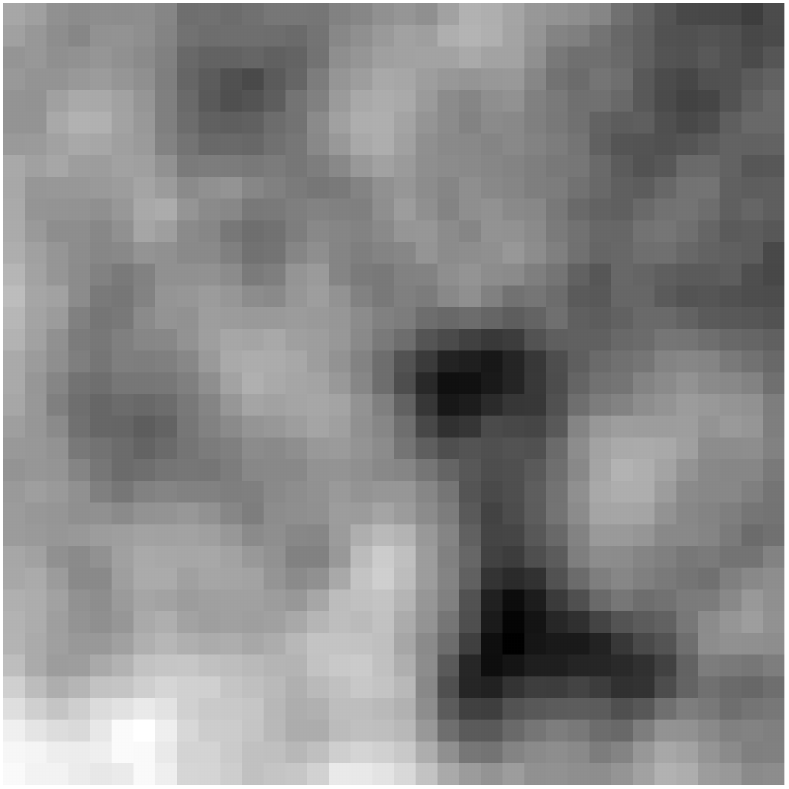}
\includegraphics[width=0.18\linewidth]{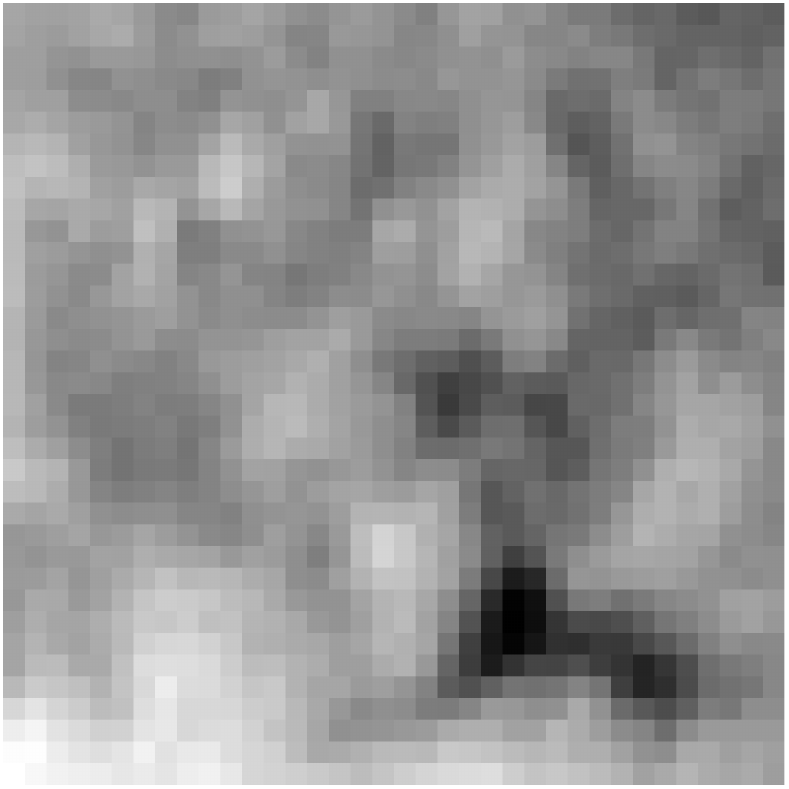}
\includegraphics[width=0.18\linewidth]{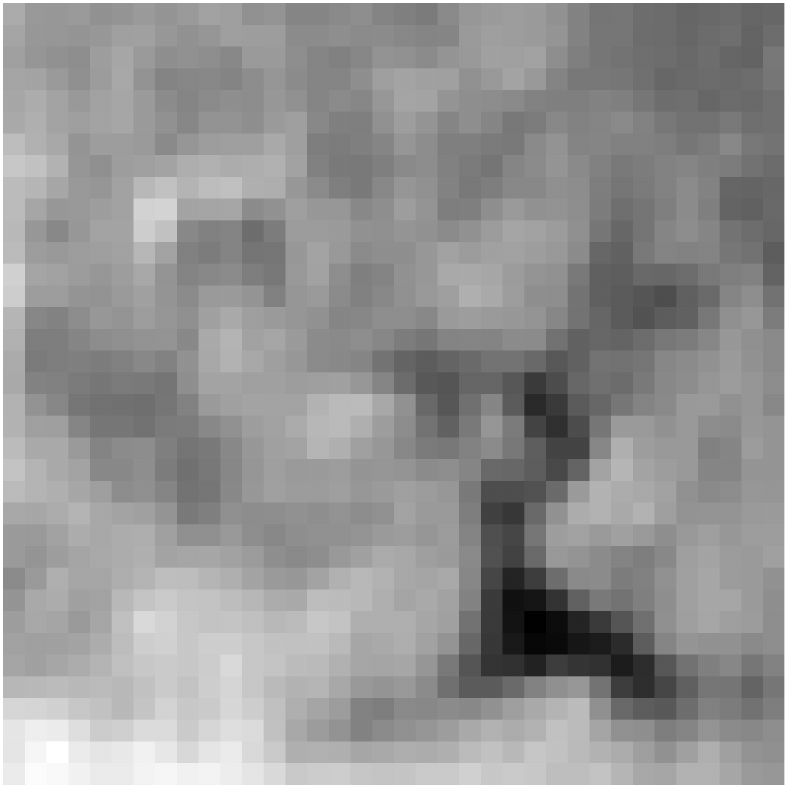}
\includegraphics[width=0.18\linewidth]{fig/white.pdf}\\
\includegraphics[width=0.2\linewidth, angle=90]{fig/meta_error.pdf}
\includegraphics[width=0.18\linewidth]{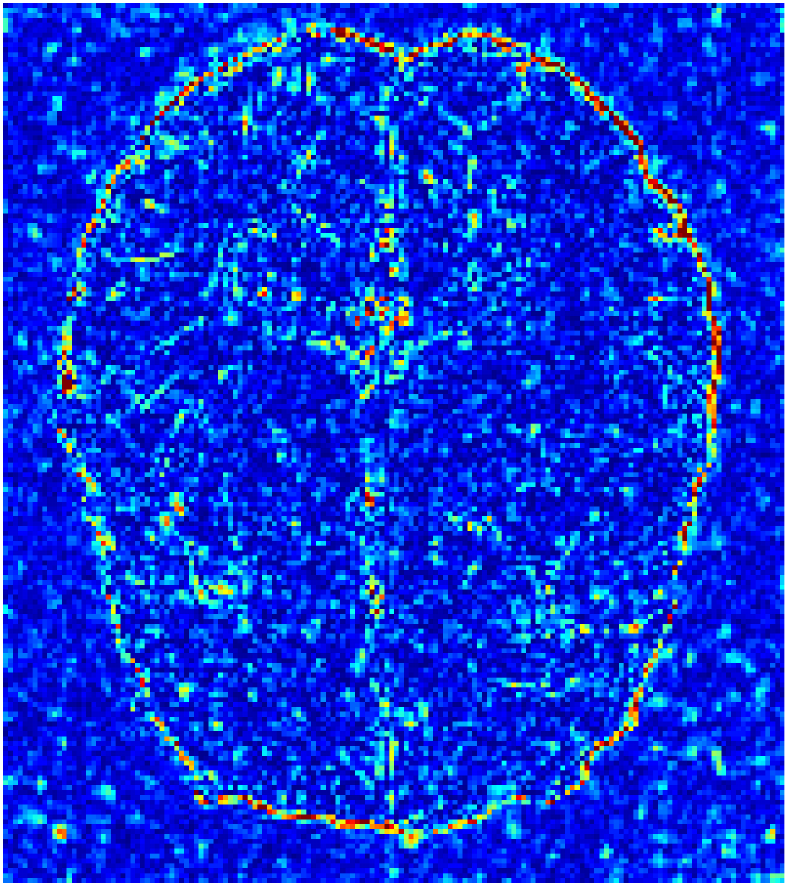}
\includegraphics[width=0.18\linewidth]{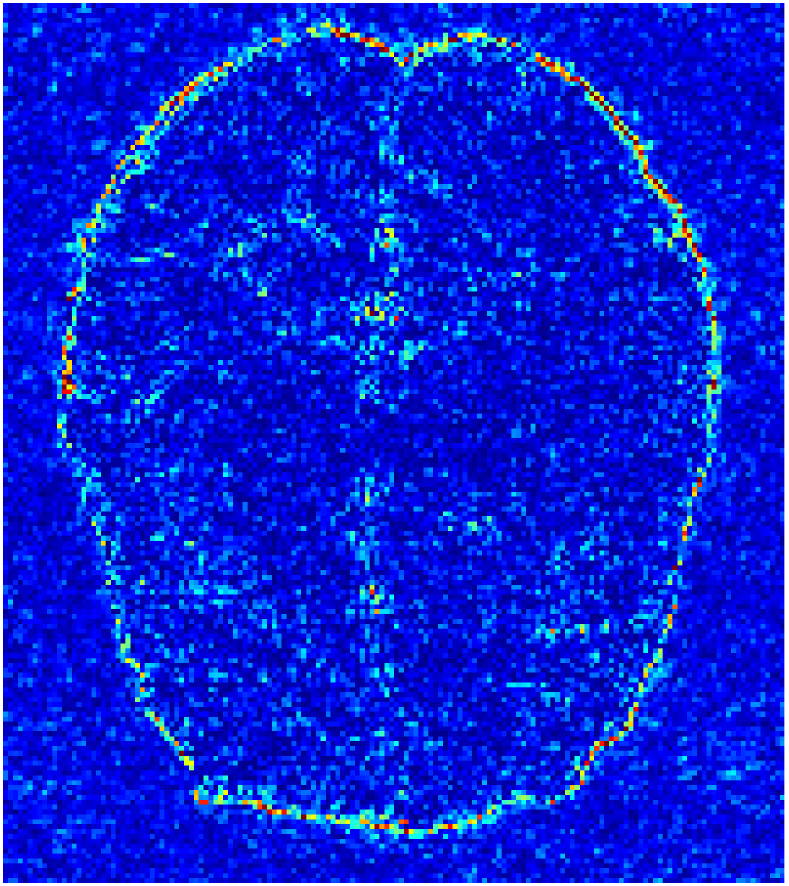}
\includegraphics[width=0.18\linewidth]{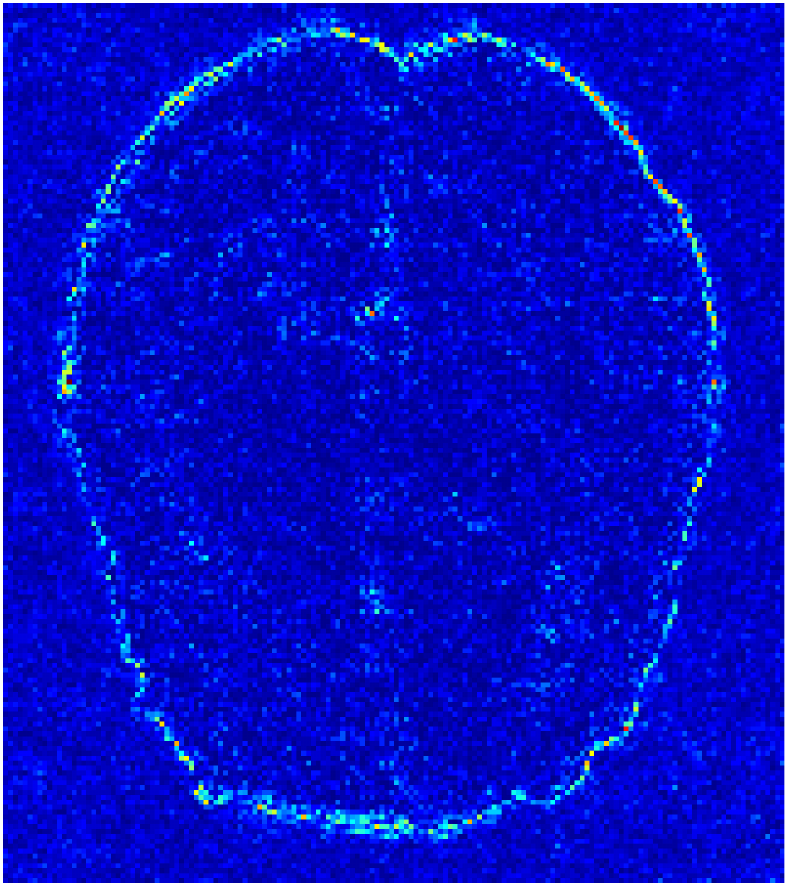}
\includegraphics[width=0.18\linewidth]{fig/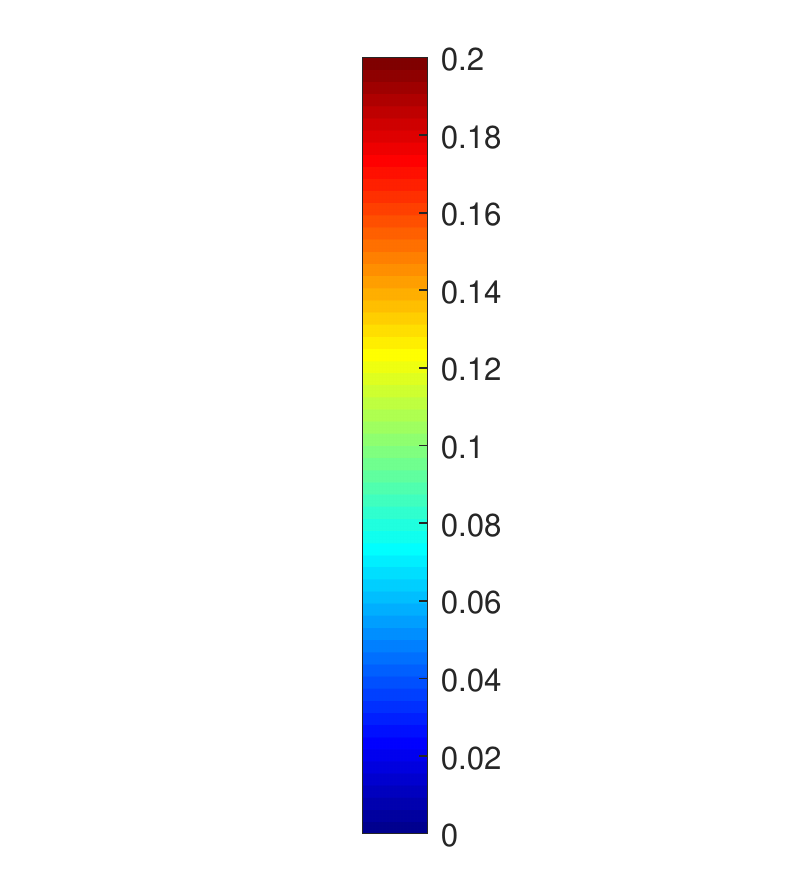}\\
\includegraphics[width=0.2\linewidth, angle=90]{fig/conventional_error.pdf}
\includegraphics[width=0.18\linewidth]{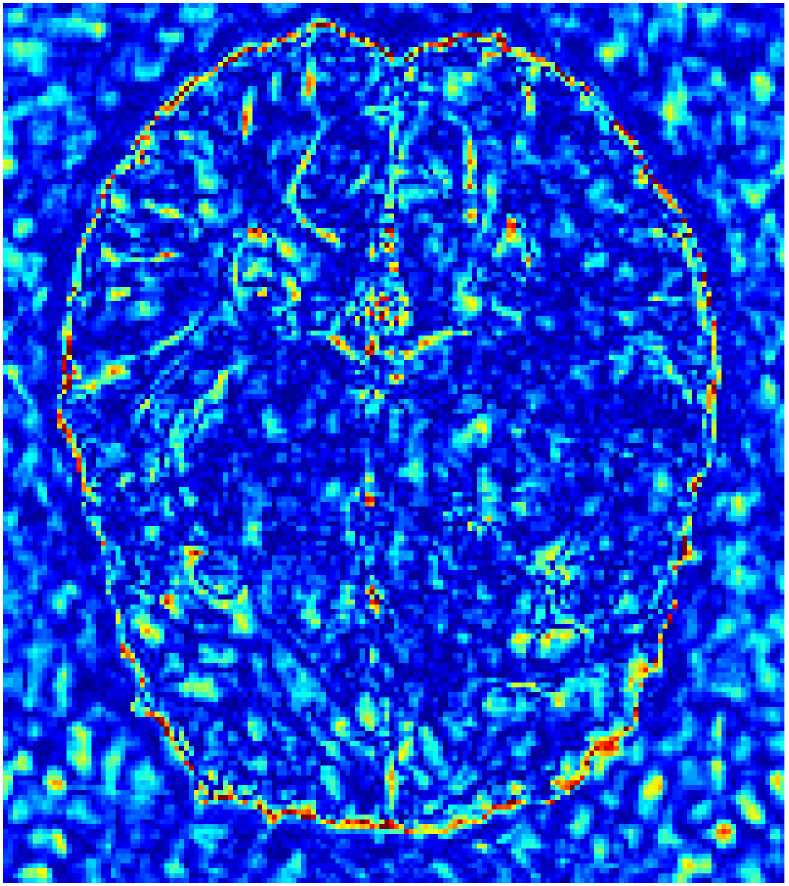}
\includegraphics[width=0.18\linewidth]{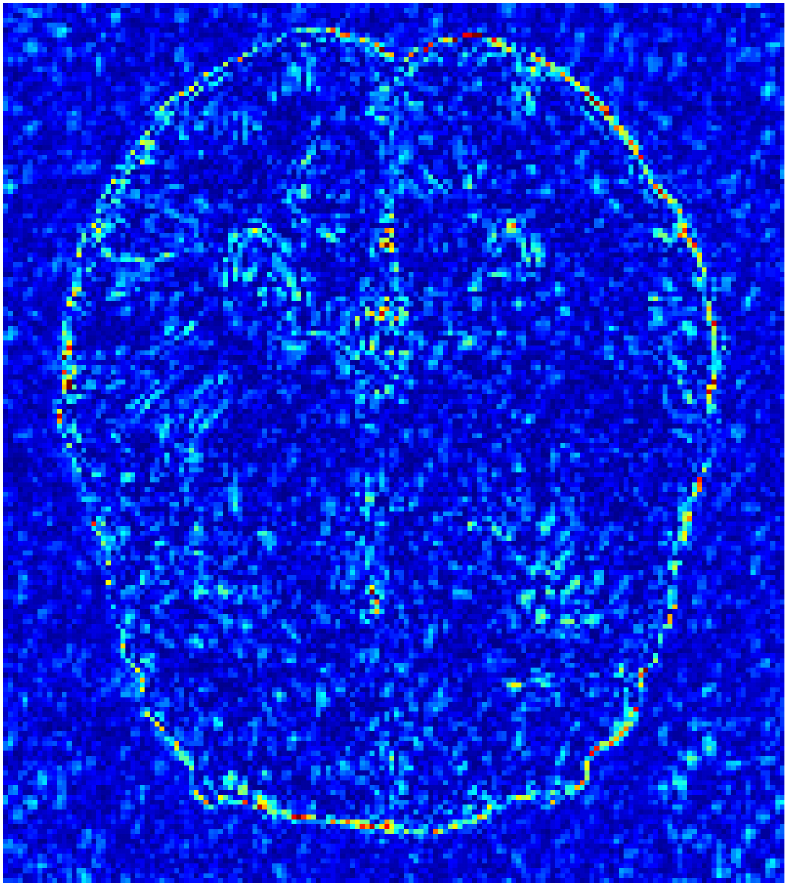}
\includegraphics[width=0.18\linewidth]{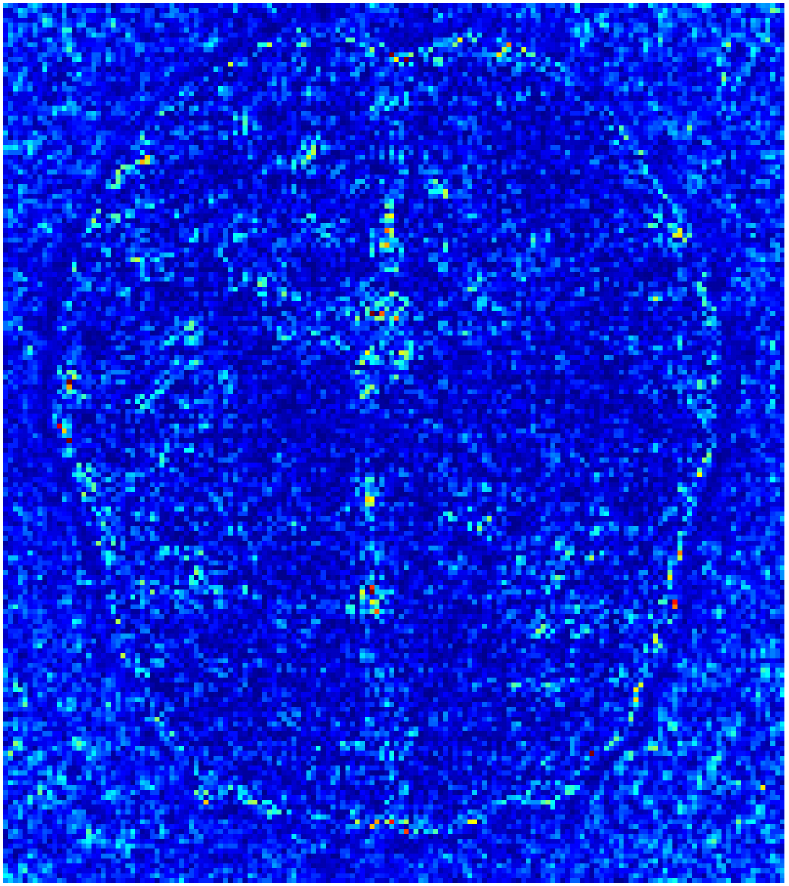}
\includegraphics[width=0.18\linewidth]{fig/white.pdf}\\
\includegraphics[width=0.2\linewidth, angle=90]{fig/masks.pdf}
\includegraphics[width=0.18\linewidth]{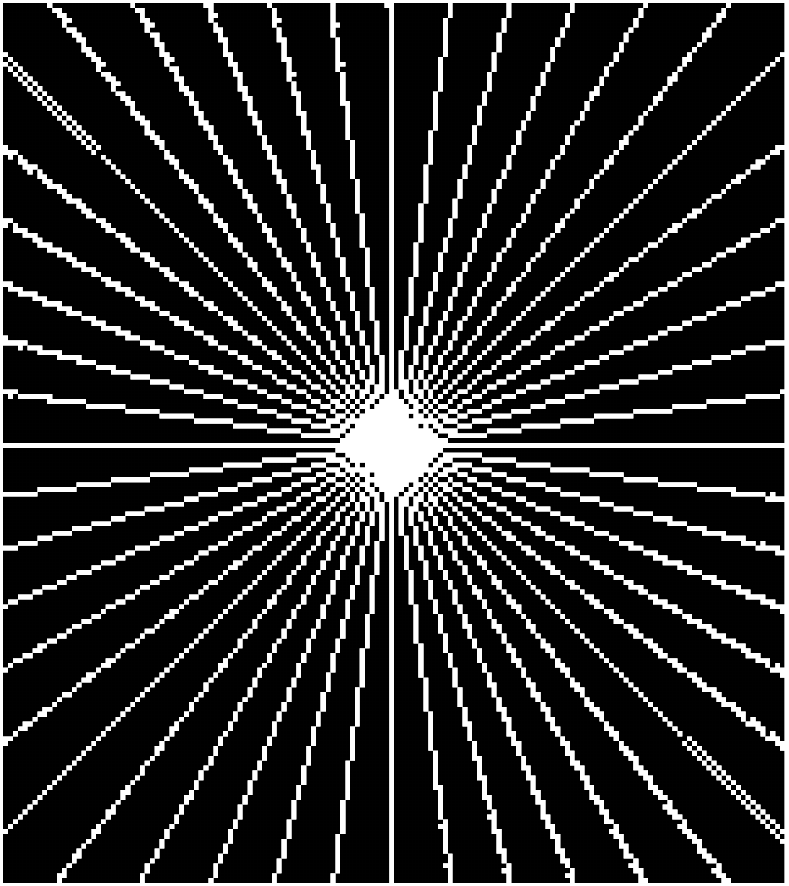}
\includegraphics[width=0.18\linewidth]{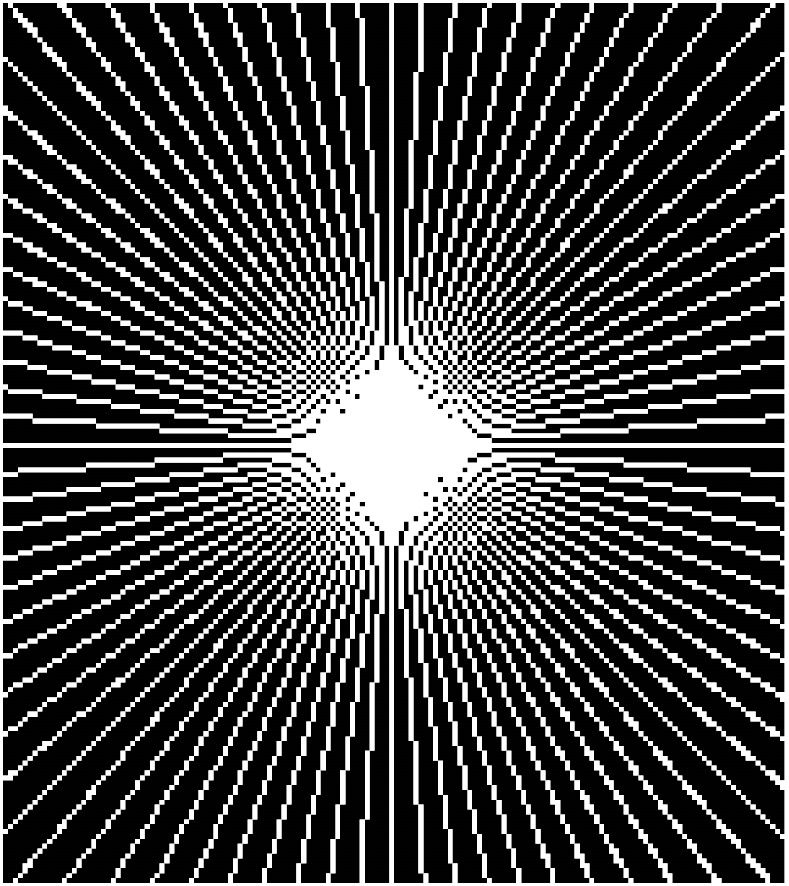}
\includegraphics[width=0.18\linewidth]{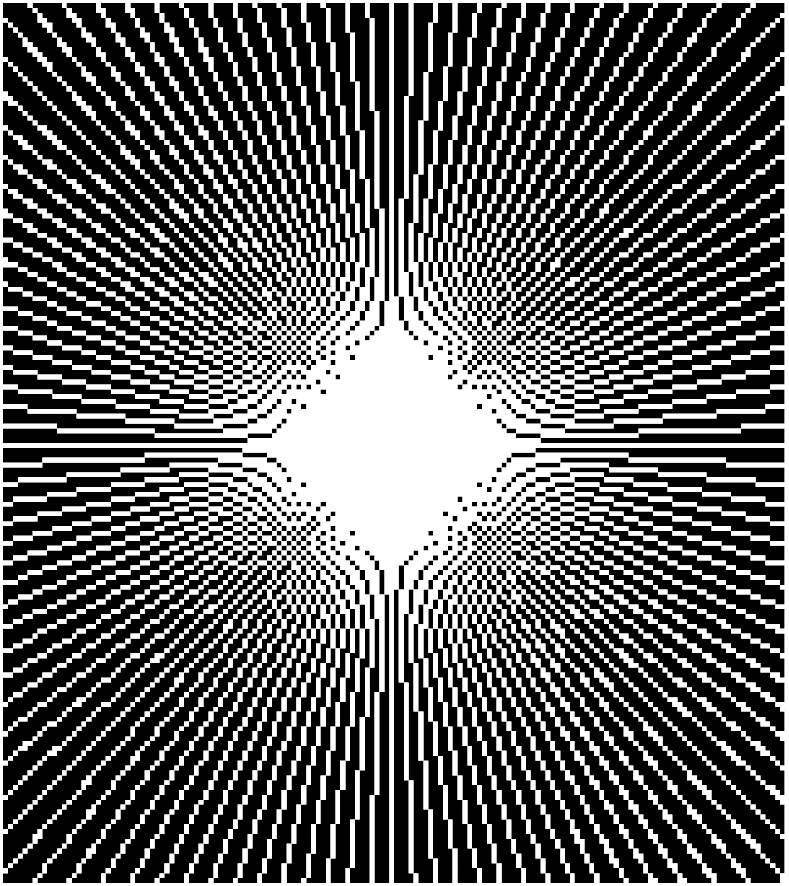}
\includegraphics[width=0.18\linewidth]{fig/white.pdf}
\caption{The pictures (from top to bottom) display the T1 Brain image reconstruction results, zoomed in details, pointwise errors with colorbar and associated \textbf{radio} masks for meta-learning and conventional learning. Meta-learning was trained with CS ratios 10\%, 20\%, 30\% 40\% and test with three different CS ratios 15\%, 25\% and 35\%（from left to right). Conventional learning was trained and test with same CS ratios 15\%, 25\% and 35\%. The most top right one is ground truth fully-sampled image.}
\label{figure_dif_ratio_t1}
\end{figure}

\begin{figure}[H]
\centering
\includegraphics[width=0.2\linewidth, angle=90]{fig/meta_result.pdf}
\includegraphics[width=0.18\linewidth]{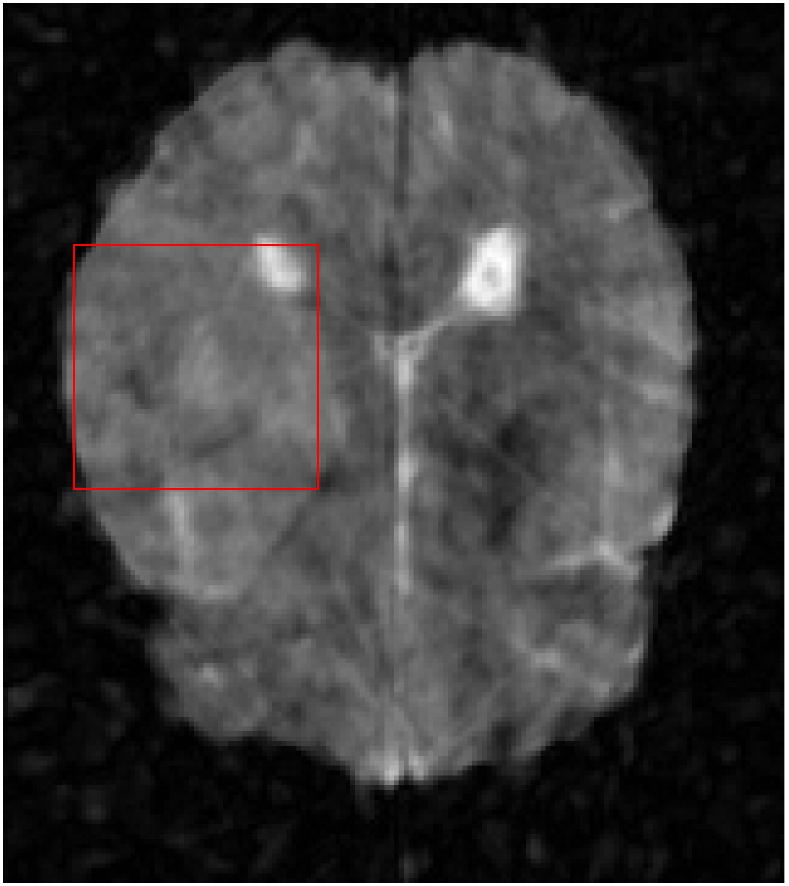}
\includegraphics[width=0.18\linewidth]{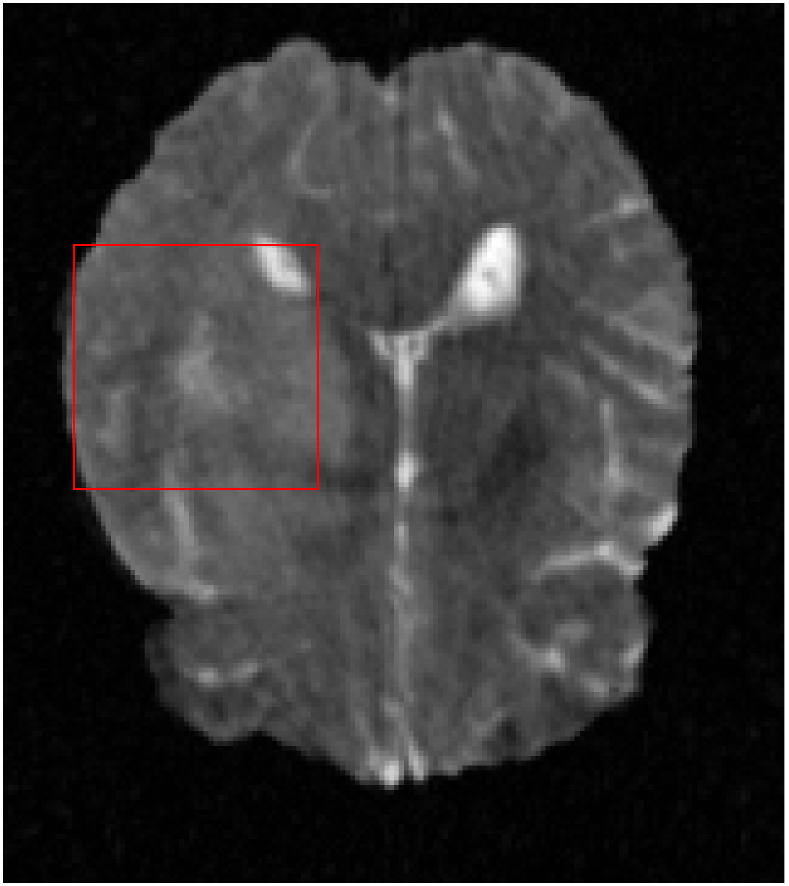}
\includegraphics[width=0.18\linewidth]{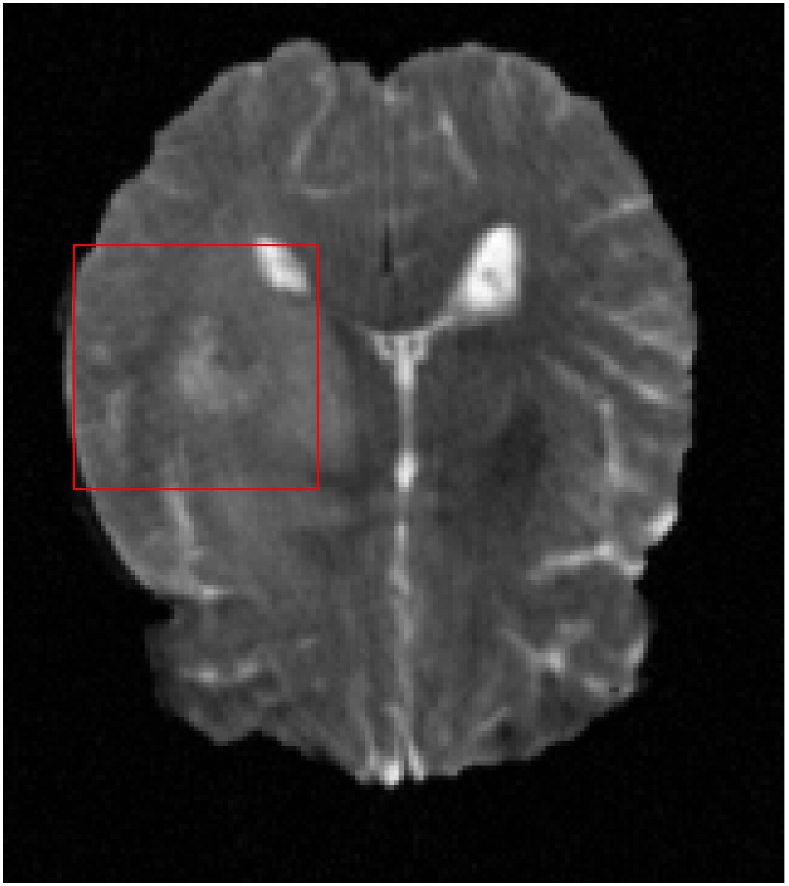}
\includegraphics[width=0.18\linewidth]{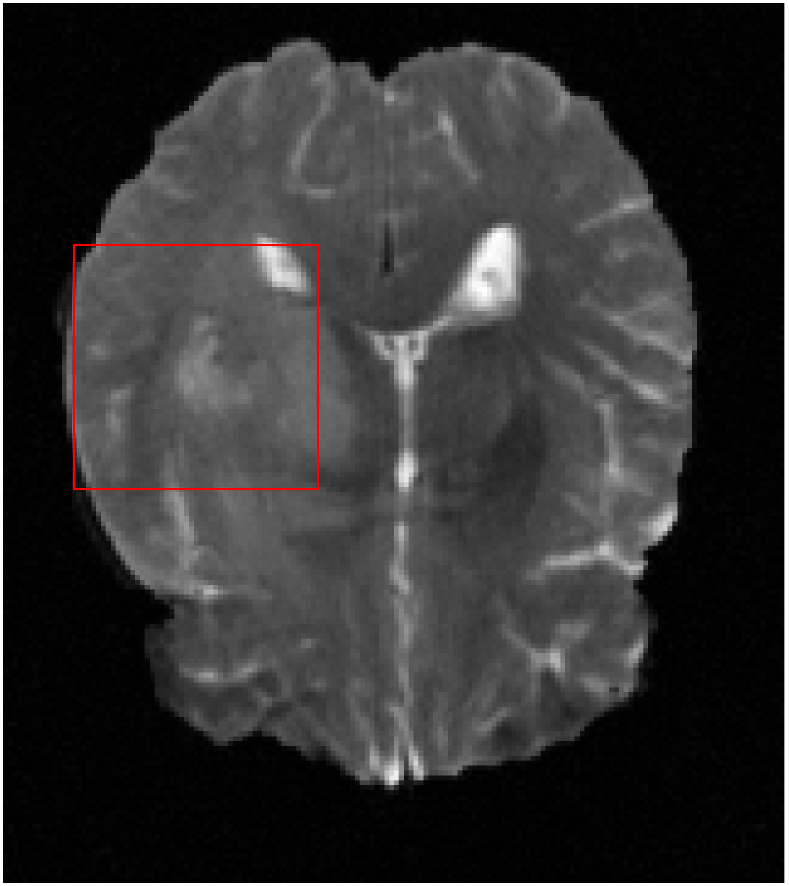}
\includegraphics[width=0.18\linewidth]{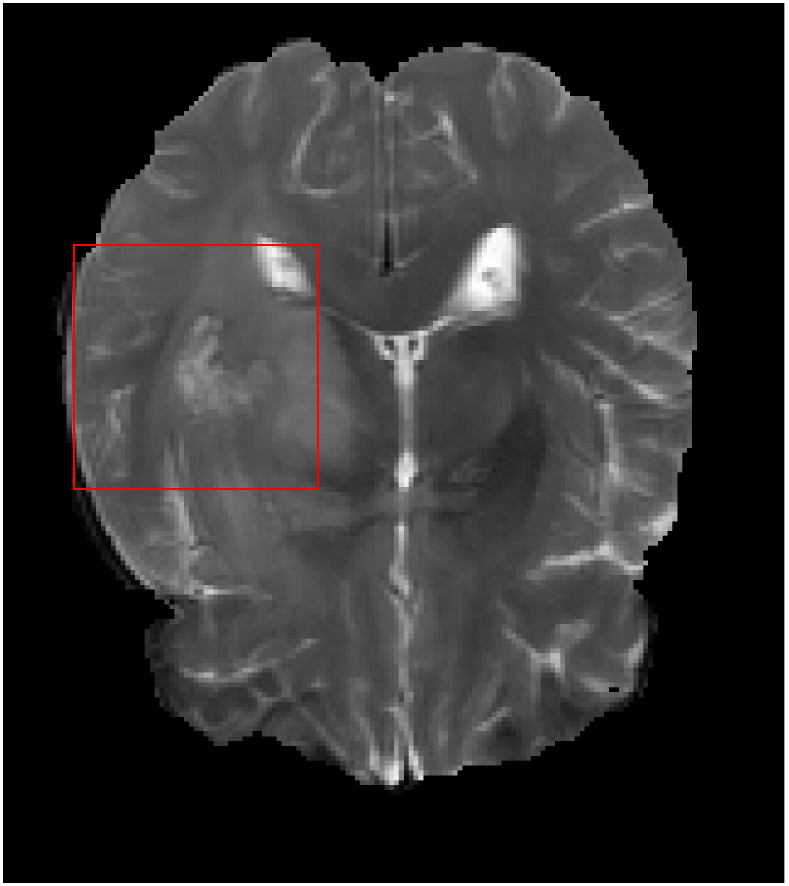}\\
\includegraphics[width=0.2\linewidth, angle=90]{fig/conventional_result.pdf}
\includegraphics[width=0.18\linewidth]{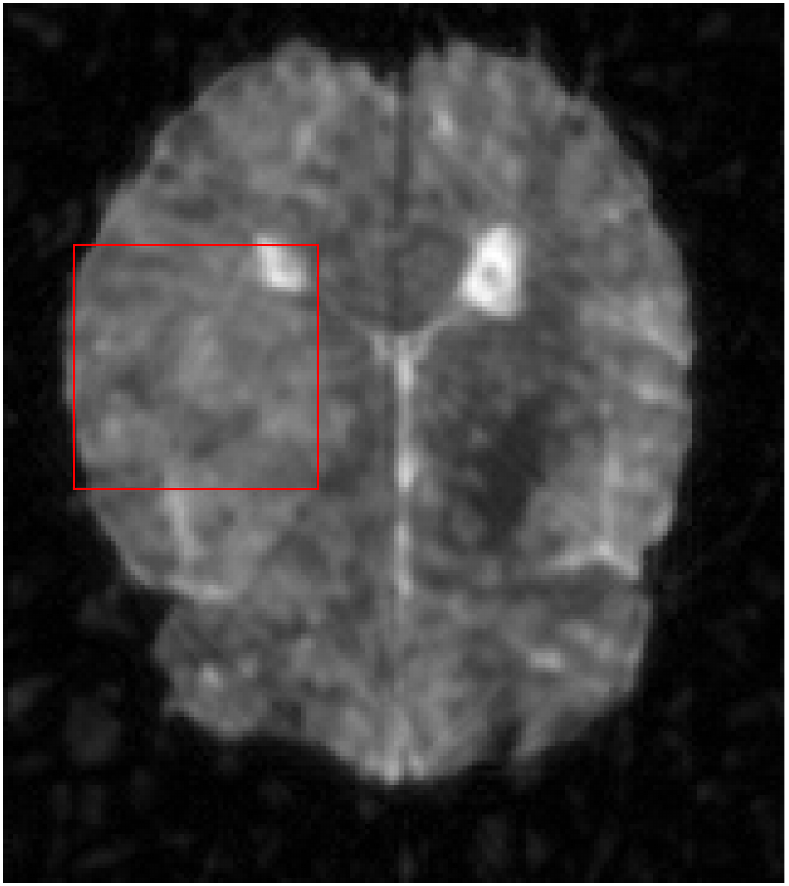}
\includegraphics[width=0.18\linewidth]{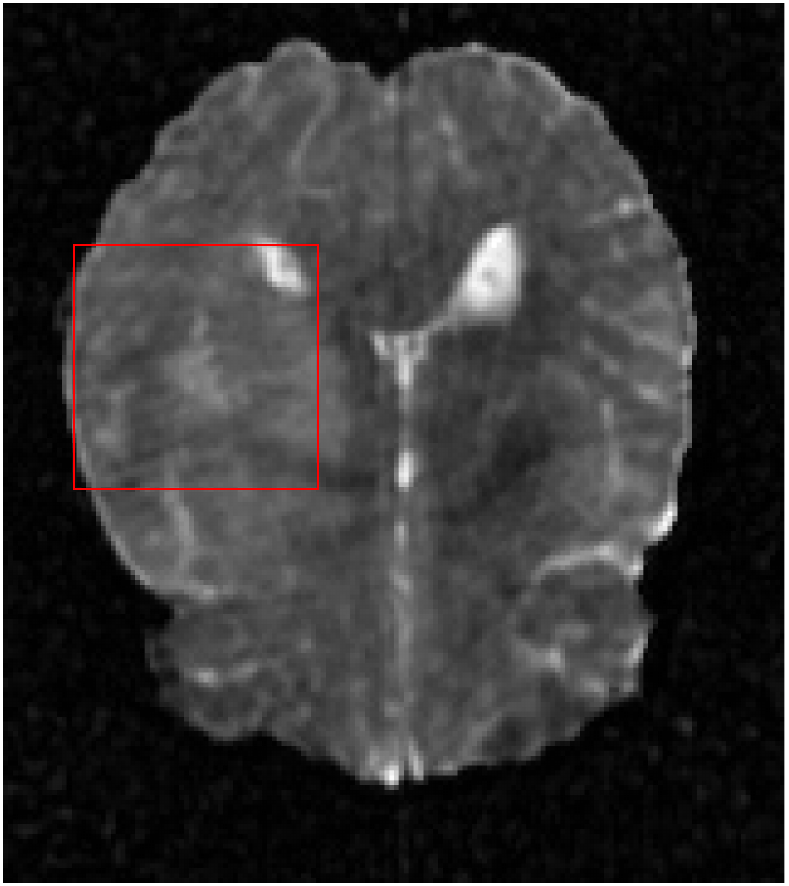}
\includegraphics[width=0.18\linewidth]{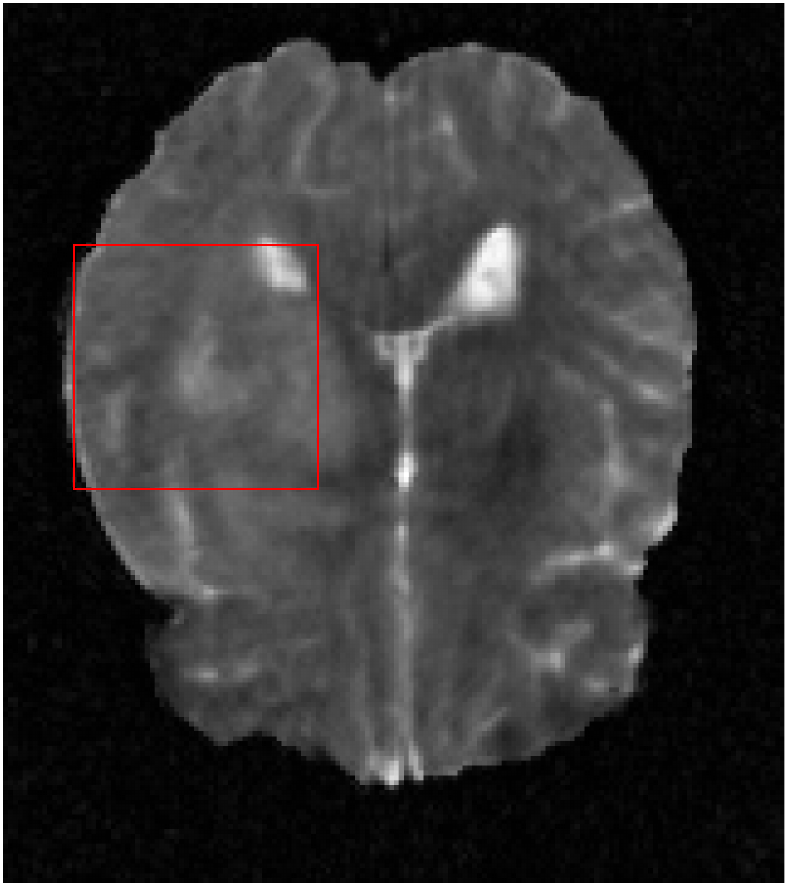}
\includegraphics[width=0.18\linewidth]{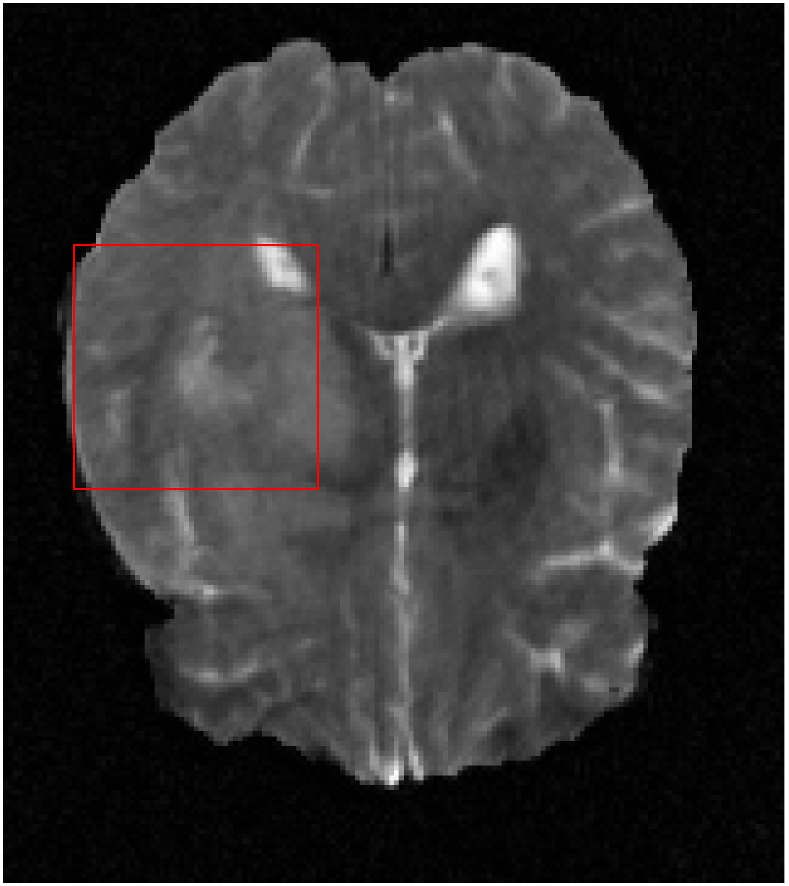}
\includegraphics[width=0.18\linewidth]{fig/white.pdf}\\
\includegraphics[width=0.2\linewidth, angle=90]{fig/meta_detail.pdf}
\includegraphics[width=0.18\linewidth]{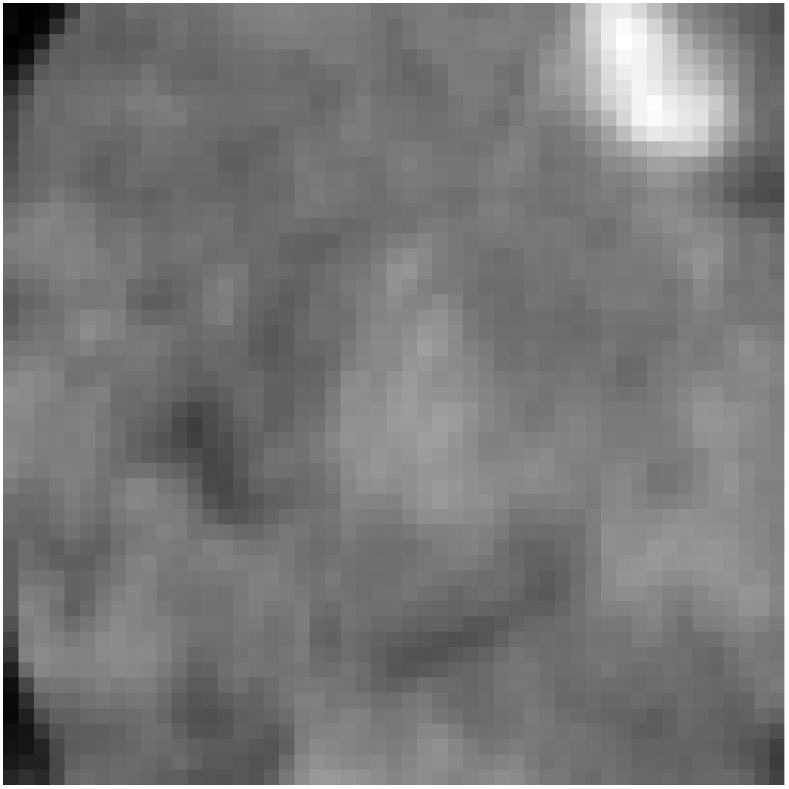}
\includegraphics[width=0.18\linewidth]{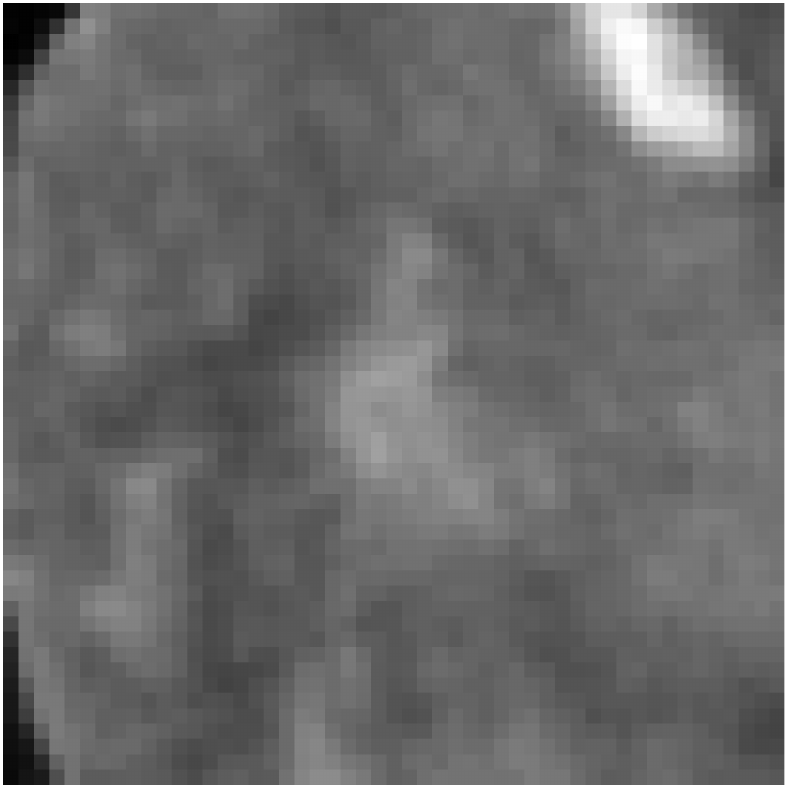}
\includegraphics[width=0.18\linewidth]{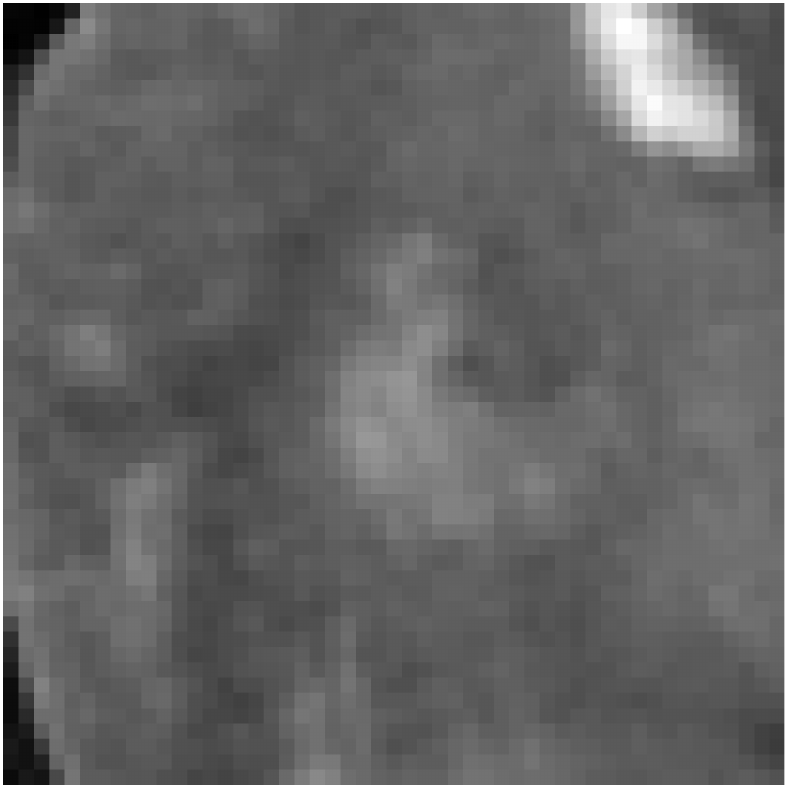}
\includegraphics[width=0.18\linewidth]{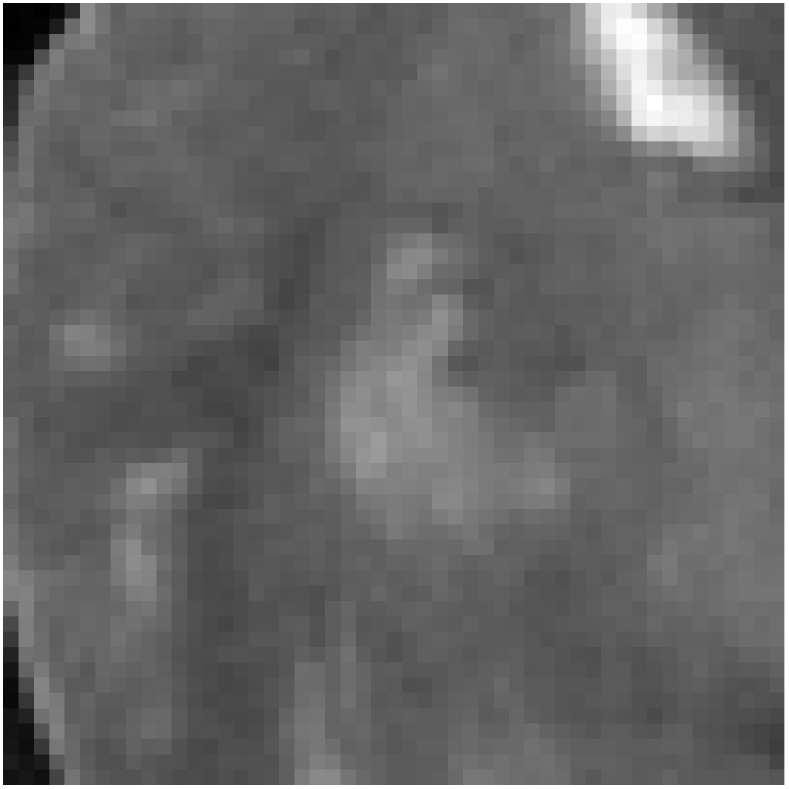}
\includegraphics[width=0.18\linewidth]{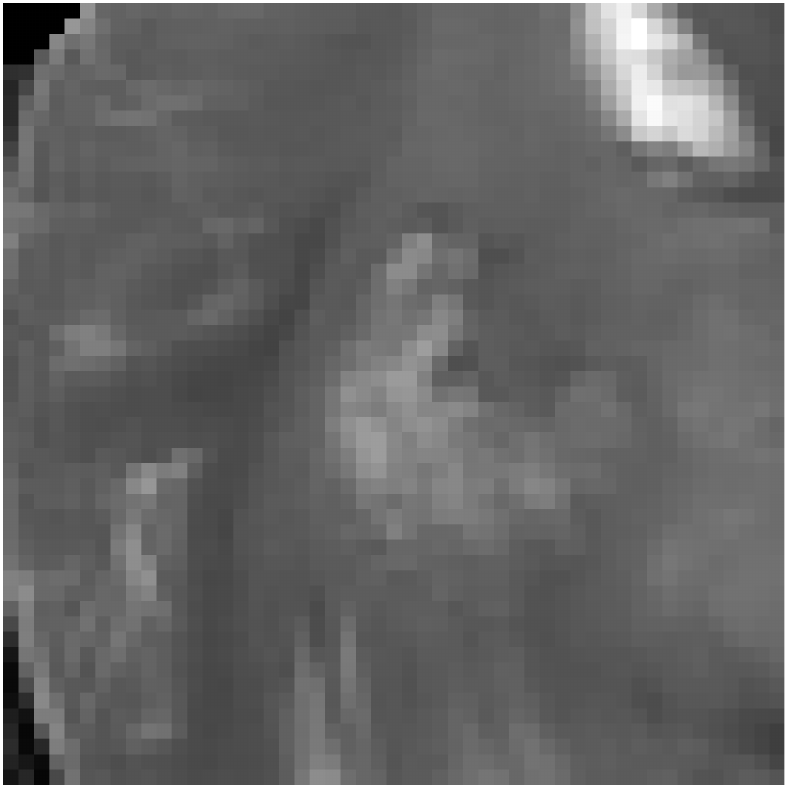}\\
\includegraphics[width=0.2\linewidth, angle=90]{fig/conventional_detail.pdf}
\includegraphics[width=0.18\linewidth]{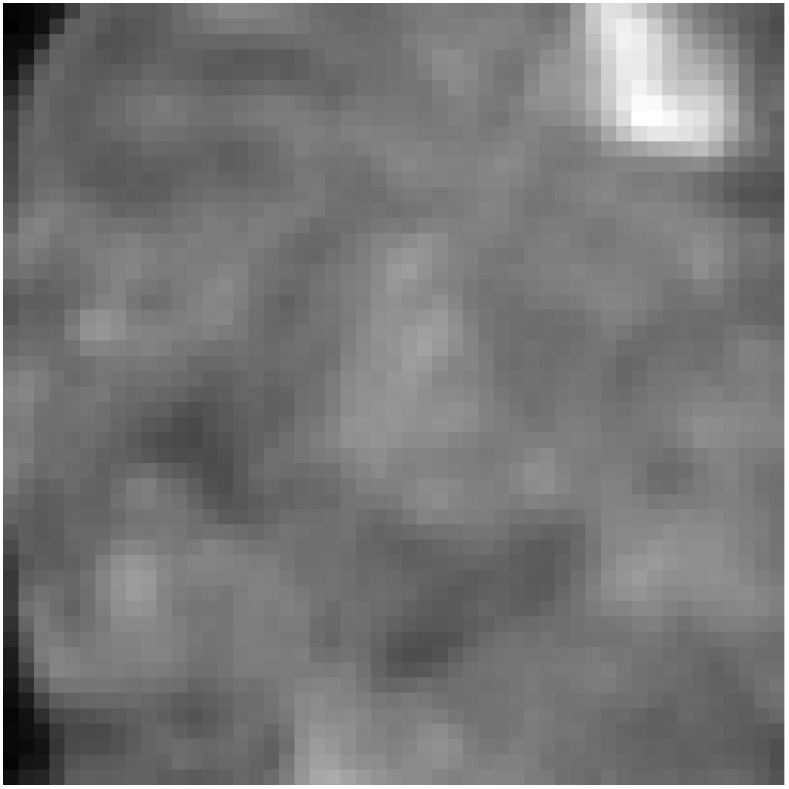}
\includegraphics[width=0.18\linewidth]{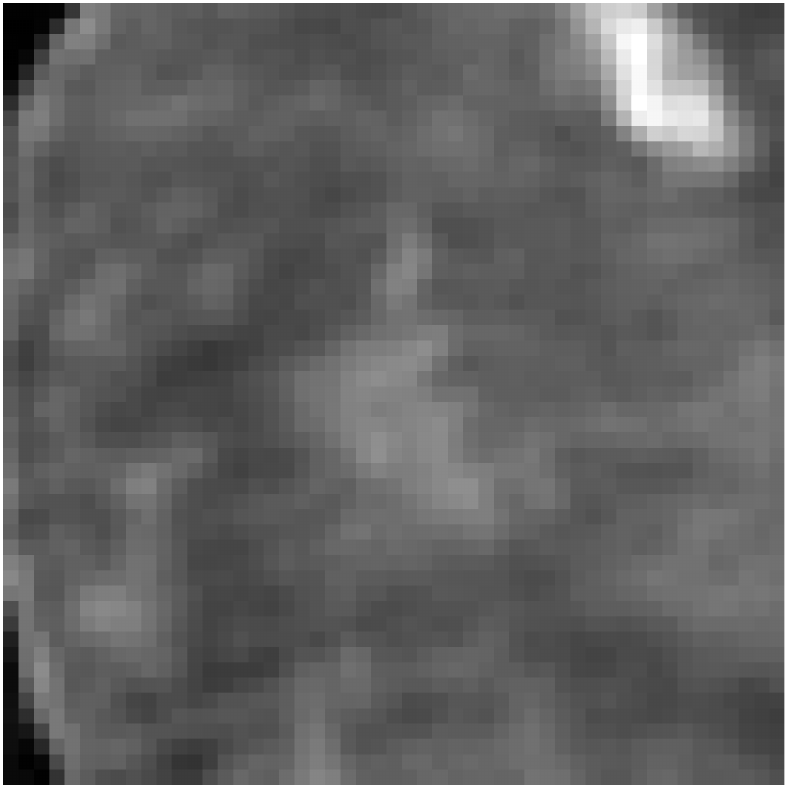}
\includegraphics[width=0.18\linewidth]{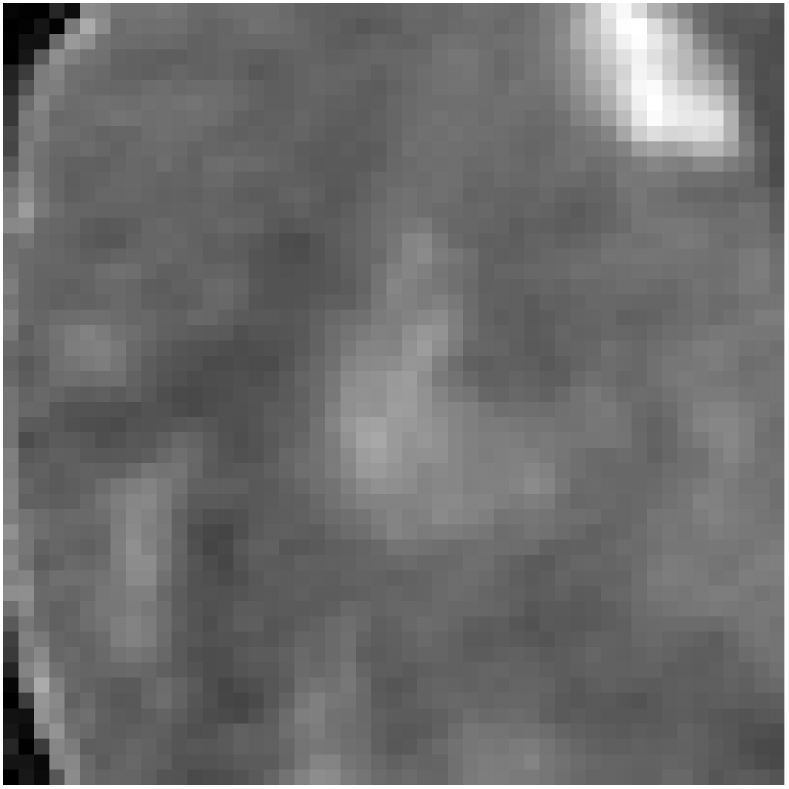}
\includegraphics[width=0.18\linewidth]{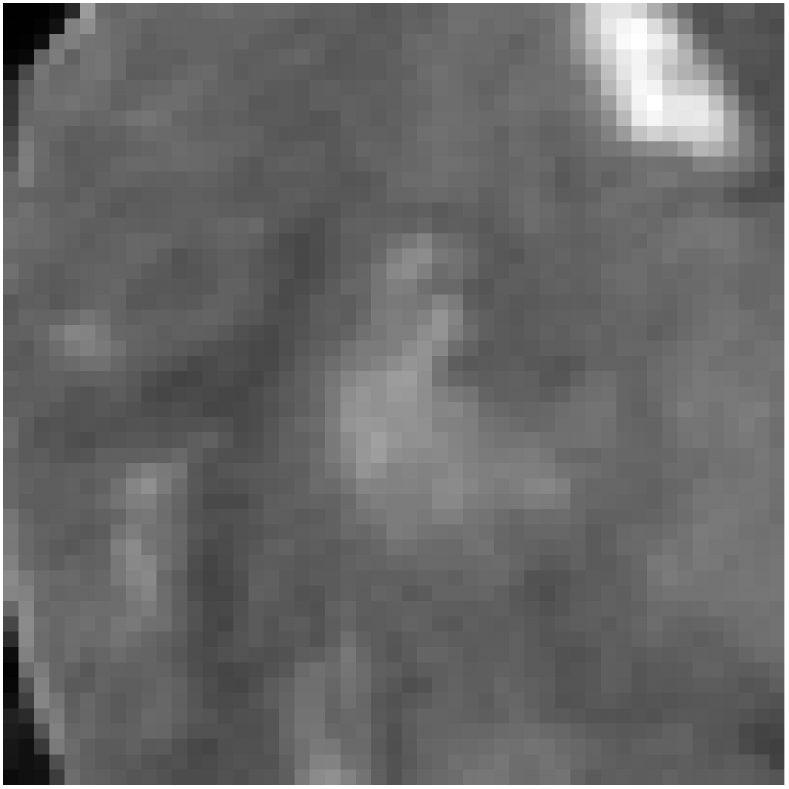}
\includegraphics[width=0.18\linewidth]{fig/white.pdf}\\
\includegraphics[width=0.2\linewidth, angle=90]{fig/meta_error.pdf}
\includegraphics[width=0.18\linewidth]{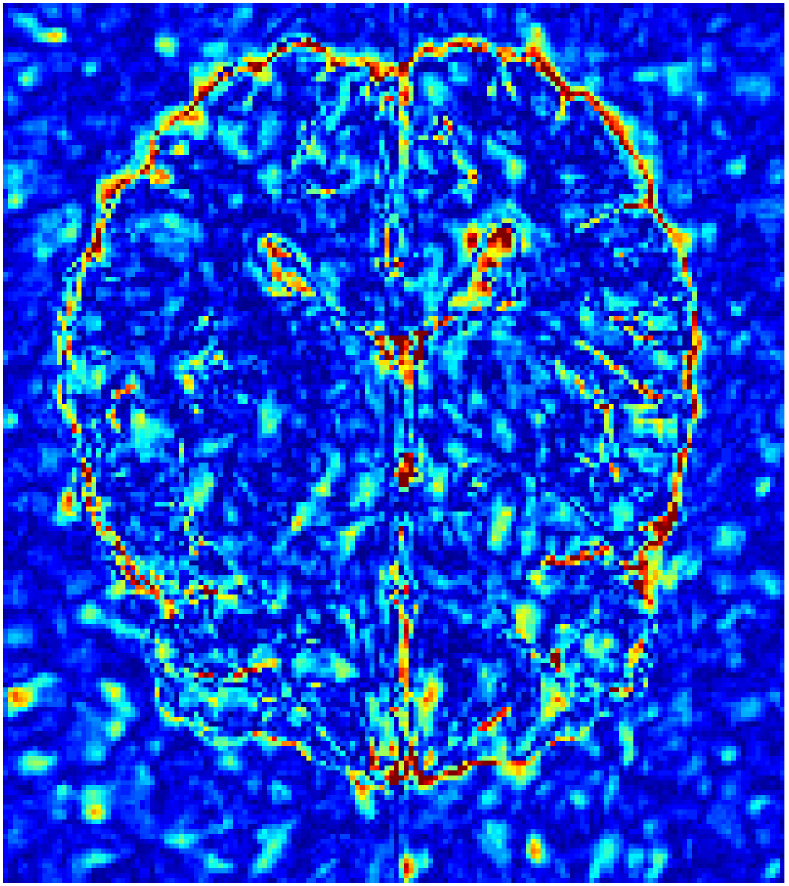}
\includegraphics[width=0.18\linewidth]{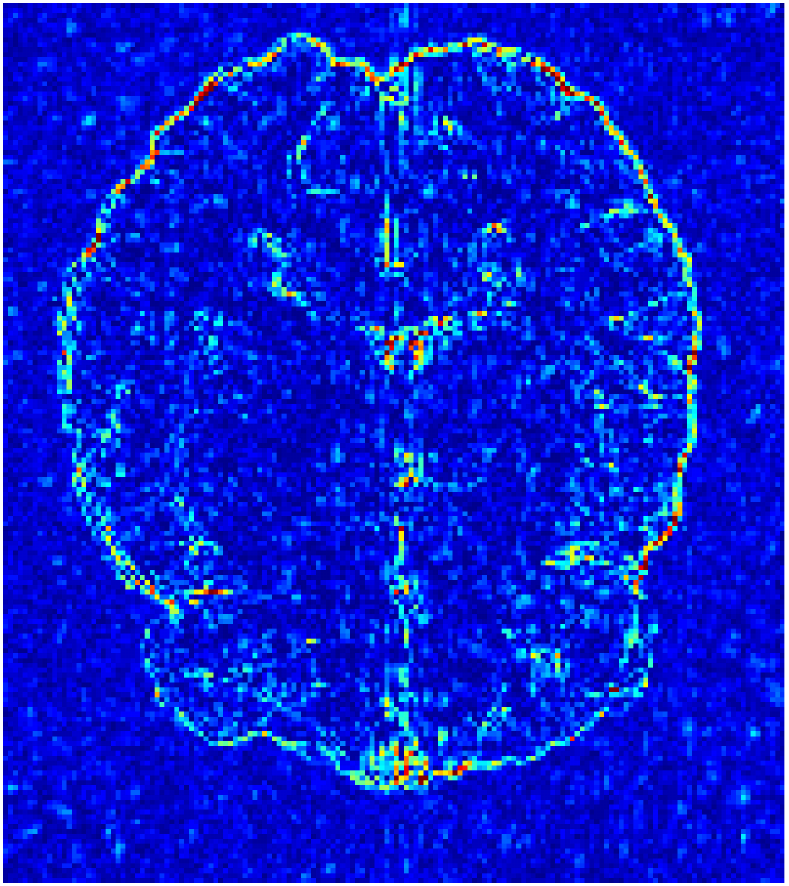}
\includegraphics[width=0.18\linewidth]{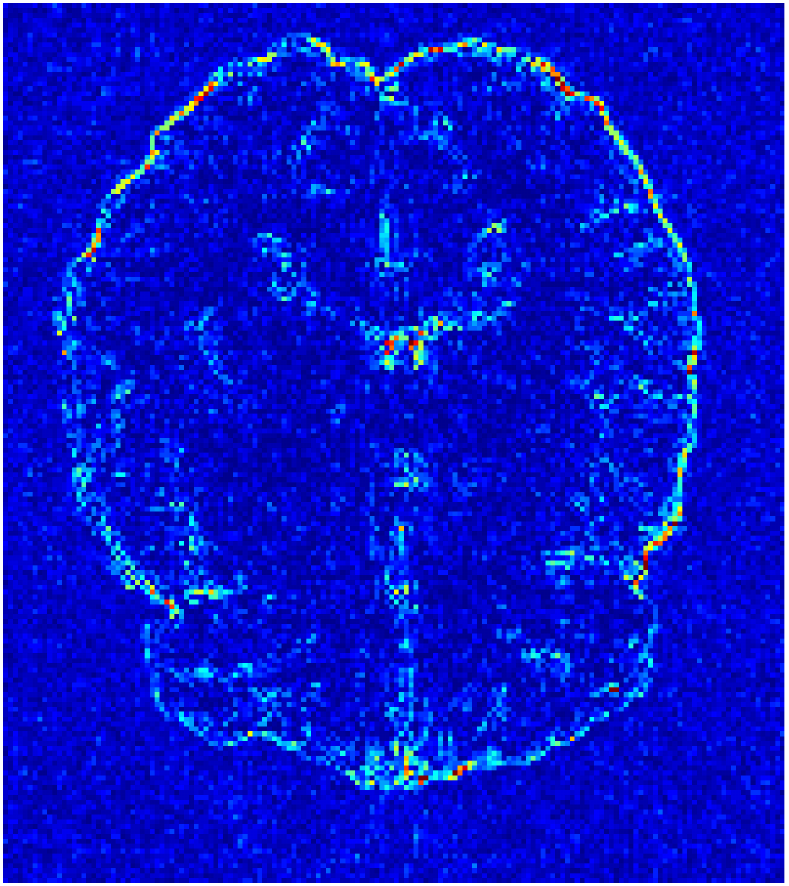}
\includegraphics[width=0.18\linewidth]{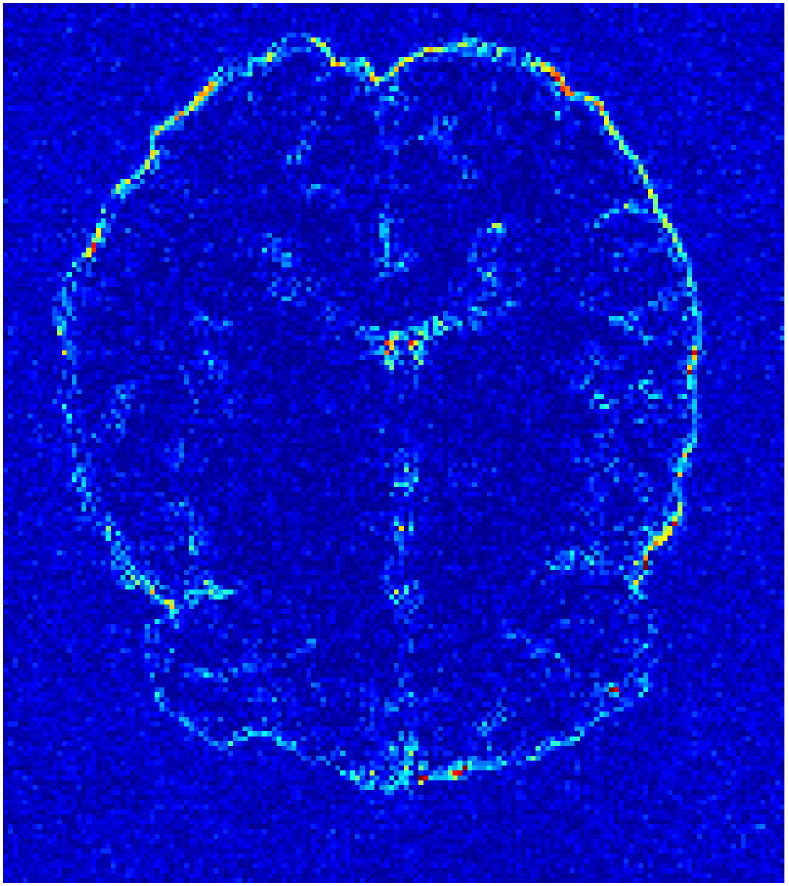}
\includegraphics[width=0.18\linewidth]{fig/colorbar.pdf}\\
\includegraphics[width=0.2\linewidth, angle=90]{fig/conventional_error.pdf}
\includegraphics[width=0.18\linewidth]{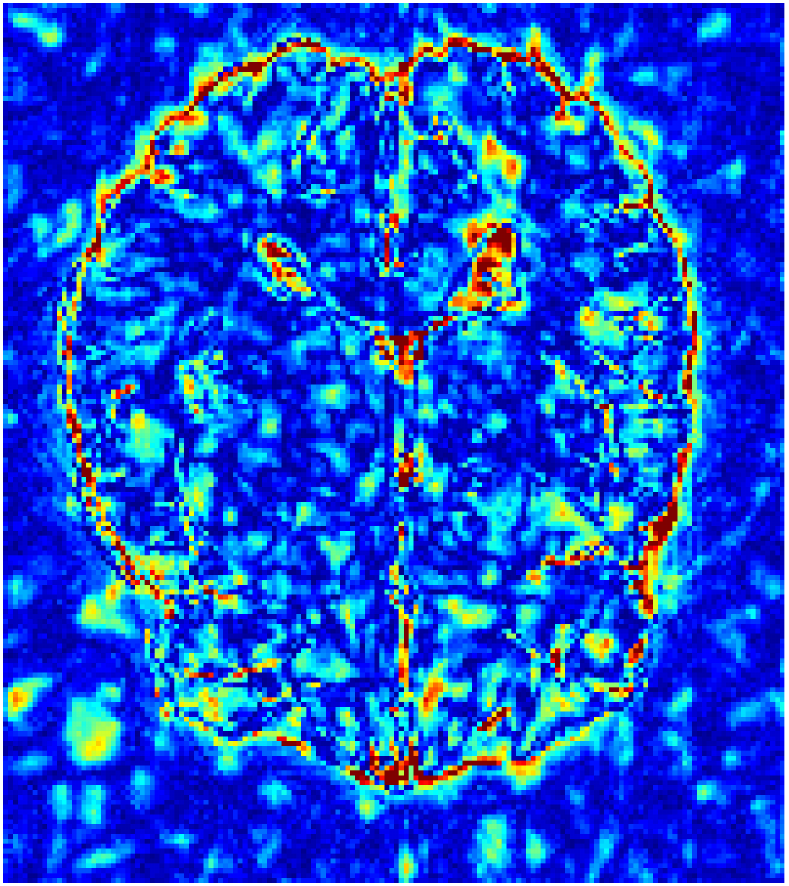}
\includegraphics[width=0.18\linewidth]{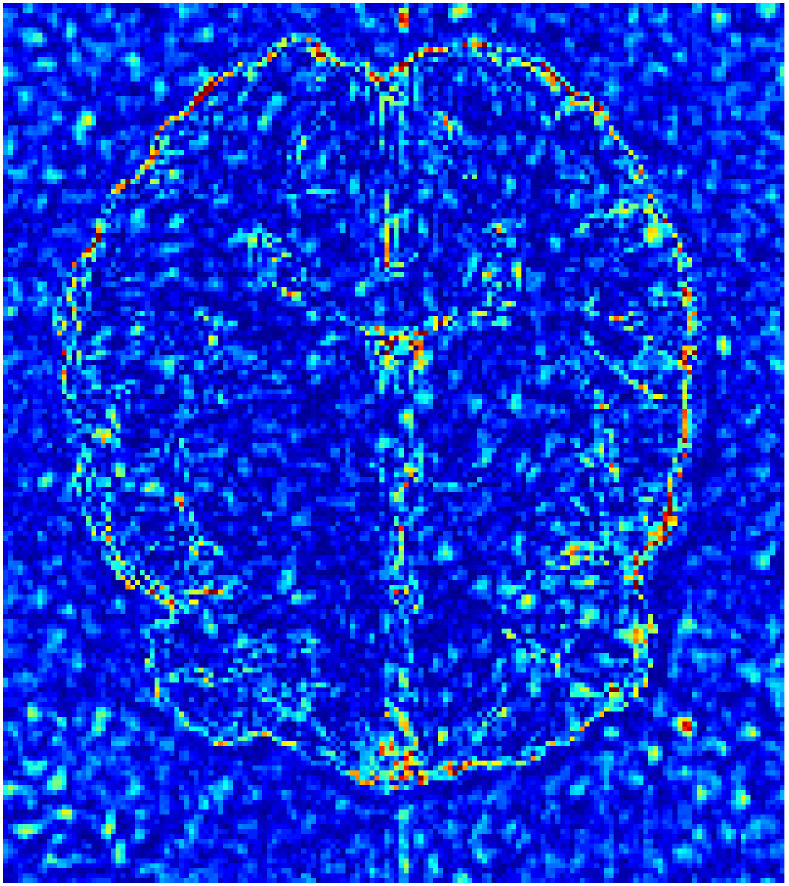}
\includegraphics[width=0.18\linewidth]{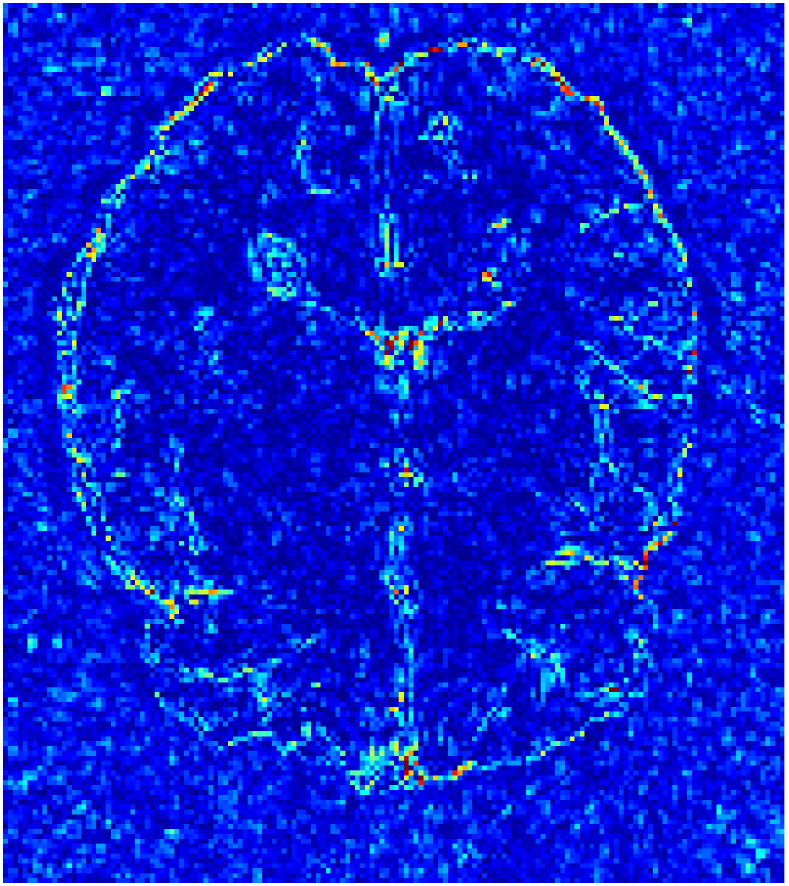}
\includegraphics[width=0.18\linewidth]{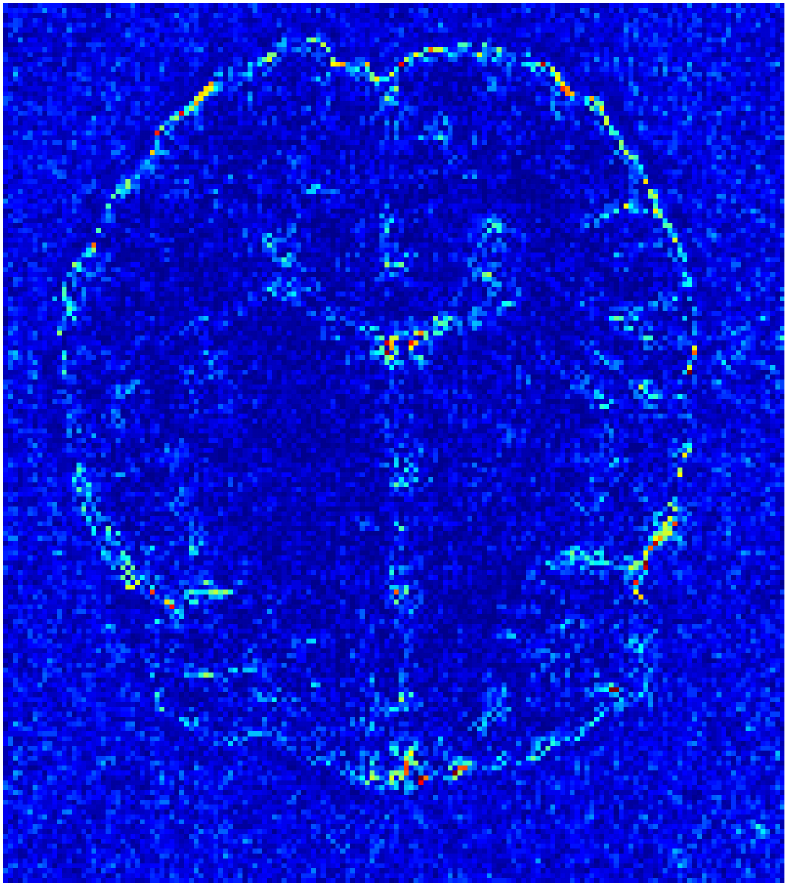}
\includegraphics[width=0.18\linewidth]{fig/white.pdf}\\
\includegraphics[width=0.2\linewidth, angle=90]{fig/masks.pdf}
\includegraphics[width=0.18\linewidth]{fig/mask10_t1.pdf}
\includegraphics[width=0.18\linewidth]{fig/mask20_t1.pdf}
\includegraphics[width=0.18\linewidth]{fig/mask30_t1.pdf}
\includegraphics[width=0.18\linewidth]{fig/mask40_t1.pdf}
\includegraphics[width=0.18\linewidth]{fig/white.pdf}
\caption{The pictures (from top to bottom) display the T2 Brain image reconstruction results, zoomed in details, pointwise errors with colorbar and associated \textbf{radio} masks for both these  two compared methods with four different CS ratios 10\%, 20\%, 30\%, 40\%（from left to right). The most top right one is ground truth fully-sampled image. }
\label{figure_same_ratio_t2}
\end{figure}

\begin{figure}[H]
\centering
\includegraphics[width=0.2\linewidth, angle=90]{fig/meta_result.pdf}
\includegraphics[width=0.18\linewidth]{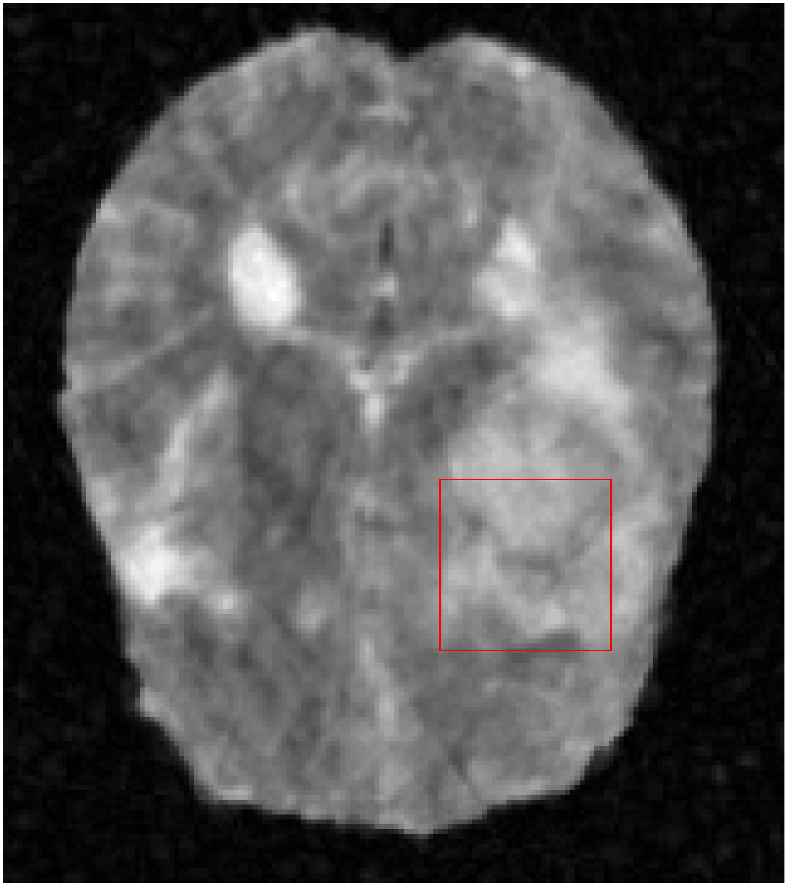}
\includegraphics[width=0.18\linewidth]{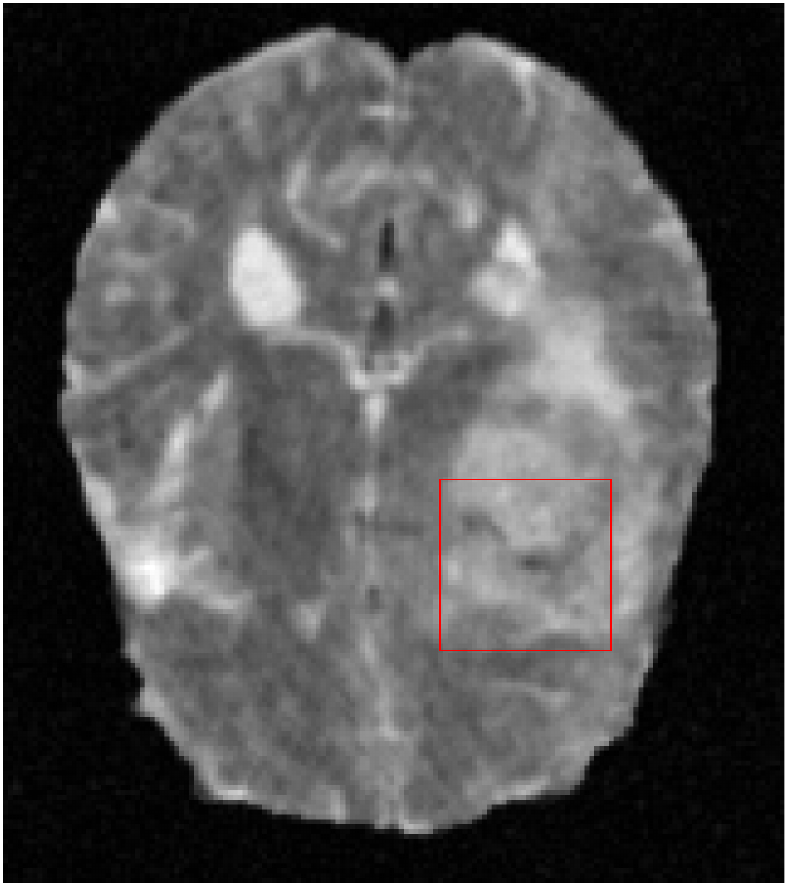}
\includegraphics[width=0.18\linewidth]{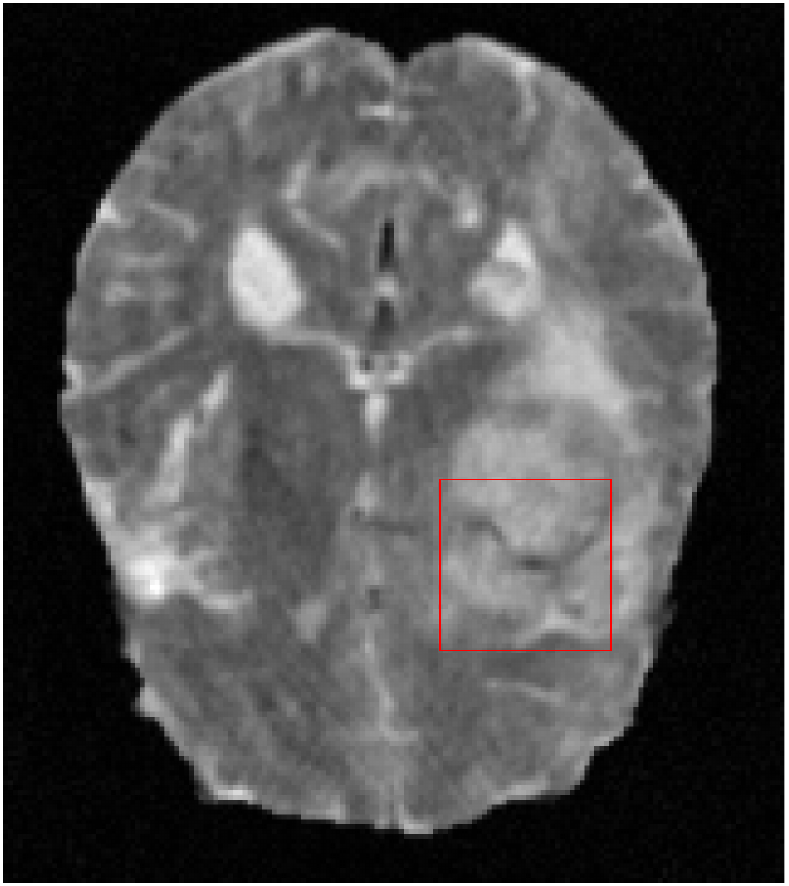}
\includegraphics[width=0.18\linewidth]{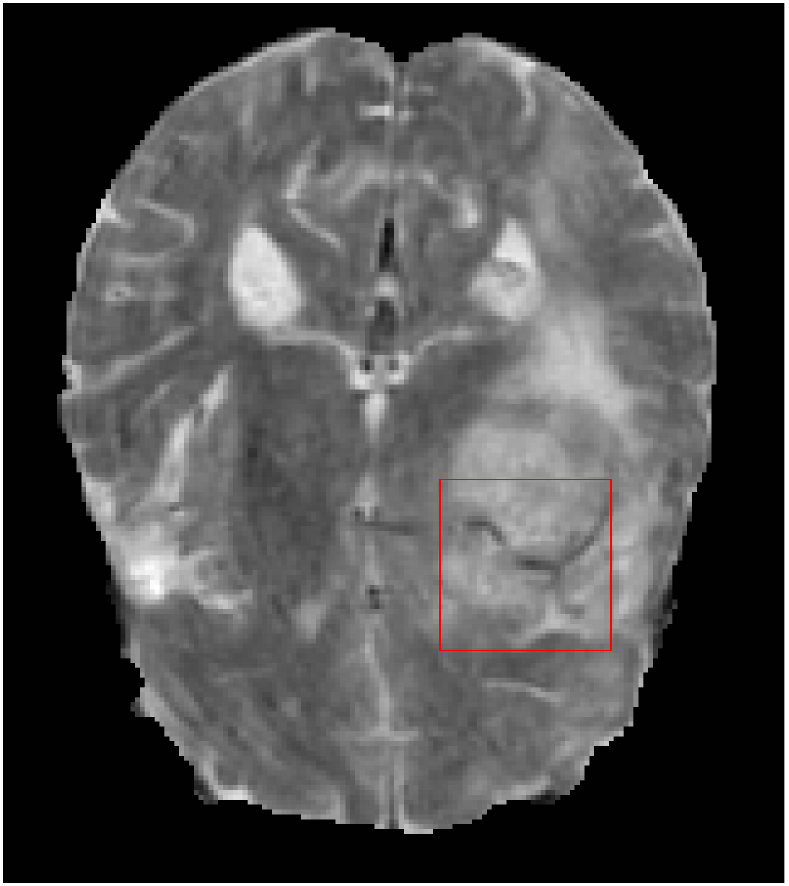}\\
\includegraphics[width=0.2\linewidth, angle=90]{fig/conventional_result.pdf}
\includegraphics[width=0.18\linewidth]{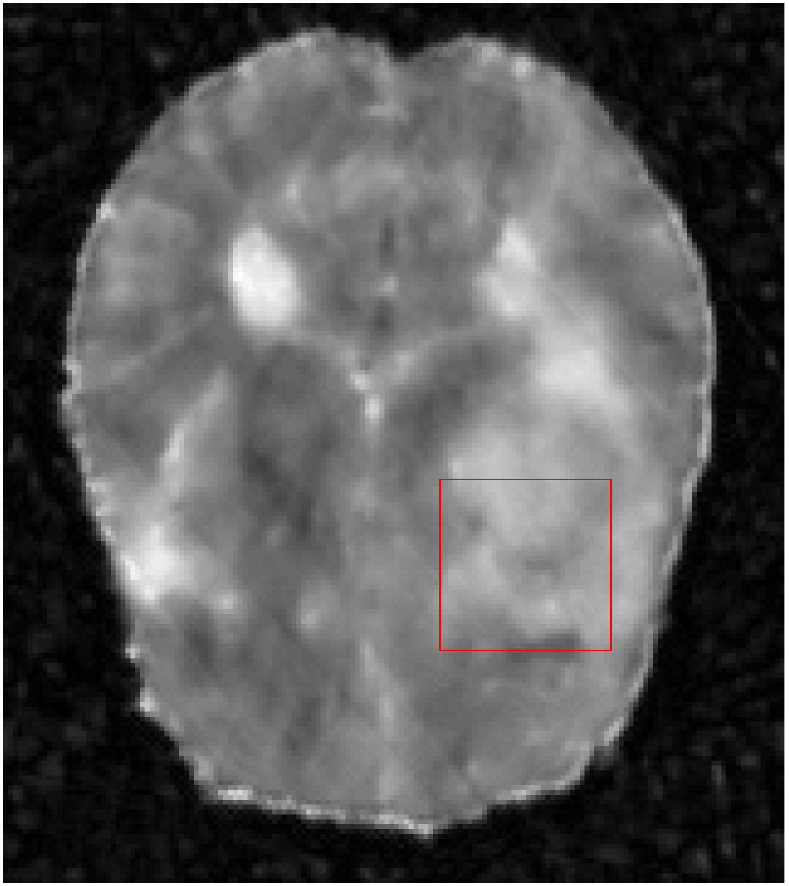}
\includegraphics[width=0.18\linewidth]{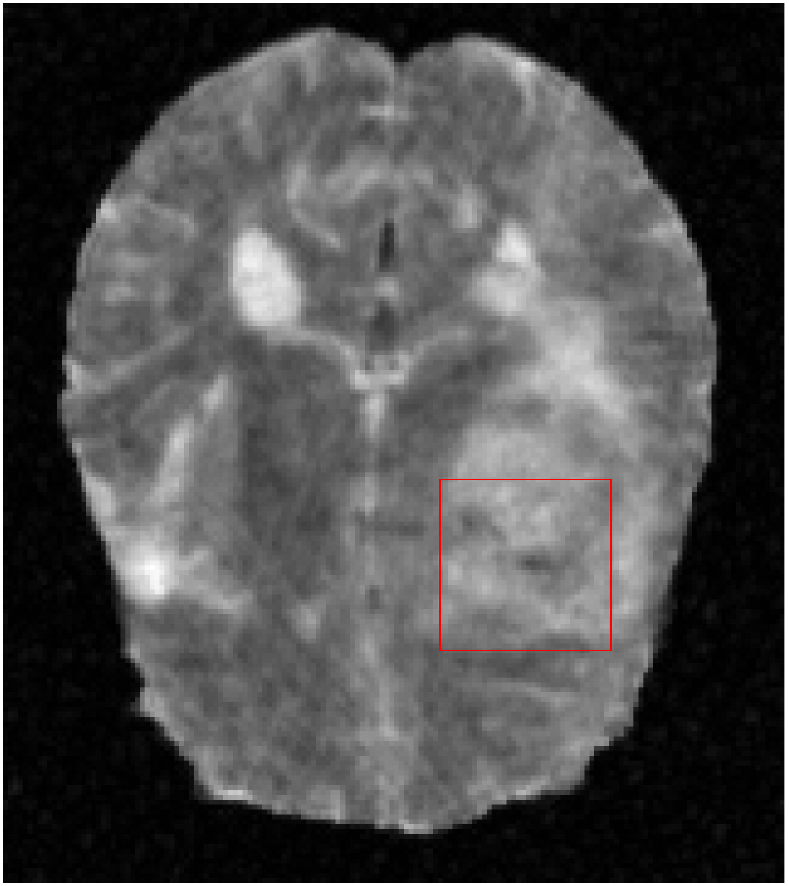}
\includegraphics[width=0.18\linewidth]{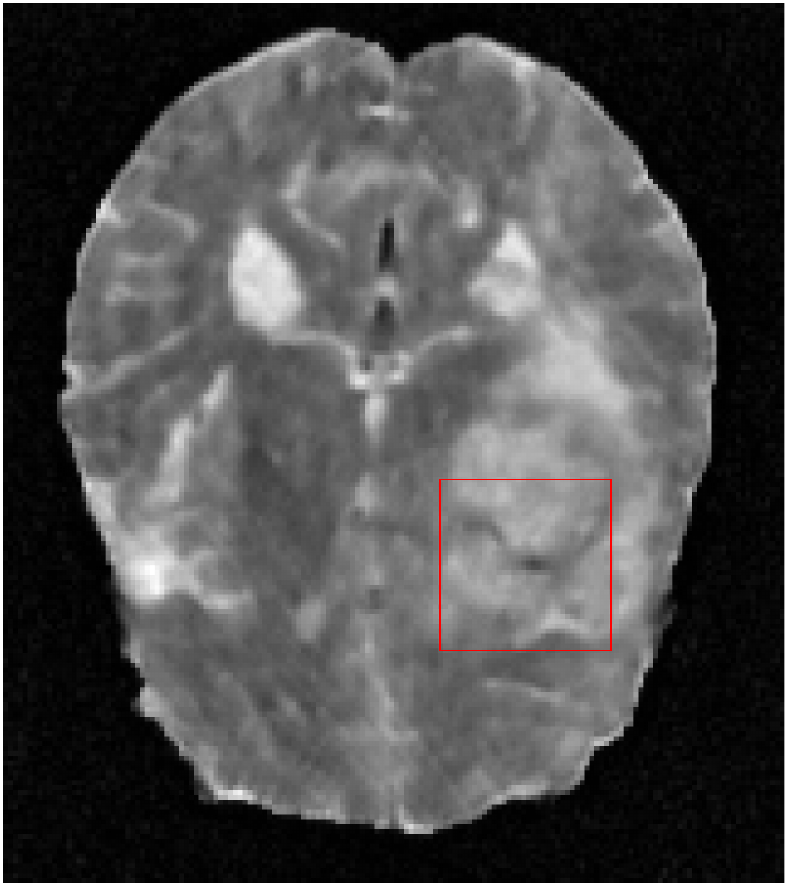}
\includegraphics[width=0.18\linewidth]{fig/white.pdf}\\
\includegraphics[width=0.2\linewidth, angle=90]{fig/meta_detail.pdf}
\includegraphics[width=0.18\linewidth]{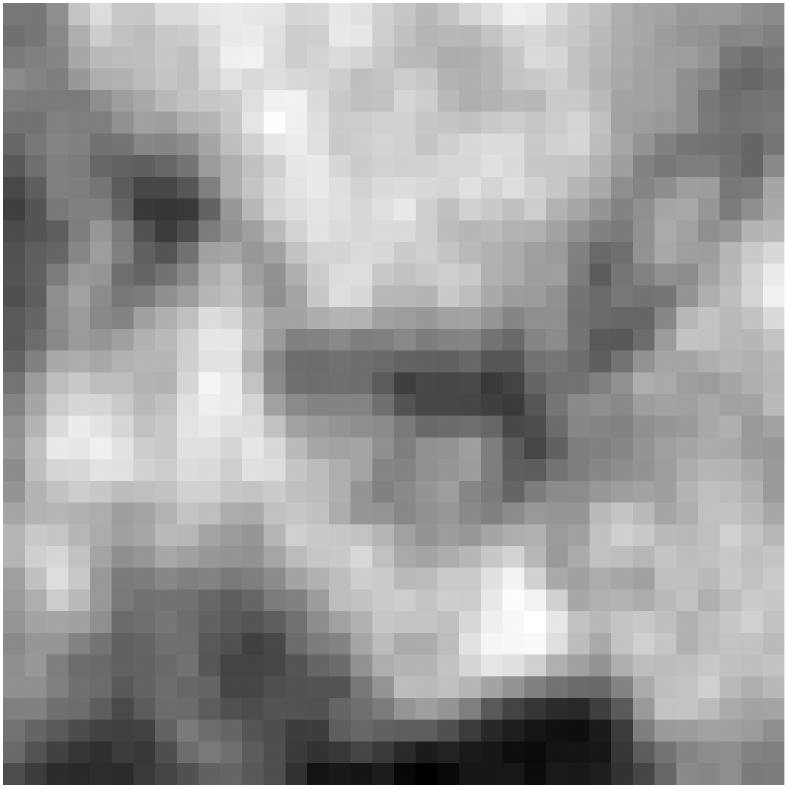}
\includegraphics[width=0.18\linewidth]{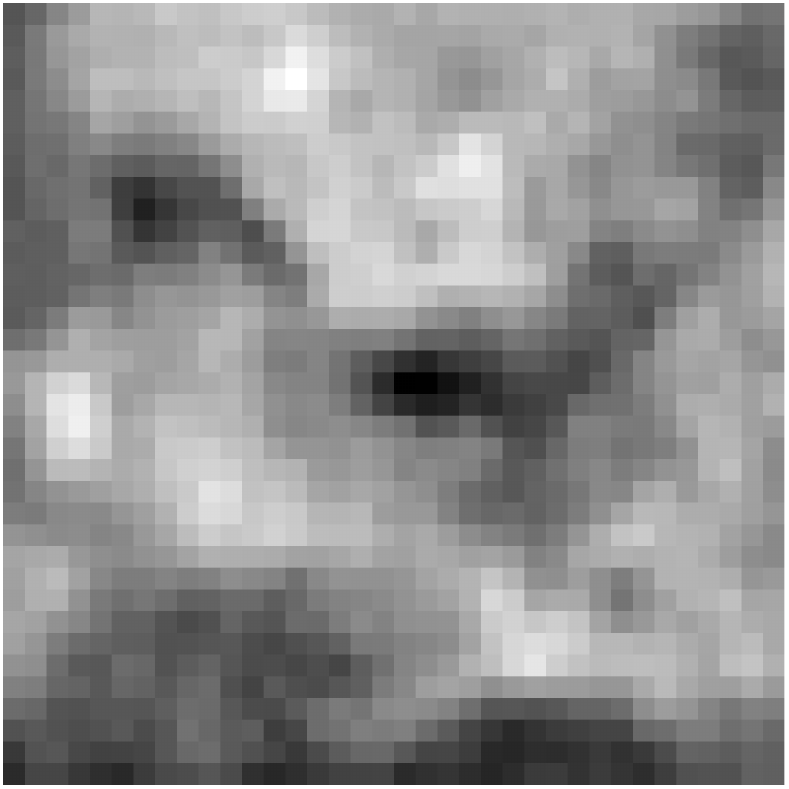}
\includegraphics[width=0.18\linewidth]{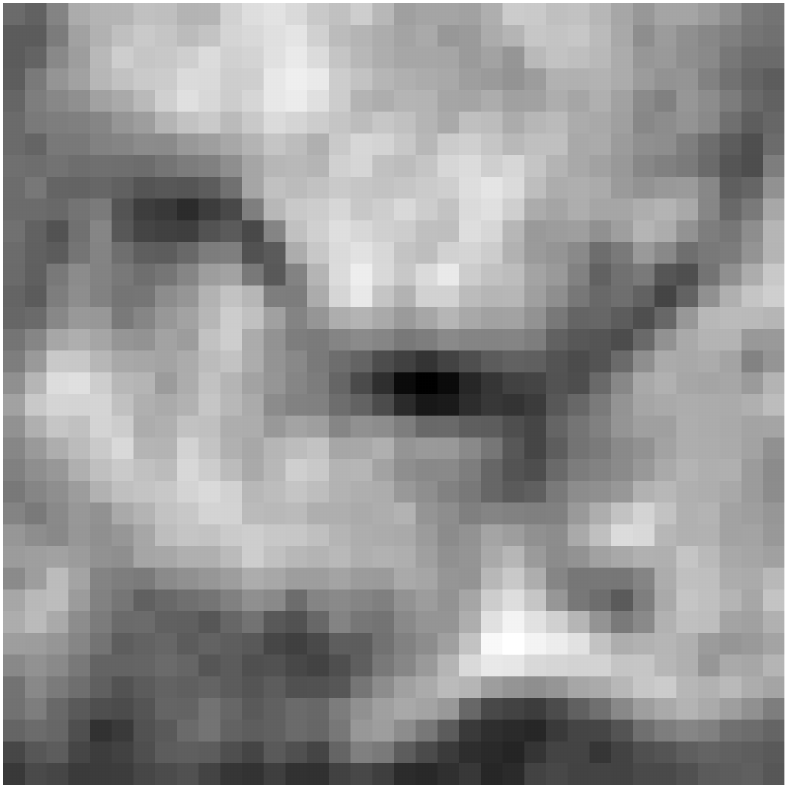}
\includegraphics[width=0.18\linewidth]{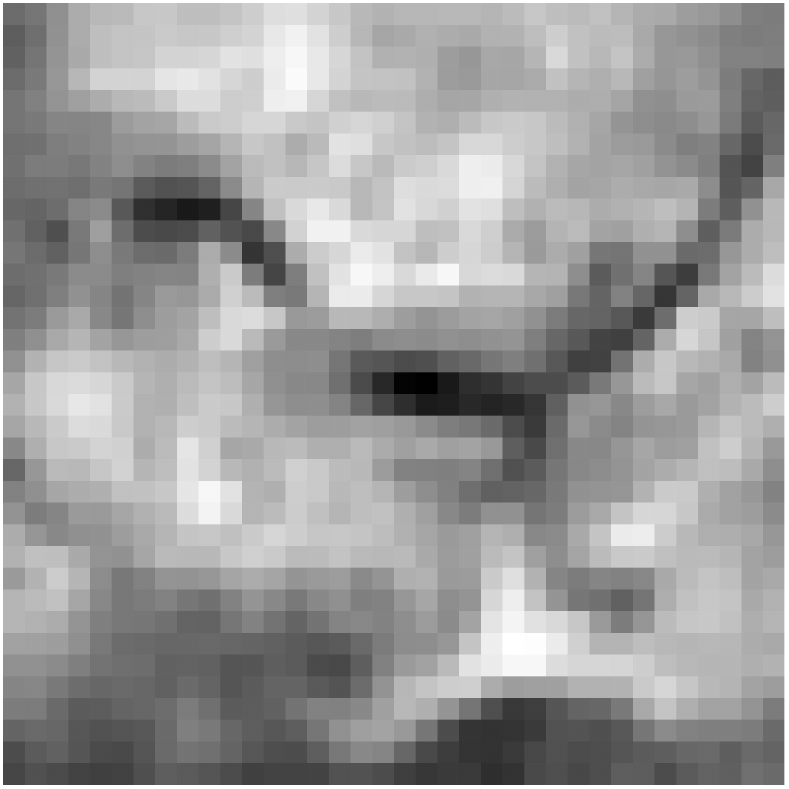}\\
\includegraphics[width=0.2\linewidth, angle=90]{fig/conventional_detail.pdf}
\includegraphics[width=0.18\linewidth]{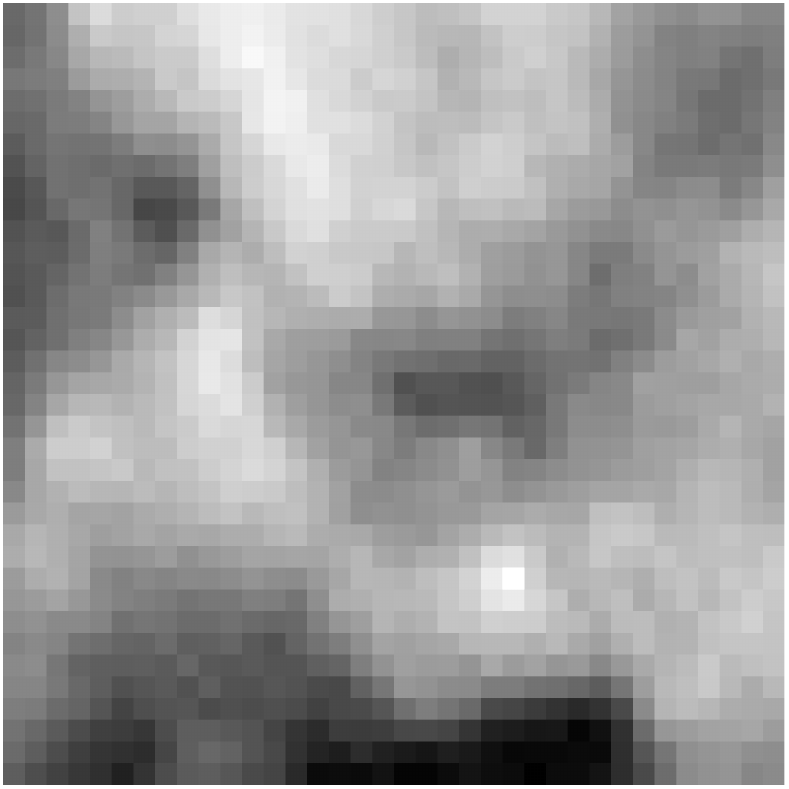}
\includegraphics[width=0.18\linewidth]{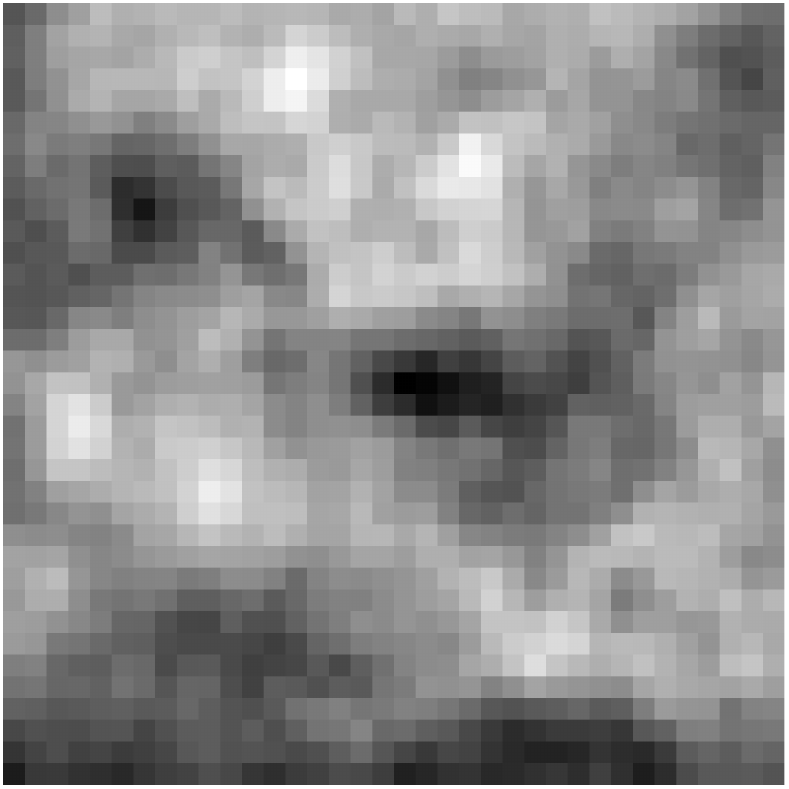}
\includegraphics[width=0.18\linewidth]{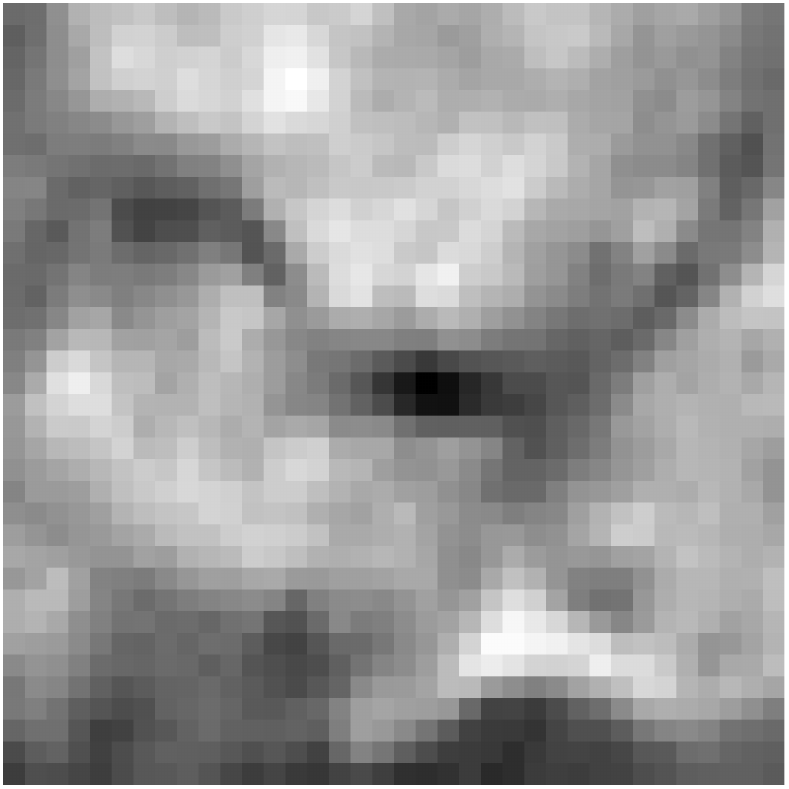}
\includegraphics[width=0.18\linewidth]{fig/white.pdf}\\
\includegraphics[width=0.2\linewidth, angle=90]{fig/meta_error.pdf}
\includegraphics[width=0.18\linewidth]{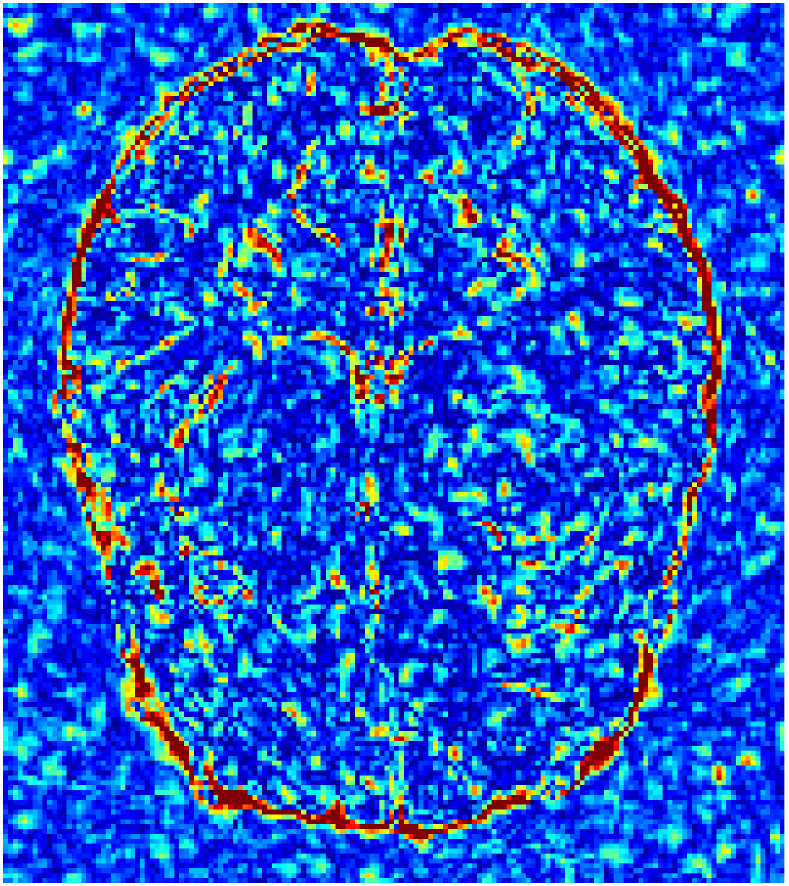}
\includegraphics[width=0.18\linewidth]{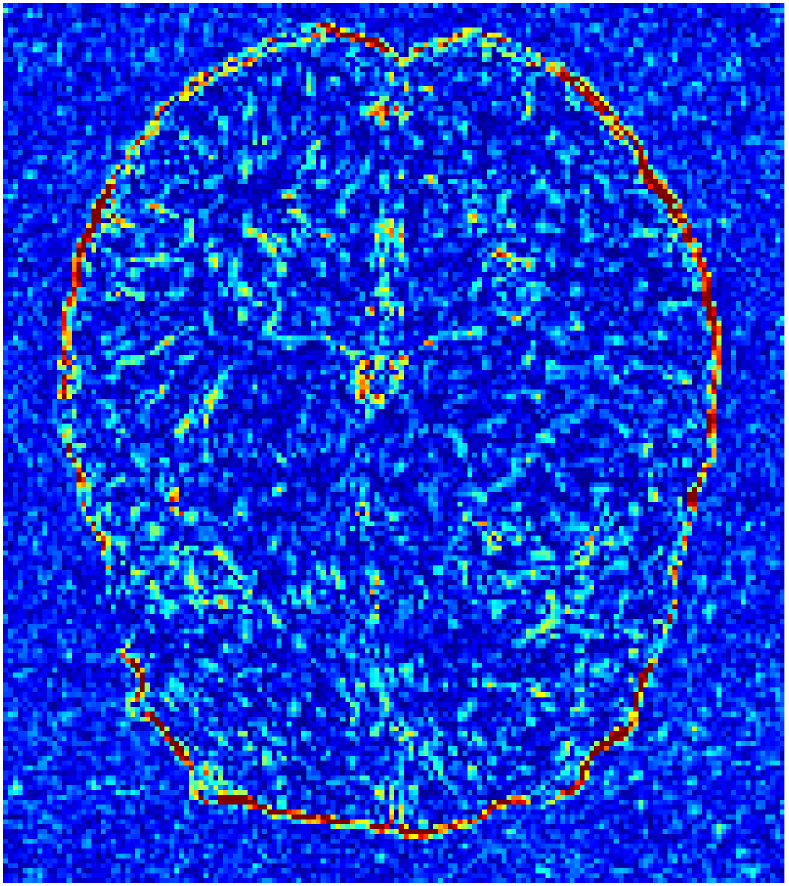}
\includegraphics[width=0.18\linewidth]{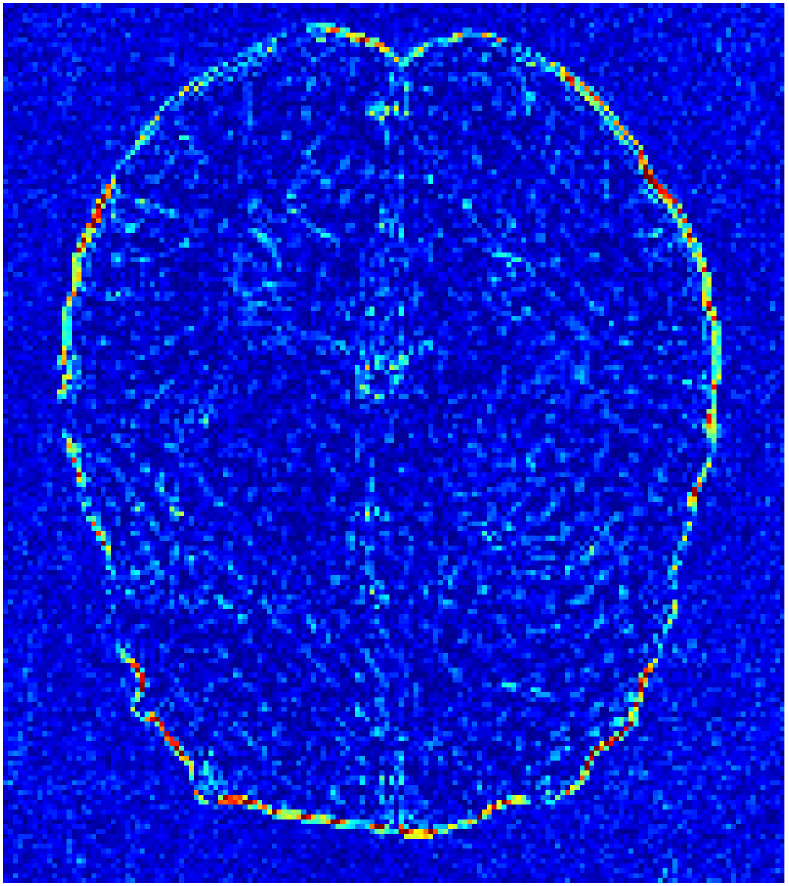}
\includegraphics[width=0.18\linewidth]{fig/colorbar.pdf}\\
\includegraphics[width=0.2\linewidth, angle=90]{fig/conventional_error.pdf}
\includegraphics[width=0.18\linewidth]{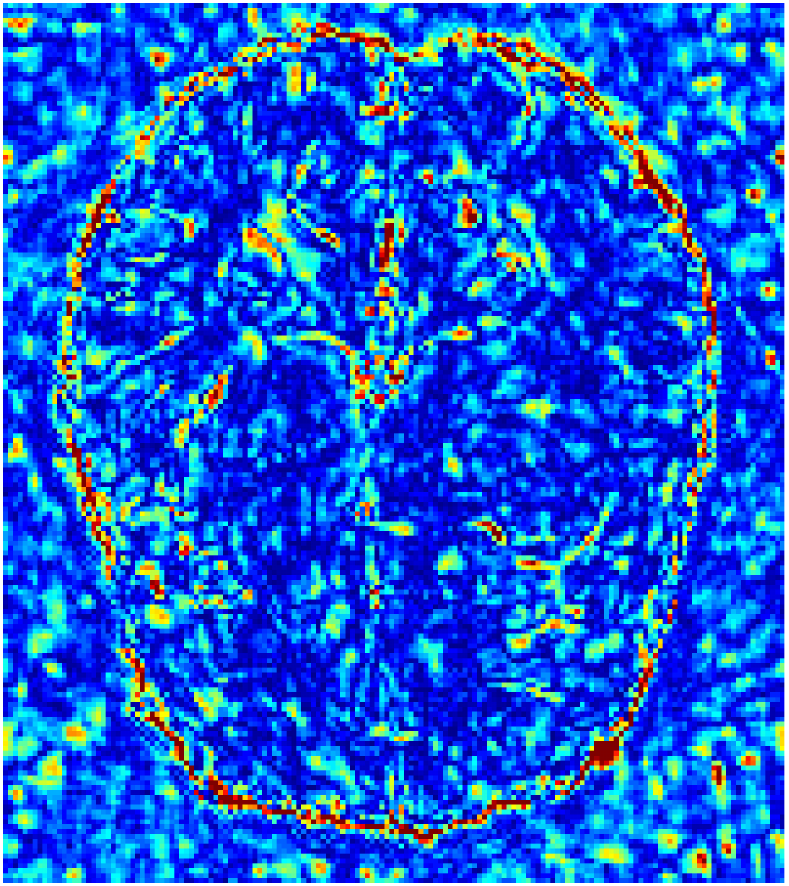}
\includegraphics[width=0.18\linewidth]{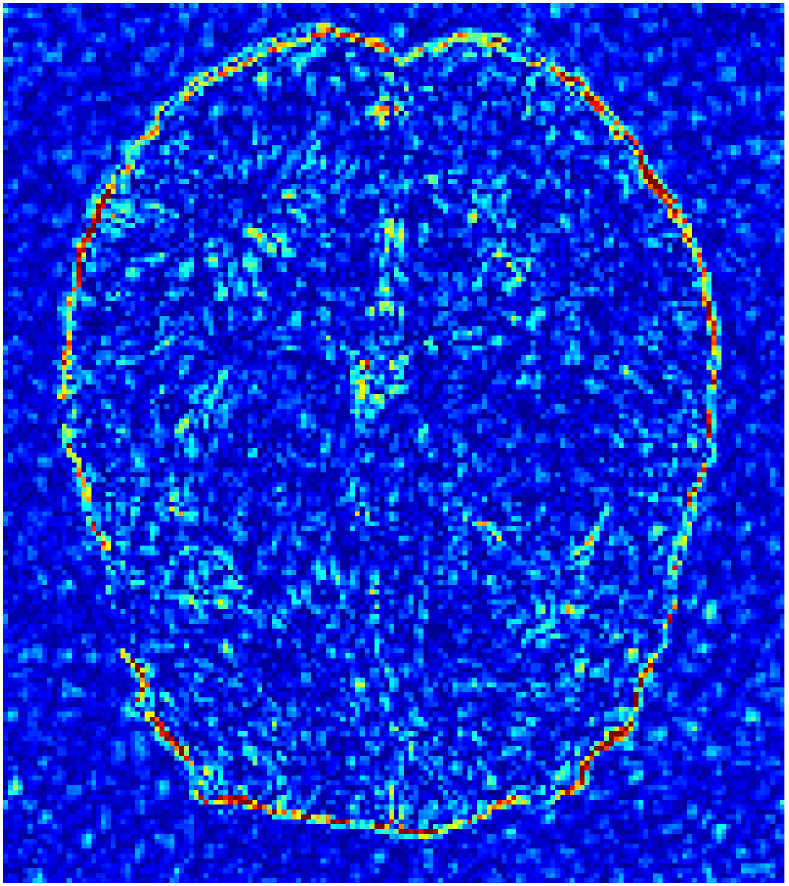}
\includegraphics[width=0.18\linewidth]{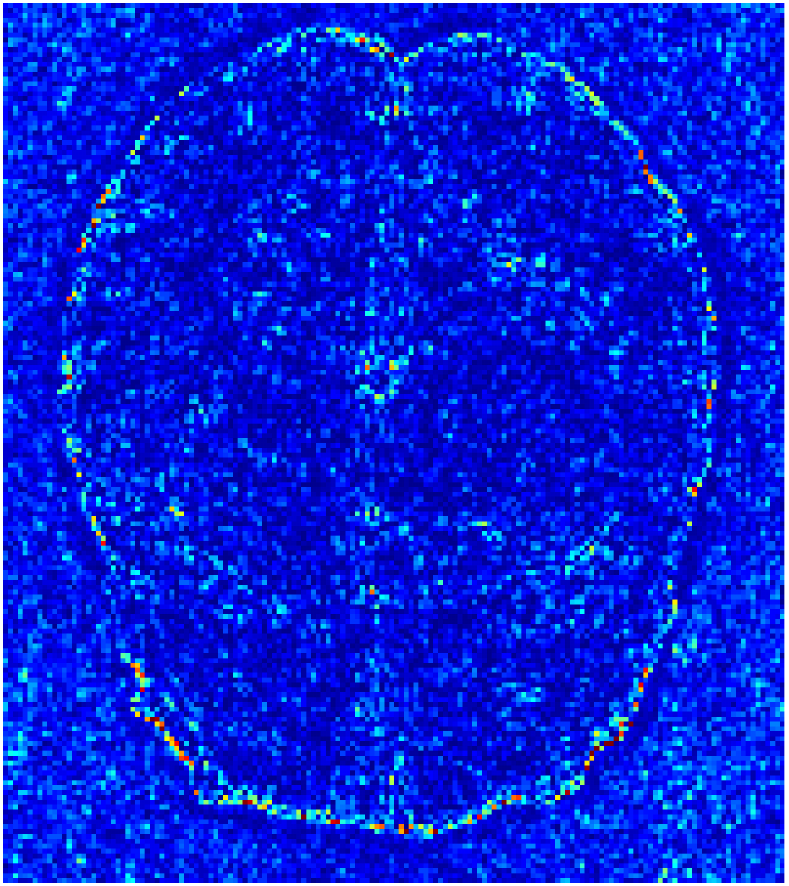}
\includegraphics[width=0.18\linewidth]{fig/white.pdf}\\
\includegraphics[width=0.2\linewidth, angle=90]{fig/masks.pdf}
\includegraphics[width=0.18\linewidth]{fig/mask15_t1.pdf}
\includegraphics[width=0.18\linewidth]{fig/mask25_t1.pdf}
\includegraphics[width=0.18\linewidth]{fig/mask35_t1.pdf}
\includegraphics[width=0.18\linewidth]{fig/white.pdf}
\caption{The pictures (from top to bottom) display the  T2 Brain image reconstruction results, zoomed in details, pointwise errors with colorbar and associated \textbf{radio} masks for meta-learning and conventional learning. Meta-learning was trained with CS ratios 10\%, 20\%, 30\% 40\% and test with three different CS ratios 15\%, 25\% and 35\%（from left to right). Conventional learning was trained and test with same CS ratios 15\%, 25\% and 35\%. The most top right one is ground truth fully-sampled image. }
\label{figure_dif_ratio_t2}
\end{figure}

\begin{figure}[H]
\centering
\includegraphics[width=0.2\linewidth, angle=90]{fig/meta_result.pdf}
\includegraphics[width=0.18\linewidth]{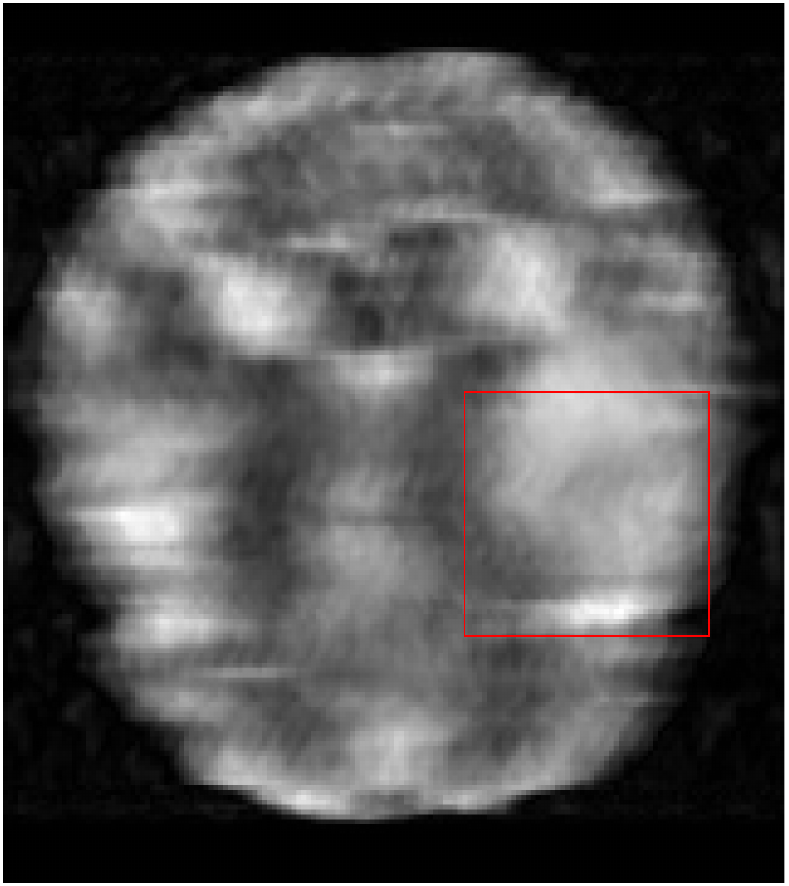}
\includegraphics[width=0.18\linewidth]{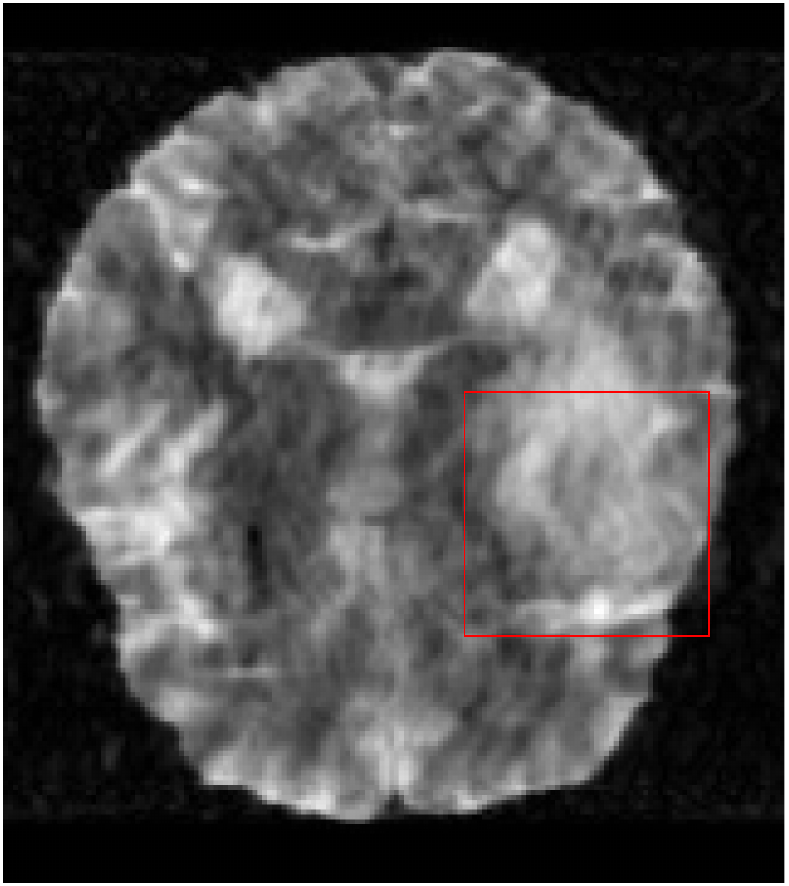}
\includegraphics[width=0.18\linewidth]{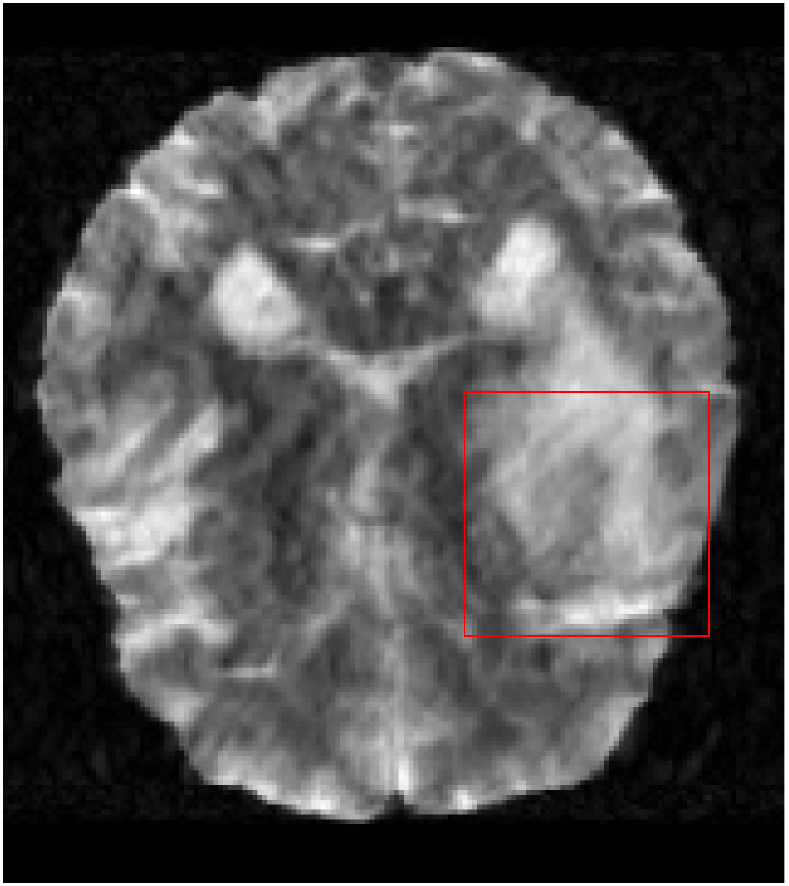}
\includegraphics[width=0.18\linewidth]{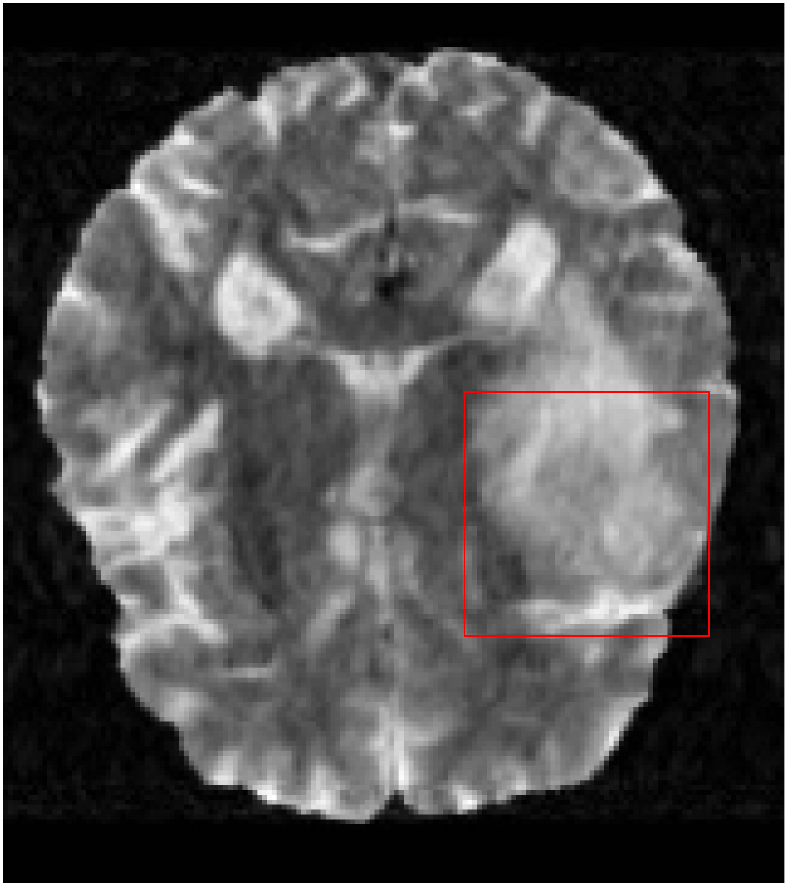}
\includegraphics[width=0.18\linewidth]{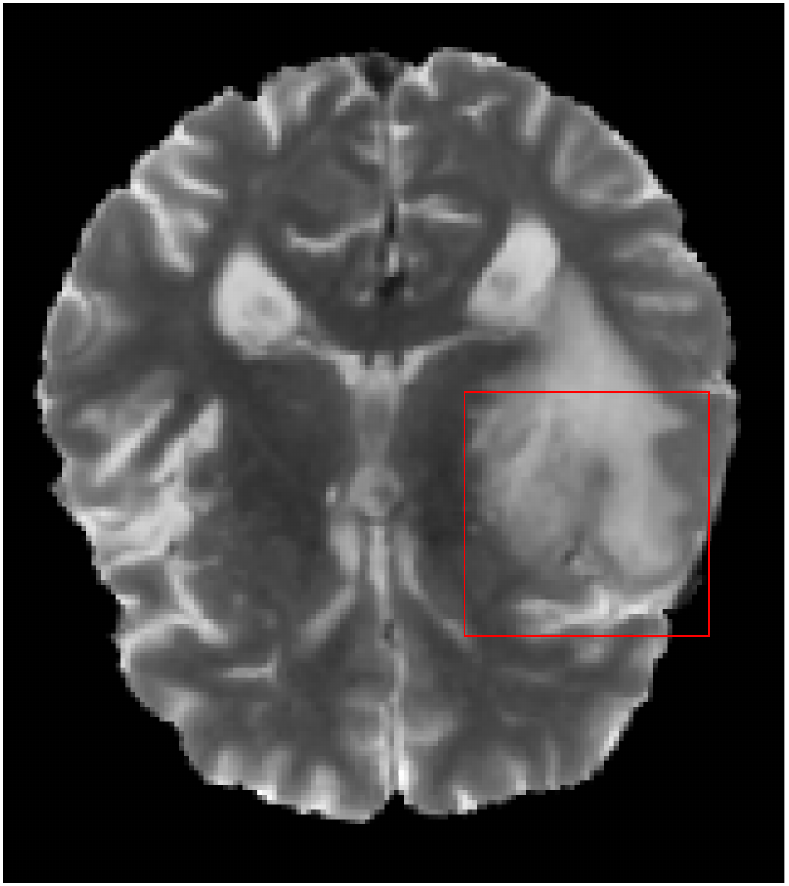}\\
\includegraphics[width=0.2\linewidth, angle=90]{fig/conventional_result.pdf}
\includegraphics[width=0.18\linewidth]{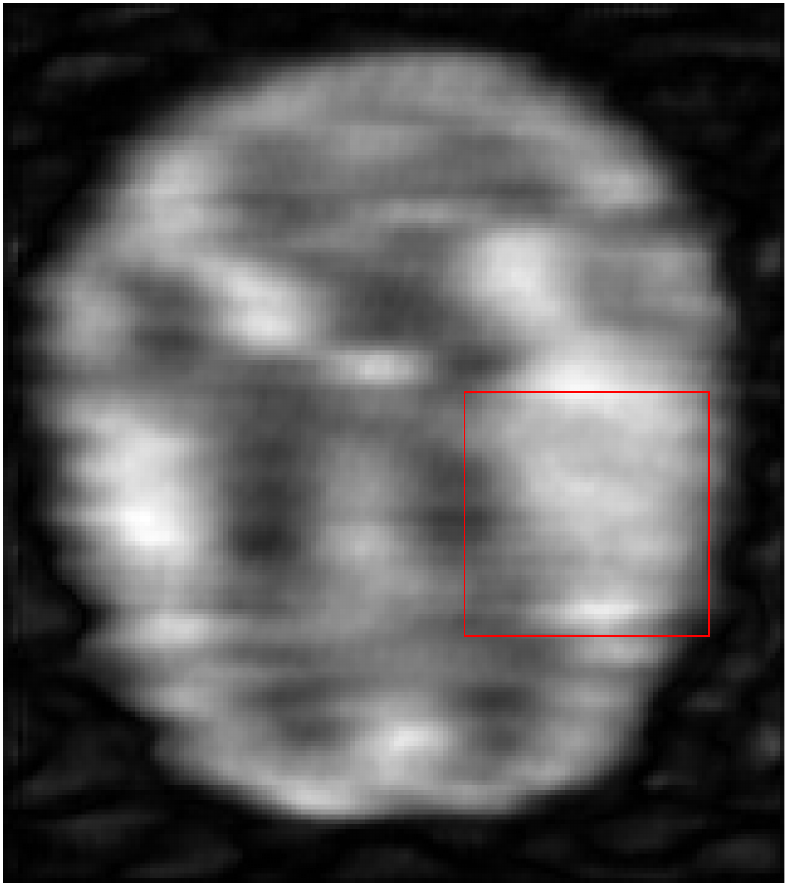}
\includegraphics[width=0.18\linewidth]{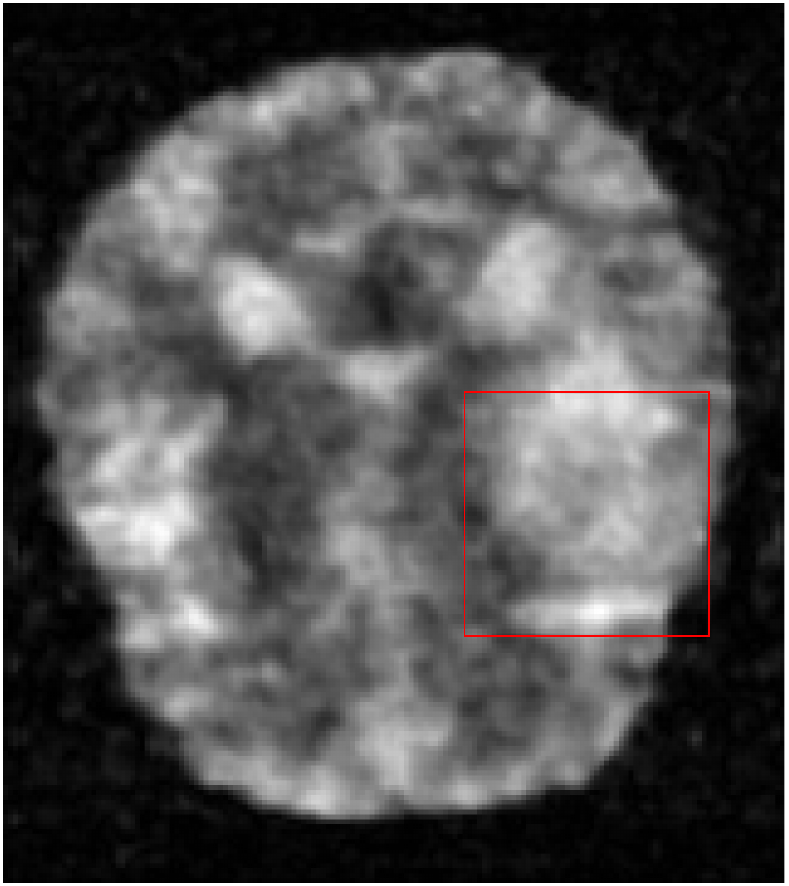}
\includegraphics[width=0.18\linewidth]{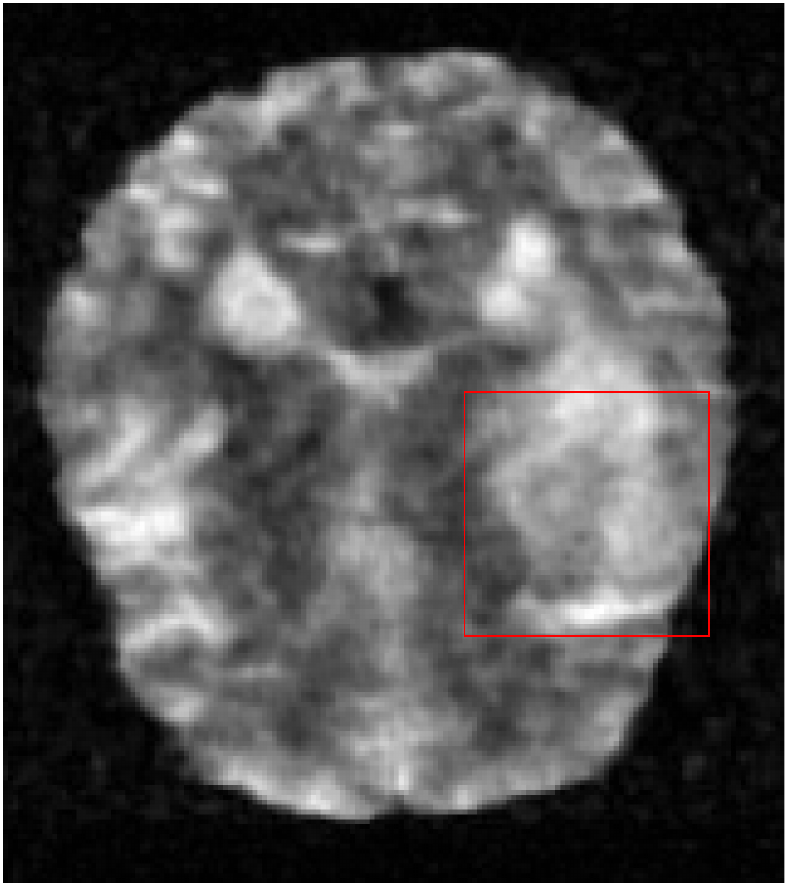}
\includegraphics[width=0.18\linewidth]{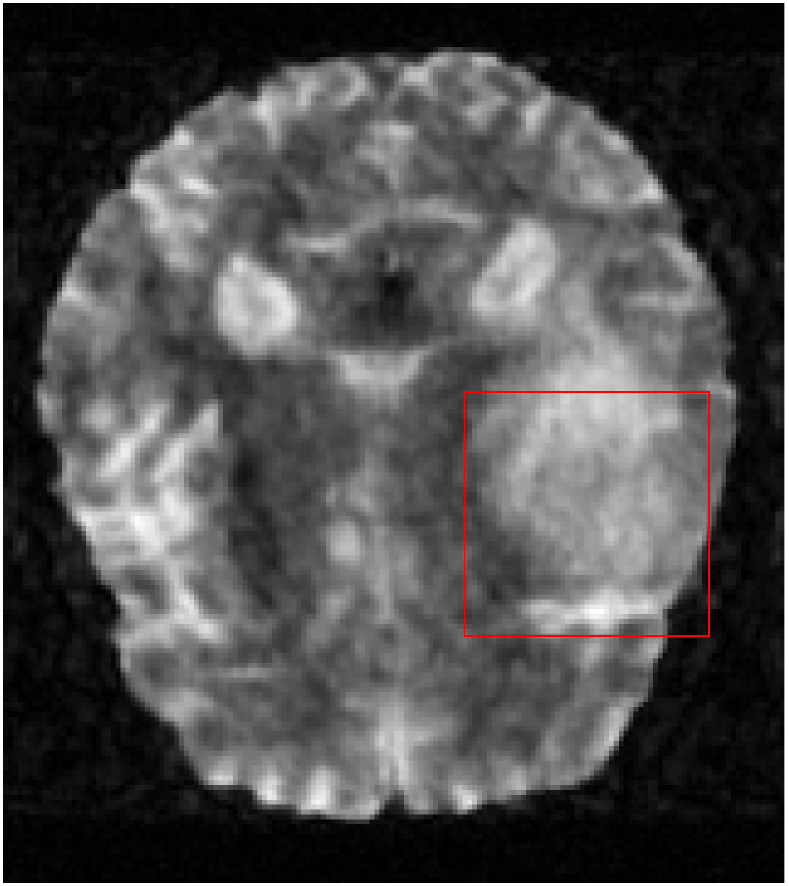}
\includegraphics[width=0.18\linewidth]{fig/white.pdf}\\
\includegraphics[width=0.2\linewidth, angle=90]{fig/meta_detail.pdf}
\includegraphics[width=0.18\linewidth]{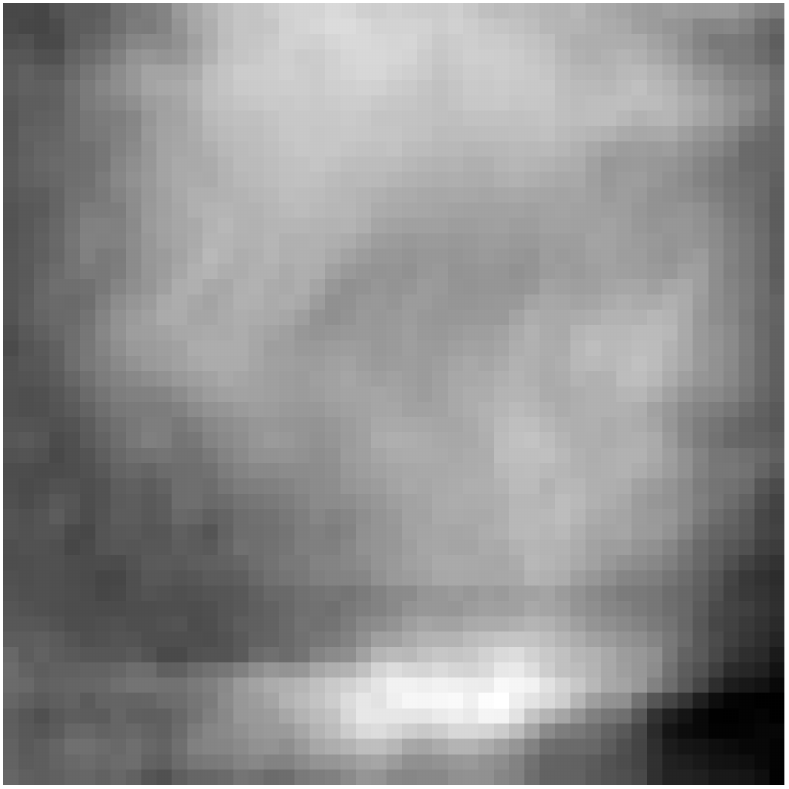}
\includegraphics[width=0.18\linewidth]{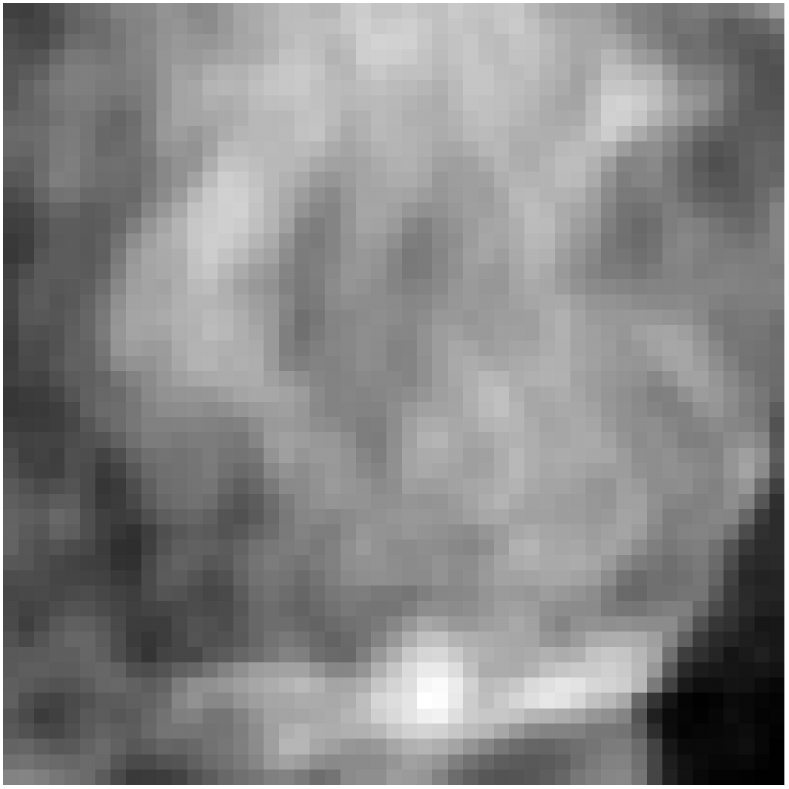}
\includegraphics[width=0.18\linewidth]{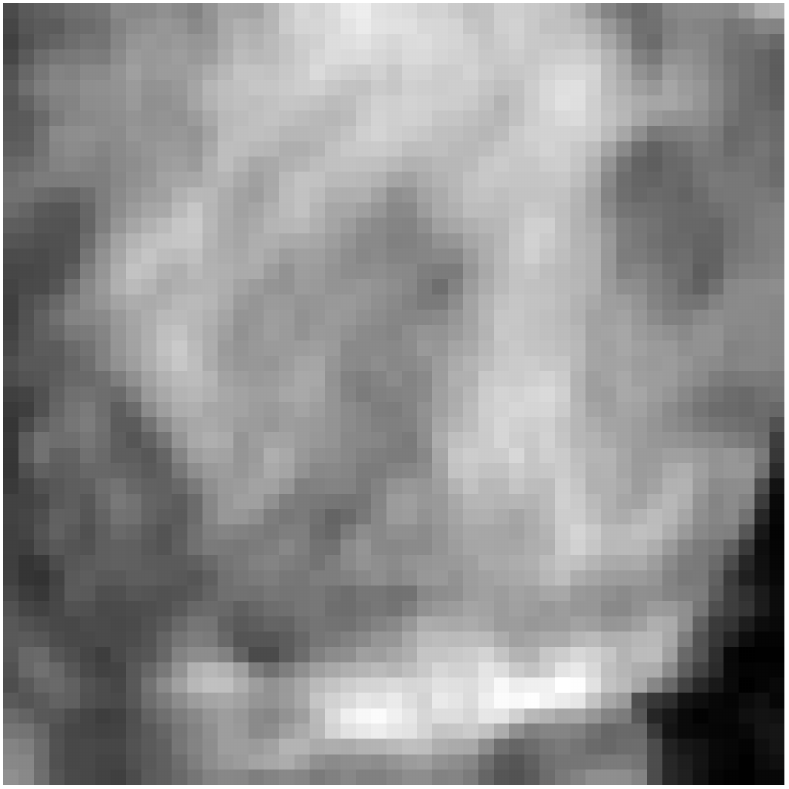}
\includegraphics[width=0.18\linewidth]{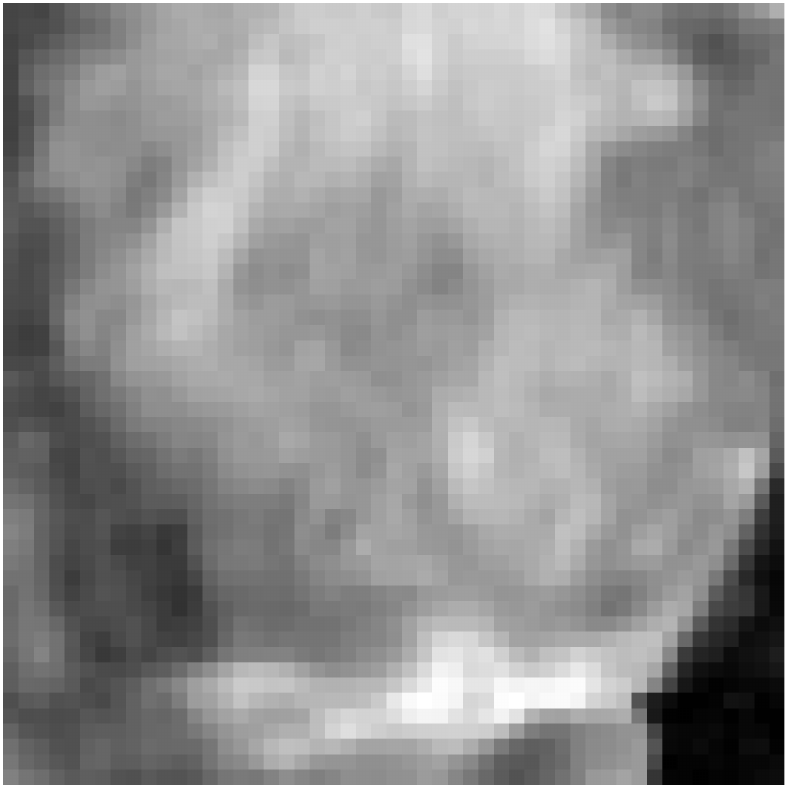}
\includegraphics[width=0.18\linewidth]{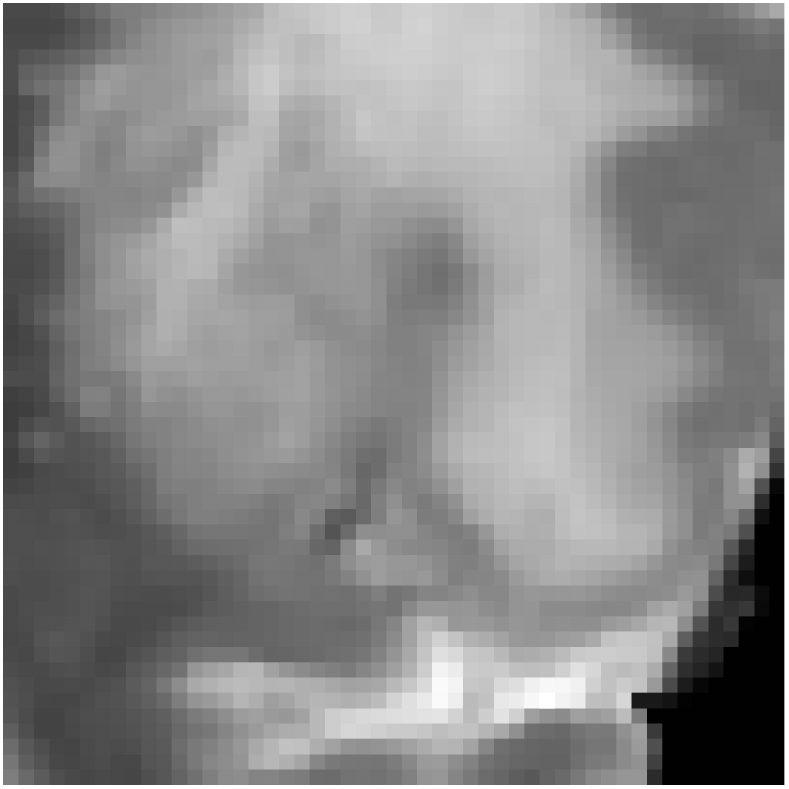}\\
\includegraphics[width=0.2\linewidth, angle=90]{fig/conventional_detail.pdf}
\includegraphics[width=0.18\linewidth]{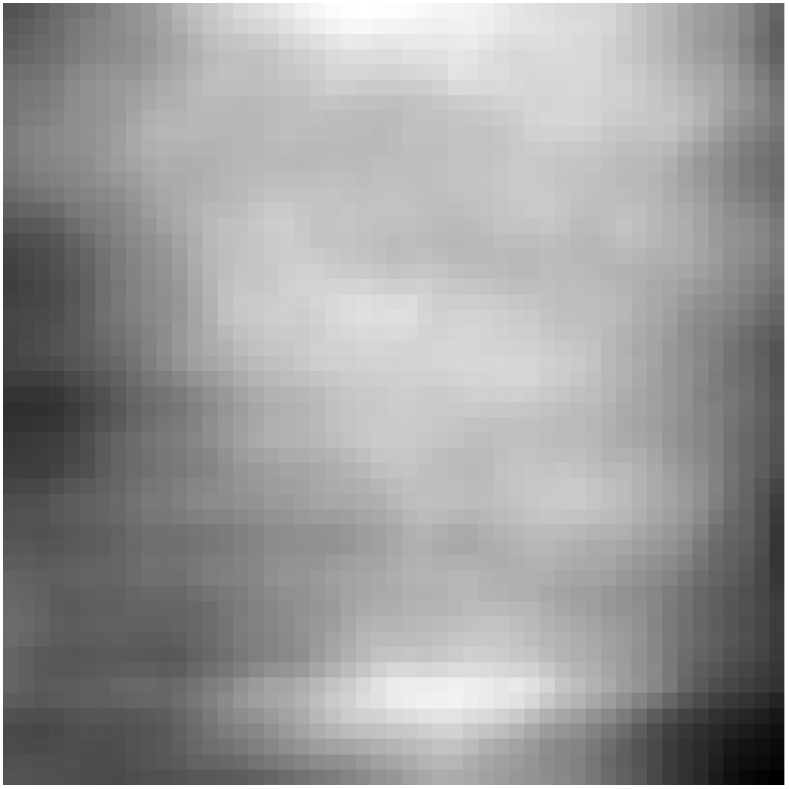}
\includegraphics[width=0.18\linewidth]{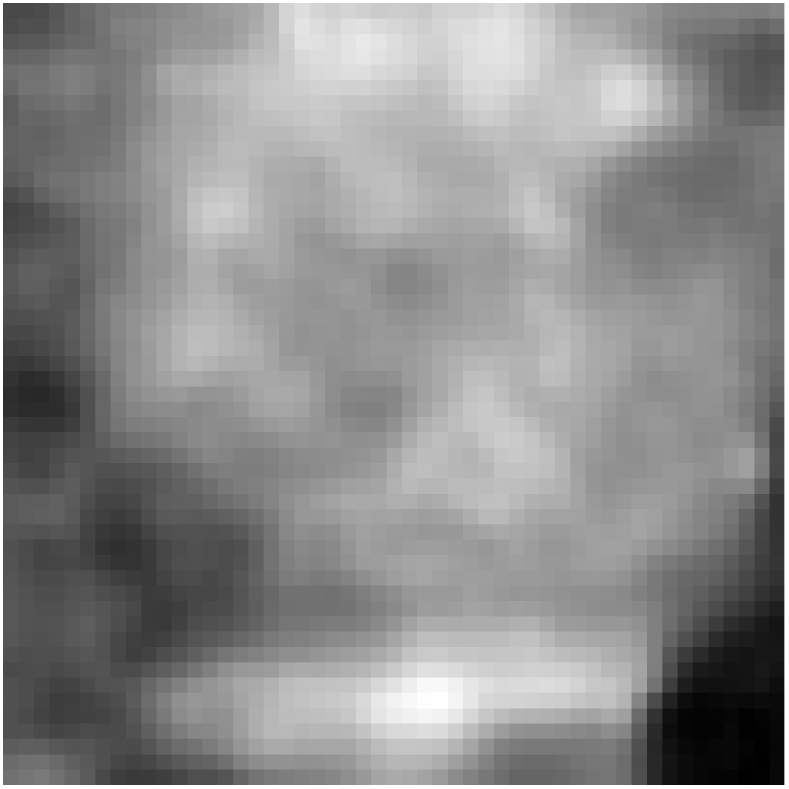}
\includegraphics[width=0.18\linewidth]{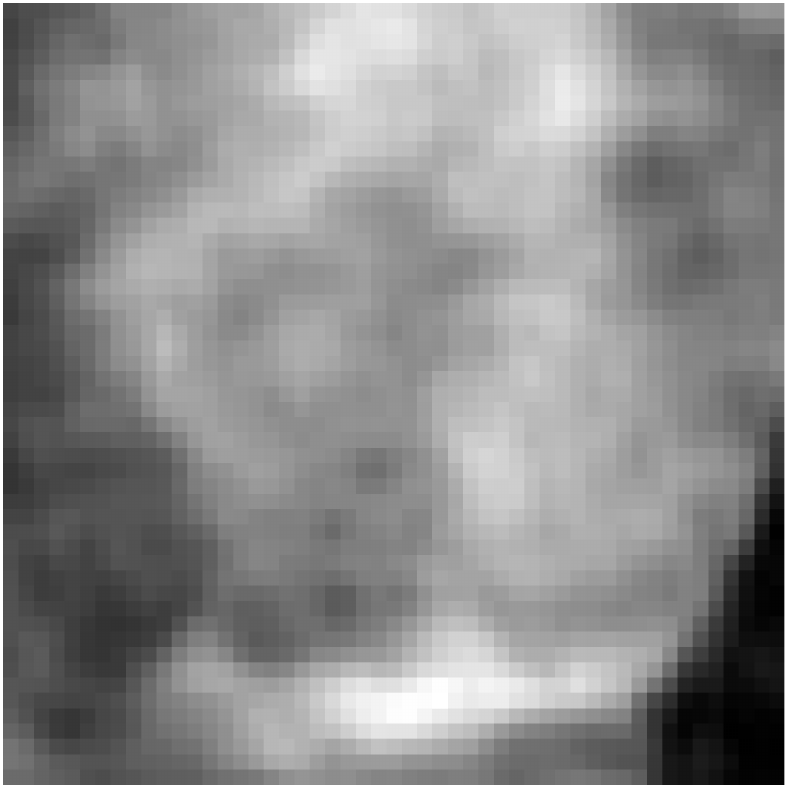}
\includegraphics[width=0.18\linewidth]{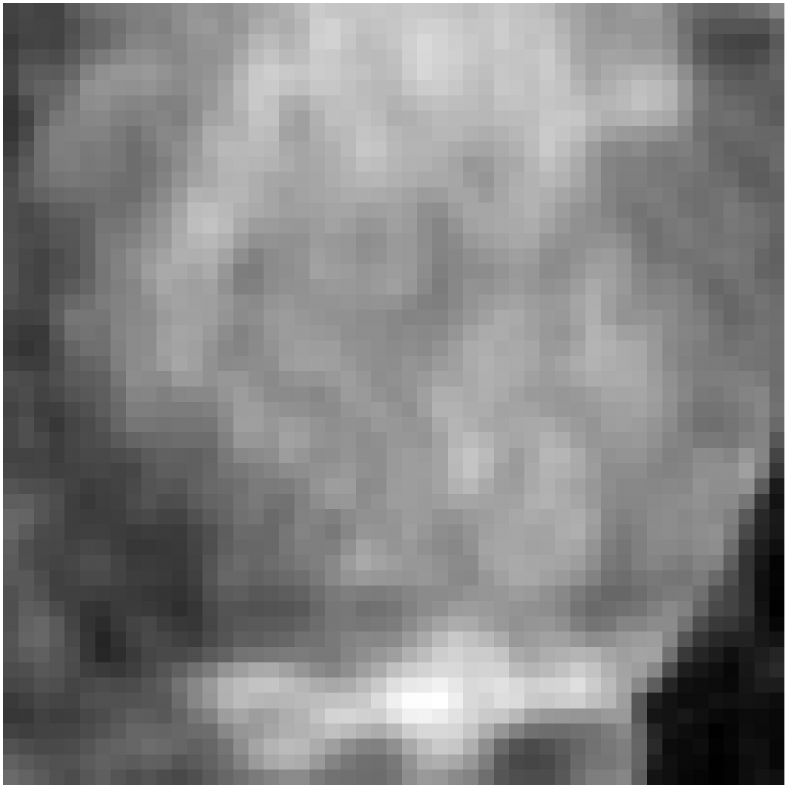}
\includegraphics[width=0.18\linewidth]{fig/white.pdf}\\
\includegraphics[width=0.2\linewidth, angle=90]{fig/meta_error.pdf}
\includegraphics[width=0.18\linewidth]{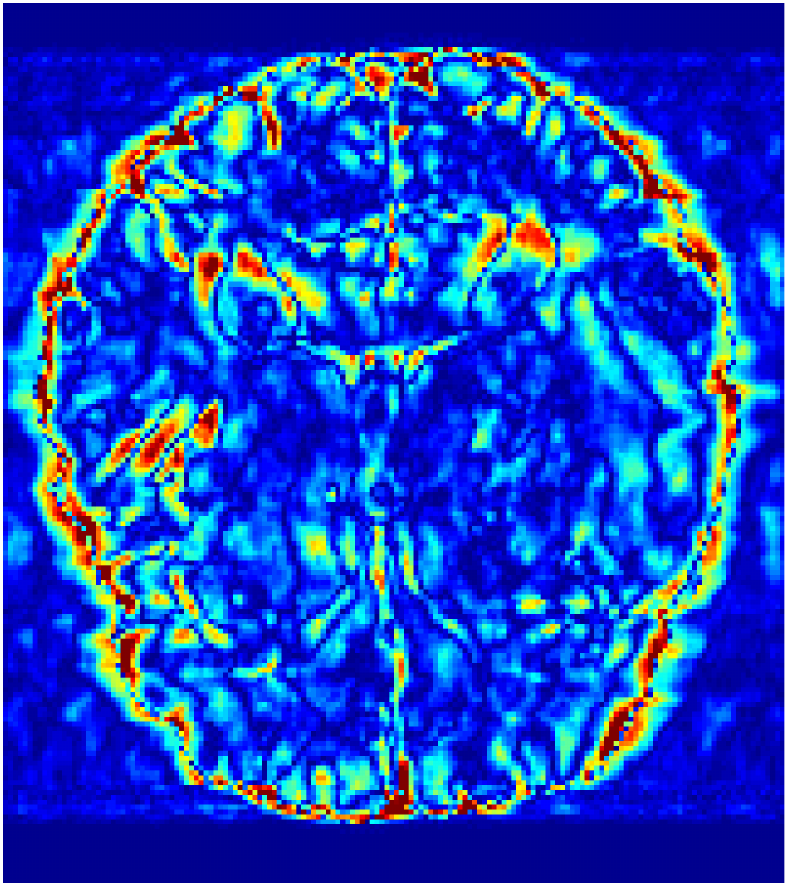}
\includegraphics[width=0.18\linewidth]{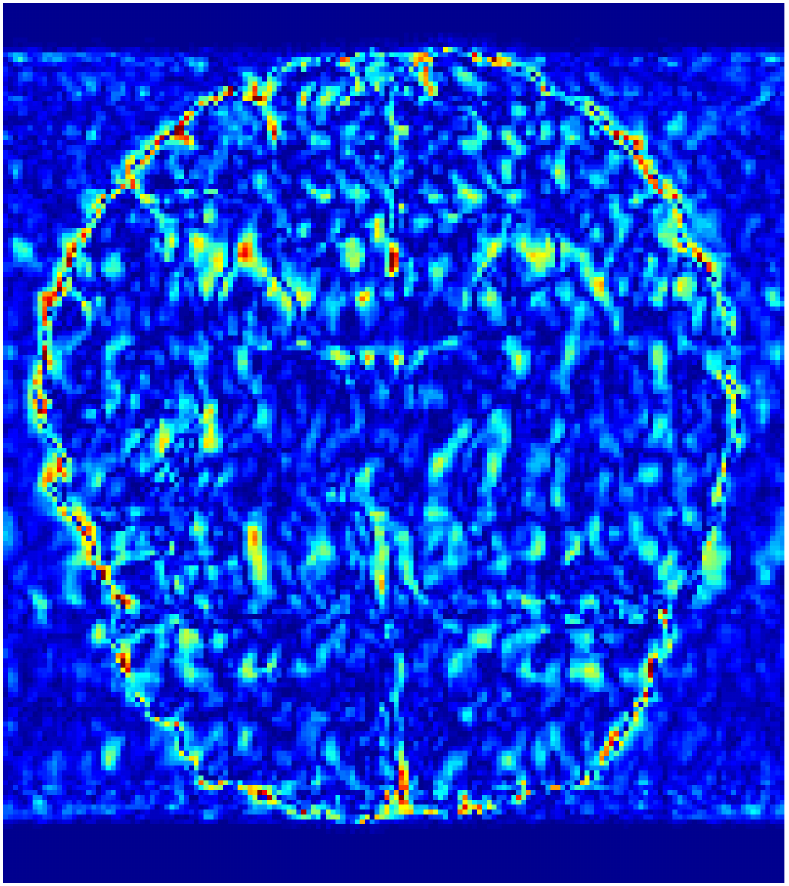}
\includegraphics[width=0.18\linewidth]{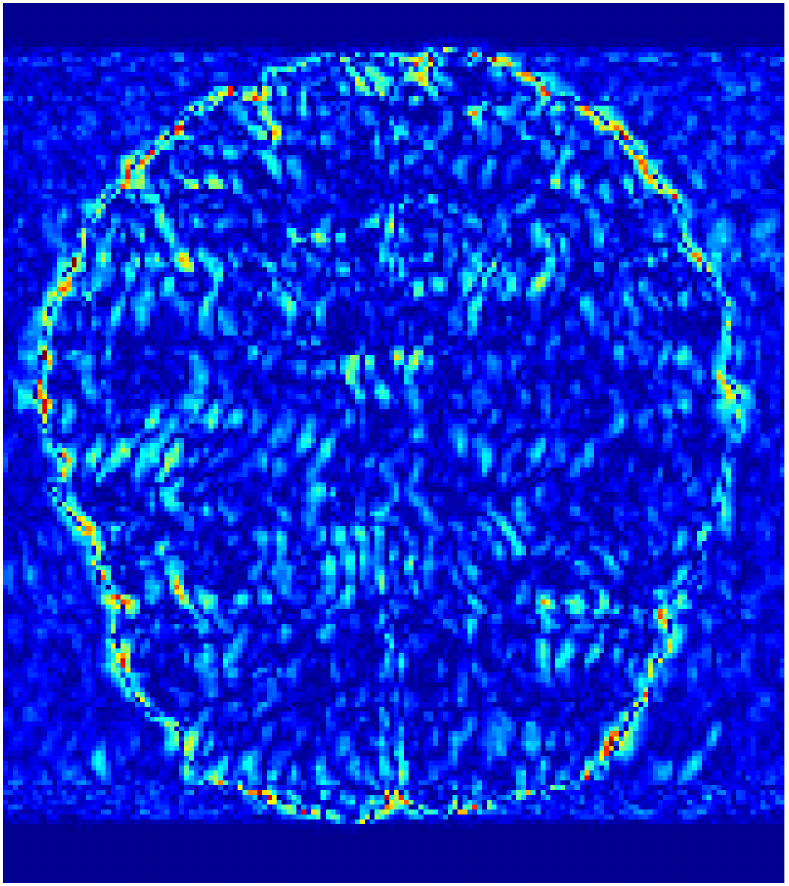}
\includegraphics[width=0.18\linewidth]{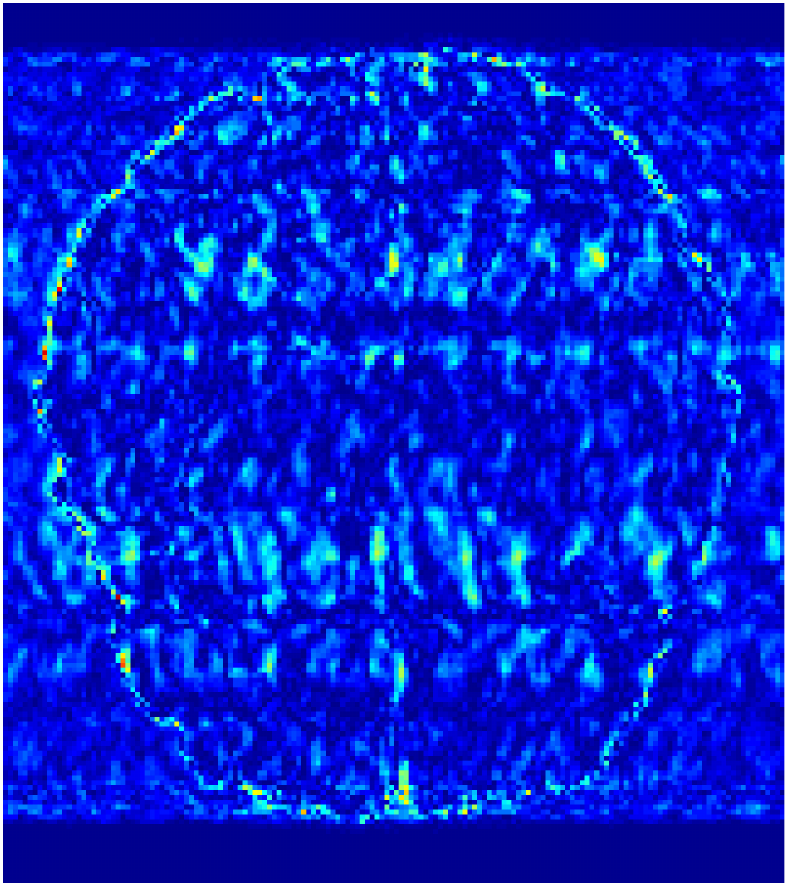}
\includegraphics[width=0.18\linewidth]{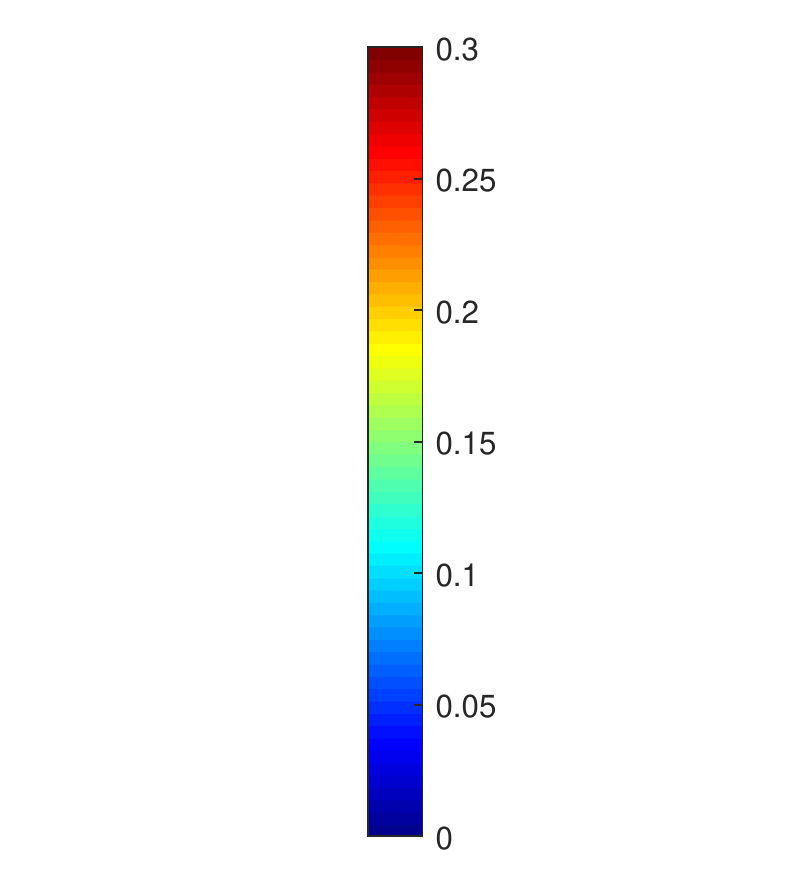}\\
\includegraphics[width=0.2\linewidth, angle=90]{fig/conventional_error.pdf}
\includegraphics[width=0.18\linewidth]{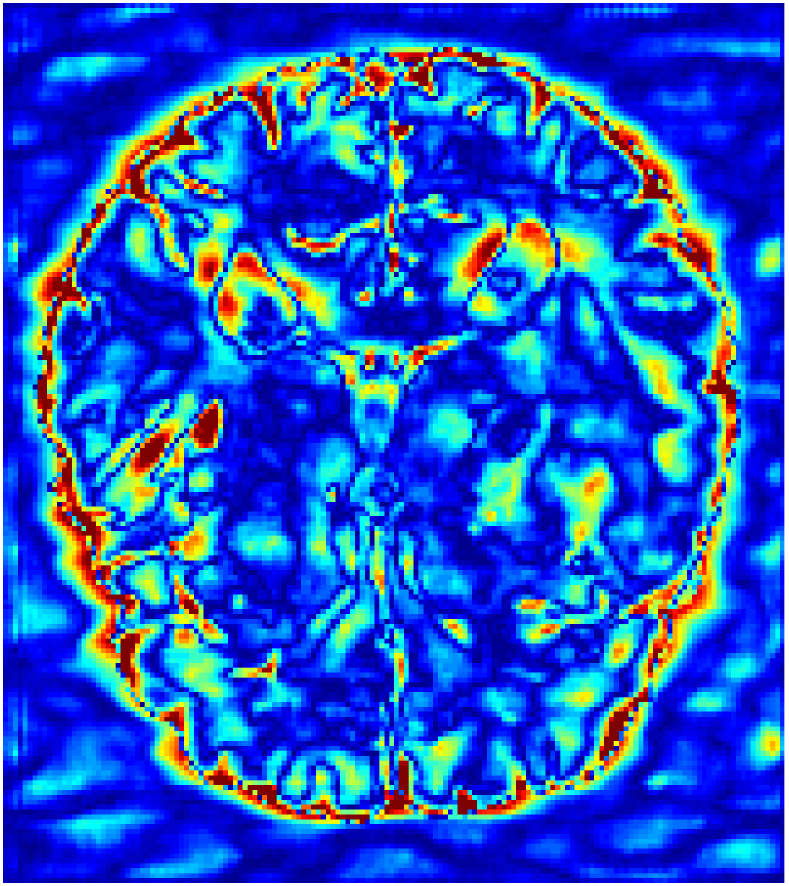}
\includegraphics[width=0.18\linewidth]{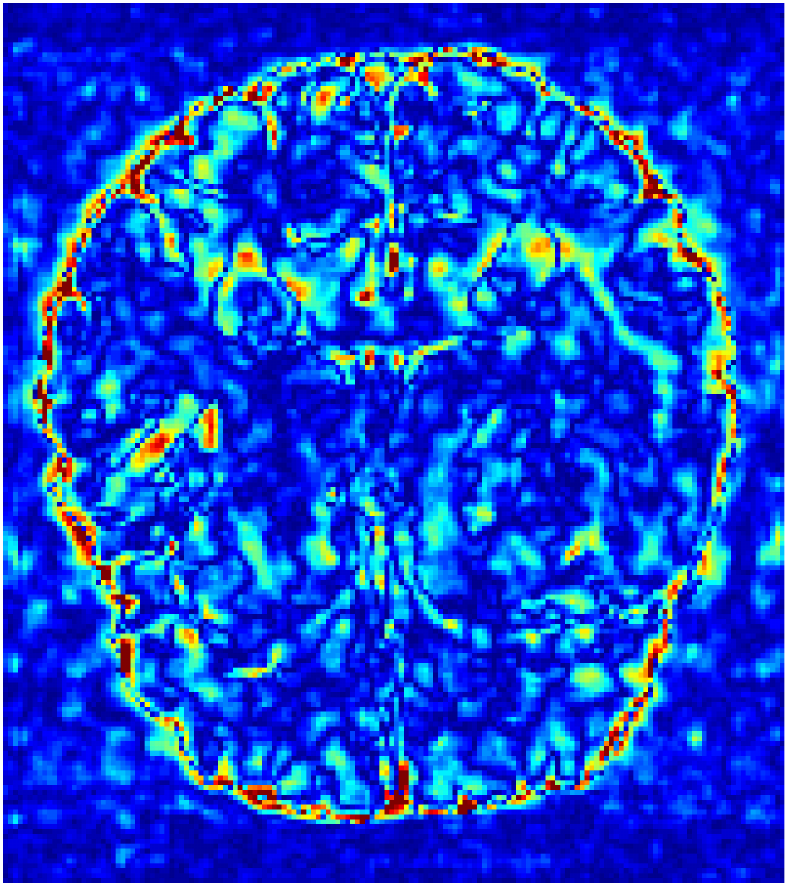}
\includegraphics[width=0.18\linewidth]{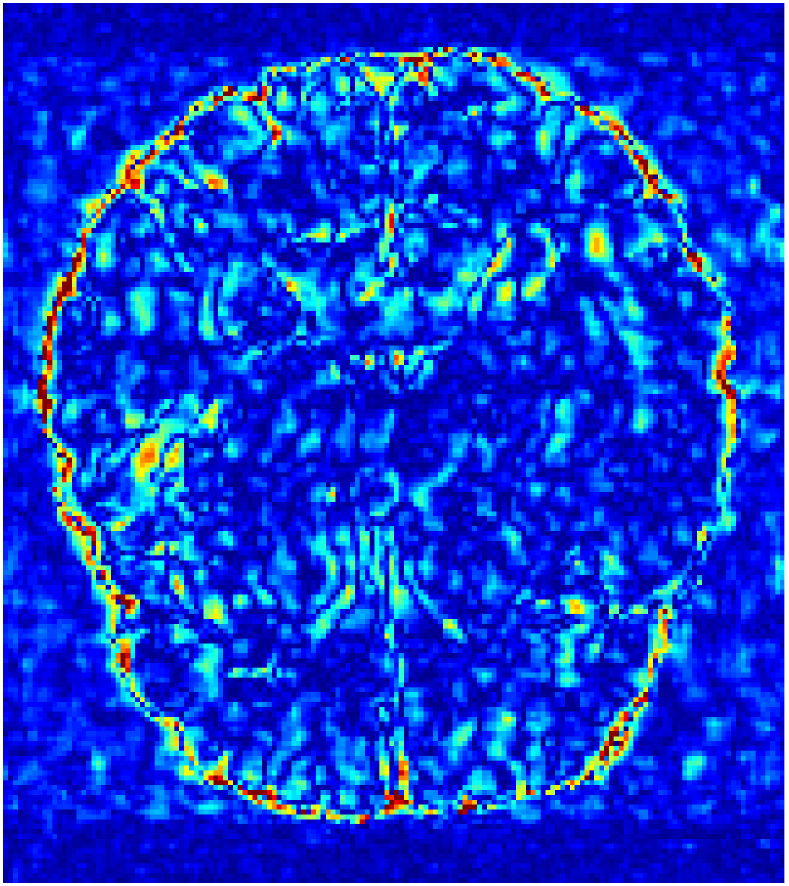}
\includegraphics[width=0.18\linewidth]{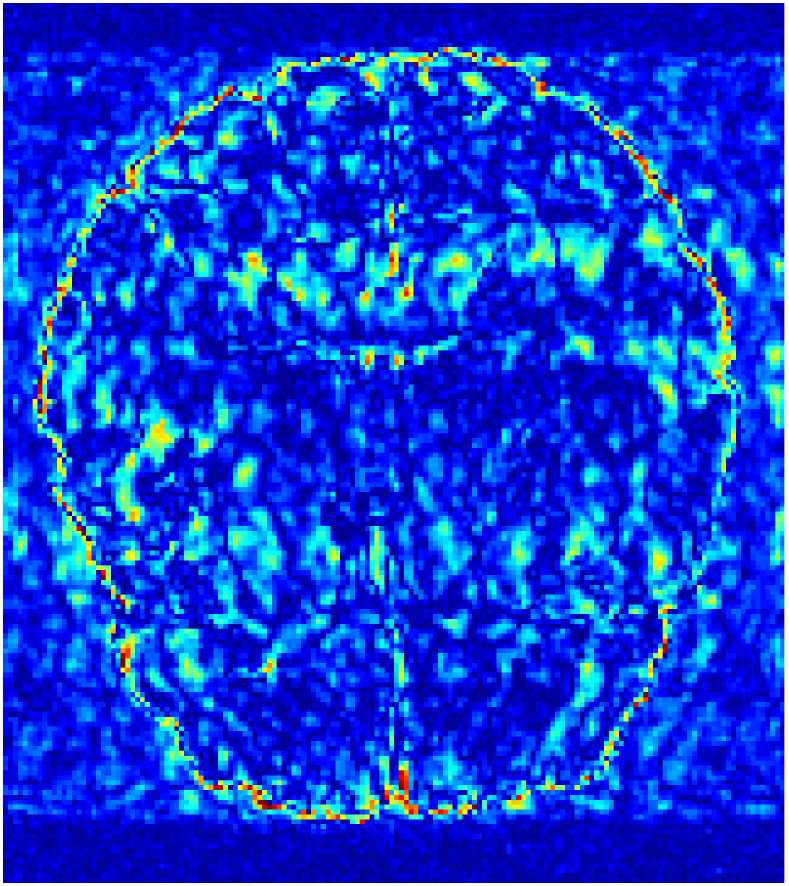}
\includegraphics[width=0.18\linewidth]{fig/white.pdf}\\
\includegraphics[width=0.2\linewidth, angle=90]{fig/masks.pdf}
\includegraphics[width=0.18\linewidth]{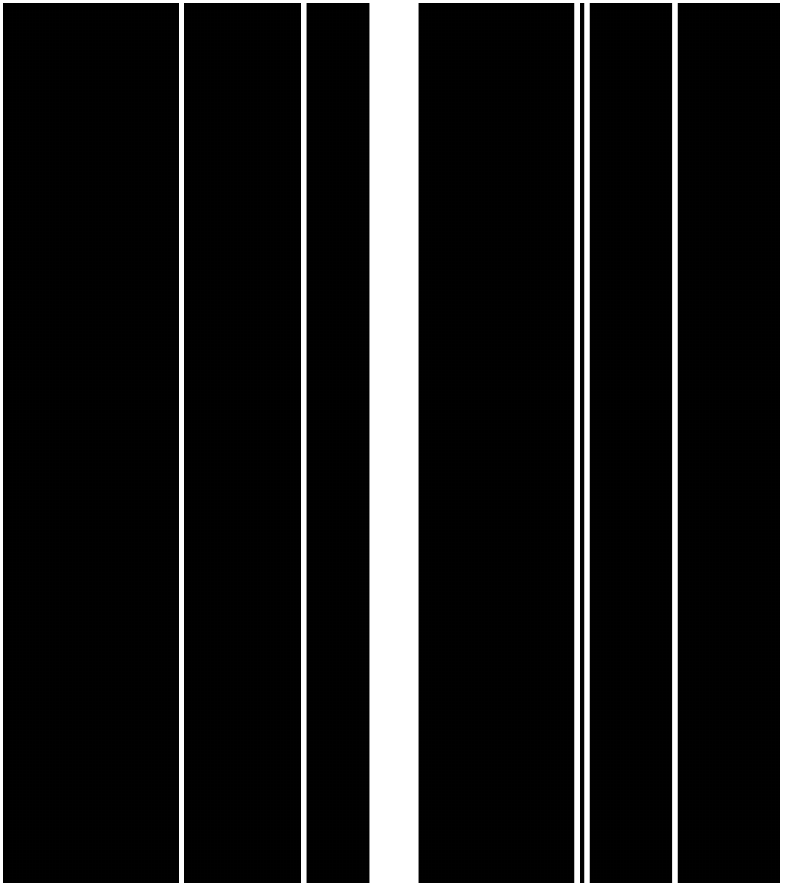}
\includegraphics[width=0.18\linewidth]{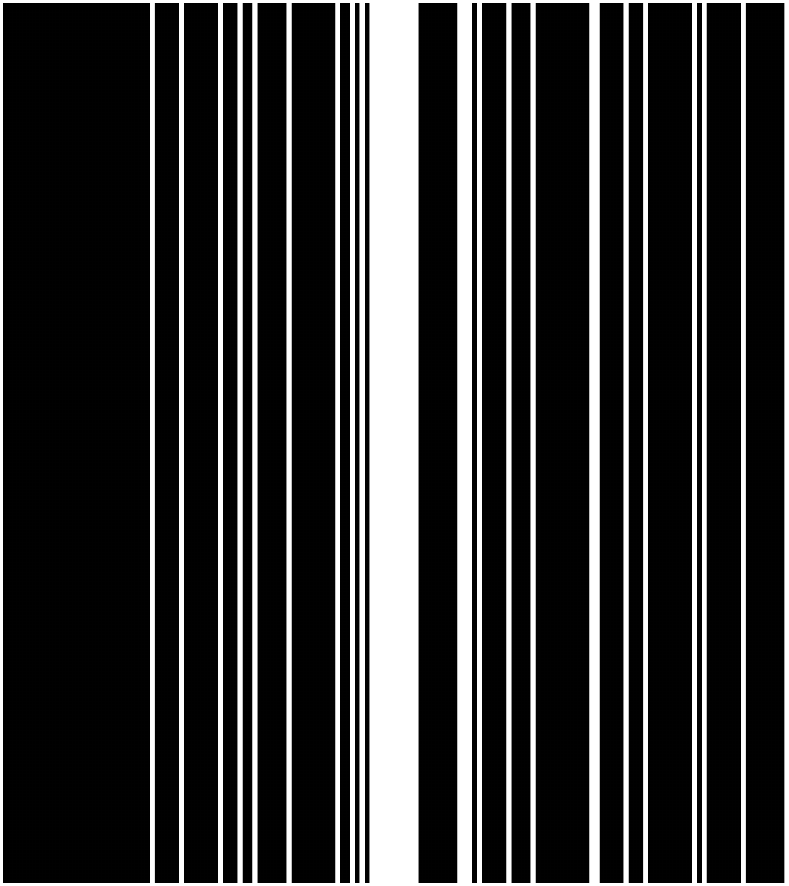}
\includegraphics[width=0.18\linewidth]{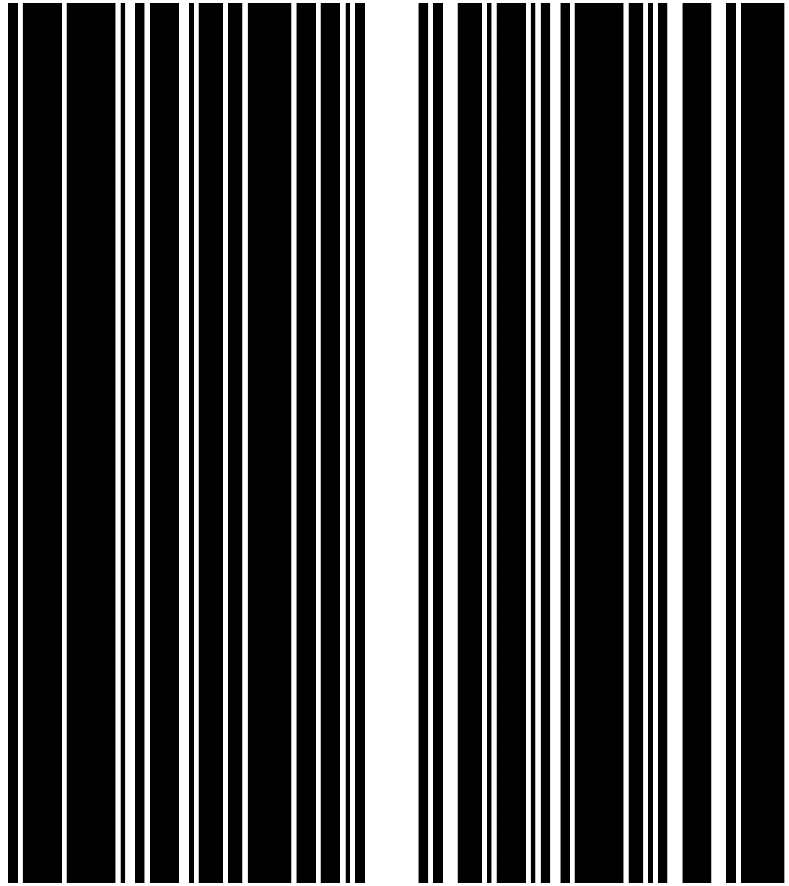}
\includegraphics[width=0.18\linewidth]{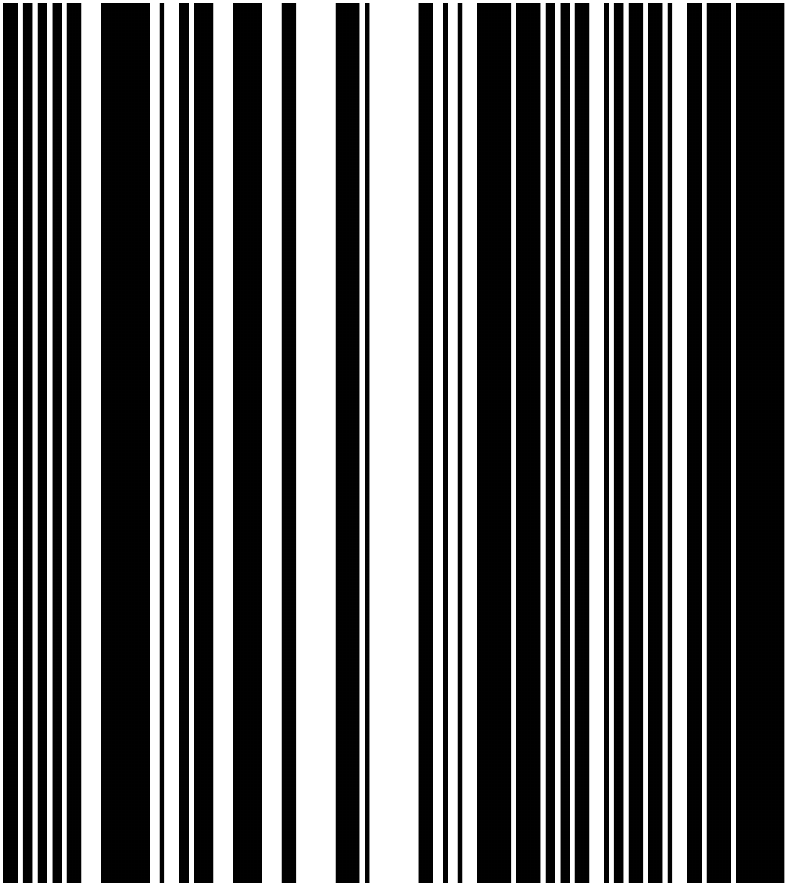}
\includegraphics[width=0.18\linewidth]{fig/white.pdf}
\caption{The pictures (from top to bottom) display the T2 Brain image reconstruction results, zoomed in details, pointwise errors with colorbar and associated \textbf{Cartesian} masks for both these  two compared methods with four different CS ratios 10\%, 20\%, 30\%, 40\%（from left to right). The most top right one is ground truth fully-sampled image. }
\label{figure_same_ratio_t2_cts}
\end{figure}

\subsection{Future work and open challenges}

Deep optimization-based meta-learning techniques have shown great generalizability but there are several open challenges that can be discussed and can potentially be addressed in future work. 
A major issue is the memorization problem since the base learner needs to be optimized for a large number of phases and the training algorithm contains multiple gradient steps, computation cost is very expensive in terms of time and memory costs. 
In addition to reconstruct MRI through different trajectories, another potential application for medical imaging could be multi-modality reconstruction and synthesis. Capturing images of anatomy with multi-modality acquisitions enhances the diagnostic information, and could be cast as a multi-task problem and benefit from meta-learning.


\section{Conclusions}\label{conclusion}
In this paper, we put forward a novel deep model for MRI reconstructions via meta-learning. The proposed method has the ability to solve multi-tasks synergistically and the well-trained model could generalize well to new tasks. Our baseline network is constructed by unfolds an LOA, which inherits convergence property, improves the interpretability, and promotes parameter efficiency of the designed network structure. The designated adaptive regularizer consists task-invariant learner and a task-specific meta-knowledge. Network training follows a bilevel optimization algorithm that minimizes task-specific parameter $\omega$ in the upper level on validation data and minimizes task-invariant parameters $\theta$ on training data with fixed $\omega$. The proposed approach is the first model for solving the inverse problem by applying meta-training on the adaptive regularization in the variational model. We consider recovering undersampled raw data across different sampling trajectories with various sampling patterns as different tasks. Extensive numerical experiments on various MRI datasets demonstrate that the proposed method generalizes well at various
sampling trajectories and is capable of fast adaption to the unseen trajectories and sampling patterns. The reconstructed images achieve higher quality comparing to conventional supervised learning for both seen and unseen k-space trajectory cases.



\appendixtitles{yes} 
\appendixstart
\appendix
\section{Convergence Analysis}
\label{convergence}
We make the following assumptions on $f$ and $\gbf$ throughout this work:

\begin{itemize}
\item (a1): $f$ is differentiable and (possibly) nonconvex, and $ \nabla f$ is $ L_f$-Lipschitz continuous.

\item (a2): Every component of $\gbf$ is differentiable and (possibly) nonconvex, $\nabla \gbf$ is $ L_g$-Lipschitz continuous. 

\item (a3): $\sup_{\xbf \in \Xcal}   \| \nabla \gbf(\xbf)\| \leq M$ for some constant $ M>0$.

\item (a4): $\phi$ is coercive, and $\phi^* = \min_{\xbf \in \Xcal} \phi(\xbf) > -\infty$.
\end{itemize}

First we state the Clark Subdifferential \cite{chen2020learnable} of $r(\xbf)$ in Lemma \ref{lem:r_subdiff} and we provide that the gradient of $\rbf_\varepsilon$ is Lipschitz continuous in Lemma \ref{r_lips}.
\begin{lemma}\label{lem:r_subdiff}
Let $r(\xbf)$ be defined in \eqref{eq:r}, then the Clarke subdifferential of $r$ at $\xbf$ is
\begin{equation}\label{eq:r_subdiff}
\partial \rbf(\xbf) = \{\sum_{j\in I_0}\nabla \gbf_i(\xbf)^{\top}  \wbf_j + \sum_{j \in I_1}\nabla \gj(\xbf)^{\top}\frac{\gj(\xbf)}{\|\gj(\xbf)\|} \ \bigg\vert \ \wbf_j \in \mathbb{R}^d, \ \|\Pi(\wbf_j; \Ccal(\nabla \gbf_i(\xbf)))\|\leq 1,\ \forall\, j \in I_0 \} ,  
\end{equation}
where $I_0=\{j \in [m] \ | \ \|\gj(\xbf) \|= 0 \}$, $I_1=[m] \setminus I_0$, and $\Pi(\wbf;\Ccal(\Abf))$ is the projection of $\wbf$ onto $\Ccal(\Abf)$ which stands for the column space of $\Abf$.
\end{lemma}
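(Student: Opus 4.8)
The plan is to exploit the composite structure $r(\xbf)=\sum_{j=1}^{m}h_j(\xbf)$ with $h_j(\xbf)=\norm{\gj(\xbf)}$, so that each summand is the Euclidean norm---a finite convex, hence globally Lipschitz and Clarke \emph{regular}, function---composed with the map $\gj$, which is $C^1$ (indeed $C^{1,1}$, hence strictly differentiable) by assumption (a2). For such a composition I would invoke the Clarke chain rule: since the inner map is strictly differentiable and the outer function is regular, the chain rule holds with \emph{equality} and the composite is again regular, giving $\partial h_j(\xbf)=\nabla\gj(\xbf)^{\top}\,\partial\norm{\cdot}(\gj(\xbf))$.

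Next I would evaluate the convex subdifferential of the Euclidean norm at the point $\gj(\xbf)$. When $\gj(\xbf)\neq 0$ the norm is differentiable there and $\partial\norm{\cdot}(\gj(\xbf))=\{\gj(\xbf)/\norm{\gj(\xbf)}\}$, which reproduces exactly the singleton terms indexed by $I_1$. When $\gj(\xbf)=0$ the subdifferential is the closed unit ball $\{\wbf_j:\norm{\wbf_j}\le 1\}$, so the $I_0$ terms contribute $\{\nabla\gj(\xbf)^{\top}\wbf_j:\norm{\wbf_j}\le 1\}$. To assemble these into $\partial r$ I would apply the Clarke sum rule; because every $h_j$ is regular, the sum rule also holds with equality, yielding $\partial r(\xbf)=\sum_{j\in I_0}\nabla\gj(\xbf)^{\top}\{\wbf_j:\norm{\wbf_j}\le 1\}+\sum_{j\in I_1}\{\nabla\gj(\xbf)^{\top}\gj(\xbf)/\norm{\gj(\xbf)}\}$.

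The final step is a linear-algebra reduction of the $I_0$ terms to the stated projection form. Writing $A=\nabla\gj(\xbf)$, the map $\wbf_j\mapsto A^{\top}\wbf_j$ annihilates $\Ccal(A)^{\perp}=\ker A^{\top}$, so $A^{\top}\wbf_j$ depends only on $\Pi(\wbf_j;\Ccal(A))$. Consequently the image of the unit ball under $\norm{\wbf_j}\le 1$ coincides with the image under the relaxed constraint $\norm{\Pi(\wbf_j;\Ccal(A))}\le 1$, and the two descriptions of the set are identical. Substituting this equivalent constraint yields precisely \eqref{eq:r_subdiff}.

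I expect the main obstacle to be the careful justification that both the chain rule and the sum rule hold with \emph{equality} rather than mere inclusion; this hinges on the Clarke regularity of each $h_j$, which in turn follows from the convexity (hence regularity) of $\norm{\cdot}$ together with the strict differentiability of $\gj$ guaranteed by (a2). Once regularity is established, the norm evaluations and the projection identity are routine.
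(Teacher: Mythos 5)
Your proof is correct. Note first that the paper itself contains no proof of this lemma: it is stated as imported background, with a citation to \cite{chen2020learnable}, so there is no in-paper argument to compare against; your write-up supplies the standard derivation that the citation stands in for. The three ingredients all check out. The chain rule step is legitimate in exactly the form you invoke it: since the activation \eqref{eq:sigma} is smooth, each $\gbf_j$ is continuously (hence strictly) differentiable, and $\norm{\cdot}$ is convex and globally Lipschitz, hence Clarke regular, so Clarke's chain rule (Theorem 2.3.10 in Clarke's book) holds with equality and yields regularity of each $h_j(\xbf)=\norm{\gbf_j(\xbf)}$; regularity of the summands then upgrades the sum rule inclusion to equality. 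The evaluation of $\partial\norm{\cdot}$ (unit ball at the origin, singleton $\gbf_j(\xbf)/\norm{\gbf_j(\xbf)}$ elsewhere) is standard, and your final reduction is the one genuinely nontrivial reconciliation with the stated formula: since $\ker\big(\nabla\gbf_j(\xbf)^{\top}\big)=\Ccal(\nabla\gbf_j(\xbf))^{\perp}$, the image of the unit ball $\{\wbf_j:\norm{\wbf_j}\le 1\}$ under $\wbf_j\mapsto\nabla\gbf_j(\xbf)^{\top}\wbf_j$ equals the image of the larger set $\{\wbf_j:\norm{\Pi(\wbf_j;\Ccal(\nabla\gbf_j(\xbf)))}\le 1\}$, which is exactly the set description in \eqref{eq:r_subdiff}. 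Two cosmetic points: your argument silently corrects the paper's index typos ($\gbf_i$ should read $\gbf_j$ throughout \eqref{eq:r_subdiff}), and since $\partial\norm{\cdot}(\gbf_j(\xbf))$ is compact convex and the maps are linear, no closed-convex-hull operation is needed after the chain and sum rules, so the set equality is exact as you state it.
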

\begin{lemma}\label{r_lips}
The gradient of $\rbf_\varepsilon$ is Lipschitz continuous with constant $m ( L_g +\frac{2M^2}{\varepsilon})$.
\end{lemma}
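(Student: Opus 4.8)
The plan is to differentiate $r_\varepsilon$ term by term, control the Lipschitz constant of each summand, and then sum over the $m$ terms. Writing $\psi_j(\xbf) := \sqrt{\|\gbf_j(\xbf)\|^2 + \varepsilon^2}$ so that $r_\varepsilon(\xbf) = \sum_{j=1}^m \psi_j(\xbf) - \varepsilon$, the chain rule yields
\begin{equation}
\nabla \psi_j(\xbf) = \nabla \gbf_j(\xbf)^\top \mathbf{h}_j(\xbf), \qquad \mathbf{h}_j(\xbf) := \frac{\gbf_j(\xbf)}{\sqrt{\|\gbf_j(\xbf)\|^2 + \varepsilon^2}}. \nonumber
\end{equation}
Since $\nabla r_\varepsilon = \sum_j \nabla \psi_j$, the triangle inequality reduces the claim to showing that each $\nabla \psi_j$ is Lipschitz with constant $L_g + 2M^2/\varepsilon$; summing over the $m$ components then produces the stated constant $m(L_g + 2M^2/\varepsilon)$.

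To estimate $\nabla \psi_j(\xbf) - \nabla \psi_j(\ybf)$ I would use the standard add-and-subtract decomposition
\begin{equation}
\nabla \gbf_j(\xbf)^\top \mathbf{h}_j(\xbf) - \nabla \gbf_j(\ybf)^\top \mathbf{h}_j(\ybf) = \nabla \gbf_j(\xbf)^\top\big(\mathbf{h}_j(\xbf) - \mathbf{h}_j(\ybf)\big) + \big(\nabla \gbf_j(\xbf) - \nabla \gbf_j(\ybf)\big)^\top \mathbf{h}_j(\ybf). \nonumber
\end{equation}
The second term is the easy one: since $\|\mathbf{h}_j(\ybf)\| \le 1$ (the numerator is dominated by the denominator) and $\nabla \gbf_j$ is $L_g$-Lipschitz by (a2), it is bounded by $L_g \|\xbf - \ybf\|$.

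The first term is where the main work lies and is the source of the $2M^2/\varepsilon$ factor. Using (a3) to bound $\|\nabla \gbf_j(\xbf)\| \le M$, it suffices to show that $\mathbf{h}_j$ is $(2M/\varepsilon)$-Lipschitz. I would prove this in two steps: first, the normalization map $\Phi(\mathbf{u}) := \mathbf{u}/\sqrt{\|\mathbf{u}\|^2 + \varepsilon^2}$ on $\mathbb{R}^d$ has Jacobian $D\Phi(\mathbf{u}) = (\|\mathbf{u}\|^2+\varepsilon^2)^{-1/2} I - (\|\mathbf{u}\|^2+\varepsilon^2)^{-3/2}\mathbf{u}\mathbf{u}^\top$, whose operator norm is bounded by $2/\varepsilon$, so $\Phi$ is $(2/\varepsilon)$-Lipschitz; second, $\gbf_j$ itself is $M$-Lipschitz because its Jacobian is bounded by $M$ via (a3). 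Composing, $\mathbf{h}_j = \Phi \circ \gbf_j$ is $(2M/\varepsilon)$-Lipschitz, so the first term is bounded by $M \cdot (2M/\varepsilon)\|\xbf - \ybf\| = (2M^2/\varepsilon)\|\xbf - \ybf\|$. Adding the two bounds and summing over $j$ completes the proof. The one step deserving care is the operator-norm estimate for $D\Phi$: the rank-one correction $-(\|\mathbf{u}\|^2+\varepsilon^2)^{-3/2}\mathbf{u}\mathbf{u}^\top$ must be handled by noting that $\|\mathbf{u}\|^2/(\|\mathbf{u}\|^2+\varepsilon^2)^{3/2} \le 1/\sqrt{\|\mathbf{u}\|^2+\varepsilon^2} \le 1/\varepsilon$, which keeps the constant at $2/\varepsilon$ rather than a $\mathbf{u}$-dependent quantity.
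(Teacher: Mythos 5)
Your proposal is correct, and it reaches the stated constant through the same overall skeleton as the paper (compute $\nabla r_\varepsilon = \sum_j \nabla \gbf_j^\top \mathbf{h}_j$, add-and-subtract, bound one term by $L_g$ using $\|\mathbf{h}_j\|\le 1$ and the other by $2M^2/\varepsilon$, sum over $j$), but the central estimate is obtained by a genuinely different argument. The paper bounds $\|\mathbf{h}_j(\xbf_1)-\mathbf{h}_j(\xbf_2)\|\le \frac{2}{\varepsilon}\|\gbf_j(\xbf_1)-\gbf_j(\xbf_2)\|$ by a purely algebraic chain: splitting the difference of the two normalized vectors, the conjugate identity $\sqrt{a}-\sqrt{b}=(a-b)/(\sqrt{a}+\sqrt{b})$, and the reverse triangle inequality, with each factor crudely bounded by $1/\varepsilon$. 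You instead factor $\mathbf{h}_j = \Phi\circ \gbf_j$ with $\Phi(\mathbf{u})=\mathbf{u}/\sqrt{\|\mathbf{u}\|^2+\varepsilon^2}$ and bound the operator norm of $D\Phi(\mathbf{u}) = (\|\mathbf{u}\|^2+\varepsilon^2)^{-1/2}I - (\|\mathbf{u}\|^2+\varepsilon^2)^{-3/2}\mathbf{u}\mathbf{u}^\top$; your Jacobian and the estimate $\|\mathbf{u}\|^2(\|\mathbf{u}\|^2+\varepsilon^2)^{-3/2}\le 1/\varepsilon$ are both correct. Your route is shorter and more structural, and it actually proves slightly more than you claim: since $D\Phi(\mathbf{u})$ is symmetric with eigenvalues $\varepsilon^2(\|\mathbf{u}\|^2+\varepsilon^2)^{-3/2}$ along $\mathbf{u}$ and $(\|\mathbf{u}\|^2+\varepsilon^2)^{-1/2}$ on the orthogonal complement, both at most $1/\varepsilon$, the triangle inequality you used is wasteful and $\Phi$ is in fact $(1/\varepsilon)$-Lipschitz, which would sharpen the lemma's constant to $m(L_g + M^2/\varepsilon)$; the paper's algebraic route does not reveal this. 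One further point in your favor: for the step "$\gbf_j$ is $M$-Lipschitz," your formulation via a bounded Jacobian (the mean value inequality along a segment, implicitly using convexity of $\Xcal$) is the technically clean statement, whereas the paper invokes an equality-form mean value theorem $\gbf_j(\xbf_1)-\gbf_j(\xbf_2)=\nabla\gbf_j(\tilde{\xbf})(\xbf_1-\xbf_2)$, which does not literally hold for vector-valued maps, though the inequality it is used for is valid.
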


\begin{proof}
From $\rbf_\varepsilon(\xbf) = \sum^m_{j=1} (\| \gbf_j(\xbf) \|^2 +\varepsilon^2)^{\frac{1}{2}} - \varepsilon $, it follows that 
\begin{equation}
    \nabla \rbf_\varepsilon (\xbf) = \sum^m_{j=1} \nabla \gbf_j(\xbf)^\top \gbf_j(\xbf) (\| \gbf_j(\xbf)\|^2 +\varepsilon^2)^{-\frac{1}{2}}.
\end{equation}
For any $\xbf_1, \xbf_2 \in \Xcal$, we first define $h(\xbf) = \gbf_j(\xbf)(\| \gbf_j(\xbf)\|^2 +\varepsilon^2)^{-\frac{1}{2}}$, so $ \norm{h(\xbf)} <1$.

\begin{subequations}\label{eq:h1-h2}
    \begin{align}
   & \left\| h(\xbf_1) - h(\xbf_2) \right\| \\
    & = \left \| \frac{\gj(\xbf_1)}{\sqrt{\| \gj(\xbf_1)\|^2 +\varepsilon^2}} - \frac{\gj(\xbf_2)}{\sqrt{\| \gj(\xbf_2)\|^2 +\varepsilon^2}} \right \|  \\  
    & = \left \| \frac{\gj(\xbf_1)}{\sqrt{\| \gj(\xbf_1)\|^2 +\varepsilon^2}} - \frac{\gj(\xbf_1)}{\sqrt{\| \gj(\xbf_2)\|^2 +\varepsilon^2}} +
    \frac{\gj(\xbf_1)}{\sqrt{\| \gj(\xbf_2)\|^2 +\varepsilon^2}}
    - \frac{\gj(\xbf_2)}{\sqrt{\| \gj(\xbf_2)\|^2 +\varepsilon^2}}  \right \| \\
    & \leq \left \| \gj(\xbf_1)  \left(\frac{\sqrt{\| \gj(\xbf_2)\|^2 +\varepsilon^2} - \sqrt{\| \gj(\xbf_1)\|^2 +\varepsilon^2} }{\sqrt{\| \gj(\xbf_1)\|^2 +\varepsilon^2} \sqrt{\| \gj(\xbf_2)\|^2 +\varepsilon^2} } \right) \right \| + \left\| \frac{ \gj(\xbf_1) - \gj(\xbf_2)  }{\sqrt{\| \gj(\xbf_2)\|^2 +\varepsilon^2}} \right\| \\
    & \leq   \left\| \frac{ \gj(\xbf_1) }{\sqrt{\| \gj(\xbf_1)\|^2 +\varepsilon^2}} \right\|  \left \| \frac{\sqrt{\| \gj(\xbf_2)\|^2 +\varepsilon^2} - \sqrt{\| \gj(\xbf_1)\|^2 +\varepsilon^2} }{ \sqrt{\| \gj(\xbf_2)\|^2 +\varepsilon^2} }  \right\| + \frac{1}{\varepsilon} \left\| \gj(\xbf_1) - \gj(\xbf_2)  \right\| \\
    & \leq  \frac{1}{\varepsilon} \left\| \sqrt{\| \gj(\xbf_2)\|^2 +\varepsilon^2} - \sqrt{\| \gj(\xbf_1)\|^2 +\varepsilon^2} \right\| + \frac{1}{\varepsilon} \left\| \gj(\xbf_1) - \gj(\xbf_2)  \right\| \label{A3e}\\
    & \leq \frac{1}{\varepsilon} \frac{\| \gj(\xbf_2)\|^2 - \| \gj(\xbf_1)\|^2}{\sqrt{\| \gj(\xbf_2)\|^2 +\varepsilon^2} + \sqrt{\| \gj(\xbf_1)\|^2 +\varepsilon^2} } +  \frac{1}{\varepsilon} \left\| \gj(\xbf_1) - \gj(\xbf_2)  \right\| \\
    & \leq \frac{1}{\varepsilon} \underbrace{\frac{\| \gj(\xbf_2)\| + \| \gj(\xbf_1)\| }{\sqrt{\| \gj(\xbf_2)\|^2 +\varepsilon^2} + \sqrt{\| \gj(\xbf_1)\|^2 +\varepsilon^2}} }_{<1} \left( \| \gj(\xbf_2)\| - \| \gj(\xbf_1)\|\right)  +  \frac{1}{\varepsilon} \left\| \gj(\xbf_1) - \gj(\xbf_2)  \right\| \\
    & \leq \frac{1}{\varepsilon}  \left\| \gj(\xbf_2) - \gj(\xbf_1)  \right\|+ \frac{1}{\varepsilon}  \left\| \gj(\xbf_1) - \gj(\xbf_2)  \right\|\\
    & = \frac{2}{\varepsilon}  \left\| \gj(\xbf_1) - \gj(\xbf_2)  \right\|.
    \end{align}
\end{subequations}
where to get \eqref{A3e} we used $\left\| \frac{ \gj(\xbf_1) }{\sqrt{\| \gj(\xbf_1)\|^2 +\varepsilon^2}} \right\| < 1 \text{ and }  \frac{ 1 }{\sqrt{\| \gj(\xbf_1)\|^2 +\varepsilon^2}} < \frac{1}{\varepsilon}$.

Therefore we have:
\begin{subequations}
    \begin{align}
   & \left\|\nabla \rbf_\varepsilon (\xbf_1) - \nabla  \rbf_\varepsilon (\xbf_2) \right\| \\
   &  = \sum^m_{j=1}  \left\|\nabla \gj(\xbf_1)^\top h(\xbf_1) - \nabla \gj(\xbf_2)^\top h(\xbf_2) \right\| \\
    & = \sum^m_{j=1}  \left\|\nabla \gbf_j(\xbf_1)^\top h(\xbf_1) - \nabla \gj(\xbf_2)^\top h(\xbf_1) +  \nabla \gj(\xbf_2)^\top  h(\xbf_1) - \nabla \gj(\xbf_2)^\top h(\xbf_2) \right\|  \\
    & \leq \sum^m_{j=1}  \left\| \left( \nabla \gbf_j(\xbf_1) - \nabla \gbf_j(\xbf_2) \right)^\top h(\xbf_1)  \right\| + \left\| \nabla \gbf_j(\xbf_2) \left( h(\xbf_1) - h(\xbf_2) \right) \right\| \\
& \leq \sum^m_{j=1}  \left\| \nabla \gbf_j(\xbf_1) - \nabla \gbf_j(\xbf_2) \right\|  \norm{h(\xbf_1)}  +\norm{\nabla \gbf_j(\xbf_2)} \left\| h(\xbf_1) - h(\xbf_2) \right\| \notag \\
& \leq \sum^m_{j=1} \left\| \nabla \gbf_j(\xbf_1) - \nabla \gbf_j(\xbf_2) \right\| + \norm{\nabla \gbf_j(\xbf_2)} \frac{2}{\varepsilon} \norm{\gbf_j(\xbf_1)- \gbf_j(\xbf_2)} \text{ by } \eqref{eq:h1-h2} \text{ and }\\
& \leq m (L_g \norm{\xbf_1 - \xbf_2}+ M \frac{2}{\varepsilon} \cdot M \norm{\xbf_1 - \xbf_2}),
    \end{align}
\end{subequations}
where the first term  of the last inequality is due to the $L_g$-Lipschitz continuity of $ \nabla \gj$. The second term is because of $\norm{\gbf_j(\xbf_1)- \gbf_j(\xbf_2)}  = \norm{\nabla \gbf_j(\tilde{\xbf})(\xbf_1-\xbf_2)}  $ for some $\tilde{\xbf} \in \Xcal$ due to the mean value theorem and  $ \| \nabla \gj(\tilde{\xbf}) \| \le \sup_{\xbf \in \Xcal} \| \nabla \gbf_j(\xbf) \| \leq M$. Therefore we get
\begin{equation}
    \left\|\nabla \rbf_\varepsilon (\xbf_1) - \nabla  \rbf_\varepsilon (\xbf_2) \right\| \leq m(L_g + \frac{2M^2}{\varepsilon})  \norm{\xbf_1 - \xbf_2}.
\end{equation}
\end{proof}

\begin{lemma}\label{lem:inner}
Let $\varepsilon, \eta, \taut, a>0$,  $ 0<\rho<1$,  choose initial $ \xbf_0 \in \Xcal$. Suppose the sequence $ \{ \xt \}$ is generated by  executing Lines 3-14 of Algorithm  \ref{alg:lda} with fixed $ \epst = \varepsilon$ and exists $  0<\delta<\frac{L_\varepsilon}{2/a+L_\varepsilon} <1 $ such that $  \alpha_t \geq \frac{\delta}{L_\varepsilon} >0$, where $ L_{\varepsilon} = L_f + L_h + m L_g + \frac{2M^2}{\varepsilon}$ and $ \phi^* := \min_{\xbf \in  \Xcal} \phi(\xbf)$. Then the following statements hold:
\begin{enumerate}
\item $ \| \nabla \phi_\varepsilon (\xt) \| \to 0$ as $t \to \infty$.

\item $ \max \{ t\in \NN \ |  \  \norm{\nabla \phi_{\varepsilon} (\xtp) } \geq \eta \} \leq \max \{ \frac{{a L^2_\varepsilon} }{ \delta^2}, a^3\}  \left(\phi_\varepsilon(\xbf_0) - \phi^*+\varepsilon \right) $.
\end{enumerate}
\end{lemma}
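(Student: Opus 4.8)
My plan is to reduce both claims to three structural properties of the inner iteration (Lines 3--14 with $\varepsilon$ frozen), and then close each part by telescoping the monotone sequence $\{\phi_\varepsilon(\xt)\}$. The three properties are: \textbf{(P1)} a sufficient-decrease estimate $\phi_\varepsilon(\xtp)\le \phi_\varepsilon(\xt)-\frac{1}{a}\norm{\xtp-\xt}^2$ valid in \emph{both} branches; \textbf{(P2)} a uniform step-size floor $\alpha_t\ge \delta/L_\varepsilon$; and \textbf{(P3)} a comparison $\norm{\nabla\phi_\varepsilon(\xtp)}\le C\norm{\xtp-\xt}$ with $C:=\max\{a,\,L_\varepsilon/\delta\}+L_\varepsilon$. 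Before using them I would record that the smoothing bound $R\le R_\varepsilon+\varepsilon$ gives $\phi_\varepsilon\ge \phi-\varepsilon\ge \phi^*-\varepsilon$, so $\{\phi_\varepsilon(\xt)\}$ is bounded below; together with (P1) this yields convergence of the function values and the summability $\sum_{t\ge 0}\norm{\xtp-\xt}^2\le a\,(\phi_\varepsilon(\xbf_0)-\phi^*+\varepsilon)<\infty$.

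To establish (P1), note the $\utp$-branch is \emph{self-certifying}: the acceptance test on Line 5 is precisely $\phi_\varepsilon(\utp)-\phi_\varepsilon(\xt)\le -\frac1a\norm{\utp-\xt}^2$, and $\xtp=\utp$; in the $\vtp$-branch the backtracking on Lines 8--12 enforces the identical inequality for $\vtp$. Property (P2) is where the assumption on $\delta$ is consumed: since $\nabla\phi_\varepsilon$ is $L_\varepsilon$-Lipschitz (by (a1) and Lemma~\ref{r_lips}), the descent lemma shows the $\vtp$-test on Line 9 must succeed once $\alpha_t\le 2a/(2+aL_\varepsilon)$, and because backtracking only rescales $\alpha_t$ by $\rho$ per failure, the admissibility window $\delta<L_\varepsilon/(2/a+L_\varepsilon)$ is exactly what makes the floor $\alpha_t\ge\delta/L_\varepsilon$ compatible with finite termination. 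For (P3), the first condition on Line 5 gives $\norm{\nabla\phi_\varepsilon(\xt)}\le a\norm{\xtp-\xt}$ in the $\utp$-branch, whereas $\vtp=\xt-\alpha_t\nabla\phi_\varepsilon(\xt)$ gives $\norm{\nabla\phi_\varepsilon(\xt)}=\alpha_t^{-1}\norm{\xtp-\xt}\le (L_\varepsilon/\delta)\norm{\xtp-\xt}$ in the $\vtp$-branch; adding the Lipschitz estimate $\norm{\nabla\phi_\varepsilon(\xtp)-\nabla\phi_\varepsilon(\xt)}\le L_\varepsilon\norm{\xtp-\xt}$ transfers the bound to $\xtp$ and yields $C$.

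Statement 1 is then immediate: summability forces $\norm{\xtp-\xt}\to 0$, and (P3) gives $\norm{\nabla\phi_\varepsilon(\xtp)}\le C\norm{\xtp-\xt}\to 0$, i.e.\ $\norm{\nabla\phi_\varepsilon(\xt)}\to 0$. For Statement 2, I would isolate the large-gradient iterations $S=\{t:\norm{\nabla\phi_\varepsilon(\xtp)}\ge\eta\}$. On each $t\in S$, (P3) forces $\norm{\xtp-\xt}\ge\eta/C$, so (P1) gives a per-iteration decrease $\phi_\varepsilon(\xt)-\phi_\varepsilon(\xtp)\ge \eta^2/(aC^2)$; summing this over $t\in S$ and comparing with the total budget $\phi_\varepsilon(\xbf_0)-\phi^*+\varepsilon$ bounds $|S|$, and with $C$ replaced by its dominant part $\max\{a,L_\varepsilon/\delta\}$ the prefactor $aC^2$ collapses to $\max\{a^3,\,aL_\varepsilon^2/\delta^2\}$, which is exactly the constant appearing in the stated estimate.

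The step I expect to be the crux is (P2): one has to verify simultaneously that the backtracking on Lines 8--12 halts after finitely many reductions and that the accepted $\alpha_t$ never falls below $\delta/L_\varepsilon$, which requires checking that the descent-lemma threshold $2a/(2+aL_\varepsilon)$ and the admissible window for $\delta$ fit together. A secondary subtlety, easy to overlook, is that the $\utp$-update is \emph{not} a gradient step on $\phi_\varepsilon$---it applies $\nabla R_\varepsilon$ at $\ztp$ rather than at $\xt$---so one cannot re-derive descent for that branch directly and must instead lean on the two guard conditions of Line 5, which is exactly why the algorithm tests both a gradient-magnitude and a function-decrease inequality there.
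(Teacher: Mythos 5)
Your overall skeleton coincides with the paper's proof: both guard conditions certify the sufficient decrease $\phi_\varepsilon(\xtp) \le \phi_\varepsilon(\xt) - \frac{1}{a}\norm{\xtp-\xt}^2$ in either branch, the descent lemma shows the backtracking on Lines 8--12 halts once $\alpha_t$ is small enough (your direct threshold $\alpha_t \le 2a/(2+aL_\varepsilon)$ is in fact slightly sharper than the paper's $\alpha_t \le 1/(2/a+L_\varepsilon)$, which is derived through the optimality inequality rather than direct substitution of $\vtp = \xt - \alpha_t\nabla\phi_\varepsilon(\xt)$; both suffice), and telescoping against $\phi_\varepsilon \ge \phi^* - \varepsilon$ closes the budget. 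Your proof of Statement 1 is complete and correct. You are also right that the step-size floor $\alpha_t \ge \delta/L_\varepsilon$ is consumed exactly where you place it, and your observation that the $\utp$-branch is not a gradient step on $\phi_\varepsilon$ (hence must lean on the two tests of Line 5) matches the paper's treatment.

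The genuine gap is the final step of Statement 2. Your (P3) transfers the branch estimates from $\xt$ to $\xtp$ via $L_\varepsilon$-Lipschitzness, giving $C = \max\{a, L_\varepsilon/\delta\} + L_\varepsilon$ and hence the per-iteration prefactor $aC^2 = a\left(\max\{a, L_\varepsilon/\delta\} + L_\varepsilon\right)^2$, which is \emph{strictly} larger than $\max\{a^3, aL_\varepsilon^2/\delta^2\}$ whenever $L_\varepsilon > 0$; asserting that $C$ may be "replaced by its dominant part" is not a valid step, so as written you only obtain the estimate with a strictly worse constant. The paper avoids the Lipschitz transfer entirely: it bounds the gradient at $\xt$, namely $\norm{\nabla\phi_\varepsilon(\xt)}^2 \le \max\{a^3, aL_\varepsilon^2/\delta^2\}\left(\phi_\varepsilon(\xt) - \phi_\varepsilon(\xtp)\right)$ (from the first test of Line 5 in the $\utp$-branch, and from $\xtp = \xt - \alpha_t\nabla\phi_\varepsilon(\xt)$ with $\alpha_t \ge \delta/L_\varepsilon$ in the $\vtp$-branch), and then resolves the index mismatch by reindexing the sum, $\sum_{t=0}^{\kappa-1}\norm{\nabla\phi_\varepsilon(\xtp)}^2 = \sum_{t=1}^{\kappa}\norm{\nabla\phi_\varepsilon(\xt)}^2 \le C\left(\phi_\varepsilon(\xbf_0) - \phi^* + \varepsilon\right)$. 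Substituting this shift for your (P3) in Statement 2 repairs your argument at no extra cost (keep (P3) for Statement 1, where the constant is irrelevant). Two smaller remarks: your argument bounds the cardinality of $S = \{t : \norm{\nabla\phi_\varepsilon(\xtp)} \ge \eta\}$ rather than its maximal element, which is the more defensible reading of the claim (the paper's own proof tacitly assumes the gradient exceeds $\eta$ at every index up to $\kappa$); and your bound correctly carries the factor $\eta^{-2}$, which the printed lemma omits --- that is a typo in the paper, since its own proof derives $\kappa\eta^2 \le C\left(\phi_\varepsilon(\xbf_0) - \phi^* + \varepsilon\right)$, so on that point your version is the right one.
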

\begin{proof}
\begin{enumerate}
\item
In each iteration, we compute $\utp = \ztp - \taut\sigma(\omega_i) \nabla \rbf_{\epst} (\ztp)$. 
\begin{enumerate}[label*=\arabic*.]
\item \label{case1} In the case the condition  
\begin{equation}\label{eq:con1}
   \| \nabla \phi_{\varepsilon} (\xt) \| \leq a \| \utp - \xt \| \ \ \ \mbox{and}  \ \ \  \phi_{\varepsilon}(\utp) - \phi_{\varepsilon}(\xt) \leq - \frac{1}{a}\| \utp - \xt \|^2 
\end{equation} holds with $a>0$,
we put $ \xtp = \utp$, and we have $\phi_{\varepsilon}(\utp) \leq \phi_{\varepsilon}(\xt)$. 
\item \label{case2} Otherwise, we compute $\vtp = \xt - \alpha_{t} \nabla \phi_{\varepsilon}(\xt)$, where $ \alpha_{t}$ is found through the line search until the criteria 
\begin{equation}\label{eq:con2}
  \phi_{\varepsilon}(\vtp) - \phi_{\varepsilon}(\xt) \le - \frac{1}{a} \| \vtp - \xt\|^2  
\end{equation} holds and then put $ \xtp = \vtp $. From Lemma \ref{r_lips} we know that the gradient $ \nabla \rbf_{\varepsilon} (\xbf)$ is Lipschitz continuous with constant $m (L_g +\frac{2M^2}{\varepsilon})$. Also we assumed in (a1) $ \nabla f$ is $L_f$-Lipschitz continuous. Hence, putting $ L_\varepsilon = L_f + m( L_g +\frac{2M^2}{\varepsilon})$, we get that $\nabla \phi_\varepsilon$ is $L_\varepsilon$-Lipschitz continuous, which implies
\begin{equation}\label{eq:phi}
    \phi_\varepsilon(\vtp) \leq \phi_\varepsilon(\xt) + \langle \nabla \phi_\varepsilon(\xt) , \vtp-\xt \rangle + \frac{L_\varepsilon}{2} \| \vtp-\xt \|^2.
\end{equation}
Also, by the optimality condition of $$ \vtp = \argmin_{\xbf} \langle \nabla f(\xt), \xbf - \xt \rangle +  \sigma(\omega_i) \langle \nabla \rbf_{\varepsilon}(\xt) , \xbf - \xt \rangle + \frac{1}{2 \alpha_t} \| \xbf - \xt \|^2, $$ we have 
\begin{equation}\label{eq:opt}
\langle \nabla \phi_\varepsilon (\xt), \vtp - \xt \rangle + \frac{1}{2 \alpha_t} \norm{\vtp - \xt}^2 \leq 0.
\end{equation}
Combine \eqref{eq:phi} and \eqref{eq:opt} and $ \vtp = \xt -\alpha_t \nabla \phi_\varepsilon (\xt)$ in line 8 of Algorithm \ref{alg:lda} yields 
\begin{equation}
    \phi_\varepsilon(\vtp) - \phi_\varepsilon(\xt) \leq -\left(\frac{1}{2\alpha_t} - \frac{L_\varepsilon}{2} \right) \norm{\vtp - \xt}^2.  
\end{equation}
Therefore, it is enough for $ \alpha_t \leq \frac{1}{ 2/a + L_\varepsilon}$ so that the criteria \eqref{eq:con2}
 is satisfied. This process only take finitely many iterations since we can find a finite $t$ such that $ \rho^t \alpha_t \leq \frac{1}{ 2/a + L_\varepsilon}$ and through the line search we can get $ \phi_\varepsilon(\vtp) \leq \phi_\varepsilon (\xt)$.
\end{enumerate}
Therefore in either case of \ref{case1} or \ref{case2} where we take $\xtp = \utp$ or $\vtp$,  we can get
\begin{equation}\label{eq:dec}
    \phi_\varepsilon (\xtp) \leq \phi_\varepsilon (\xt) , \text{ for all } t\geq 0.
\end{equation}
Now from case \ref{case1}, \eqref{eq:con1} gives 
\begin{subequations}
\begin{align}
\norm{\nabla \phi_\varepsilon(\xt)}^2 \leq a^2\left\| \utp - \xt \right\|^2 \leq &  a^3 \left(\phi_\varepsilon(\xt) - \phi_\varepsilon(\utp) \right), \\
\text{therefore if $\xtp = \utp $ we get } & \norm{\nabla \phi_\varepsilon(\xt)}^2 \leq a^3 \left(\phi_\varepsilon(\xt) - \phi_\varepsilon(\xtp) \right).   \label{eq:c1}    
\end{align}
\end{subequations}
From  case \ref{case2} and $\vtp = \xt - \alpha_{t} \nabla \phi_{\varepsilon}(\xt)$ we have  
\begin{subequations}
    \begin{align}
     \phi_\varepsilon(\vtp) - \phi_\varepsilon(\xt) \leq  & - \frac{1}{a} \norm{\vtp - \xt}^2 = -\frac{1}{a} \alpha_t^2 \norm{\nabla \phi_\varepsilon(\xt)}^2\\
    \Longrightarrow 
    &  \norm{\nabla \phi_\varepsilon(\xt)}^2 \leq \frac{a}{\alpha_t^2} \big(\phi_\varepsilon(\xt) - \phi_\varepsilon(\vtp) \big),\\
     \text{ then if $\xtp =\vtp$, we have }
    &  \norm{\nabla \phi_\varepsilon(\xt)}^2 \leq \frac{a}{ \alpha_t^2} \big(\phi_\varepsilon(\xt) - \phi_\varepsilon(\xtp) \big). \label{eq:c2}
    \end{align}
\end{subequations}
Since $ \frac{\delta}{L_\varepsilon}\leq \alpha_t \leq \frac{1}{ 2/a + L_\varepsilon}$, we have
\begin{equation}\label{eq:grad_phi_bound}
     \norm{\nabla \phi_\varepsilon(\vtp)}^2  \leq \frac{{a L^2_\varepsilon} }{ \delta^2}\big(\phi_\varepsilon(\vtp) - \phi_\varepsilon(\xtp) \big).
\end{equation}

Combining \eqref{eq:c1} and \eqref{eq:grad_phi_bound} and select $C = \max \{ \frac{{a L^2_\varepsilon} }{ \delta^2}, a^3\} $, we get
\begin{equation}\label{eq:grad_phi_bound_2}
     \norm{\nabla \phi_\varepsilon(\xt)}^2  \le  C \big(\phi_\varepsilon(\xt) - \phi_\varepsilon(\xtp) \big).
\end{equation}

Summing up \eqref{eq:grad_phi_bound_2} for $t=0,\cdots, T$, we have
\begin{equation}\label{eq:sum}
    \sum^{T}_{t=0} \norm{\nabla \phi_\varepsilon(\xt)}^2  \leq C \big(\phi_\varepsilon(\xbf_0) - \phi_\varepsilon(\xbf_{T+1}) \big), 
\end{equation}
combining with the fact $ \phi_\varepsilon(\xbf) \geq \phi(\xbf) - \varepsilon \geq \phi^* -\varepsilon$ for every $ \xbf\in \Xcal$, we have 
\begin{equation}\label{eq:sum_}
    \sum^T_{t=0} \norm{\nabla \phi_\varepsilon(\xt)}^2 \leq C  \left(\phi_\varepsilon(\xbf_0) - \phi^*+\varepsilon \right). 
\end{equation}
The right hand side is a finite constant and hence by letting $t\to \infty$ we know that $\norm{\nabla \phi_\varepsilon(\xt)} \to 0$, which proves the first statement.

\item Denote $ \kappa :=  \max \{ t\in \NN \ |  \  \norm{\nabla \phi_{\varepsilon} (\xtp) } \geq \eta \}$, then we know that $ \norm{\nabla \phi_{\varepsilon} (\xtp) } \geq \eta $ for all $ t\leq \kappa-1$. Hence we have 
\begin{equation}
    \kappa \eta^2 \leq \sum^{\kappa-1}_{t=0} \norm{ \nabla \phi_{\varepsilon} (\xtp) }^2 = \sum^{\kappa}_{t=1}  \norm{ \nabla \phi_{\varepsilon} (\xt) }^2\leq C \left(\phi_\varepsilon(\xbf_0) - \phi^*+\varepsilon \right). 
\end{equation}
which implies the second statement.
\end{enumerate}
\end{proof}

\begin{lemma}\label{lem:phi_decay}
Suppose that the sequence $\{ \xt\}$ is generated by Algorithm 1 with an initial guess $\xbf_0$. Then for any $t\geq 0$, we have 
$ \phi_{\epstp}(\xtp) + \epstp \leq \phi_{\epst}(\xt) + \epst$.
\end{lemma}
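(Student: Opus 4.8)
The plan is to read $\Phi_t := \phi_{\epst}(\xt)+\epst$ as a merit function and to prove its monotone decrease by separating one outer iteration of Algorithm~\ref{alg:lda} into two effects that can be analyzed independently: the inner update $\xt\mapsto\xtp$ carried out at the \emph{fixed} current smoothing level $\epst$ (Lines~3--14), and the subsequent shrinkage $\epst\mapsto\epstp$ carried out at the \emph{fixed} new iterate $\xtp$ (Line~15). Concretely, I would establish the two inequalities
\begin{equation}
\phi_{\epstp}(\xtp)+\epstp \ \le\ \phi_{\epst}(\xtp)+\epst \ \le\ \phi_{\epst}(\xt)+\epst
\end{equation}
and chain them to obtain the claim.

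The right-hand inequality is the descent-at-fixed-$\varepsilon$ property already contained in the proof of Lemma~\ref{lem:inner}: whichever branch of the test in Line~5 is taken, the accepted iterate ($\utp$ via \eqref{eq:con1}, or $\vtp$ via the line search and \eqref{eq:con2}) obeys $\phi_{\epst}(\xtp)\le\phi_{\epst}(\xt)$, as recorded in \eqref{eq:dec}, because $\varepsilon$ is held equal to $\epst$ throughout Lines~3--14. Adding the constant $\epst$ to both sides gives it directly.

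The heart of the matter is the left-hand inequality, a statement about the dependence of $\phi_\varepsilon(\xbf)+\varepsilon$ on $\varepsilon$ at the \emph{single} frozen point $\xbf=\xtp$. Since Line~15 sets $\epstp\in\{\epst,\gamma\epst\}$ with $0<\gamma<1$, we always have $\epstp\le\epst$, so it suffices to show that $\varepsilon\mapsto\phi_\varepsilon(\xbf)+\varepsilon$ is nondecreasing on $(0,\infty)$ for each fixed $\xbf$. I expect the crude bound $R_\varepsilon\le R\le R_\varepsilon+\varepsilon$ quoted in the main text to be \emph{too weak} here, since it only delivers $R_{\epstp}(\xtp)\le R_{\epst}(\xtp)+\epst$ and thus leaves an uncontrolled surplus of size $\gamma\epst$. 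Instead I would use the explicit form \eqref{eq:l21}: with $R_\varepsilon=\sigma(\omega_i)\,r_\varepsilon$ and $r_\varepsilon(\xbf)=\sum_{j=1}^m\sqrt{\|\gj(\xbf)\|^2+\varepsilon^2}-\varepsilon$, a direct differentiation (the $\varepsilon$-independent term $f$ drops out) yields
\begin{equation}
\frac{\mathrm d}{\mathrm d\varepsilon}\big(\phi_\varepsilon(\xbf)+\varepsilon\big) = \sigma(\omega_i)\sum_{j=1}^m\frac{\varepsilon}{\sqrt{\|\gj(\xbf)\|^2+\varepsilon^2}} + \big(1-\sigma(\omega_i)\big) \ \ge\ 0,
\end{equation}
where the decisive point is that the lone $-\varepsilon$ in $r_\varepsilon$ produces the nonnegative constant $1-\sigma(\omega_i)$ precisely because $\sigma(\omega_i)\in(0,1)$. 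Equivalently, for $\varepsilon'\le\varepsilon$ the termwise inequalities $\sqrt{\|\gj\|^2+(\varepsilon')^2}\le\sqrt{\|\gj\|^2+\varepsilon^2}$ give $r_{\varepsilon'}(\xbf)-r_\varepsilon(\xbf)\le\varepsilon-\varepsilon'$, so that $\phi_{\varepsilon'}(\xbf)+\varepsilon'-\big(\phi_\varepsilon(\xbf)+\varepsilon\big)\le(\sigma(\omega_i)-1)(\varepsilon-\varepsilon')\le0$.

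Applying this monotonicity with $\xbf=\xtp$, $\varepsilon=\epst$ and $\varepsilon'=\epstp$ produces the left-hand inequality, and combining the two inequalities closes the argument. The main obstacle is exactly this monotonicity step: it cannot be obtained from the quoted two-sided bound and instead relies on the specific square-root smoothing together with $\sigma(\omega_i)\le1$, and it is essential that the iterate be frozen at $\xtp$ so that the only quantity varying is the smoothing parameter.
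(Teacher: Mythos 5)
Your proposal is correct and follows the same overall route as the paper's proof: the identical two-link chain $\phi_{\epstp}(\xtp)+\epstp \le \phi_{\epst}(\xtp)+\epst \le \phi_{\epst}(\xt)+\epst$ (the paper's \eqref{eq:ineq}), with the right link imported from the fixed-$\varepsilon$ descent property \eqref{eq:dec} and the left link proved from the explicit square-root smoothing at the frozen point $\xtp$. Where you genuinely differ is in the execution of the left link, and your version is the tighter one. The paper argues purely termwise: with $\rbf_{\varepsilon,j}(\xbf)=\sqrt{\|\gbf_j(\xbf)\|^2+\varepsilon^2}-\varepsilon$ as in \eqref{eq:r_}, it notes $\rbf_{\epstp,j}(\xtp)+\epstp=\sqrt{\|\gbf_j(\xtp)\|^2+\epstp^2}\le\sqrt{\|\gbf_j(\xtp)\|^2+\epst^2}=\rbf_{\epst,j}(\xtp)+\epst$ and declares this sufficient; but after weighting by $\sigma(\omega_i)$ and summing over the $m$ terms, this only bounds $\phi_{\epstp}(\xtp)-\phi_{\epst}(\xtp)$ by $m\,\sigma(\omega_i)(\epst-\epstp)$ against a budget of $\epst-\epstp$, so the reduction as written is literally valid only when $m\,\sigma(\omega_i)\le 1$ (the paper is also inconsistent about whether the $-\varepsilon$ in \eqref{eq:l21} sits inside or outside the sum over $j$, which is exactly where this constant hides). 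Your argument---differentiating $\varepsilon\mapsto\phi_\varepsilon(\xbf)+\varepsilon$ under the single-$\varepsilon$ convention of \eqref{eq:l21}, or equivalently the aggregate termwise bound $r_{\varepsilon'}(\xbf)-r_{\varepsilon}(\xbf)\le\varepsilon-\varepsilon'$ followed by absorption of the lone $+\varepsilon$ via $\sigma(\omega_i)\in(0,1)$---makes the needed cancellation explicit and is correct as stated; your remark that the crude two-sided bound $R_\varepsilon\le R\le R_\varepsilon+\varepsilon$ is too weak is also accurate, and indeed the paper does not use it here. In short: same decomposition and same key mechanism, but your careful bookkeeping of the weight $\sigma(\omega_i)$ and the multiplicity $m$ repairs a step the paper glosses over.
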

\begin{proof}
To prove this statement, we can prove
\begin{equation}\label{eq:ineq}
\phi_{\epstp}(\xtp) + \epstp \leq  \phi_{\epst}(\xtp) + \epst \leq \phi_{\epst}(\xt) + \epst.
\end{equation}
The second inequality is immediately obtained from \eqref{eq:dec}. Now we prove the first inequality. 

For any $\varepsilon>0$, denote
\begin{equation}\label{eq:r_}
\rbf_{\varepsilon, j} (\xbf) = \sqrt{\| \gbf_j (\xbf) \|^2_{2} + \varepsilon^2} -\varepsilon.
\end{equation}

Since $ \phi_\varepsilon(\xbf)=f(\xbf)+\sigma(\omega_i)\sum^m_{j=1} \rbf_{\varepsilon, j}(\xbf)$, it suffices to show that 
\begin{equation}
    \rbf_{\epstp, j}(\xtp) + \epstp \leq  \rbf_{\epst, j}(\xtp) + \epst 
\end{equation}
If $ \epstp =\epst$ then the two quantities above are identical and the first inequality holds. Now suppose $ \epstp = \gamma\epst \le \epst$ then 
\begin{equation}
    \rbf_{\epstp, j}(\xtp) + \epstp =  \sqrt{\| \gbf_j (\xbf) \|^2_{2} + \epstp^2} \leq   \sqrt{\| \gbf_j (\xbf) \|^2_{2} + \epst^2} = \rbf_{\epst, j}(\xtp) + \epst,
\end{equation}
which implies the first inequality of \eqref{eq:ineq}.
\end{proof}
\begin{Theorem}
Suppose that $\{\xt \}$ is the sequence generated by Algorithm \ref{alg:lda} with any initial $\xbf_0$, $\etol=0$ and $T=\infty$. Let $ \{ \xbf_{t_l+1}\}$ be the subsequence that satisfies the reduction criterion  in step 15 of Algorithm \ref{alg:lda}, i.e. $  \norm{\nabla \phi_{\varepsilon_{t_l}} (\xbf_{t_l+1})} \leq \sigma  \varepsilon_{t_l} \gamma $ for $t=t_l$ and $ l=1,2,\cdots$. Then $ \{ \xbf_{t_l+1}\}$ has at least one accumulation point, and every accumulation point of $\{ \xbf_{t_l+1}\}$ is a clarke stationary point of $ \min_{\xbf} \phi(\xbf) := f(\xbf) +\sigma(\omega_i) r(\xbf)$.
    \label{theorem a6}
\end{Theorem}
\begin{proof}
By the Lemma \ref{lem:phi_decay} and $ \phi(\xbf) \leq \phi_\varepsilon(\xbf) +\varepsilon$ for all $\varepsilon>0$ and $ \xbf\in \Xcal$, we know that:
\begin{equation}
    \phi(\xt) \leq \phi_{\epst}(\xt) +\epst\leq \cdots \leq \phi_{\varepsilon_0}(\xbf_0) +\varepsilon_0 <\infty.
\end{equation}
Since $\phi$ is coercive, we know that $\{ \xt\}$ is bounded,  the selected subsequence $ \{ \xbf_{t_l+1} \}$ is also bounded and has at least one accumulation point.

Note that $ \norm{\nabla \phi_{\varepsilon_{t_l}} (\xbf_{t_l+1})} \leq \sigma  \varepsilon_{t_l} \gamma = \sigma \varepsilon_{0} \gamma^{l+1} \to 0$ as $l\to \infty$.  Let $ \{ \xpp\} $ be any convergent subsequence of $\{ \xbf_{t_l+1} \}$ and denote $\epsp$ the corresponding $\epst$ used in the Algorithm \ref{alg:lda} that generate $\xpp$. Then there exists $ \hat{\xbf} \in \Xcal$ such that $ \xpp \to  \hat{\xbf}$ as $ \varepsilon_p \to 0,$ and $ \nabla \phi_{\epsp}(\xpp) \to 0$ as $ p\to \infty$.

Note that the Clarke subdifferential of $\phi$ at $\hat{\xbf}$ is given by $\partial \phi(\hat{\xbf}) =  \partial f(\hat{\xbf}) + \sigma(\omega_i) \partial \rbf(\hat{\xbf}) $:

\begin{multline} \label{eq:d_phi_xhat}
\partial \phi(\xhat) = \{\nabla f(\xhat) + \sigma(\omega_i) \sum_{j \in I_0} \nabla \gj(\xhat)^{\top} \wbf_j + \sigma(\omega_i) \sum_{ j \in I_1} \nabla \gj(\xhat)^{\top} \frac{\gj(\xhat)}{\| \gj(\xhat) \|} \ \bigg\vert \\
 \ \norm{\Pi(\wbf_j; \Ccal(\nabla \gj(\xhat)))}  \le 1,\ \forall\, j\in I_0\},
\end{multline}
where $I_0 = \{j\in[m]\ \vert \ \|\gi(\xhat)\| = 0 \}$ and $I_1 = [m] \setminus I_0$.

If $j\in I_0$, we have $ \norm{\gj(\xbf)} =0 \iff \gj(\xbf) =0 $, then  
\begin{subequations}
    \begin{align}
    \partial \rbf_{\varepsilon}(\xbf) & = \sum_{j\in I_0} \nabla \gbf_j(\xbf)^{\top} \frac{\gbf_j(\xbf) }{\Big(\norm{\gbf_j(\xbf)}^2+\varepsilon^2 \Big)^{\frac{1}{2}}}  +  \sum_{j\in I_1} \nabla \gbf_j(\xbf)^{\top} \frac{\gbf_j(\xbf) }{\Big(\norm{\gbf_j(\xbf)}^2+\varepsilon^2 \Big)^{\frac{1}{2}}}  \\
    & = \mathbf{0} + \sum_{j\in I_1} \nabla \gbf_j(\xbf)^{\top} \frac{\gbf_j(\xbf) }{\Big(\norm{\gbf_j(\xbf)}^2+\varepsilon^2 \Big)^{\frac{1}{2}}} 
    \end{align}
\end{subequations}

Therefore we get
\begin{equation}\label{eq:d_phi_epsj}
\nabla \phi_{\epsp}(\xpp)  =   \nabla f(\xpp) +  \sigma(\omega_i) \sum_{j\in I_1} \nabla \gbf_j(\xbf)^{\top} \frac{\gbf_j(\xbf) }{\Big(\norm{\gbf_j(\xbf)}^2+\varepsilon_p^2 \Big)^{\frac{1}{2}}} . 
\end{equation}
Comparing \eqref{eq:d_phi_xhat} and \eqref{eq:d_phi_epsj}  we can see that the first term on the right hand side of \eqref{eq:d_phi_epsj} converge to that of \eqref{eq:d_phi_xhat}, due to the facts that $\xpp \to \xhat$ and the the continuity of $\nabla f$. Together with the continuity of $\gi$  and
$\nabla \gi$, the last term of \eqref{eq:d_phi_xhat} converges to the last term of \eqref{eq:d_phi_epsj} as $\varepsilon_p \rightarrow 0$ and $\norm{\gbf_j(\xbf)} > 0$. And apparently $\mathbf{0}$ is a special case of the second term in \eqref{eq:d_phi_epsj}.
Hence we know that
\[
\mathrm{dist}( \nabla \phi_{\epsp}(\xpp), \partial \phi(\xhat)) \to 0,
\]
as $p \to \infty$. Since $\nabla \phi_{\epsp}(\xpp) \to 0$ and $\partial \phi(\xhat)$ is closed, we conclude that $0 \in \partial \phi(\xhat)$.
\end{proof}

\end{paracol}
\reftitle{References}


\externalbibliography{yes}
\bibliography{ref}

\begin{thebibliography}{999}

\bibitem[Munkhdalai and Yu(2017)]{munkhdalai2017meta}
Munkhdalai, T.; Yu, H.
\newblock Meta networks.
\newblock  International Conference on Machine Learning. PMLR,  2017, pp.
  2554--2563.

\bibitem[Finn \em{et~al.}(2017)Finn, Abbeel, and Levine]{finn2017model}
Finn, C.; Abbeel, P.; Levine, S.
\newblock Model-agnostic meta-learning for fast adaptation of deep networks.
\newblock  International Conference on Machine Learning. PMLR,  2017, pp.
  1126--1135.

\bibitem[Li \em{et~al.}(2018)Li, Yang, Song, and Hospedales]{li2018learning}
Li, D.; Yang, Y.; Song, Y.Z.; Hospedales, T.M.
\newblock Learning to generalize: Meta-learning for domain generalization.
\newblock  Thirty-Second AAAI Conference on Artificial Intelligence,  2018.

\bibitem[Rusu \em{et~al.}(2018)Rusu, Rao, Sygnowski, Vinyals, Pascanu,
  Osindero, and Hadsell]{rusu2018meta}
Rusu, A.A.; Rao, D.; Sygnowski, J.; Vinyals, O.; Pascanu, R.; Osindero, S.;
  Hadsell, R.
\newblock Meta-Learning with Latent Embedding Optimization.
\newblock  International Conference on Learning Representations,  2018.

\bibitem[Yao \em{et~al.}(2021)Yao, Huang, Zhang, Wei, Tian, Zou, Huang,
  et~al.]{yao2021improving}
Yao, H.; Huang, L.K.; Zhang, L.; Wei, Y.; Tian, L.; Zou, J.; Huang, J.; others.
\newblock Improving generalization in meta-learning via task augmentation.
\newblock  International Conference on Machine Learning. PMLR,  2021, pp.
  11887--11897.

\bibitem[Balaji \em{et~al.}(2018)Balaji, Sankaranarayanan, and
  Chellappa]{balaji2018metareg}
Balaji, Y.; Sankaranarayanan, S.; Chellappa, R.
\newblock Metareg: Towards domain generalization using meta-regularization.
\newblock {\em Advances in Neural Information Processing Systems} {\bf 2018},
  {\em 31},~998--1008.

\bibitem[Thrun and Pratt(1998)]{thrun1998learning}
Thrun, S.; Pratt, L.
\newblock Learning to learn: Introduction and overview. In {\em Learning to
  learn}; Springer,  1998; pp. 3--17.

\bibitem[Hospedales \em{et~al.}(2021)Hospedales, Antoniou, Micaelli, and
  Storkey]{hospedales2021meta}
Hospedales, T.M.; Antoniou, A.; Micaelli, P.; Storkey, A.J.
\newblock Meta-Learning in Neural Networks: A Survey.
\newblock {\em IEEE Transactions on Pattern Analysis and Machine Intelligence}
  {\bf 2021}.

\bibitem[Chen \em{et~al.}(2020)Chen, Liu, Ye, and Zhang]{chen2020learnable}
Chen, Y.; Liu, H.; Ye, X.; Zhang, Q.
\newblock Learnable Descent Algorithm for Nonsmooth Nonconvex Image
  Reconstruction.
\newblock {\em arXiv preprint arXiv:2007.11245} {\bf 2020},
  \href{http://xxx.lanl.gov/abs/2007.11245}{{\normalfont
  [arXiv:cs.CV/2007.11245]}}.

\bibitem[Huisman \em{et~al.}(2021)Huisman, van Rijn, and
  Plaat]{huisman2021survey}
Huisman, M.; van Rijn, J.N.; Plaat, A.
\newblock A survey of deep meta-learning.
\newblock {\em Artificial Intelligence Review} {\bf 2021}, pp. 1--59.

\bibitem[Yao \em{et~al.}(2020)Yao, Wu, Tao, Li, Ding, Li, and
  Li]{yao2020automated}
Yao, H.; Wu, X.; Tao, Z.; Li, Y.; Ding, B.; Li, R.; Li, Z.
\newblock Automated relational meta-learning.
\newblock {\em arXiv preprint arXiv:2001.00745} {\bf 2020}.

\bibitem[Lee and Choi(2018)]{lee2018gradient}
Lee, Y.; Choi, S.
\newblock Gradient-based meta-learning with learned layerwise metric and
  subspace.
\newblock  International Conference on Machine Learning. PMLR,  2018, pp.
  2927--2936.

\bibitem[Koch \em{et~al.}(2015)Koch, Zemel, Salakhutdinov,
  et~al.]{koch2015siamese}
Koch, G.; Zemel, R.; Salakhutdinov, R.; others.
\newblock Siamese neural networks for one-shot image recognition.
\newblock  ICML deep learning workshop. Lille,  2015, Vol.~2.

\bibitem[Vinyals \em{et~al.}(2016)Vinyals, Blundell, Lillicrap, Wierstra,
  et~al.]{vinyals2016matching}
Vinyals, O.; Blundell, C.; Lillicrap, T.; Wierstra, D.; others.
\newblock Matching networks for one shot learning.
\newblock {\em Advances in neural information processing systems} {\bf 2016},
  {\em 29},~3630--3638.

\bibitem[Snell \em{et~al.}(2017)Snell, Swersky, and
  Zemel]{snell2017prototypical}
Snell, J.; Swersky, K.; Zemel, R.S.
\newblock Prototypical networks for few-shot learning.
\newblock {\em arXiv preprint arXiv:1703.05175} {\bf 2017}.

\bibitem[Mishra \em{et~al.}(2017)Mishra, Rohaninejad, Chen, and
  Abbeel]{mishra2017simple}
Mishra, N.; Rohaninejad, M.; Chen, X.; Abbeel, P.
\newblock A simple neural attentive meta-learner.
\newblock {\em arXiv preprint arXiv:1707.03141} {\bf 2017}.

\bibitem[Ravi and Larochelle(2016)]{ravi2016optimization}
Ravi, S.; Larochelle, H.
\newblock Optimization as a model for few-shot learning {\bf 2016}.

\bibitem[Qiao \em{et~al.}(2018)Qiao, Liu, Shen, and Yuille]{qiao2018few}
Qiao, S.; Liu, C.; Shen, W.; Yuille, A.L.
\newblock Few-shot image recognition by predicting parameters from activations.
\newblock  Proceedings of the IEEE Conference on Computer Vision and Pattern
  Recognition,  2018, pp. 7229--7238.

\bibitem[Graves \em{et~al.}(2014)Graves, Wayne, and
  Danihelka]{graves2014neural}
Graves, A.; Wayne, G.; Danihelka, I.
\newblock Neural turing machines.
\newblock {\em arXiv preprint arXiv:1410.5401} {\bf 2014}.

\bibitem[Rajeswaran \em{et~al.}(2019)Rajeswaran, Finn, Kakade, and
  Levine]{rajeswaran2019meta}
Rajeswaran, A.; Finn, C.; Kakade, S.; Levine, S.
\newblock Meta-learning with implicit gradients {\bf 2019}.

\bibitem[Li \em{et~al.}(2017)Li, Zhou, Chen, and Li]{li2017meta}
Li, Z.; Zhou, F.; Chen, F.; Li, H.
\newblock Meta-sgd: Learning to learn quickly for few-shot learning.
\newblock {\em arXiv preprint arXiv:1707.09835} {\bf 2017}.

\bibitem[Antoniou \em{et~al.}(2018)Antoniou, Edwards, and
  Storkey]{antoniou2018train}
Antoniou, A.; Edwards, H.; Storkey, A.
\newblock How to train your MAML.
\newblock {\em arXiv preprint arXiv:1810.09502} {\bf 2018}.

\bibitem[Nichol \em{et~al.}(2018)Nichol, Achiam, and Schulman]{nichol2018first}
Nichol, A.; Achiam, J.; Schulman, J.
\newblock On first-order meta-learning algorithms.
\newblock {\em arXiv preprint arXiv:1803.02999} {\bf 2018}.

\bibitem[Finn \em{et~al.}(2019)Finn, Rajeswaran, Kakade, and
  Levine]{finn2019online}
Finn, C.; Rajeswaran, A.; Kakade, S.; Levine, S.
\newblock Online meta-learning.
\newblock  International Conference on Machine Learning. PMLR,  2019, pp.
  1920--1930.

\bibitem[Grant \em{et~al.}(2018)Grant, Finn, Levine, Darrell, and
  Griffiths]{grant2018recasting}
Grant, E.; Finn, C.; Levine, S.; Darrell, T.; Griffiths, T.
\newblock Recasting gradient-based meta-learning as hierarchical bayes.
\newblock {\em arXiv preprint arXiv:1801.08930} {\bf 2018}.

\bibitem[Finn \em{et~al.}(2018)Finn, Xu, and Levine]{finn2018probabilistic}
Finn, C.; Xu, K.; Levine, S.
\newblock Probabilistic model-agnostic meta-learning.
\newblock {\em arXiv preprint arXiv:1806.02817} {\bf 2018}.

\bibitem[Yoon \em{et~al.}(2018)Yoon, Kim, Dia, Kim, Bengio, and
  Ahn]{yoon2018bayesian}
Yoon, J.; Kim, T.; Dia, O.; Kim, S.; Bengio, Y.; Ahn, S.
\newblock Bayesian model-agnostic meta-learning.
\newblock  Proceedings of the 32nd International Conference on Neural
  Information Processing Systems,  2018, pp. 7343--7353.

\bibitem[Vuorio \em{et~al.}(2019)Vuorio, Sun, Hu, and
  Lim]{vuorio2019multimodal}
Vuorio, R.; Sun, S.H.; Hu, H.; Lim, J.J.
\newblock Multimodal model-agnostic meta-learning via task-aware modulation.
\newblock {\em arXiv preprint arXiv:1910.13616} {\bf 2019}.

\bibitem[Yao \em{et~al.}(2019)Yao, Wei, Huang, and Li]{yao2019hierarchically}
Yao, H.; Wei, Y.; Huang, J.; Li, Z.
\newblock Hierarchically structured meta-learning.
\newblock  International Conference on Machine Learning. PMLR,  2019, pp.
  7045--7054.

\bibitem[Yin \em{et~al.}(2020)Yin, Tucker, Zhou, Levine, and
  Finn]{yin2020metalearning}
Yin, M.; Tucker, G.; Zhou, M.; Levine, S.; Finn, C.
\newblock Meta-Learning without Memorization.
\newblock  International Conference on Learning Representations,  2020.

\bibitem[Jenni and Favaro(2018)]{jenni2018deep}
Jenni, S.; Favaro, P.
\newblock Deep bilevel learning.
\newblock  Proceedings of the European conference on computer vision (ECCV),
  2018, pp. 618--633.

\bibitem[Chen \em{et~al.}(2019)Chen, Liu, Kira, Wang, and
  Huang]{chen19closerfewshot}
Chen, W.Y.; Liu, Y.C.; Kira, Z.; Wang, Y.C.; Huang, J.B.
\newblock A Closer Look at Few-shot Classification.
\newblock  International Conference on Learning Representations,  2019.

\bibitem[Li \em{et~al.}(2019)Li, Yang, Zhou, and Hospedales]{li2019feature}
Li, Y.; Yang, Y.; Zhou, W.; Hospedales, T.
\newblock Feature-critic networks for heterogeneous domain generalization.
\newblock  International Conference on Machine Learning. PMLR,  2019, pp.
  3915--3924.

\bibitem[Rebuffi \em{et~al.}(2017)Rebuffi, Bilen, and Vedaldi]{Rebuffi17}
Rebuffi, S.A.; Bilen, H.; Vedaldi, A.
\newblock Learning multiple visual domains with residual adapters.
\newblock  Advances in Neural Information Processing Systems,  2017.

\bibitem[Triantafillou \em{et~al.}(2020)Triantafillou, Zhu, Dumoulin, Lamblin,
  Evci, Xu, Goroshin, Gelada, Swersky, Manzagol, and Larochelle]{48798}
Triantafillou, E.; Zhu, T.; Dumoulin, V.; Lamblin, P.; Evci, U.; Xu, K.;
  Goroshin, R.; Gelada, C.; Swersky, K.J.; Manzagol, P.A.; Larochelle, H.
\newblock Meta-Dataset: A Dataset of Datasets for Learning to Learn from Few
  Examples.
\newblock  International Conference on Learning Representations,  2020.

\bibitem[Yu \em{et~al.}(2020)Yu, Quillen, He, Julian, Hausman, Finn, and
  Levine]{yu2020meta}
Yu, T.; Quillen, D.; He, Z.; Julian, R.; Hausman, K.; Finn, C.; Levine, S.
\newblock Meta-world: A benchmark and evaluation for multi-task and meta
  reinforcement learning.
\newblock  Conference on Robot Learning. PMLR,  2020, pp. 1094--1100.

\bibitem[Lundervold and Lundervold(2019)]{lundervold2019overview}
Lundervold, A.S.; Lundervold, A.
\newblock An overview of deep learning in medical imaging focusing on MRI.
\newblock {\em Zeitschrift f{\"u}r Medizinische Physik} {\bf 2019}, {\em
  29},~102--127.

\bibitem[Liang \em{et~al.}(2020)Liang, Cheng, Ke, and Ying]{liang2020deep}
Liang, D.; Cheng, J.; Ke, Z.; Ying, L.
\newblock Deep magnetic resonance image reconstruction: Inverse problems meet
  neural networks.
\newblock {\em IEEE Signal Processing Magazine} {\bf 2020}, {\em 37},~141--151.

\bibitem[Sandino \em{et~al.}(2020)Sandino, Cheng, Chen, Mardani, Pauly, and
  Vasanawala]{sandino2020compressed}
Sandino, C.M.; Cheng, J.Y.; Chen, F.; Mardani, M.; Pauly, J.M.; Vasanawala,
  S.S.
\newblock Compressed sensing: From research to clinical practice with deep
  neural networks: Shortening scan times for magnetic resonance imaging.
\newblock {\em IEEE signal processing magazine} {\bf 2020}, {\em 37},~117--127.

\bibitem[McCann \em{et~al.}(2017)McCann, Jin, and
  Unser]{mccann2017convolutional}
McCann, M.T.; Jin, K.H.; Unser, M.
\newblock Convolutional neural networks for inverse problems in imaging: A
  review.
\newblock {\em IEEE Signal Processing Magazine} {\bf 2017}, {\em 34},~85--95.

\bibitem[Zhou \em{et~al.}(2020)Zhou, Greenspan, Davatzikos, Duncan, van
  Ginneken, Madabhushi, Prince, Rueckert, and Summers]{zhou2020review}
Zhou, S.K.; Greenspan, H.; Davatzikos, C.; Duncan, J.S.; van Ginneken, B.;
  Madabhushi, A.; Prince, J.L.; Rueckert, D.; Summers, R.M.
\newblock A review of deep learning in medical imaging: Image traits,
  technology trends, case studies with progress highlights, and future
  promises.
\newblock {\em Unknown Journal} {\bf 2020}.

\bibitem[Singha \em{et~al.}(2021)Singha, Thakur, and Patel]{singha2021deep}
Singha, A.; Thakur, R.S.; Patel, T.
\newblock Deep Learning Applications in Medical Image Analysis.
\newblock {\em Biomedical Data Mining for Information Retrieval: Methodologies,
  Techniques and Applications} {\bf 2021}, pp. 293--350.

\bibitem[Chandra \em{et~al.}(2021)Chandra, Bran~Lorenzana, Liu, Liu, Bollmann,
  and Crozier]{chandra2021deep}
Chandra, S.S.; Bran~Lorenzana, M.; Liu, X.; Liu, S.; Bollmann, S.; Crozier, S.
\newblock Deep learning in magnetic resonance image reconstruction.
\newblock {\em Journal of Medical Imaging and Radiation Oncology} {\bf 2021}.

\bibitem[Ahishakiye \em{et~al.}(2021)Ahishakiye, Van~Gijzen, Tumwiine, Wario,
  and Obungoloch]{ahishakiye2021survey}
Ahishakiye, E.; Van~Gijzen, M.B.; Tumwiine, J.; Wario, R.; Obungoloch, J.
\newblock A survey on deep learning in medical image reconstruction.
\newblock {\em Intelligent Medicine} {\bf 2021}.

\bibitem[Liu \em{et~al.}(2020)Liu, Zhang, Cheng, Luo, and Fan]{liu2020deep}
Liu, R.; Zhang, Y.; Cheng, S.; Luo, Z.; Fan, X.
\newblock A Deep Framework Assembling Principled Modules for CS-MRI: Unrolling
  Perspective, Convergence Behaviors, and Practical Modeling.
\newblock {\em IEEE Transactions on Medical Imaging} {\bf 2020}, {\em
  39},~4150--4163.

\bibitem[Cheng \em{et~al.}(2019)Cheng et~al.]{cheng2019model}
Cheng, J.; others.
\newblock Model learning: Primal dual networks for fast MR imaging.
\newblock  International Conference on Medical Image Computing and
  Computer-Assisted Intervention. Springer,  2019, pp. 21--29.

\bibitem[Bian \em{et~al.}(2020)Bian, Chen, and Ye]{bian2020deep}
Bian, W.; Chen, Y.; Ye, X.
\newblock Deep Parallel MRI Reconstruction Network Without Coil Sensitivities.
\newblock  International Workshop on Machine Learning for Medical Image
  Reconstruction. Springer,  2020, pp. 17--26.

\bibitem[Yang \em{et~al.}(2018)Yang, Sun, Li, and Xu]{yang2018admm}
Yang, Y.; Sun, J.; Li, H.; Xu, Z.
\newblock ADMM-CSNet: A deep learning approach for image compressive sensing.
\newblock {\em IEEE transactions on pattern analysis and machine intelligence}
  {\bf 2018}, {\em 42},~521--538.

\bibitem[Hammernik \em{et~al.}(2018)Hammernik, Klatzer, Kobler, Recht,
  Sodickson, Pock, and Knoll]{hammernik2018learning}
Hammernik, K.; Klatzer, T.; Kobler, E.; Recht, M.P.; Sodickson, D.K.; Pock, T.;
  Knoll, F.
\newblock Learning a variational network for reconstruction of accelerated MRI
  data.
\newblock {\em Magnetic resonance in medicine} {\bf 2018}, {\em
  79},~3055--3071.

\bibitem[Zhang and Ghanem(2018)]{zhang2018ista}
Zhang, J.; Ghanem, B.
\newblock ISTA-Net: Interpretable optimization-inspired deep network for image
  compressive sensing.
\newblock  Proceedings of the IEEE Conference on Computer Vision and Pattern
  Recognition,  2018, pp. 1828--1837.

\bibitem[Aggarwal \em{et~al.}(2018)Aggarwal, Mani, and Jacob]{aggarwal2018modl}
Aggarwal, H.K.; Mani, M.P.; Jacob, M.
\newblock MoDL: Model-based deep learning architecture for inverse problems.
\newblock {\em IEEE transactions on medical imaging} {\bf 2018}, {\em
  38},~394--405.

\bibitem[Schlemper \em{et~al.}(2017)Schlemper, Caballero, Hajnal, Price, and
  Rueckert]{schlemper2017deep}
Schlemper, J.; Caballero, J.; Hajnal, J.V.; Price, A.N.; Rueckert, D.
\newblock A deep cascade of convolutional neural networks for dynamic MR image
  reconstruction.
\newblock {\em IEEE transactions on Medical Imaging} {\bf 2017}, {\em
  37},~491--503.

\bibitem[He \em{et~al.}(2016)He, Zhang, Ren, and Sun]{he2016deep}
He, K.; Zhang, X.; Ren, S.; Sun, J.
\newblock Deep residual learning for image recognition.
\newblock  Proceedings of the IEEE conference on computer vision and pattern
  recognition,  2016, pp. 770--778.

\bibitem[Mehra and Hamm(2019)]{mehra2019penalty}
Mehra, A.; Hamm, J.
\newblock Penalty method for inversion-free deep bilevel optimization.
\newblock {\em arXiv preprint arXiv:1911.03432} {\bf 2019}.

\bibitem[Kingma and Ba(2015)]{kingma2014adam}
Kingma, D.P.; Ba, J.
\newblock Adam: {A} Method for Stochastic Optimization.
\newblock  3rd International Conference on Learning Representations, {ICLR}
  2015, San Diego, CA, USA, May 7-9, 2015, Conference Track Proceedings;
  Bengio, Y.; LeCun, Y., Eds.,  2015.

\bibitem[Glorot and Bengio(2010)]{Glorot10understandingthe}
Glorot, X.; Bengio, Y.
\newblock Understanding the difficulty of training deep feedforward neural
  networks.
\newblock  In Proceedings of the International Conference on Artificial
  Intelligence and Statistics. Society for Artificial Intelligence and
  Statistics,  2010.

\bibitem[Bernstein \em{et~al.}(2001)Bernstein, Fain, and
  Riederer]{bernstein2001effect}
Bernstein, M.A.; Fain, S.B.; Riederer, S.J.
\newblock Effect of windowing and zero-filled reconstruction of {MRI} data on
  spatial resolution and acquisition strategy.
\newblock {\em Journal of Magnetic Resonance Imaging: An Official Journal of
  the International Society for Magnetic Resonance in Medicine} {\bf 2001},
  {\em 14},~270--280.

\bibitem[Abadi \em{et~al.}(2015)Abadi, Agarwal, Barham, Brevdo, Chen, Citro,
  Corrado, Davis, Dean, Devin, Ghemawat, Goodfellow, Harp, Irving, Isard, Jia,
  Jozefowicz, Kaiser, Kudlur, Levenberg, Man\'{e}, Monga, Moore, Murray, Olah,
  Schuster, Shlens, Steiner, Sutskever, Talwar, Tucker, Vanhoucke, Vasudevan,
  Vi\'{e}gas, Vinyals, Warden, Wattenberg, Wicke, Yu, and
  Zheng]{tensorflow2015-whitepaper}
Abadi, M.; Agarwal, A.; Barham, P.; Brevdo, E.; Chen, Z.; Citro, C.; Corrado,
  G.S.; Davis, A.; Dean, J.; Devin, M.; Ghemawat, S.; Goodfellow, I.; Harp, A.;
  Irving, G.; Isard, M.; Jia, Y.; Jozefowicz, R.; Kaiser, L.; Kudlur, M.;
  Levenberg, J.; Man\'{e}, D.; Monga, R.; Moore, S.; Murray, D.; Olah, C.;
  Schuster, M.; Shlens, J.; Steiner, B.; Sutskever, I.; Talwar, K.; Tucker, P.;
  Vanhoucke, V.; Vasudevan, V.; Vi\'{e}gas, F.; Vinyals, O.; Warden, P.;
  Wattenberg, M.; Wicke, M.; Yu, Y.; Zheng, X.
\newblock {TensorFlow}: Large-Scale Machine Learning on Heterogeneous Systems,
  2015.
\newblock Software available from tensorflow.org.

\bibitem[Menze \em{et~al.}(2014)Menze, Jakab, Bauer, Kalpathy-Cramer, Farahani,
  Kirby, Burren, Porz, Slotboom, Wiest, et~al.]{menze2014multimodal}
Menze, B.H.; Jakab, A.; Bauer, S.; Kalpathy-Cramer, J.; Farahani, K.; Kirby,
  J.; Burren, Y.; Porz, N.; Slotboom, J.; Wiest, R.; others.
\newblock The multimodal brain tumor image segmentation benchmark (BRATS).
\newblock {\em IEEE transactions on medical imaging} {\bf 2014}, {\em
  34},~1993--2024.

\bibitem[Abadi \em{et~al.}(2016)Abadi et~al.]{abadi2016tensorflow}
Abadi, M.; others.
\newblock Tensorflow: A system for large-scale machine learning.
\newblock  12th $\{$USENIX$\}$ symposium on operating systems design and
  implementation ($\{$OSDI$\}$ 16),  2016, pp. 265--283.

\bibitem[Glorot and Bengio(2010)]{glorot2010understanding}
Glorot, X.; Bengio, Y.
\newblock Understanding the difficulty of training deep feedforward neural
  networks.
\newblock  Proceedings of the thirteenth international conference on artificial
  intelligence and statistics,  2010, pp. 249--256.

\bibitem[Wang \em{et~al.}(2020)Wang et~al.]{WANG2020136}
Wang, S.; others.
\newblock DeepcomplexMRI: Exploiting deep residual network for fast parallel MR
  imaging with complex convolution.
\newblock {\em Magnetic Resonance Imaging} {\bf 2020}, {\em 68},~136 -- 147.

\bibitem[Wang \em{et~al.}(2004)Wang et~al.]{wang2004image}
Wang, Z.; others.
\newblock Image quality assessment: from error visibility to structural
  similarity.
\newblock {\em IEEE transactions on image processing} {\bf 2004}, {\em
  13},~600--612.

\end{thebibliography}

\end{document}